\newcommand\reallywidetilde[1]{\ThisStyle{%
  \setbox0=\hbox{$\SavedStyle#1$}%
  \stackengine{-.1\LMpt}{$\SavedStyle#1$}{%
    \stretchto{\scaleto{\SavedStyle\mkern.2mu\AC}{.5150\wd0}}{.6\ht0}%
  }{O}{c}{F}{T}{S}%
}}
\def\widetilde#1{$%
  \reallywidetilde{#1}\,
  \scriptstyle\reallywidetilde{#1}\,
  \scriptscriptstyle\reallywidetilde{#1}
$\par}
\newtheorem{definition}{Definition}[section]
\newtheorem{assumption}{Assumption}[section]
\newtheorem{theorem}{Theorem}[section]
\newtheorem{lemma}[theorem]{Lemma}
\newtheorem{proposition}[theorem]{Proposition}
\newtheorem{corollary}{Corollary}
\newtheorem{observation}{Observation}
\newtheorem{problem}{Problem}
\let\MYcaption\@makecaption
\let\@makecaption\MYcaption
\newcommand{\cfbox}[2]{%
    \usepackage{booktabs,amsfonts,dcolumn}\colorlet{currentcolor}{.}%
    {\color{#1}%
    \fbox{\color{currentcolor}#2}}%
}
\newcommand{\simon}[1]    {\textcolor{blue}{\emph{#1}}}
\newcommand{\cheng}[1]    {\textcolor{cyan}{\emph{#1}}}
\newcommand{\tamal}[1]{\textcolor{red}{\emph{#1}}}
\def\eqref#1{equation~\ref{#1}}
\def\1{\bm{1}}
\DeclareMathAlphabet{\mathsfit}{\encodingdefault}{\sfdefault}{m}{sl}
\SetMathAlphabet{\mathsfit}{bold}{\encodingdefault}{\sfdefault}{bx}{n}
\def\gB{{\mathcal{B}}}
\def\gC{{\mathcal{C}}}
\def\gD{{\mathcal{D}}}
\def\gE{{\mathcal{E}}}
\def\gF{{\mathcal{F}}}
\def\gG{{\mathcal{G}}}
\def\gH{{\mathcal{H}}}
\def\gO{{\mathcal{O}}}
\def\gR{{\mathcal{R}}}
\def\gS{{\mathcal{S}}}
\def\gU{{\mathcal{U}}}
\def\gV{{\mathcal{V}}}
\def\gW{{\mathcal{W}}}
\newcommand{\softmax}{\mathrm{softmax}}
\newcommand\tparbox[2]{\protect\parbox[t]{#1}{\protect\raggedright #2}}
\title{Expressive Higher-Order Link Prediction through Hypergraph Symmetry Breaking}
\author{\name Simon Zhang \email zhan4125@purdue.edu \\
      \addr Department of Computer Science\\
      Purdue University
      \AND
      \name Cheng Xin \email cx122@cs.rutgers.edu \\
      \addr Department of Computer Science\\
      Rutgers University
      \AND
      \name Tamal K. Dey \email tamaldey@purdue.edu\\
      \addr Department of Computer Science\\
      Purdue University}
\begin{document}
\doparttoc 
\faketableofcontents 

\part{} 
\parttoc 
\maketitle

\begin{figure*}[!h]
\centering
\includegraphics[width=0.85\textwidth] {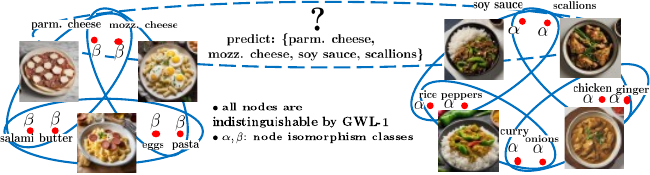}
\caption{An illustration of a hypergraph of recipes. The nodes are the ingredients and the hyperedges are the recipes. The task of higher order link prediction is to predict hyperedges in the hypergraph. A negative hyperedge sample would be the dotted hyperedge. The Asian ingredient nodes ($\alpha$) and the European ingredient nodes ($\beta$) form two separate isomorphism classes. However, GWL-1 cannot distinguish between these classes and will predict a false positive for the negative sample.} \label{fig: task}
\end{figure*}
\begin{abstract}
A hypergraph consists of a set of nodes along with a collection of subsets of the nodes called hyperedges. 
Higher order link prediction is the task of predicting the existence of a missing hyperedge in a hypergraph. A hyperedge representation learned for higher order link prediction is fully expressive when it does not lose distinguishing power up to an isomorphism. Many existing hypergraph representation learners, are bounded in expressive power by the Generalized Weisfeiler Lehman-1 (GWL-1) algorithm, a generalization of the Weisfeiler Lehman-1 (WL-1) algorithm. The WL-1 algorithm can approximately decide whether two graphs are isomorphic. 
However, GWL-1 has limited expressive power. 
In fact, GWL-1 can only view the hypergraph as a collection of trees rooted at each of the nodes in the hypergraph.
Furthermore, message passing on hypergraphs can already be computationally expensive, particularly with limited GPU device memory. To address these limitations, we devise a preprocessing algorithm that can identify certain regular subhypergraphs exhibiting symmetry with respect to GWL-1. 
Our preprocessing algorithm runs once with the time complexity linear in the size of the input hypergraph. During training, we randomly drop the hyperedges of the subhypergraphs identifed by the algorithm and add covering hyperedges to break symmetry. We show that our method improves the expressivity of GWL-1. Our extensive experiments \footnote{\url{https://github.com/simonzhang00/HypergraphSymmetryBreaking}} also 
demonstrate the effectiveness of our approach for higher-order link prediction on both graph and hypergraph datasets with negligible change in computation.
\end{abstract}

\section{Introduction}

{Hypergraphs can model complex relationships in real-world networks, extending beyond the pairwise connections captured by traditional graphs.}
Figure \ref{fig: task} is an example hypergraph consisting of recipes of two different types of dishes, which are largely determined by the ingredients to be used.  In this hypergraph, the hyperedges are the recipes, and the nodes are the ingredients used in each recipe. The Asian recipes are presented in the right part of the figure, which consist of combinations of the ingredients of soy sauce, scallions, rice, peppers, chicken, ginger, curry and onions. The European recipes are presented in the left part of the figure, which consist of combinations of the ingredients of Parmesan cheese, mozzarella cheese, salami, butter, eggs, and pasta. 
 

Hypergraphs {have found applications in diverse fields such as} recommender systems~\cite{lu2012recommender}, visual classification~\cite{feng2019hypergraph}, and social networks~\cite{li2013link}. 
Higher-order link prediction is the task of predicting missing hyperedges in a hypergraph. For this task, when the hypergraph is unattributed, it is important to respect the hypergraph's symmetries, or automorphism group. This brings about challenges to learning an expressive view of the hypergraph. 
 
 {A hypergraph neural network (hyperGNN) is any neural network that learns on a hypergraph. This is in analogy to a graph neural network (GNN), which is a neural network that learns on a graph. 
 Many existing hyperGNN architectures follow a computational message passing model called the
 Generalized Weisfeiler Lehman-1 (GWL-1) algorithm~\cite{huang2021unignn}, a hypergraph isomorphism testing approximation scheme. GWL-1 is a generalization of the 
 {message passing algorithm called} 
Weisfeiler Lehman-1 (WL-1) algorithm~\cite{weisfeiler1968reduction} used for graph isomorphism testing. 

 GWL-1, like WL-1 on graphs, is limited in how well it can express its input. Specifically, by viewing a hypergraph as a collection of rooted trees, GWL-1 loses topological information of {its input} hypergraph and thus cannot fully recognize the symmetries, or automorphism group, of the hypergraph.}
 In fact, the hyperGNN views the hypergraph as having false-positive symmetries. {For a task like} transductive hyperlink prediction, this can result in predicting false-positive hyperlinks as shown in Figure \ref{fig: task}. {Furthermore, such an issue} can become even worse during test time since the automorphism group of the hypergraph might change. It is thus important to find a way to improve the expressivity of existing hyperGNNs.


  Let a hypergraph $\gH= (\gV,\gE)$ denote a pair where $\gV$ is a set of nodes and $\gE \subseteq 2^{\gV}$, a collection of subsets of $\gV$, indexes a set of hyperedges. GWL-1 views a hypergraph as a collection of trees rooted at the nodes. These rooted trees are formed by viewing each node-hyperedge incidence as an edge and recursively expanding about the nodes and hyperedges alternately.
  As an example of the GWL-1 algorithm, one step of GWL-1 would output a collection of depth $1$ trees rooted at each node where leaves are the incident hyperedges of each node. Two steps of GWL-1 would output a collection of depth $2$ trees where the depth $1$ trees of one step of GWL-1 have their leaves expanded with the incident nodes of the hyperedges the new leaves. This leaf expansion process can be repeated, alternating nodes and hyperedges. This is the recursive expansion of the GWL-1 algorithm. 
 
 {We can recover hyperGNNs by expressing the computation of GWL-1 as a matrix equation. Parameterizing the expression with learnable weights, GWL-1 becomes a neural network, called a hypergraph neural network (hyperGNN), similar to graph neural networks (GNN)s. In practice this is implemented through repeated sparse matrix multiplication.}

    Computing on a hypergraph can also be very expensive. {The subsets $e \in \gE$ that contain  a node $v \in \gV$ form the neighborhood of $v \in \gV$}. This means just the neighborhood size of the nodes in hypergraphs can grow exponentially with the number of nodes of the hypergraph. 
    Thus, a computationally more expensive message passing scheme over GWL-1 based hyperGNNs may bring difficulties.

{In order to address the issue of the expressivity of  hyperGNNS for the task of hyperlink prediction while also respecting the computational complexity of computing on a hypergraph, we devise a method that selectively breaks the symmetry of the hypergraph topology itself coming from the limitations of the hyperGNN architecture. }
{Our method is designed as an efficient preprocessing algorithm that can improve the expressive power of GWL-1 for higher order link prediction. Since the preprocessing only runs once with complexity linear in the input, we do not increase the computational complexity of training.}

 Similar to a substructure counting algorithm \cite{bouritsas2022improving}, we identify certain symmetries in induced subhypergraphs. However, unlike in existing work where node attributes are modified, such as random noise being appended to the node attributes \cite{sato2021random},
 we directly target and modify the symmetries in the topology. {This limits the space for augmentation, which can prevent extreme perturbations of the data. }
 The algorithm identifies a cover of the hypergraph by disjoint connected components whose nodes are indistinguishable by GWL-1. During training, we randomly replace the hyperedges of the identified symmetric regular induced subhypergraphs with single hyperedges that cover the nodes of each subhypergraph. We show that our method of hyperedge hallucination to break symmetry can increase the expressivity of existing hypergraph neural networks both theoretically and experimentally. 



\textbf{Contributions. } In the context of hypergraph representation learning and hyperlink prediction, we have a method that can break the symmetries introduced by conventional hypergraph neural networks. Conventional hypergraph neural networks are based on the GWL-1 algorithm on hypergraphs. However, the GWL-1 algorithm on hypergraphs views the hypergraph as a collection of rooted trees. This introduces false positive symmetries.  
We summarize our contributions in this work as follows:
\begin{itemize}
    \item  \textbf{Provide a formal analysis of GWL-1 on hypergraphs} from the perspective of algebraic topology. 
    Our analysis offers a novel characterization of the expressive power and limitations of GWL-1. By leveraging concepts from algebraic topology, we establish a precise connection between GWL-1 and the universal covers of hypergraphs, providing deeper insights into the algorithm's behavior on complex hypergraph structures.
    \item \textbf{Devise an efficient hypergraph preprocessing algorithm} to identify false positive symmetries of GWL-1.
    We propose a linear time preprocessing algorithm to identify specific regular subhypergraphs that exhibit symmetry with respect to GWL-1, which are potential sources of expressivity limitations.
    \item \textbf{Introduce a data augmentation scheme} that leverages the preprocessing algorithm's output to improve GWL-1's expressivity. During training, we randomly modify the hyperedges of the identified symmetric subhypergraphs, effectively breaking symmetries that GWL-1 cannot distinguish. This approach enhances the model's ability to capture fine-grained structural information without significantly increasing computational complexity.
    \item \textbf{Provide formal analysis and performance guarantees} for our method. We rigorously prove how our approach improves the expressivity of GWL-1 under certain conditions. These theoretical results offer valuable insights into the circumstances under which our method can enhance hypergraph representation learning, providing a solid foundation for its practical application.
    \item \textbf{Perform extensive experiments on real-world datasets} to demonstrate the effectiveness of our approach. Our comprehensive evaluation spans various hypergraph and graph datasets, showcasing consistent improvements across different hypergraph neural network architectures for higher-order link prediction tasks. These empirical results validate the practical utility of our method and its ability to enhance existing models with minimal computational overhead. 
\end{itemize}

\section{Background}
The following notation is used throughout the paper:

Let $\mathbb{N}\triangleq \{0,1,...\}, \mathbb{Z}\triangleq \{...,-1,0,1,...\}, \mathbb{Z}^+\triangleq \{1,...\}$, and $\mathbb{R}$ denote the naturals, integers, positive integers, and real numbers respectively. Let $[n]\triangleq \{1,...,n\} \subseteq \mathbb{Z}^+$ denote the integers from $1$ to $n \in \mathbb{Z}^+$. 

For a set $A$, let ${A \choose {k}}$ denote the set of all subsets of $A$ of size $k$. Given the set ${A}$, a multiset is defined by $\tilde{A}\triangleq ({A},m)$, $m: {A} \rightarrow \mathbb{Z}^+$. A set is also a multiset with $m=\mathbf{1}$. A submultiset $\tilde{B} \subseteq \tilde{A}$ is defined by $\tilde{B} \triangleq ({B},m')$ with ${B} \subseteq {A}$ and $m'(e)\leq m(e), \forall e \in {B}$. A multiset with its elements explicitly enumerated with the double curly brace notation: $\{\!\!\{a,a,a,... \}\!\!\}$. The cardinality of a multiset $\tilde{A}$, is defined as $|\tilde{A}|\triangleq \sum_{e \in A}m(e)$. For two multisets $\tilde{A}= (A,m_A), \tilde{B}=(B,m_B)$, we may define their multiset sum by $\tilde{A}\sqcup \tilde{B}\triangleq (A\cup B, m_{A\cup B}= m_A+m_B)$.

For two pairs of multisets $\tilde{A}=(\tilde{A}_1,\tilde{A}_2),\tilde{B}=(\tilde{B}_1,\tilde{B}_2)$, denote their multiset sum by their elementwise multiset sum: $\tilde{A}\sqcup \tilde{B}\triangleq (\tilde{A}_1\sqcup \tilde{B}_1, \tilde{A}_2 \sqcup \tilde{B}_2)$. Similarly, for two pairs of sets $A=(A_1,A_2),B=(B_1,B_2)$, denote their union by their elementwise union: $A\cup B\triangleq (A_1\cup B_1, A_2 \cup B_2)$.

Let $P_t\triangleq P(\bullet; t)$ denote a probability distribution parameterized by $t \in \mathbb{R}$. The distribution $P_t$ has some domain $\gD$, which we denote by $\text{dom}(P_t)$. The notation $\text{supp}(P_t)\triangleq\{x \in \text{dom}(P_t): P_t(x)>0\}$ denotes the \textbf{support} of a distribution $P_t$.

A Bernoulli distribution $P_p\triangleq P(X;p)$ is a probability distribution parameterized by a $p: 0<p<1$ with the following definition:
\begin{equation}
    P_p(X=k)=\begin{cases}p & k=1\\
    1-p & k=0\end{cases}
\end{equation}
The random variable $Bernoulli(p) \sim P_p$ is called a Bernoulli random variable. 
\subsection{Group Theory}
To better study hypergraphs, we use some concepts of group theory. Here we give a brief introduction to some of them. For more details, see \cite{dummit2004abstract}.

\begin{definition}
A \emph{group} $G$ is a set equipped with a binary operation "$*$" that satisfies the following four properties:
\begin{enumerate}
    \item \textbf{Closure}: For every $ a, b \in G $, the result of the operation $ a * b $ is also in $ G $.
    \item \textbf{Associativity}: For every $ a, b, c \in G $, $(a * b) * c = a * (b * c) $.
    \item \textbf{Identity Element}: There exists an element $ e \in G $ such that for every $ a \in G $, $ e * a = a * e = a $.
    \item \textbf{Inverse Element}: For each $ a \in G $, there exists an element $ b \in G $ such that $ a * b = b * a = e $ (such element $b$ is unique for $a$ and is often denoted as $a^{-1}$.)
\end{enumerate}
\end{definition}

\paragraph{Permutation Groups}

A \emph{permutation group} is a group where the elements are permutations of a set, and the group operation is the composition of these permutations. Permutations are bijective functions that rearrange the elements of a set.

\paragraph{Group Isomorphism and Automorphism}

Two groups $G$ and  $H$ are called \emph{isomorphic} if there exists a bijective function $ \phi: G \rightarrow H $ such that for all $ a, b \in G , \phi(a * b) = \phi(a) * \phi(b) $. This means that $ G $ and $ H $ have the same group structure, even if their elements are different. Such bijective funtions are called \emph{isomorphisms}. If $G=H$, an isomorphism to a group itself is called an \emph{automorphism}. The set of all automorphisms of a group $ G $ forms a group, with group operations given by compositions. Such group is called the \emph{automorphism group} of $ G $, denoted as $ Aut(G)$.



\paragraph{Group Action}

A group action is a formal way in which a group $ G $ operates on a set $ X $. Formally, a group action is a map $ G \times X \rightarrow X $ (denoted $ (g, x) \mapsto g \cdot x $) that satisfies the following two properties:
\begin{enumerate}
    \item For all $ x \in X $, $ e \cdot x = x $ where $ e $ is the identity element in $ G $.
    \item For all $ g, h \in G $ and $ x \in X $, $ (gh) \cdot x = g \cdot (h \cdot x) $.
\end{enumerate}

Group actions are useful for studying the symmetry of objects, as they describe how the elements of a group move or transform the elements of a set.

\paragraph{Stabilizer}

In the context of group actions, the stabilizer of an element $ x $ in a set $ X $ (where a group $ G $ acts on $ X $) is the set of elements in $ G $ that leave $ x $ fixed. Formally, the stabilizer of $ x $ in $ G $ is given by:
\[ Stab_G(x) = \{ g \in G \mid g \cdot x = x \} \]

\subsection{Isomorphisms on Higher Order Structures}
In this section, we go over what a hypergraph is and how this structure is represented as tensors.  We then define what a hypergraph isomorphism is. 

 A hypergraph is a generalization of a graph. Hypergraphs allow for all possible subsets over a set of vertices, called hyperedges. 
 We can thus formally define a hypergraph as: 
\begin{definition}\label{def: hypergraph}
An undirected hypergraph is a pair $\mathcal{H}= (\mathcal{V}, \mathcal{E})$ consisting of a set of vertices $\mathcal{V}$ and a  set of hyperedges 
$\mathcal{E} \subseteq 2^{\mathcal{V}}$
{where $2^\gV$ is the power set of the vertex set $\gV$}.
\end{definition}


For a given hypergraph $\gH=(\gV, \gE)$, 
a hypergraph $\gG=(\gV', \gE')$ is a \textbf{subhypergraph} of $\gH$ if $\gV'\subseteq  \gV$ and $\gE'\subseteq  \gE$. 

A subhypergraph \textbf{induced} by $\gW\subseteq  \gV$ is defined as $(\gW, \gF=2^\gW \cap \gE)$.
Similarly, for a subset of hyperedges $\gF \subseteq  \gE$, a subhypergraph \textbf{induced} by $\gF$ is defined as $(\bigcup_{e \in \gF} e, \gF)$


{For a given hypergraph $\gH$, we also use $\gV_\gH$ and $\gE_\gH$ to denote the sets of vertices and hyperedges of $\gH$ respectively. According to the definition, a hyperedge is a nonempty subset of the vertices. A hypergraph with all hyperedges the same size $d$ is called a $d$-uniform hypergraph. A $2$-uniform hypergraph is an undirected graph, or just graph.} 

{When viewed combinatorially, a hypergraph can include some symmetries 
that are captured by isomorphisms.}
These isomorphisms are defined by bijective structure preserving maps.
\begin{definition}\label{def:simp_map}
For two hypergraphs $\gH$ and $\gD$, a structure preserving map 
 $\rho: \gH\to \gD$ is a pair of maps $\rho=(\rho_\gV: \gV_\gH\to \gV_\gD, \rho_\gE: \gE_\gH\to \gE_\gD)$ such that $\forall e\in \gE_\gH, \rho_\gE(e)\triangleq \{\rho_\gV(v_i)\mid v_i\in e\}\in \gE_\gD$.
 A hypergraph isomorphism is a structure preserving map  $\rho=(\rho_\gV, \rho_\gE)$ such that both $\rho_\gV$ and $\rho_\gE$ are bijective. Two hypergraphs are said to be isomorphic, denoted as $\mathcal{H}\cong\gD$, if there exists an isomorphism between them. 
When 
$\mathcal{H}=\gD$, an isomorphism $\rho$ is called an automorphism on $\gH$. 
All the automorphisms form a group, which we denote as  $Aut(\mathcal{H})$.
\end{definition}
A graph isomorphism is the special case of a hypergraph isomorphism between $2$-uniform hypergraphs according to Definition \ref{def:simp_map}. 

A \emph{neighborhood} $N(v) \triangleq (\bigcup_{v\in e} e , \{ e: v \in e \}$) of a node $v \in \gV$ of a hypergraph $\gH=(\gV,\gE)$ is the subhypergraph of $\gH$ induced by the set of all hyperedges incident to $v$. The \emph{degree} of $v$ is denoted $deg(v)=|\gE_{N(v)}|$. A degree vector of a node $v$ of $\gH$ is defined as: $\text{degvec}_{\gH}(v)=(\lvert \{e: e \ni v, \lvert e\rvert=k  \}\rvert )_{k=1}^n$

A simple but very common symmetric hypergraph is of importance to our task, namely the neighborhood-regular hypergraph, or just regular hypergraph. 
\begin{definition}
    A neighborhood-regular hypergraph is a hypergraph where all neighborhoods of each node are isomorphic to each other.
\end{definition}
A $d$-uniform neighborhood of $v$ is the set of all hyperedges of size $d$ in the neighborhood of $v$.
Thus, in a neighborhood-regular hypergraph, all nodes have their $d$-uniform neighborhoods of the same 
cardinality for all $d\in \mathbb{N}$.

$\textbf{Representing Higher Order Structures as Tensors}:$ There are many data stuctures one can define on a higher order structure like a hypergraph. An $n$-order tensor \cite{maron2018invariant}, as a generalization of an adjacency matrix on graphs can be used to characterize the higher order connectivities. For simplicial complexes, which are hypergraphs where all subsets of a hyperedge are also hyperedges, a Hasse diagram, which is a  multipartite graph induced by the poset relation of subset amongst hyperedges, or simplices, differing in exactly one node, is a common data structure 
\cite{birkhoff1940lattice}. 
Similarly, the star expansion matrix \cite{agarwal2006higher} can be used to characterize hypergraphs up to isomorphism. 

In order to define the star expansion matrix, we define the star expansion bipartite graph.
\begin{definition}[star expansion bipartite graph]
Given a hypergraph $\mathcal{H}=( \mathcal{V}, \mathcal{E})$, 
{the \emph{star expansion bipartite graph}} 
$\mathcal{B}_{\mathcal{V}, \mathcal{E}}$ is the bipartite graph with vertices $\gV\bigsqcup \gE$ and edges $\{ (v,e)\in \gV\times \gE \mid v\in e  \}$.
\end{definition} 
\begin{definition}
    The star expansion incidence matrix $H$ of a hypergraph $\mathcal{H}=( \mathcal{V}, \mathcal{E})$ is the  $|\gV| \times 2^{|\gV|}$
    $0$-$1$ incidence matrix $H$ where $H_{v,e}=1$ iff $v\in e$ for $(v,e) \in \gV \times \gE$ for some fixed orderings on both $\gV$ and $2^{\gV}$. 
\end{definition}

In practice, as data to machine learning algorithms, the matrix $H$ is sparsely represented by its nonzero entries. 

{To study the symmetries of a given hypergraph $\gH=(\gV, \gE)$, we consider the permutation group on the vertices $\gV$, denoted as $Sym(\gV)$, which acts {jointly} on {the rows and columns of }star expansion adjacency matrices. 
We assume the rows and columns of a star expansion adjacency matrix {have some} canonical ordering, say lexicographic ordering, {given by} some prefixed ordering of the vertices. Therefore, each hypergraph $\gH$ has a unique canonical matrix representation $H$.} 



{We define the action of a permutation $\pi\in Sym(\gV)$ on a star expansion adjacency matrix $H$:}
\begin{align}
\begin{split}\label{eq: pi}
(\pi \cdot {H})_{v,e=(u_1,...,v,...,u_k)} 
\triangleq  {H}_{\pi^{-1}(v),\pi^{-1}(e)=(\pi^{-1}(u_1),...,\pi^{-1}(v),...,\pi^{-1}(u_k))}
\end{split}
\end{align}

{Based on the group action, consider the stabilizer subgroup of $Sym(\gV)$ on an incidence matrix $H$:
\begin{equation}\label{eq:stabilizer}
Stab_{Sym(\gV)}(H)=\{\pi\in Sym(\gV)\mid \pi\cdot H=H \}
\end{equation}
For simplicity we omit the lower index of $Sym(\mathcal{V})$ when the permutation group is clear from context. It can be checked that $Stab(H)\subseteq  Sym(\gV)$ is a subgroup. Intuitively, $Stab(H)$ consists of all permutations that fix $H$. These are equivalent to hypergraph automorphisms on the original hypergraph $\mathcal{H}$.


\begin{proposition}\label{prop: aut-stab}
$Aut(\gH)\cong Stab(H)$ are equivalent as isomorphic groups. 
\end{proposition}

{We can also define a notion of isomorphism between $k$-node sets using the stabilizers on $H$.}

\begin{definition}\label{def: isom-nodes}
{
For a given hypergraph $\gH$ with star expansion matrix $H$, two $k$-node sets $S,T \subseteq  \mathcal{V}$ are called \emph{isomorphic}, denoted as $S\simeq T$, if $\exists \pi \in Stab(H), \pi(S)=T$. 
}
\end{definition}
{Such isomorphism is an equivalance relation on $k$-node sets.}
{When $k=1$, we have isomorphic nodes, denoted $u\cong_{\gH} v$ for $u,v \in \gV$.}
Node isomorphism is also studied as the so-called structural equivalence in~\cite{lorrain1971structural}. 
}
Furthermore, when $S \simeq T$ we can then say that there is a matching due to the graph of the $\pi$ map of the form $\{(s,\pi(s)): s \in S\}$. This matching is between the node sets $S$ and $T$ so that matched nodes are isomorphic. 
\subsection{Invariance and Expressivity}\label{sec: invariance}
{For a given hypergraph $\mathcal{H}=(\gV, \gE)$, we want to do hyperedge prediction on $\gH$, which is to predict missing {hyperedges} from $k$-node sets for $k\geq 2$. 
Let $|\gV|=n$, $|\gE|=m$, and $H\in \mathbb{Z}_2^{n\times 2^n}$ be the star expansion adjacency  matrix of $\gH$. To do hyperedge prediction, we study $k$-node representations $g: {\mathcal{V} \choose {k}} \times \mathbb{Z}_2^{n\times 2^n} \rightarrow \mathbb{R}^{d}$ that map $k$-node sets of hypergraphs to $d$-dimensional Euclidean space. Ideally, we want}
a most-expressive $k$-node representation for hyperedge prediction, which is 
intuitively a $k$-node representation that is injective on $k$-node set isomorphism classes from $\mathcal{H}$. We break up the definition of most-expressive $k$-node representation into possessing two properties, as follows:

\begin{definition}\label{def: simpexpr}
    Let $g: {\mathcal{V} \choose {k}} \times \mathbb{Z}_2^{n\times 2^n} \rightarrow \mathbb{R}^{d}$ be a $k$-node representation on a hypergraph $\mathcal{H}$. Let $H\in \mathbb{Z}_2^{n \times 2^n}$ be the star expansion adjacency matrix of $\mathcal{H}$ for $n$ nodes.
    {The representation $g$ is $k$-node most expressive if
    $\forall S, S'\subseteq \mathcal{V}$, $|S|=|S'|=k$, 
    the following two conditions are satisfied:}
    \begin{enumerate}
        \item $g$ is \textbf{k-node invariant}: 
          $\exists \pi \in Stab(H), \pi(S)= S'\implies g(S,H)= g(S',H)$
        \item $g$ is \textbf{k-node expressive}
        $\nexists \pi \in Stab(H), \pi(S)= S'\implies g(S,H)\neq g(S',H)$ 
    \end{enumerate}
\end{definition}
The first condition of a most expressive $k$-node representation states that the representation must be well defined on the $k$ nodes up to isomorphism. 
The second condition 
{requires the injectivity of our representation.}
These two conditions mean that the representation does not lose any information when doing prediction for missing $k$-sized hyperedges on a set of $k$ nodes.  

We can also define the symmetry group of a $k$-node representation map $g$ on $H$ as the set of all permutations on $\gV$ that make the representation map $g$ $k$-node invariant. This is formally defined below:
\begin{definition}\label{def: sym-repr}
    For $g: {\mathcal{V} \choose {k}} \times \mathbb{Z}_2^{n \times 2^n} \rightarrow \mathbb{R}^d$ a $k$-node representation on a hypergraph $\gH$,
    \begin{equation}
        Sym(g(H))\triangleq \{ \pi \in Sym(\gV): \forall S,S' \in {\mathcal{V} \choose {k}}, \pi(S)=S' \Rightarrow g(S,H)= g(S',H) \}
    \end{equation}
\end{definition}

\subsection{Generalized Weisfeiler-Lehman-1}
\label{sec: gwl-1}
We describe a generalized Weisfeiler-Lehman-1 (GWL-1) hypergraph isomorphism test similar to \cite{huang2021unignn, feng2023hypergraph} based on the WL-1 algorithm for graph isomorphism testing. There have been many parameterized variants of the GWL-1 algorithm implemented as neural networks, see Section \ref{sec: related-work}.

Let $H$ be the star expansion matrix for a hypergraph $\mathcal{H}$. 
We define the GWL-1 algorithm as the following two step procedure on $H$ at iteration number $i\geq 0$.
\begin{align}
\label{eq: GWL-1}
\begin{split}
f_e^0 \gets \{\}, h_v^0 \gets \{\}\\
f_e^{i+1} \gets \{\!\!\{(f_e^i, h_v^i)\}\!\!\}_{v \in e}, \forall e \in \mathcal{E}_{\gH}(H)\\
h_v^{i+1} \gets \{\!\!\{(h_v^i, f_e^{i+1})\}\!\!\}_{v \in e}, \forall v \in \mathcal{V}_{\gH}(H)
\end{split}
\end{align}
This is slightly different from the algorithm presented in \cite{huang2021unignn} at the $f_e^{i+1}$ update step. Our update step involves {an edge representation} $f_e^i$, which is not present in their version. Thus our version of GWL-1 is more expressive than that in \cite{huang2021unignn}. However
, they both possess some of the same issues that we identify. We denote $f_e^i(H)$ and $h_v^i(H)$ as the hyperedge and node ith iteration GWL-1, called {$i$-GWL-1}, values on an unattributed hypergraph $\gH$ with star expansion $H$. If GWL-1 is run to convergence then we omit the iteration number $i$. We also mean this when we say $i=\infty$. 

For a hypergraph $\mathcal{H}$ with star expansion matrix $H$, GWL-1 is strictly more expressive than WL-1 on $A=H\cdot D_e^{-1}\cdot H^T$ with $D_e= diag(H^T\cdot \mathbf{1}_n)$, the node to node adjacency matrix, also called the clique expansion of $\mathcal{H}$. This follows since a triangle with its $3$-cycle boundary: $T$ and a 3-cycle $C_3$ have exactly the same clique expansions. Thus WL-1 will give the same node values for both $T$ and $C_3$. GWL-1 on the star expansions $H_{T}$ and $H_{C_3}$, on the other hand, will identify the triangle as different from its bounding edges. 


Let $f^i(H)\triangleq[f^i_{e_1}(H), \cdots, f^i_{e_m}(H)]\text{ and } h^i(H)\triangleq[h^i_{v_1}(H), \cdots , h^i_{v_n}(H)]$ be two vectors whose entries are ordered by the column and row order of $H$, respectively.
\begin{proposition}
\label{prop: perm-equivar}
    The update steps $f^i(H)$ and $h^i(H)$ of GWL-1
    are permutation equivariant;  
    For any 
$\pi\in Sym(\gV)$, 
let: $\pi\cdot f^i(H)\triangleq [f^i_{\pi^{-1}(e_1)}(H), \cdots, f^i_{\pi^{-1}(e_m)}(H)]$ and 
$\pi \cdot h^i(H)\triangleq[h^i_{\pi^{-1}(v_1)}(H), \cdots , h^i_{\pi^{-1}(v_n)}(H)]$: 
\begin{equation}
   \forall i \in \mathbb{N},  \pi \cdot f^i(H)= f^i(\pi \cdot H)\text{, }  \pi \cdot h^i(H)= h^i(\pi \cdot H)
\end{equation}
\end{proposition}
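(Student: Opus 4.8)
The plan is to prove both equivariance statements simultaneously by induction on the iteration number $i$, since the two update rules are interleaved. The conceptual heart of the argument is a single observation about the group action \eqref{eq: pi}: a permutation $\pi \in Sym(\gV)$ acting on $H$ relabels the vertices by $\pi$ and the hyperedges by the induced map $e \mapsto \pi(e) = \{\pi(v) \mid v \in e\}$, and crucially it \emph{preserves the incidence relation}, i.e. $v \in e$ in $\gH$ if and only if $\pi(v) \in \pi(e)$ in the hypergraph with star expansion $\pi \cdot H$. I would isolate this as the first step, reading it directly off the definition of $\pi \cdot H$; in particular the hyperedge set of $\pi\cdot H$ is $\{\pi(e)\mid e\in\gE\}$, so $\pi^{-1}(e)$ is a genuine hyperedge of $H$ whenever $e$ is a hyperedge of $\pi\cdot H$. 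All of the equivariance then reduces to the facts that the index sets over which the GWL-1 multisets are aggregated are carried bijectively onto one another by $\pi$, and that a multiset is unchanged under reindexing by a bijection.

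For the base case $i = 0$, since we work with an unattributed hypergraph the initial labels $h_v^0 = X_v$ are a fixed constant independent of $v$, and $f_e^0 = \{\}$ is empty, so both $f^0$ and $h^0$ are constant vectors whose entries are permuted trivially and which agree with the corresponding values on $\pi \cdot H$. This establishes $\pi \cdot f^0(H) = f^0(\pi \cdot H)$ and $\pi \cdot h^0(H) = h^0(\pi \cdot H)$.

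For the inductive step I assume $\pi \cdot f^i(H) = f^i(\pi \cdot H)$ and $\pi \cdot h^i(H) = h^i(\pi \cdot H)$, which componentwise read $f^i_{\pi^{-1}(e)}(H) = f^i_e(\pi \cdot H)$ and $h^i_{\pi^{-1}(v)}(H) = h^i_v(\pi \cdot H)$. I handle the edge update first. Fixing a hyperedge $e$ of $\pi \cdot H$ and writing $e' = \pi^{-1}(e)$, I expand $f^{i+1}_{e'}(H) = \{\{(f^i_{e'}(H), h^i_v(H))\}\}_{v \in e'}$ and reindex the aggregation via the bijection $v \mapsto \pi(v)$ from $e'$ onto $e$, which is legitimate by the incidence-preservation step. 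Substituting the inductive hypothesis into both coordinates of each pair turns this into $\{\{(f^i_e(\pi \cdot H), h^i_w(\pi \cdot H))\}\}_{w \in e} = f^{i+1}_e(\pi \cdot H)$, which is exactly the $e$-component of the claimed identity $\pi \cdot f^{i+1}(H) = f^{i+1}(\pi \cdot H)$. The node update $h^{i+1}_v(H) = \{\{(h^i_v(H), f^{i+1}_e(H))\}\}_{e \ni v}$ is then treated identically, now reindexing the set of incident hyperedges $\{e \ni v\}$ onto $\{e \ni \pi(v)\}$ and invoking the node inductive hypothesis together with the edge equivariance \emph{just established at level} $i+1$.

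The step I expect to require the most care is not any single calculation but the bookkeeping of the two relabelings — vertices by $\pi$ and hyperedges by the induced map — and the verification that these restrict to mutually consistent bijections on the incident sets, so that the multiset reindexing is valid. Everything else is a mechanical substitution of the inductive hypothesis; the only genuine subtlety is that the node update at iteration $i+1$ consumes the edge update at iteration $i+1$, so within each inductive step the edge statement must be carried slightly ahead of the node statement rather than treating the two as independent inductions.
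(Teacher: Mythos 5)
Your proposal is correct and follows essentially the same route as the paper's proof: simultaneous induction on the iteration $i$, a trivial base case ($f^0_e=\{\}$, constant/permuted initial node labels), and an inductive step that reindexes the multisets via the bijections induced by $\pi$ on vertices and hyperedges. The only differences are cosmetic—you establish the edge identity at level $i+1$ first and feed it into the node update, whereas the paper unfolds $f^{i+1}$ into level-$i$ quantities inside the node update, and you make explicit the incidence-preservation justification for the reindexing that the paper leaves implicit.
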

Define the operator $AGG$ as a $k$-set  map to representation space $\mathbb{R}^d$. Define the following representation of a $k$-node subset $S \subseteq \gV$ of hypergraph $\gH$ with star expansion matrix $H$: 
\begin{equation}\label{eq: simprepr}
h^i(S,H)\triangleq AGG[\{h_{v}^i(H)
\}_{v \in S}]
\end{equation}
where $h_{v}^i(H)$ is the node value 
of $i$-GWL-1 on $H$ for node $v$. The representation $h(S,H)$ preserves hyperedge isomorphism classes as shown below:

\begin{proposition}
\label{prop: equivar-implies-kinvar}
   Let $h^i(S,H)= AGG[\{h_v^i(H)\}_{v \in S}]$ with an injective AGG map. The representation $h^i(S,H)$ is $k$-node invariant but not necessarily $k$-node expressive for $S$ a set of $k$ nodes. 
\end{proposition}

It follows that we can guarantee a $k$-node invariant representation by using GWL-1. For deep learning, we parameterize $AGG$ as a universal set learner. 

The node representations $h^i_v(H)$ are also parameterized and rewritten into a message passing hypergraph neural network with matrix equations~\cite{huang2021unignn}. 
For example, the HNHN \cite{dong2020hnhn} hyperGNN architecture can be viewed as a parameterization of GWL-1:
    \begin{gather}
    \label{eq: HNHN}
    \begin{split}
        X^{(l)}_E = \sigma(H^T X^{(l)}_V W_E^{(l)} + b_E^{(l)} ) \\
        X^{(l+1)}_V = \sigma(HX^{(l)}_E W_V^{(l)} + b_V^{(l)}) 
        \end{split}
        \end{gather} 
    where $\sigma$ is a nonlinearity, $X_V,X_E$ are vector representations of $h^i(H)$ and $f^i(H)$ respectively, and $W_E,W_V, b_E,b_V$ are learnable weight matrices.
    Setting $b_E^{(l)}=0, b_V^{(l)}=0$, we maintain permutation equivariance as in Proposition \ref{prop: perm-equivar}. Furthermore, the HNHN equations of Equation \ref{eq: HNHN} become in direct analogy to the steps of GWL-1 from Equation \ref{eq: GWL-1}.
 \section{Related Work and Existing Issues}\label{sec: related-work}
 There are many hyperlink prediction methods. Most message passing based methods for hypergraphs are based on the GWL-1 algorithm. These include~\cite{huang2021unignn,yadati2019hypergcn, feng2019hypergraph, gao2022hgnn, dong2020hnhn, srinivasan2021learning, chien2022you,zhang2018beyond}.
 Examples of message passing based approaches that incorporate  positional encodings on hypergraphs include SNALS~\cite{wan2021principled}. The paper ~\cite{zhang2019hyper} uses a pair-wise node attention mechanism to do higher order link prediction. For a survey on hyperlink prediction, see \cite{chen2022survey}.  

 Various methods have been proposed to improve the expressive power of GNNs due to symmetries in graphs. In \cite{papp2022theoretical}, substructure labeling is formally analyzed. One of the methods analyzed includes labeling fixed radius ego-graphs as in \cite{you2021identity, NEURIPS2021_8462a7c2}. Other methods include appending random node features~\cite{sato2021random}, labeling breadth-first and depth-first search trees~\cite{li2023local} and encoding substructures \cite{zeng2023substructure, wijesinghe2021new}. All of the previously mentioned methods depend on a fixed subgraph radius size. This prevents capturing symmetries that span long ranges across the graph. \cite{zhang2023rethinking} proposes to add metric information of each node relative to all other nodes to improve WL-1. This would be very computationally expensive on hypergraphs. 
 
 Cycles are a common symmetric substructure. There are many methods that identify this symmetry. Cy2C \cite{choicycle} is a method that encodes cycles to cliques. 
 It has the issue that if the the cycle-basis algorithm is not permutation invariant, isomorphic graphs could get different cycle bases and thus get encoded by Cy2C differently, violating the invariance of WL-1. Similarly, the CW Network~\cite{bodnar2021weisfeiler} is a method that attaches cells to cycles to improve upon the distinguishing power of WL-1 for graph classification. However, inflating the input topology with cells as in~\cite{bodnar2021weisfeiler} would not work for link predicting since it will shift the hyperedge distribution to become much denser. Other works include cell attention networks \cite{giusti2022cell} and cycle basis based methods \cite{zhang2022gefl}. For more related work, see the Appendix.

 Data augmentation is a commonly used approach to improve robustness to distribution shifts \cite{yao2022improving}, recognize symmetries \cite{chen2020group}, and handle data imbalance \cite{chawla2002smote}. In the graph domain, a priori knowledge of the data distribution can inform rule-based data augmentations. In molecular classification, prior knowledge of the physical meaning of the data can be used to augment graphs~\cite{sun2021mocl}. Data augmentations can also be learned through data generation methods. For example a link prediction neural network GAE \cite{kipf2016variational} can be used to propose edges to to improve node classification \cite{zhao2021data}. For a survey on graph data augmentation, see~\cite{zhao2022graph}.
 
 \section{A Characterization of GWL-1}
 A hypergraph can be represented by a bipartite graph $\mathcal{B}_{\mathcal{V},\mathcal{E}}$ from $\mathcal{V}$ to $\mathcal{E}$ where there is an edge $(v,e)$ in the bipartite graph iff node $v$ is incident to hyperedge $e$. This bipartite graph is called the star expansion bipartite graph. 

 We introduce a more structured version of graph isomorphism called a $2$-color isomorphism to characterize hypergraphs. It is a map on $2$-colored graphs, which are graphs that can be colored with two colors so that no two nodes in any graph with the same color are connected by an edge. We define a $2$-colored isomorphism formally here:
 \begin{definition}
 A $2$-colored isomorphism is a graph isomorphism on two $2$-colored graphs that preserves node colors. It is denoted by $\cong_{c}$. 
 \end{definition}
 A $2$-colored isomorphism from a graph $G$ to itself is called a $2$-colored automorphism. The set of all $2$-colored automorphisms on $G$ is denoted $Aut_c(G)$.

{A bipartite graph always has a $2$-coloring. 
In this paper, we canonically fix a $2$-coloring on all star expansion   bipartite graphs by assigning red to all the nodes in the node partition and and blue to all the nodes in the hyperedge partition. See Figure~\ref{fig: cycledist}(a) as an example. We let $\gB_{\gV}, \gB_{\gE}$ be the red and blue colored nodes in $\gB_{\gV,\gE}$ respectively. 
}
\begin{proposition}\label{prop: hypergraph-bipartite-equiv}
 We have two hypergraphs $(\mathcal{V}_1,\mathcal{E}_1) \cong (\mathcal{V}_2,\mathcal{E}_2) $ iff  $\mathcal{B}_{\mathcal{V}_1,\mathcal{E}_1} \cong_c\mathcal{B}_{\mathcal{V}_2,\mathcal{E}_2} $ where $\mathcal{B}_{\mathcal{V}_i,\mathcal{E}_i}$ is the star expansion bipartite graph of $(\mathcal{V}_i,\mathcal{E}_i)$ 
 \end{proposition}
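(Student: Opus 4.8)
The plan is to prove the biconditional by exhibiting an explicit bijective correspondence between hypergraph isomorphisms $\rho=(\rho_{\mathcal{V}},\rho_{\mathcal{E}})$ and color-preserving graph isomorphisms $\phi$ of the star expansion bipartite graphs. The single fact that makes both directions work is that in $\mathcal{B}_{\mathcal{V},\mathcal{E}}$ the neighborhood of a blue (hyperedge) node $e$ is, by construction, exactly the set of red (vertex) nodes $\{v : v\in e\}$; thus the incidence relation $v\in e$ in the hypergraph and the adjacency relation $(v,e)$ in the bipartite graph are literally the same. I would record this observation first, since it is what lets me translate the structure-preservation condition $\rho_{\mathcal{E}}(e)=\{\rho_{\mathcal{V}}(v):v\in e\}$ of Definition~\ref{def:simp_map} into ordinary adjacency preservation.

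For the forward direction, given a hypergraph isomorphism $\rho$, I would define $\phi$ on the node set $\mathcal{V}_1\sqcup\mathcal{E}_1$ of $\mathcal{B}_{\mathcal{V}_1,\mathcal{E}_1}$ by $\phi|_{\mathcal{V}_1}=\rho_{\mathcal{V}}$ and $\phi|_{\mathcal{E}_1}=\rho_{\mathcal{E}}$. Since $\rho_{\mathcal{V}}$ and $\rho_{\mathcal{E}}$ are both bijections, $\phi$ is a bijection; it sends red nodes to red nodes and blue to blue, so it preserves the canonical coloring. It remains to check adjacency in both directions: $(v,e)$ is an edge iff $v\in e$, which by the structure-preservation identity together with injectivity of $\rho_{\mathcal{V}}$ holds iff $\rho_{\mathcal{V}}(v)\in\rho_{\mathcal{E}}(e)$, i.e.\ iff $(\phi(v),\phi(e))$ is an edge of $\mathcal{B}_{\mathcal{V}_2,\mathcal{E}_2}$. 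Hence $\phi$ is a $2$-colored isomorphism and $\mathcal{B}_{\mathcal{V}_1,\mathcal{E}_1}\cong_c\mathcal{B}_{\mathcal{V}_2,\mathcal{E}_2}$.

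For the reverse direction, given a $2$-colored isomorphism $\phi$, color preservation forces $\phi$ to carry the red partition $\mathcal{V}_1$ bijectively onto the red partition $\mathcal{V}_2$ and the blue partition $\mathcal{E}_1$ onto $\mathcal{E}_2$, so it restricts to bijections $\rho_{\mathcal{V}}:=\phi|_{\mathcal{V}_1}$ and $\rho_{\mathcal{E}}:=\phi|_{\mathcal{E}_1}$. To see $(\rho_{\mathcal{V}},\rho_{\mathcal{E}})$ is a hypergraph isomorphism I would fix $e\in\mathcal{E}_1$ and compare neighborhoods: the neighbors of the blue node $e$ are exactly $\{v:v\in e\}$, and since $\phi$ preserves adjacency, their image $\{\rho_{\mathcal{V}}(v):v\in e\}$ is exactly the neighbor set of $\phi(e)=\rho_{\mathcal{E}}(e)$; but the neighbors of that blue node are by definition the vertices comprising the subset $\rho_{\mathcal{E}}(e)\subseteq\mathcal{V}_2$. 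Equating the two gives $\rho_{\mathcal{E}}(e)=\{\rho_{\mathcal{V}}(v):v\in e\}$, which is precisely the structure-preservation condition, and since both restrictions are bijective we conclude $(\mathcal{V}_1,\mathcal{E}_1)\cong(\mathcal{V}_2,\mathcal{E}_2)$.

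I expect the argument to be largely mechanical, and the only genuinely delicate point is the role of the color constraint. A plain (uncolored) isomorphism of the bipartite graphs could in principle swap the vertex partition with the hyperedge partition, which carries no meaning as a hypergraph isomorphism; the color-preservation hypothesis is exactly what rules this out and guarantees $\phi$ splits into the two well-typed maps $\rho_{\mathcal{V}}$ and $\rho_{\mathcal{E}}$. The one fact I must not take for granted is that a hyperedge node is faithfully recoverable from its bipartite neighborhood: this holds because a hyperedge is by definition a subset of $\mathcal{V}$ and distinct hyperedges are distinct subsets, so the star expansion bipartite graph loses no incidence information. I would also note that isolated vertices lying in no hyperedge are simply degree-$0$ red nodes, and the color-preserving bijection matches them to one another automatically, so no separate case is needed.
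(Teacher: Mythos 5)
Your proposal is correct and follows essentially the same route as the paper's proof: both directions are handled by extending the pair $(\rho_{\mathcal{V}},\rho_{\mathcal{E}})$ to a color-preserving bijection on $\mathcal{V}\sqcup\mathcal{E}$ and, conversely, restricting a $2$-colored isomorphism to the two color classes, with the key fact in both arguments being that bipartite adjacency coincides with hypergraph incidence (so a blue node is recoverable from its neighborhood). Your explicit remarks on why color preservation is needed and on degree-$0$ red nodes make the same points the paper leaves implicit, but the substance of the argument is identical.
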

 We define a topological object for a graph originally from algebraic topology called a universal cover:
 \begin{definition}[\cite{hatcher2005algebraic}]\label{def: universalcover}
 The \emph{universal covering} of a connected graph $G$ is a (potentially infinite) graph $\tilde{G}$ together with a map $p_G: \tilde{G}\to G$ such that:
\begin{enumerate}
    \item 
    $\forall x \in \gV(\tilde{G})$, $p_{G}|_{N(x)}$ is an isomorphism onto $N(p_G(x))$. 
    \item $\tilde{G}$ is simply connected (a tree)
\end{enumerate}     
 \end{definition}
We call such $p_G$ the \emph{universal covering map} and $\tilde{G}$ the \emph{universal cover} of $G$.   
A covering graph is a graph that satisfies property 1 but not necessarily 2 in Definition \ref{def: universalcover}.
The universal covering $\tilde{G}$ is essentially unique~\cite{hatcher2005algebraic} in the sense that it can cover all connected covering graphs of $G$.
 Furthermore, define a rooted isomorphism $G_x \cong H_y$ as an isomorphism between graphs $G$ and $H$ that maps $x$ to $y$ and vice versa. Let $\tilde{G}_{\tilde{x}}^i$ denote the rooted universal cover $\tilde{G}_{\tilde{x}}$ with every leaf of depth (number of edges) exactly $i \in \mathbb{Z}^+$. 
 It is a known result that:
 \begin{theorem}\label{thm: wlunivcover}[\cite{krebs2015universal}]
 Let $G$ and $H$ be two connected graphs.  
 Let
  $p_G:\tilde{G}\rightarrow G,p_H: \tilde{H}\rightarrow H$ be the universal covering maps of $G$ and $H$ respectively.
For any $i\in \mathbb{N}$, for any two nodes $x\in G$ and $y\in H$: $\tilde{G}_{\tilde{x}}^i \cong \tilde{G}_{\tilde{y}}^i$ iff the WL-1 algorithm assigns the same value to nodes $x=p_G(\tilde{x})$ and $y=p_H(\tilde{y})$.
 \end{theorem}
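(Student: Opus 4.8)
The plan is to prove both directions by a single induction on $i$, after reducing the statement to a purely combinatorial claim relating WL-1 colorings to truncated unfoldings. Write $c^i(x)$ for the color WL-1 assigns to $x$ after $i$ rounds, so that $c^{i+1}(x)=\mathrm{HASH}\big(c^i(x),\{\{c^i(x'): x'\in N(x)\}\}\big)$ for an injective $\mathrm{HASH}$, and recall that $\tilde{G}^i_{\tilde x}$ is the ball of radius $i$ around a lift $\tilde x\in p_G^{-1}(x)$, regarded as a rooted tree. First I would record a \textbf{well-definedness lemma}: the rooted-isomorphism type of $\tilde{G}^i_{\tilde x}$ depends only on $x$ and not on the chosen lift. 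This follows from property (1) of Definition \ref{def: universalcover}: since $p_G$ restricts to a bijection on each neighborhood, any two lifts of $x$ are related by a deck transformation of $\tilde G$, which is a rooted-tree isomorphism between the two balls; alternatively one proves it by induction on $i$, lifting the unique neighbor bijection level by level. I may then speak of $c^i(x)=c^i(y)\iff \tilde{G}^i_{\tilde x}\cong \tilde{H}^i_{\tilde y}$ unambiguously (here $\tilde y\in\tilde H$).

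For the base case, $\tilde{G}^0_{\tilde x}$ and $\tilde{H}^0_{\tilde y}$ are single roots carrying the initial labels of $x$ and $y$, and $c^0$ is exactly the initial labeling, so the two sides coincide by definition. For the inductive step I would decompose each ball at its root: $\tilde{G}^{i+1}_{\tilde x}$ is the root $\tilde x$ together with the multiset of subtrees hanging below the neighbors $\tilde x_1,\dots,\tilde x_{\deg(x)}$, which by property (1) are in bijection with $N(x)$. For ($\Leftarrow$), if $c^{i+1}(x)=c^{i+1}(y)$ then injectivity of $\mathrm{HASH}$ yields $c^i(x)=c^i(y)$ together with a color-preserving bijection $\sigma$ between $N(x)$ and $N(y)$; feeding each matched pair into the induction hypothesis produces rooted isomorphisms of the corresponding subtrees, which assemble into a rooted isomorphism of the two balls. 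The direction ($\Rightarrow$) reverses this: a rooted isomorphism of balls matches roots and induces a bijection of the child subtrees, and the induction hypothesis turns this into equality of the neighbor color-multisets, hence equal $c^{i+1}$ values.

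The step that needs the most care, and which I expect to be the main obstacle, is that the subtree of $\tilde{G}^{i+1}_{\tilde x}$ hanging below a neighbor $\tilde x_j$ is \emph{not} the full ball $\tilde{G}^i_{\tilde x_j}$: because the universal cover is a tree, that subtree is the depth-$i$ ball around $\tilde x_j$ with the single branch pointing back toward $\tilde x$ deleted, whereas the WL update at $x_j$ aggregates over all of $N(x_j)$, including $x$. Naively applying the induction hypothesis to the full balls therefore does not match the objects that actually appear as children. To close this gap I would strengthen the induction to a statement about \emph{directed edges}: define $D^i_{u\to w}$ to be the depth-$i$ subtree rooted at a lift of $w$ that omits the branch toward $u$, prove the analogous equivalence $D^i_{u\to w}\cong D^i_{u'\to w'}$ in terms of the WL colors under the constraint imposed by the deleted parent, and only then reassemble the node-level balls from these directed pieces. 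An equivalent and perhaps cleaner route is to invoke that the stable WL-1 partition is the coarsest equitable partition, and that the fibers of the universal covering map refine exactly along this partition, so that balls of radius $i$ track the $i$-th round of refinement; I would use whichever formulation makes the parent-branch bookkeeping least painful.

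Finally, Theorem \ref{thm: wlunivcover} follows by specializing the established equivalence to the given nodes $x=p_G(\tilde x)$ and $y=p_H(\tilde y)$ and quantifying over all $i\in\mathbb{N}$. Combined with Proposition \ref{prop: hypergraph-bipartite-equiv}, this is precisely what lets us transfer the characterization to GWL-1 on hypergraphs through their star expansion bipartite graphs.
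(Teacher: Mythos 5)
You correctly identify the crux --- that the subtree hanging below a neighbor $\tilde{x}_j$ inside $\tilde{G}^{i+1}_{\tilde{x}}$ is the \emph{parent-excluded} depth-$i$ tree rather than the full ball $\tilde{G}^{i}_{\tilde{x}_j}$ --- but your proposal defers exactly this point rather than resolving it, and the deferral does not obviously go through. Note first that the mismatch is only a real obstacle in one direction: for (balls $\Rightarrow$ colors) there is an easy repair you do not mention, namely that every vertex within distance $i$ of $\tilde{x}_j$ lies within distance $i+1$ of $\tilde{x}$, so the full ball $\tilde{G}^{i}_{\tilde{x}_j}$ is a subtree of $\tilde{G}^{i+1}_{\tilde{x}}$ and the rooted isomorphism simply restricts to it. The hard direction is (colors $\Rightarrow$ balls), and there your directed-edge strengthening runs into the same wall in disguise: the invariant that classifies the pieces $D^i_{u\to w}$ is a \emph{directed message} recursion, whereas WL-1 computes only undirected node colors, so the strengthened induction must show that node colors determine the multiset of parent-excluded subtree types. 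Concretely, if $c^{i+1}(x)=c^{i+1}(y)$ and $\sigma:N(x)\to N(y)$ matches neighbor colors, applying the inductive hypothesis to a matched pair $(z,\sigma(z))$ yields a bijection $\tau: N(z)\to N(\sigma(z))$ matching their directed pieces --- but nothing forces $\tau(x)=y$, and repairing $\tau$ when it sends the deleted parent elsewhere requires a genuine exchange/counting argument. That exchange argument is the actual content of the result; your outline leaves it as ``prove the analogous equivalence,'' and your fallback route via the coarsest equitable partition concerns the stable ($i=\infty$) partition, so it cannot deliver the per-iteration statement claimed in the theorem.

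It is also worth noting how this relates to what the paper does. The paper does not prove Theorem \ref{thm: wlunivcover} at all --- it is imported from \cite{krebs2015universal} --- and the paper's own analogous argument is the proof of Theorem \ref{thm: appendix-gwlunivcover}, which runs the same induction skeleton as yours but delegates precisely the problematic assembly step to the quoted Lemma \ref{lemma: appendix-induct-krebsverbitsky} of Krebs and Verbitsky (extended to $2$-colored trees in Lemma \ref{lemma: appendix-induct-krebsverbitsky-2colored}). That lemma avoids directed edges entirely: its two hypotheses, $T^{r-1}_x \cong S^{r-1}_y$ together with \emph{full-ball} isomorphisms $T^{r}_{x_i}\cong S^{r}_{y_i}$ at matched neighbors, are exactly what compensates for the parent-branch bookkeeping, and its proof is where the exchange argument you are missing actually lives. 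So your plan is the right shape, but to complete it you must either prove your directed-edge claim (including the $\tau(x)\neq y$ repair), or prove the assembly lemma itself --- at which point you would be reproducing the cited argument rather than giving an alternative to it.
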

 We generalize the second result stated above about a topological characterization of WL-1 for GWL-1 for hypergraphs. In order to do this, we need to generalize the definition of a universal covering to suite the requirements of a bipartite star expansion graph. To do this, we lift $\mathcal{B}_{\mathcal{V},\mathcal{E}}$ to a $2$-colored tree universal cover $\tilde{\mathcal{B}}_{\mathcal{V},\mathcal{E}}$ where the red/blue nodes of $\mathcal{B}_{\mathcal{V},\mathcal{E}}$ are lifted to red/blue nodes in $\tilde{\mathcal{B}}_{\mathcal{V},\mathcal{E}}$. Furthermore, the labels $\{\}$ are placed on the blue nodes corresponding to the hyperedges in the lift and the labels $\{\}$ 
 are placed on all its corresponding red nodes in the lift. Let $(\tilde{\mathcal{B}}_{\mathcal{V},\mathcal{E}}^{k})_{\tilde{x}}$ denote the $k$-hop rooted $2$-colored subtree $\tilde{\gB}_{\gV,\gE}$ with root $\tilde{x}$ and $p_{\gB_{\gV,\gE}}(\tilde{x})={x}$ for any $x \in \gV(\gB_{\gV,\gE})$. 
\begin{theorem}\label{thm: gwlunivcover}
    Let $\mathcal{H}_1= (\mathcal{V}_1, \mathcal{E}_1)$ and $\mathcal{H}_2= (\mathcal{V}_2, \mathcal{E}_2)$ be two connected hypergraphs. Let $\mathcal{B}_{\mathcal{V}_1, \mathcal{E}_1}$ and $\mathcal{B}_{\mathcal{V}_2, \mathcal{E}_2}$ be two canonically colored bipartite graphs for $\mathcal{H}_1$ and $\mathcal{H}_2$ (vertices colored red and hyperedges colored blue). 
    {
    Let $p_{\mathcal{B}_{\mathcal{V}_1, \mathcal{E}_1}}: \tilde{\mathcal{B}}_{\mathcal{V}_1, \mathcal{E}_1}\rightarrow \mathcal{B}_{\mathcal{V}_1, \mathcal{E}_1},p_{\mathcal{B}_{\mathcal{V}_2, \mathcal{E}_2}}:\tilde{\mathcal{B}}_{\mathcal{V}_2, \mathcal{E}_2}\rightarrow \mathcal{B}_{\mathcal{V}_2, \mathcal{E}_2}$ be the universal coverings of $\mathcal{B}_{\mathcal{V}_1, \mathcal{E}_1}$ and $\mathcal{B}_{\mathcal{V}_2, \mathcal{E}_2}$ respectively.  
    }
     For any $i\in \mathbb{Z}^+$, for any of the nodes $x_1 \in \mathcal{B}_{\mathcal{V}_1}, e_1 \in \mathcal{B}_{ \mathcal{E}_1}$ 
     and $x_2 \in \mathcal{B}_{\mathcal{V}_2}, e_2 \in \mathcal{B}_{ \mathcal{E}_2}$:

     \begin{itemize}
     \item 
     $(\tilde{\mathcal{B}}^{2i-1}_{\mathcal{V}_1, \mathcal{E}_1})_{\tilde{e}_1} \cong_c (\tilde{\mathcal{B}}^{2i-1}_{\mathcal{V}_2, \mathcal{E}_2})_{\tilde{e}_2}$ iff $f^i_{e_1}=f^i_{e_2}$ 

     \item $(\tilde{\mathcal{B}}^{2i}_{\mathcal{V}_1, \mathcal{E}_1})_{\tilde{x}_1} \cong_c (\tilde{\mathcal{B}}^{2i}_{\mathcal{V}_2, \mathcal{E}_2})_{\tilde{x}_2}$ iff $h^i_{x_1}=h^i_{x_2}$, 
     \end{itemize}
    with $f^i_{\bullet}, h^i_{\bullet}$ the $i$th GWL-1 values for the hyperedges and nodes respectively where $e_1=p_{\mathcal{B}_{\mathcal{V}_1, \mathcal{E}_1}}(\tilde{e}_1)$, $x_1=p_{\mathcal{B}_{\mathcal{V}_1, \mathcal{E}_1}}(\tilde{x}_1)$, $e_2=p_{\mathcal{B}_{\mathcal{V}_2, \mathcal{E}_2}}(\tilde{e}_2)$, $x_2=p_{\mathcal{B}_{\mathcal{V}_2, \mathcal{E}_2}}(\tilde{x}_2)$. 
 \end{theorem}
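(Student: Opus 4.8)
The plan is to reduce the statement to the graph-level correspondence of Theorem~\ref{thm: wlunivcover}, applied to the star expansion bipartite graph, which is legitimate by Proposition~\ref{prop: hypergraph-bipartite-equiv}. Concretely, I would run everything on the disjoint union $\mathcal{B} := \mathcal{B}_{\mathcal{V}_1,\mathcal{E}_1} \sqcup \mathcal{B}_{\mathcal{V}_2,\mathcal{E}_2}$, canonically $2$-colored with the red vertices carrying labels $X_v$ and the blue hyperedges carrying the constant label $\{\}$. The observation driving the proof is that GWL-$1$ on each $\mathcal{H}_j$ is exactly color refinement (WL-$1$) on $\mathcal{B}$: the update $f^{i+1}_e=\{\{(f^i_e,h^i_v)\}\}_{v\in e}$ carries the same information as the WL pair $(f^i_e,\{\{h^i_v\}\}_{v\in e})$, since $f^i_e$ is constant across the multiset, and similarly for $h^{i+1}_v$. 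Because WL-$1$ never conflates the red and blue classes (their initial labels differ), the refinement stays compatible with the bipartition throughout.

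The crux is to prove, by simultaneous induction on $i$, that $f^i_e$ is a complete isomorphism invariant of the rooted depth-$(2i-1)$ unfolding tree (computation tree) of $e$ in $\mathcal{B}$, and that $h^i_v$ is a complete invariant of the rooted depth-$2i$ unfolding tree of $v$. I would take as base case $h^0_v = X_v$, which is precisely the depth-$0$ tree (the root label) of $v$. For the inductive step I use that a rooted labeled tree is determined up to isomorphism by its root label together with the multiset of isomorphism classes of the subtrees hanging at its children, and that the depth-$(t{+}1)$ unfolding tree of a node is its root with the depth-$t$ unfolding trees of all its neighbors attached. Then $h^i_v$ being the depth-$2i$ tree gives, via $f^{i+1}_e=(f^i_e,\{\{h^i_v\}\}_{v\in e})$, exactly the depth-$(2i+1)$ tree of $e$; and $h^{i+1}_v=(h^i_v,\{\{f^{i+1}_e\}\}_{e\ni v})$ gives the depth-$(2i+2)$ tree of $v$. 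This staggering is what makes the hyperedge index odd and the vertex index even, and the one delicate point is that $h^{i+1}$ consumes the freshly updated $f^{i+1}$ (not $f^i$), which is precisely what advances the depth by a full level per half-update and must be tracked carefully. Both the forward direction (isomorphic trees yield equal invariants) and the backward direction (equal invariants force isomorphic trees, by injectivity of the root-plus-children decomposition) follow from the same induction.

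Finally, I would record the standard fact that depth-$t$ unfolding tree isomorphism coincides with equality of the round-$t$ WL-$1$ color, so that $f^i_e=f^i_{e'}\iff \mathrm{WL}^{2i-1}(e)=\mathrm{WL}^{2i-1}(e')$ and $h^i_v=h^i_{v'}\iff \mathrm{WL}^{2i}(v)=\mathrm{WL}^{2i}(v')$, and then invoke a $2$-colored version of Theorem~\ref{thm: wlunivcover} on each connected component $\mathcal{B}_{\mathcal{V}_j,\mathcal{E}_j}$ (connected because each $\mathcal{H}_j$ is) to turn round-$t$ color equality into isomorphism of the depth-$t$ rooted subtrees of the universal cover. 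Chaining these equivalences yields the two claimed iff statements with the hyperedge index $2i-1$ and the vertex index $2i$. The hard part is the induction above; the remaining obstacles are bookkeeping rather than conceptual: I must check that Theorem~\ref{thm: wlunivcover} extends verbatim to graphs carrying an initial vertex coloring (WL-$1$ simply starts from that coloring), and that the covering map $p_{\mathcal{B}_{\mathcal{V}_j,\mathcal{E}_j}}$ preserves the red/blue coloring---which it does, since property~1 of Definition~\ref{def: universalcover} forces each lifted node to share the color of its image---so that the resulting rooted subtree isomorphisms are automatically color-preserving, i.e.\ of type $\cong_c$.
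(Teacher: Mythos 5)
Your proposal is sound, but it takes a genuinely different route from the paper, essentially inverting its logical structure. The paper never invokes Theorem~\ref{thm: wlunivcover} as a black box: instead it proves a $2$-colored version of the Krebs--Verbitsky tree-gluing lemma (Lemma~\ref{lemma: appendix-induct-krebsverbitsky-2colored}) and runs one self-contained induction tying $f^i_{e}$ and $h^i_{v}$ directly to the depth-$(2i-1)$ and depth-$2i$ rooted subtrees of the universal cover of the star expansion; the reduction ``GWL-1 is staggered WL-1 on $\mathcal{B}_{\mathcal{V},\mathcal{E}}$'' is then remarked only afterwards as a \emph{corollary} of Theorems~\ref{thm: gwlunivcover} and~\ref{thm: wlunivcover}, whereas you use it as the engine of the proof. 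Both arguments share the same essential content --- the tuple-form rewriting of the updates, the odd/even depth staggering, and the delicate point that $h^{i+1}$ consumes $f^{i+1}$ rather than $f^i$ --- so your induction is the mirror image of the paper's, run on unfolding (computation) trees instead of universal-cover subtrees. What your route buys is modularity: granting the graph-level theorem, the hypergraph statement follows by translation. What it costs is two external inputs that the paper's self-contained argument avoids: (i) the folklore equivalence between round-$t$ WL colors and depth-$t$ unfolding trees, and (ii) a vertex-colored extension of Theorem~\ref{thm: wlunivcover}, which the paper states only for uncolored graphs. You flag (ii) as bookkeeping, but it is exactly the work the paper does explicitly via Lemma~\ref{lemma: appendix-induct-krebsverbitsky-2colored}; fortunately, for the canonical bipartite coloring it does collapse to the observation that a rooted isomorphism of properly $2$-colored trees whose roots agree in color is automatically color-preserving, and for unattributed hypergraphs (the setting in which the paper defines $f^i_e(H), h^i_v(H)$) no further label bookkeeping is needed. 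Finally, a point in your favor that is easy to get wrong: you correctly keep unfolding trees (which record backtracking walks) distinct from universal-cover subtrees (which record reduced walks); these are non-isomorphic trees in general, which is precisely why your final appeal to Theorem~\ref{thm: wlunivcover} is necessary rather than redundant.
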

 Theorem \ref{thm: gwlunivcover} states that a $2$-colored isomorphism is maintained during each step of the GWL-1 algorithm. Thus we can view the GWL-1 algorithm on a hypergraph as equivalent to computing a universal cover of the star expansion bipartite graph up to a $2$-colored isomorphism. 
 We can thus deduce from Theorems \ref{thm: gwlunivcover}, \ref{thm: wlunivcover} that GWL-1 reduces to computing WL-1 on the bipartite graph up to the $2$-colored isomorphism.
 \begin{corollary}
 \label{corollary: appendix-gwlaswl}
 Let $\mathcal{H}_1= (\mathcal{V}_1, \mathcal{E}_1)$ and $\mathcal{H}_2= (\mathcal{V}_2, \mathcal{E}_2)$ be two connected hypergraphs. Let $\mathcal{B}_{\mathcal{V}_1, \mathcal{E}_1}$ and $\mathcal{B}_{\mathcal{V}_2, \mathcal{E}_2}$ be two canonically colored bipartite graphs for $\mathcal{H}_1$ and $\mathcal{H}_2$ (vertices colored red and hyperedges colored blue). 
    {
    Let $p_{\mathcal{B}_{\mathcal{V}_1, \mathcal{E}_1}}: \tilde{\mathcal{B}}_{\mathcal{V}_1, \mathcal{E}_1}\rightarrow \mathcal{B}_{\mathcal{V}_1, \mathcal{E}_1},p_{\mathcal{B}_{\mathcal{V}_2, \mathcal{E}_2}}:\tilde{\mathcal{B}}_{\mathcal{V}_2, \mathcal{E}_2}\rightarrow \mathcal{B}_{\mathcal{V}_2, \mathcal{E}_2}$ be the universal coverings of $\mathcal{B}_{\mathcal{V}_1, \mathcal{E}_1}$ and $\mathcal{B}_{\mathcal{V}_2, \mathcal{E}_2}$ respectively.  
    }
    For any $i\in \mathbb{Z}^+$, 
     \begin{itemize}
         \item $(\tilde{\mathcal{B}}^{2i-1}_{\mathcal{V}_1, \mathcal{E}_1})_{\tilde{v}_1} \cong_c (\tilde{\mathcal{B}}^{2i-1}_{\mathcal{V}_2, \mathcal{E}_2})_{\tilde{v}_2}$ iff  $g_{v_1}^i=g_{v_2}^i$
     \end{itemize}
    with $g^i_{v_1}, g^i_{v_2}$ the $i$-th WL-1 values for $v_1,v_2 \in \gV(\gB_{\gV_1,\gE_1}), \gV(\gB_{\gV_2,\gE_2})$ respectively where $v_1=p_{\mathcal{B}_{\mathcal{V}_1, \mathcal{E}_1}}(\tilde{v}_1)$, $v_2=p_{\mathcal{B}_{\mathcal{V}_2, \mathcal{E}_2}}(\tilde{v}_2)$.
 \end{corollary}
 See Figure \ref{fig: cycledist} for an illustration of the universal covering of the corresponding bipartite graphs for two $3$-uniform neighborhood regular hypergraphs.
 \begin{figure*}[!t]
\centering
\includegraphics[width=1.0\textwidth] {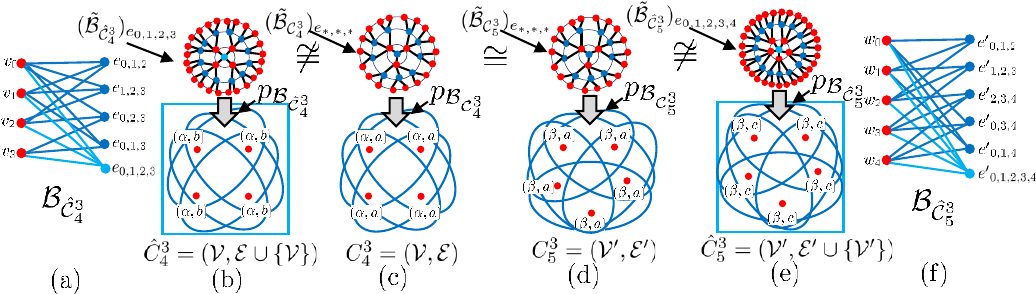}
\caption{ An illustration of hypergraph symmetry breaking. (c,d) $3$-regular hypergraphs $C_4^3$, $C_5^3$ with $4$ and $5$ nodes respectively and their corresponding universal covers centered at any hyperedge $(\tilde{\mathcal{B}}_{C^3_4})_{e_{*,*,*}}, (\tilde{\mathcal{B}}_{C^3_5})_{e_{*,*,*}}$ with universal covering maps $p_{\mathcal{B}_{C_4^3}}, p_{\mathcal{B}_{C_5^3}}$. (b,e) the hypergraphs $\hat{C}_4^3, \hat{C}_5^3$, which are $C_4^3,C_5^3$ with $4,5$-sized hyperedges attached to them and their corresponding universal covers and universal covering maps. (a,f) are the corresponding bipartite graphs of $\hat{C}_4^3, \hat{C}_5^3$. (c,d) are indistinguishable by GWL-1 and thus will give identical node values by Theorem \ref{thm: gwlunivcover}. On the other hand, (b,e) gives node values which are now sensitive to the the order of the hypergraphs $4,5$, also by Theorem \ref{thm: gwlunivcover}. } \label{fig: cycledist}
\end{figure*}
\subsection{A Limitation of GWL-1}
\label{sec: GWL1-limitation}

 Due to the equivalence of GWL-1 to viewing the hypergraph $\gH$ as a collection of rooted trees, we can show that the automorphism group of $\gH$ is a subgroup of the automorphism of this collection of rooted trees. This is stated in the following proposition:
 \begin{theorem}\label{thm: sym-of-equiv}
    Let $h^L: [\gV]^1 \times \mathbb{Z}_2^{n \times 2^{n}} \rightarrow \mathbb{R}^d$ be the $L$-GWL-1 representation of nodes for  hypergraph $\gH$ in Equation \ref{eq: simprepr}, then
    \begin{equation}
        Aut(\gH)\cong Stab(H) \subseteq Sym(h^L(H)) \cong Aut_c(\tilde{\gB}_{\gV,\gE}^{2L}), \forall L\geq 1
    \end{equation}
 \end{theorem}
 By Proposition \ref{prop: aut-stab}, $Aut(\gH) \cong Stab(H)$. The subgroup relationship follows by definition of the symmetry group of a representation map given in Definition \ref{def: sym-repr} and the equivariance of $L$-GWL-1 due to Proposition \ref{prop: perm-equivar}. The last group isomorphism follows by the equivalence between $L$-GWL-1 and the universal cover of the star expansion bipartite graph $\gB_{\gV,\gE}$ up to $2L$-hops.  
 
 Since there are more automorphisms over the GWL-1 view of the hypergraph, many false positive symmetries might exist. Consider the following example. For two neighborhood-regular hypergraphs $C_1$ and $C_2$, the red/blue colored universal covers $\tilde{B}_{C_1}, \tilde{B}_{C_2}$ of the star expansions of $C_1$ and $C_2$ are isomorphic, with the same GWL-1 values on all nodes.  However, two neighborhood-regular hypergraphs of different order become distinguishable if a single hyperedge covering all the nodes of each neighborhood-regular hypergraph is added. Furthermore, deleting the original hyperedges, does not change the node isomorphism classes of each hypergraph. 
 Referring to Figure \ref{fig: cycledist}, consider the hypergraph $\mathcal{C}= C_4^3 \sqcup C_5^3$, the hypergraph with two $3$-regular hypergraphs $C_4^3$ and $C_5^3$ acting as two connected components of $\mathcal{C}$. As shown in Figure \ref{fig: cycledist}, the node representations of the two hypergraphs are identical due to Theorem \ref{thm: gwlunivcover}.

Given a hypergraph $\gH$, we define a special induced subhypergraph $\gR \subseteq \gH$ whose node set GWL-1 cannot  distinguish from other such special induced subhypergraphs. 
 \begin{definition}
 \label{def: univsymm}
    A \emph{$L$-GWL-1 symmetric} induced subhypergraph $\gR\subset\gH$ of $\gH$ is a connected induced subhypergraph determined by $\gV_{\gR} \subseteq \gV_{\gH}$, some subset of nodes that are all indistinguishable amongst each other by $L$-GWL-1:
    \begin{equation} h_u^L(H)= h_v^L(H), \forall u,v \in \gV_{\mathcal{R}}
    \end{equation}
    When $L=\infty$, we call such $\gR$ a GWL-1 symmetric induced subhypergraph. Furthermore, if $\gR=\gH$, then we say $\gH$ is \emph{GWL-1 symmetric}.
\end{definition} 


This definition is similar to that of a symmetric graph from graph theory \cite{godsil2001algebraic}, except that isomorphic nodes are determined by the GWL-1 approximator instead of an automorphism. 
The following observation follows from the definitions.
\begin{observation}
A hypergraph $\gH$
is GWL-1 symmetric if and only if it is $L$-GWL-1 symmetric for all $L \geq 1$ if and only if $\gH$ is neighborhood regular.
\end{observation}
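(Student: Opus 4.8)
The plan is to prove the chain of equivalences by working directly with the GWL-1 recursion, reserving Theorem~\ref{thm: gwlunivcover} as a cross-check. Throughout I would use that $\gH$ is connected, so every vertex has degree at least one and the multiset defining $h_v^{i+1}$ is nonempty.

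First I would establish the monotone refinement of the node coloring. Since $h_v^{i+1}=\{\{(h_v^i,f_e^{i+1})\}\}_{e\ni v}$ is a nonempty multiset all of whose pairs share the first coordinate $h_v^i$, the equality $h_u^{i+1}=h_v^{i+1}$ lets one read off $h_u^i=h_v^i$; hence the partition of $\gV$ induced by $h^i$ refines as $i$ grows, and on a finite hypergraph it stabilizes at some finite $L^*$, which realizes the $i=\infty$ coloring. This yields the equivalence of the first two conditions at once: if $\gH$ is GWL-1 symmetric then all vertices lie in one class at level $L^*$, hence in the one class of every coarser earlier level and of every (identical) later level, so $\gH$ is $L$-GWL-1 symmetric for every $L\ge 1$; conversely taking $L=L^*$ recovers GWL-1 symmetry.

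For the implication neighborhood-regular $\Rightarrow$ symmetric at all $L$, I would induct on $L$, the point being that a uniform coloring makes the recursion blind to everything but cardinalities. If $h_v^i$ is constant in $v$, then $f_e^{i+1}=\{\{(f_e^i,h_v^i)\}\}_{v\in e}$ is a multiset of $|e|$ identical entries and so is a function of $|e|$ alone; feeding this back, $h_v^{i+1}=\{\{(h_v^i,f_e^{i+1})\}\}_{e\ni v}$ depends only on the multiset $\{\{\,|e|:e\ni v\,\}\}$ of incident edge sizes. Neighborhood regularity makes this multiset -- equivalently the $d$-uniform degree for every $d$ -- the same at each vertex, so $h_v^{i+1}$ is again constant; the base case is the unattributed initialization, with $h_v^0\gets X_v$ constant and $f_e^0\gets\{\}$. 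Thus all $h_v^L$ coincide for every $L$.

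The converse is the step I expect to be the crux. Running the same computation at $L=1$ shows that $h_v^1$ encodes exactly the multiset $\{\{\,|e|:e\ni v\,\}\}$, so symmetry already at level one forces every vertex to carry the same number of incident hyperedges of each size $d$, which is the $d$-uniform-degree regularity recorded just after the definition of a neighborhood-regular hypergraph. The subtlety worth stating explicitly is that this is all GWL-1 can see: under a uniform coloring each hyperedge collapses to a multiset of identical pairs, so the recursion never detects how incident hyperedges overlap. Accordingly I would phrase the equivalence at the level GWL-1 actually resolves, namely equality of all $d$-uniform degrees, which is the operative meaning of neighborhood regularity here; the same conclusion follows from Theorem~\ref{thm: gwlunivcover}, since a vertex-rooted $2$-colored universal cover is determined level by level exactly by the degrees and edge sizes it unrolls.
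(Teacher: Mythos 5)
Your argument is correct as far as it goes, and it takes a genuinely different route from the paper's. The paper derives both equivalences from the universal-cover characterization of GWL-1 (Theorem~\ref{thm: gwlunivcover}): isomorphisms of rooted $2L$-hop covering trees, their restriction/extension across depths, and an identification of the $2$-hop covering tree with a lift of the neighborhood $N(v)$. You instead work directly on the recursion: monotone refinement plus finite stabilization of the node partition gives the first equivalence without any compactness-style ``infinite extension'' step, and the observation that a uniform node coloring collapses each update to a function of the incident-edge-size multiset $\{\{\,|e| : e\ni v\,\}\}$ gives the second. Your first equivalence and your implication from neighborhood regularity to symmetry at every $L$ are complete and correct, and more elementary than the paper's treatment.

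The caveat you raise at the crux is not a hedge; it identifies a real defect in the statement and in the paper's own proof. The paper passes from isomorphism of $2$-hop rooted covers to isomorphism of the rooted neighborhoods $(N(u))_u\cong (N(v))_v$ on the grounds that ``$N(u)$ and $N(v)$ are cycle-less,'' but the star expansion of a neighborhood contains a cycle as soon as two hyperedges through the node share a second node, and the tree unrolling then forgets exactly the overlap data you point to. A concrete counterexample to the literal statement: let $\gV=\{1,\dots,6\}$ and $\gE=\{\{1,2,3\},\{1,2,4\},\{3,5,6\},\{4,5,6\}\}$. This hypergraph is connected, every node has degree $2$, and every hyperedge has size $3$, so by your induction all nodes receive the same GWL-1 value at every iteration; that is, $\gH$ is $L$-GWL-1 symmetric for all $L$ and hence GWL-1 symmetric. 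Yet $N(1)$ has vertex set $\{1,2,3,4\}$ (its two hyperedges overlap in $\{1,2\}$) while $N(3)$ has vertex set $\{1,2,3,5,6\}$, so these neighborhoods are not isomorphic and $\gH$ is not neighborhood-regular in the paper's sense. The observation is therefore true only with ``neighborhood regular'' read as equality of all $d$-uniform degrees (equivalently, isomorphism of the tree unrollings of the neighborhoods), which is precisely the level at which you phrased your equivalence. Your proof establishes that corrected statement; the stronger claim as written cannot be proved, because it is false.
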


Our goal is to find GWL-1 symmetric induced subhypergraphs in a given hypergraph and break their symmetry without affecting any other nodes.
\section{Method}\label{sec: method}
{\color{black}
Our goal is to learn from a training hypergraph and then predict higher order links in a temporally later hypergraph transductively. 
This can be formulated as follows:

\begin{problem}
\textbf{Hyperlink Transductive Learning: } 
Let $\mathcal{V}$ be $n$ nodes.
\begin{enumerate}
\item Given a training hypergraph sampled at time $t_{tr} \in \mathbb{R}$:

For $(\mathcal{V},(\mathcal{E}_{tr})_{gt}) \sim P(\gH; t_{tr})$,
learn a boolean predictor $\hat{h}$ on hypergraph input so that:

\item For a testing hypergraph sampled at time $t_{te} \in \mathbb{R}, t_{te}>t_{tr}:$

For $(\mathcal{V},\mathcal{E}_{te}) \sim P(\gH; t_{te})$ and $\mathcal{E}_{te} \subseteq (\mathcal{E}_{te})_{gt}$, 

$\hat{h}((\mathcal{V},\mathcal{E}_{te}),e)$ predicts whether $e \in (\mathcal{E}_{te})_{gt}\setminus \mathcal{E}_{te}, \forall e \in 2^{\mathcal{V}}$
\end{enumerate}
\end{problem}

We will assume that the unobservable hyperedges are of the same size $k$ so that we only need to predict on $k$-node sets. In order to preserve the most information while still respecting topological structure, we aim to start with an invariant multi-node representation to predict hyperedges and increase its expressiveness, as defined in Definition \ref{def: simpexpr}. For input hypergraph $\gH$ and its matrix representation $H$, to do the prediction of a missing hyperedge on node subsets, we use a multi-node representation $h(S,H)$ for $S \subseteq \gV(H)$ as in Equation \ref{eq: simprepr} due to its simplicity, guaranteed invariance, and improve its expressivity. We aim to not affect the computational complexity since message passing on hypergraphs is already quite expensive, especially on GPU memory. 

Our method is a preprocessing algorithm that operates on an input training hypergraph $\gH=(\gV,\gE_{tr})$ with $\gE_{tr}\subset (\gE_{tr})_{gt}$. In order to increase expressivity, we search for potentially indistinguishable regular induced subhypergraphs so that they can be replaced with hyperedges that span the subhypergraph to break the symmetries that prevent GWL-1 from being more expressive.
We devise an algorithm, which is shown in Algorithm \ref{alg: cellattach}. It takes as input a hypergraph $\mathcal{H}$ with star expansion matrix $H$. The idea of the algorithm is to identify nodes of the same GWL-1 value that are maximally connected and use this collection of node subsets to break the symmetry of $\gH$. 

\begin{figure*}
    \centering
\includegraphics[width=0.80\textwidth] {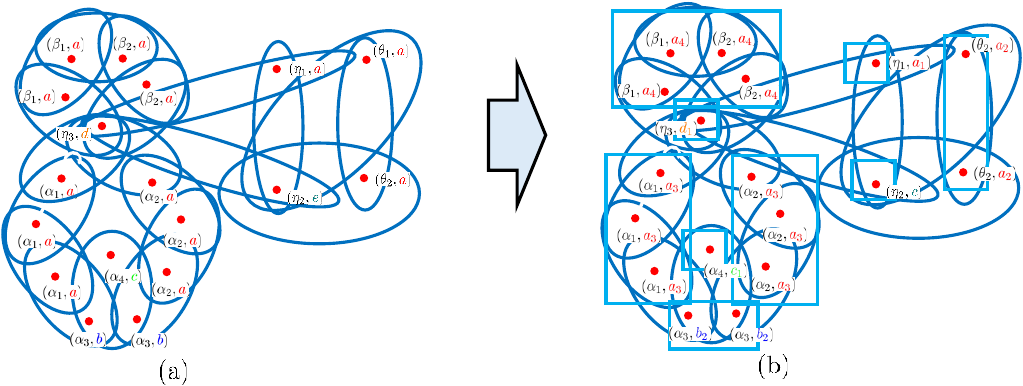}
\caption{ An illustration of Algorithm \ref{alg: cellattach} for $1$-GWL-1. \\
In (a) a hypergraph is shown. Each node is labeled with a pair. The left part of the pair in Greek alphabet is its isomorphism class. The right part of the pair in Latin alphabet is its $1$-GWL-1 class, which is determined by its neighborhood of hyperedges. In (b) the multi-hypergraph formed by covering the original hypergraph by hyperedges (light blue boxes) which are determined by the connected components of $1$-GWL-1 indistinguishable node sets. The nodes can now be relabeled by $1$-GWL-1. All hyperedges can be assigned learnable weights.
For downstream training, the new hyperedges are randomly added and the existing hyperedges within each new hyperedge are randomly dropped. } \label{fig: method-illustration}
\end{figure*}
First we introduce some combinatorial definitions for hypergraph data that we will use in our algorithm:
\begin{definition}
A hypergraph $\gH= (\gV,\gE)$ is \textbf{connected} if $\gB_{\gV,\gE}$ is a connected graph.

    A \textbf{connected component} of $\gH$ is a connected induced subhypergraph which is not properly contained in any connected subhypergraph of $\gH$.
\end{definition}
Our preprocessing algorithm is explicitly given in Algorithm \ref{alg: cellattach}. We describe it in words here:

\textbf{Algorithm: }

\begin{enumerate}

\item For a given $L \in \mathbb{Z}^+$ and any $L$-GWL-1 node value $c_L$, the first step is to construct the induced subhypergraph $\mathcal{H}_{c_L}$ from the $L$-GWL-1 class of nodes:
\begin{equation}\label{eq: Vc}\mathcal{V}_{c_L}\triangleq \{v \in \mathcal{V}: c_L= h_v^L(H)  \},
\end{equation}
where $h_v^L$ denotes the $L$-GWL-1 class of node $v$. 

\item We then compute the connected components of $\gH_{c_L}$. Denote $\gC_{c_L}$ as the set of all connected components of $\gH_{c_L}$. If $L=\infty$, then drop $L$.
Each of these connected components is a subhypergraph of $\gH$, denoted $\mathcal{R}_{c_L,i}$ where $\mathcal{R}_{c_L,i} \subseteq \mathcal{H}_{c_L} \subseteq \mathcal{H}$ for $i= 1,...,|\gC_{c_L}|$. 

\item  Gather the collection of node sets and hyperedge sets formed by $\mathcal{R}_{c_L,i}, i=1,...,|\gC_{c_L}|$.

\end{enumerate}
 
\begin{algorithm}[!ht]
\caption{A Symmetry Finding Algorithm}
\label{alg: cellattach}
\SetAlgoLined
\SetNoFillComment
\SetCommentSty{mycommfont}
\SetKwComment{Comment}{/* }{ */}
\KwData{Hypergraph $\mathcal{H}= (\mathcal{V},\mathcal{E})$, represented by its star expansion matrix $H$. $L \in \mathbb{Z}^+$ is the number of iterations to run GWL-1.}
\KwResult{A pair of collections: $(\mathcal{R}_V=\{\mathcal{V}_{R_j}\}, \mathcal{R}_E=\cup_j\{\mathcal{E}_{R_j}\})$ where $R_j$ are disconnected subhypergraphs exhibiting symmetry in $\mathcal{H}$ that are indistinguishable by $L$-GWL-1.}


$U_L \gets h_v^L(H); \gG_L \gets \{ U_L[v]: \forall v \in \mathcal{V}\}  $ \Comment*[r]{$U_L[v]$ is the $L$-GWL-1 value of node $v \in \gV$.}

$\gB_{\gV_{\gH},\gE_{\gH}} \gets Bipartite(\gH)$ \Comment{Construct the bipartite graph from $\gH$.}

$\mathcal{R}_V \gets \{\}$; $\mathcal{R}_E \gets \{\}$ 

\For{$c_L \in \gG_L$}{
$\gV_{c_L} \gets \{v \in \mathcal{V}: U_L[v]=c_L \}, \gE_{c_L} \gets \{ e \in \mathcal{E}: u \in \mathcal{V}_{c_L}, \forall u \in e\}$

$\mathcal{C}_{c_L} \gets$ ConnectedComponents$(\gH_{c_L}= (\gV_{c_L},\gE_{c_L}))$ 

\For{$\gR_{c_L,i} \in \gC_{c_L}$}{
$\mathcal{R}_V \gets \mathcal{R}_V \cup \{ \mathcal{V}_{\mathcal{R}_{c_L,i}} \}$; $\mathcal{R}_E \gets \mathcal{R}_E \cup  \mathcal{E}_{\mathcal{R}_{c_L,i}}$
}
}
\Return $(\mathcal{R}_V,\mathcal{R}_E)$
\end{algorithm}
 We call the subhypergraphs in $\gC_{c_L}$ found by the symmetry finding Algorithm \ref{alg: cellattach} as \textbf{maximally connected $L$-GWL-1 equal valued subhypergraphs}.  
 
 We will use the output hyperedges of the preprocessing algorithm to "softly" cover the input hypergraph. This will introduce multiplicities to the hyperedges. We formally define such a generalization of a hypergraph, called a multi-hypergraph here:
\begin{definition}
A \textbf{multi}-hypergraph $\gH= (\gV,\tilde{\gE})$ is a pair consisting of a set of nodes $\gV$ and a multiset of hyperedges $\tilde{\gE} \triangleq (\gE,m)$ where $\gE \subseteq 2^{\gV}$ and $m: \gE \rightarrow \mathbb{Z}^+$ is a multiplicity function.
\end{definition}
The star expansion incidence matrix of a multi-hypergraph $\gH$ can now be defined as:
\begin{definition}
    The star expansion incidence matrix $H$ of a multi-hypergraph $\mathcal{H}=( \mathcal{V}, \tilde{\mathcal{E}})$ is the  $|\gV| \times ( 2^{|\gV|}\times \mathbb{Z}^+)$
    (infinite) $0$-$1$ incidence matrix $H$ where $H_{v,e}=1$ iff $v\in e$ for $(v,e) \in \gV \times \tilde{\gE}$ for some fixed orderings on both $\gV$ and $2^{\gV} \times \mathbb{Z}^+$. 
\end{definition}
In practice, of course, the multiplicity function of $\gH$ is bounded and only the nonzeros of this matrix are kept track of. Thus the star expansion incidence matrix is actually a sparse finite matrix. 

On a multi-hypergraph $\gH= (\gV,\gE)$, we can define the degree of a vertex $v \in \gV$ in terms of the cardinality of a multiset: $deg(v)\triangleq |\{\!\!\{e: e\ni v \}\!\!\}|= \sum_{e \ni v} m(e)$ where $m(e)$ is the multiplicity, or number of repeated occurrences, of $e$ in $\{\!\!\{e: e\ni v \}\!\!\}$.

Letting $\gE_{\gH}(H)$ be the multiset of hyperedges of $\gH$ and $\gV_{\gH}(H)$ the set of nodes of $\gH$, message passing through incidences is still well defined. We summarize this in the following proposition:
\begin{proposition}\label{prop: multi-GWL-1}
    The GWL-1 algorithm of Equation \ref{eq: GWL-1} can be computed on a multi-hypergraph $\gH$

    Thus, $h_v^i(H)$ and $h^i(S,H)$ are well defined on its star incidence matrix $H$.
\end{proposition}
This implication of Proposition \ref{prop: multi-GWL-1} is that GWL-1 based hyperGNNs can learn on multi-hypergraphs using the star expansion incidence matrix representations. 

\textbf{Downstream Training: } 
After executing Algorithm \ref{alg: cellattach} on $\gH$, we collect its output $(\gR_V, \gR_E)$. During training, for each $i=1,...,|\gC_{c_L}|$ we randomly perturb $\mathcal{R}_{c_L,i}$ to form a random multi-hypergraph $\hat{\gH}_L$ by:
\begin{itemize}
    \item Attaching (with multiplicity) a single hyperedge that covers $\mathcal{V}_{\mathcal{R}_{c_L,i}}$ with probability $q_i$ and not attaching with probability $1-q_i$.
    \item All the hyperedges in $\mathcal{R}_{c_L,i}$ are dropped or kept with probability $p$ and $1-p$ respectively.
\end{itemize}

Our method is similar to the concept of adding virtual nodes \cite{hwang2022analysis} in graph representation learning. This is due to the equivalence between virtual nodes and hyperedges by Proposition \ref{prop: hypergraph-bipartite-equiv}. For a guarantee of improving expressivity, see Lemma \ref{thm: cellinvar} and Theorems \ref{thm: count}, \ref{thm: invarexpr}. For an illustration of the data augmentation, see Figure \ref{fig: cycledist}.


 Alternatively, downstream training using the output of Algorithm \ref{alg: cellattach} can be done. Similar to subgraph NNs, this is done by applying an ensemble of models~\cite{alsentzer2020subgraph, papp2021dropgnn, tan2023bring}, with each model trained on transformations of $\gH$ with its symmetric subhypergraphs randomly replaced. 
 This, however, is computationally expensive.
 }

 \textbf{Illustration: }
 In Figure \ref{fig: method-illustration} an illustration of Algorithm \ref{alg: cellattach} is shown. For the hypergraph shown in Figure \ref{fig: method-illustration} (a), two graph cycles are glued at a single node while a separate hypergraph is glued at that same node. With $1$-GWL-$1$, there is a large class of nodes labeled "a" that are indistinguishable. Each of the connected components of these $1$-GWL-$1$ node classes is covered by a single hyperedge (light blue box) to form a multi-hypergraph. We show in Section \ref{sec: alg-guarantees} that in this multihypergraph, the nodes express more of the original hypergraph. The data augmentation procedure during downstream training can then be applied to the blue boxes and the original hyperedges within each blue box separately. 

\subsection{Algorithm Guarantees}\label{sec: alg-guarantees}

We show some guarantees for the output of Algorithm \ref{alg: cellattach}. 
We will assume the following notation to denote the relevant substructures on a single hypergraph found by Algorithm \ref{alg: cellattach}.


\textbf{Notation: }

Let $\mathcal{H}= (\mathcal{V},\mathcal{E})$ be a hypergraph with star expansion matrix $H$ as before.

Let $(\gR_{\gV},\gR_{\gE})$ be the output of Algorithm \ref{alg: cellattach} on $H$ for $L \in \mathbb{Z}^+$. We call this collection of nodes and hyperedges as GWL-1 symmetric induced components. Let:
\begin{equation}
    \hat{\gH}_L\triangleq (\gV,\gE\sqcup\gR_V)
\end{equation}
be the multi-hypergraph formed from $\gH$ after 
adding all the hyperedges from $\gR_{\gV}$ and let $\hat{H}_L$ be the star expansion matrix of the resulting multi-hypergraph $\hat{\gH}_L$. Let: 
\begin{equation}
    V_{c_L,s}\triangleq \{v \in \gV_{c_L}: v \in R, R \in \gC_{c_L}, |\gV_{R}|=s\}
\end{equation}
be the set of all nodes of $L$-GWL-1 class $c_L$ belonging to a connected component in $\gC_{c_L}$ of $s\geq 1$ nodes in $\gH_{c_{L}}$, the induced subhypergraph of $L$-GWL-1. Let: \begin{equation}
    \gG_L\triangleq \{h_v^L(H): v \in \gV\}
\end{equation}
be the set of all $L$-GWL-1 values on $H$. Let: \begin{equation}
    \gS_{c_{L}}\triangleq \{|\gV_{\gR_{c_L,i}}|: \gR_{c_L,i} \in \gC_{c_L}\}
\end{equation}
be the set of node set sizes of the connected components in $\gH_{c_L}$. 

\textbf{Properties of $(\gR_{\gV},\gR_{\gE})$ from Algorithm \ref{alg: cellattach}: }

In the following proposition, we show that the subhypergraphs found by Algorithm \ref{alg: cellattach} are GWL-1 symmetric. This should be evident by line 5 of Algorithm \ref{alg: cellattach} where an induced subgraph of the bipartite graph is formed from nodes of the same $L$-GWL-1 values.
\begin{proposition}
\label{lemma: regularsubhypergraph}
    If $L=\infty$, for any GWL-1 node value $c$ computed on $\mathcal{H}$, 
    all connected component subhypergraphs $\mathcal{R}_{c,i} \in \gC_c$
    are GWL-1 symmetric as hypergraphs.
\end{proposition}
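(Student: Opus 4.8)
The plan is to reduce the statement to a purely combinatorial regularity condition and then check that GWL-1 cannot refine past it. Concretely, I would show that every node of a fixed component $\mathcal{R}_{c,i}$ is incident, inside $\mathcal{R}_{c,i}$, to the same number $m_d$ of hyperedges of each size $d$, i.e. that $\mathcal{R}_{c,i}$ is $d$-uniform-degree regular for every $d$. Granting this, GWL-1 run on $\mathcal{R}_{c,i}$ as a standalone hypergraph keeps all nodes in a single class: by induction on the iteration number, whenever all node labels $h_v^i$ agree and all edge labels $f_e^i$ depend only on $|e|$, the update $f_e^{i+1}=\{\{(f_e^i,h_v^i)\}\}_{v\in e}$ again depends only on $|e|$ (all incident node labels being equal), and then $h_v^{i+1}=\{\{(h_v^i,f_e^{i+1})\}\}_{e\ni v}$ is a function of the multiset $\{\{|e|: e\ni v\}\}$, which is the constant multiset containing each size $d$ with multiplicity $m_d$; the base case is the unattributed initialization $h_v^0=\text{const}$, $f_e^0=\{\}$. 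Hence $\mathcal{R}_{c,i}$ is GWL-1 symmetric. This is exactly the content of the Observation, read as the equivalence between GWL-1 symmetry and $d$-uniform-degree regularity.

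The heart of the argument is the regularity claim, and here $L=\infty$ is used essentially. Since GWL-1 color refinement is monotone, two nodes (resp. hyperedges) receiving the same stable value lie in the same class at every finite iteration as well. At the fixed point the stable colors satisfy $h_v^\infty=\{\{(h_v^\infty,f_e^\infty)\}\}_{v\in e}$ and $f_e^\infty=\{\{(f_e^\infty,h_w^\infty)\}\}_{w\in e}$, so $f_e^\infty$ determines both $|e|$ and the multiset $\{\{h_w^\infty: w\in e\}\}$, while $h_v^\infty=c$ determines the multiset $M_v\triangleq\{\{f_e^\infty: v\in e\}\}$. I would call a hyperedge \emph{pure} if all of its nodes lie in $\mathcal{V}_c$ (equivalently all incident stable node-colors equal $c$); these are precisely the hyperedges of $\gH_c$.

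Next I would show that all pure hyperedges of a common size $d$ carry a single stable color $\phi_{d,c}$: if $e,e'$ are both pure of size $d$, their incident nodes all have stable color $c$, hence (by monotonicity) equal node colors at every iteration, and a one-line induction gives $f_e^i=f_{e'}^i$ for all $i$. Consequently the number of pure size-$d$ hyperedges incident to a node $v\in\mathcal{V}_c$ equals the multiplicity of $\phi_{d,c}$ in $M_v$; since $M_u=M_v$ whenever $h_u^\infty=h_v^\infty=c$, this multiplicity $m_d$ is independent of the node. Finally, because $\mathcal{R}_{c,i}$ is a connected component of $\gH_c$, the pure hyperedges incident to any $v\in\gV_{\mathcal{R}_{c,i}}$ are exactly the hyperedges of $\mathcal{R}_{c,i}$ incident to $v$, so every node of $\mathcal{R}_{c,i}$ has $d$-uniform degree $m_d$ for all $d$, which is the regularity required above.

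The main obstacle is the clean separation of pure from mixed hyperedges by their stable colors together with the fact that all pure size-$d$ hyperedges share one stable color; both rest on the fixed-point identities and on refinement monotonicity, and both genuinely require $L=\infty$ (at a finite depth $L$ a stable-looking $f_e^L$ need not yet certify that every incident node has color $c$, so the counts $m_d$ could fail to be node-independent). The remaining steps, namely the induction showing GWL-1 stays constant and the identification of incident pure hyperedges with the component's hyperedges, are routine once the counting fact is established.
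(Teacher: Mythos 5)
Your proof is correct, and it reaches the stated conclusion by a genuinely different route than the paper's. The paper's own proof (Proposition \ref{lemma: appendix-regularsubhypergraph} in the appendix) goes through the covering-space characterization of GWL-1 (Theorem \ref{thm: gwlunivcover}): equality of stable values makes the rooted universal covers of all nodes of $\gR_{c,i}$ $2$-color isomorphic, and a contradiction argument is then used to conclude that the induced neighborhoods $N_{\gH_c}(u)$ are pairwise isomorphic, i.e.\ that $\gR_{c,i}$ is neighborhood-regular; GWL-1 symmetry is then read off from the Observation equating the two notions. You instead stay entirely inside the color-refinement calculus: the fixed-point identities show that a stable hyperedge color determines the edge's size and the multiset of stable colors of its nodes (which separates the hyperedges of $\gH_c$, your ``pure'' ones, from all others), a short induction shows all pure size-$d$ hyperedges share one stable color $\phi_{d,c}$, equality of $M_v$ across $\gV_c$ then makes the per-size counts $m_d$ node-independent, and a final elementary induction shows that per-size degree regularity alone pins all nodes of the standalone component to a single class. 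What each buys: your route is more elementary (no universal covers), and --- more substantively --- it establishes only the weaker regularity that GWL-1 can actually certify. This matters, because per-size degree regularity is strictly weaker than neighborhood-regularity, and GWL-1 symmetry does \emph{not} imply the latter: the connected $3$-uniform hypergraph with hyperedges $\{1,2,3\}$, $\{1,2,4\}$, $\{3,4,5\}$, $\{5,6,7\}$, $\{6,7,8\}$, $\{8,9,10\}$, $\{9,11,12\}$, $\{10,11,12\}$ has every node in exactly two hyperedges, hence receives one GWL-1 value at every iteration, yet $N(1)$ (four nodes, two hyperedges overlapping in two nodes) and $N(5)$ (five nodes) are not isomorphic; the universal cover unfolds multi-node overlaps, which is exactly where the paper's contradiction argument (and the corresponding direction of the Observation) is too strong. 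So your argument lands on the correct invariant where the paper's intermediate claim overshoots. The one loose statement in your write-up is the remark that the regularity-implies-symmetry step ``is exactly the content of the Observation'': it is not, since the Observation is phrased with neighborhood-regularity rather than per-size degree regularity, but this is harmless because you prove that implication directly by induction rather than citing it.
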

Algorithm \ref{alg: cellattach} outputs a partition of the entire hypergraph. This follows from the fact that GWL-1 already covers the entire hypergraph after a single hop.
\begin{proposition}
\label{prop: alg-cover}
    If $L\geq 1$, the output $(\gR_{\gV}, \gR_{\gE})$ of Algorithm \ref{alg: cellattach} partitions a subgraph of $\gH$, meaning:
    \begin{equation}
        \gV=\sqcup_{V \in \gR_{\gV}}V \text{ and }\gE\supset \sqcup_{E \in \gR_{\gE}}E
    \end{equation}
\end{proposition}


The output of Algorithm \ref{alg: cellattach} can additionally be used to guarantee improvement in expressing the hypergraph:

\textbf{Prediction Guarantees: }

 In order to guarantee that the GWL-1 symmetric components $\gR_{c,i}$ found by Algorithm \ref{alg: cellattach} carry additional information, there needs to be a separation between them to prevent an intersection between the rooted trees computed by GWL-1. We define what it means for two node subsets to be sufficiently separated via the shortest hyperedge path distance between nodes in $\gV$ as follows: 

\begin{definition}
Two subsets of nodes $\gU_1,\gU_2 \subseteq \gV$ are \textbf{sufficiently $L$-separated} if:

\begin{equation}
    \min_{v_1 \in \gU_1 ,v_2 \in \gU_2} d(v_1,v_2) >L
\end{equation}
where $d(v_1,v_2) \triangleq \min_{e_1,...,e_k \in \gE, v_1 \in e_1, v_2 \in e_k} k$ is the shortest hyperedge path distance from $v_1 \in \gV$ to $v_2 \in \gV$.

A collection of node subsets $\gC \subseteq 2^{\gV}$ is \textbf{sufficiently $L$-separated} if all pairs of node subsets are \textbf{sufficiently $L$-separated}.
\end{definition}
Our definition of sufficiently $L$-separated is similar in nature to that of well separation between point sets \cite{10.1145/200836.200853} in Euclidean space.
Assuming that the $\gC_{c_L}$ are sufficiently $L$-separated from each other, intuitively meaning that no two nodes from two separate $\gV_{\gR_{c_L,i}} \in \gR_V$ are within $L$ hyperedges away, then the cardinality of each component $|\gV_{\gR_{c_L,i}}|$ is recognizable. This is stated in the following lemma:
\begin{lemma}
\label{thm: cellinvar}
    If $L \in \mathbb{Z}^+$ is small enough so that after running Algorithm \ref{alg: cellattach} on $L$, for any $L$-GWL-1 node class $c_L$ on $\gV$ 
    the collection of $\gC_{c_L}$ is \textbf{sufficiently $L$-separated},
    
    then after forming $\hat{\gH}_L$, the new $L$-GWL-1 node classes of $\mathcal{V}_{\mathcal{R}_{c_L,i}}$
    for $i=1,...,\gC_{c_L}$ in $\mathcal{\hat{H}}_L$ are all the same class $c'_L$ but are distinguishable from $c_L$ depending on $|\mathcal{V}_{\mathcal{R}_{c_L,i}}|$.
\end{lemma}
We can then use this lemma to show that 
under certain conditions for large hypergraphs
, augmenting the hypergraph $\gH$ to the multi-hypergraph $\hat{\gH}_{L}$ will give a guarantee on the number of pairs of $k$-node sets that become distinguished which were indistinguishable as sets of GWL-1 values. 
\begin{theorem}
\label{thm: count}
Let $|\gV|=n, L \in \mathbb{Z}^+$ and $vol(v)\triangleq \sum_{e\in \gE: e\ni v}|e|$ and assuming that the collection of node subsets $\gC_{c_L}$ is sufficiently $L$-separated. 

If $vol(v) = O(\log^{\frac{1-\epsilon}{4L}}n), \forall v \in \gV$ for any constant $\epsilon > 0$;  $|\gS_{c_L}|\leq S, \forall c_L\in \gC_{L}$, $S$ constant, and $|V_{c_L,s}|=O(\frac{n^{\epsilon}}{{\log^{\frac{1}{2k}}(n)}}), \forall s \in \gC_{c_L}$ 
, then for $k \in \mathbb{Z}^+$ and $k$-tuple $C= (c_{L,1},...,c_{L,k}), c_{L,i} \in \gG_L, i=1..k$ there exists $\omega(n^{2k\epsilon})$ many pairs of $k$-node sets $S_1 \not\simeq S_2$ such that $(h^L_u(H))_{u\in S_1}=(h^L_{v\in S_2}(H))=C$, as ordered $k$-tuples, while $h(S_1,\hat{H}_L)\neq h(S_2,\hat{H}_L)$ also by $L$ steps of GWL-1.
\end{theorem}

The conditions of Theorem \ref{thm: count} assume an arbitrarily large hypergraph that is sparse and for every $c_L \in \gG_L$ has $\gC_{c_L}$ sufficiently $L$-separated, has a bounded set of node set sizes from $\gS_{c_L}$ and a controlled growth for each node set size from $\gS_{c_L}$. The idea behind the proof of Theorem \ref{thm: count} is that under these conditions, there are enough isomorphic rooted trees computed by $L$-GWL-1. It can then be shown that over all pairs of $k$-sets of nodes with elementwise isomorphic rooted trees, that they can be distinguished by the component size they belong in. We give a simple example hypergraph that illustrates the condition of Theorem \ref{thm: count}.

\textbf{Example: } A simple example of a hypergraph that statisfies the conditions of Theorem \ref{thm: count} is a union of many disconnected hypergraphs $\gH= \cup_i \gH_i= (\gV,\gE)$ with $|\gV_{\gH_i}| \leq S$ where $S < \infty$ is a small constant independent of $n= |\gV| \geq S$. Such a hypergraph could be a social network where the nodes are user instances and the hyperedges are private groups. The disconnected hypergraphs represent disconnected communities where a user can only belong to a single community.




We show that our algorithm increases expressivity (Definition \ref{def: simpexpr}) for $h(S,H)$ of Equation \ref{eq: simprepr}. 
\begin{theorem} [Invariance and Expressivity]
\label{thm: invarexpr}
If $L=\infty$, GWL-1 enhanced by Algorithm \ref{alg: cellattach} is still invariant to node isomorphism classes of $\gH$ and can be strictly more expressive than GWL-1 to determine node isomorphism classes.
\end{theorem}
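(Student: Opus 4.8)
The plan is to prove the two assertions separately: first that computing GWL-1 on the augmented hypergraph still respects the node isomorphism classes of the original $\gH$, and then to exhibit an explicit witness on which the augmentation strictly increases distinguishing power. Throughout I would work with the deterministic augmentation $\hat{\gH}_\infty=(\gV,\gE\cup\gR_V)$ (the $L=\infty$ instance of $\hat{\gH}_L$ from the Notation paragraph), with star expansion matrix $\hat{H}_\infty$, and take the enhanced representation to be $h(S,\hat{H}_\infty)$ of Equation \ref{eq: simprepr}.

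For invariance, the goal is to show $Stab(H)\subseteq Stab(\hat{H}_\infty)$ and then invoke Proposition \ref{prop: equivar-implies-kinvar} on $\hat{H}_\infty$. The key observation is that Algorithm \ref{alg: cellattach} produces the collection $\gR_V$ using only the $\infty$-GWL-1 node values and the connectivity of the induced subhypergraphs $\gH_{c}$, both of which are permutation equivariant (the former by the permutation equivariance of the GWL-1 updates, the latter because $Bipartite(\cdot)$ and connected components are equivariant). Hence for any $\pi\in Stab(H)$ we have $\pi\cdot\gR_V=\gR_V$ as a collection of node sets: $\pi$ fixes each GWL-1 class setwise and permutes connected components to connected components. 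Since a genuine automorphism is an isomorphism, it also preserves cardinalities, so it sends each $\mathcal{V}_{R}$ to some $\mathcal{V}_{R'}$ with $|\mathcal{V}_{R}|=|\mathcal{V}_{R'}|$; as the covering hyperedge attached to a component is a function of its node set alone, $\pi$ permutes the attached hyperedges among themselves. Therefore $\pi\cdot(\gE\cup\gR_V)=\gE\cup\gR_V$, i.e. $\pi\in Stab(\hat{H}_\infty)$. Proposition \ref{prop: equivar-implies-kinvar} applied to $\hat{H}_\infty$ then yields that $h(\cdot,\hat{H}_\infty)$ is $k$-node invariant with respect to $Stab(\hat{H}_\infty)\supseteq Stab(H)$, so in particular it is invariant to the node isomorphism classes of $\gH$.

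For strict expressivity, I would take the witness of Figure \ref{fig: cycledist}, namely $\gH=C_4^3\sqcup C_5^3$. By the Observation each component is neighborhood regular, hence GWL-1 symmetric, and by Proposition \ref{lemma: regularsubhypergraph} these are exactly the components returned by Algorithm \ref{alg: cellattach} at $L=\infty$. Theorem \ref{thm: gwlunivcover} shows that plain GWL-1 assigns a single common value to all nodes of both components, since the red/blue universal covers rooted at any two of their nodes are $2$-color isomorphic; thus for $u\in C_4^3$, $v\in C_5^3$ with $u\not\cong_{\gH}v$ we have $h_u(H)=h_v(H)$. After attaching covering hyperedges of sizes $4$ and $5$, the root of each universal cover acquires an incident blue node — the covering hyperedge — whose bipartite degree equals the component size, $4$ versus $5$, so the rooted $2$-colored covers cease to be $2$-color isomorphic. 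By Theorem \ref{thm: gwlunivcover} again, $h_u(\hat{H}_\infty)\neq h_v(\hat{H}_\infty)$, hence $h(\{u\},\hat{H}_\infty)\neq h(\{v\},\hat{H}_\infty)$. Combined with the invariance established above, this is a strict gain in distinguishing power, proving the claim.

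The main obstacle I anticipate is the invariance argument, specifically confirming that breaking the \emph{GWL-1} symmetry does not destroy any genuine automorphism of $\gH$. The crux is the gap between GWL-1 symmetry and true isomorphism: components within one GWL-1 class need not be mutually isomorphic (as $C_4^3$ and $C_5^3$ show), yet a true automorphism must map a component to an isomorphic one, hence of equal cardinality, and therefore to one receiving an identically sized covering hyperedge. Making this cardinality-preservation step rigorous — and verifying that the added hyperedges introduce no spurious new symmetry that could collapse previously inequivalent nodes — is the technical heart; the remaining steps are direct invocations of Proposition \ref{prop: equivar-implies-kinvar}, Proposition \ref{lemma: regularsubhypergraph}, and Theorem \ref{thm: gwlunivcover}.
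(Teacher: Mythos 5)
Your proof is correct, but the invariance half takes a genuinely different route from the paper's. The paper proves invariance inside the universal-cover characterization of Theorem \ref{thm: gwlunivcover}: it performs a case analysis (nodes whose class changes versus unaffected nodes), uses a pigeonhole argument to show that genuinely isomorphic nodes must lie in equal-sized connected components of $\gH_c$ and hence acquire the same new class $c_s$, and then propagates this consistency along shortest paths in the rooted covers to all remaining nodes. You instead prove the single group-theoretic containment $Stab(H)\subseteq Stab(\hat{H}_\infty)$: since the $\infty$-GWL-1 classes, the induced subhypergraphs $\gH_c$, their connected components, and the size-at-least-$3$ filter of Algorithm \ref{alg: cellattach} are all computed equivariantly, every $\pi\in Stab(H)$ permutes $\gR_V$ and therefore stabilizes $\gE\cup\gR_V$; invariance then follows at once from Proposition \ref{prop: equivar-implies-kinvar} applied to $\hat{H}_\infty$. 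Your argument is shorter, avoids universal covers entirely for this half, works uniformly for every $L$ (finite or infinite), and generalizes verbatim to any augmentation computed equivariantly from the input. What it does not yield---and what the paper's longer analysis does---is the finer structural information that the new classes are parameterized by component size ($c\mapsto c_s$) and that isomorphic nodes lie in equal-sized components; the paper reuses exactly that information in Lemma \ref{thm: cellinvar} and Theorem \ref{thm: count}. The expressivity halves coincide: both exhibit the witness $C_4^3\sqcup C_5^3$ of Figure \ref{fig: cycledist} and apply Theorem \ref{thm: gwlunivcover} before and after attachment. One final remark: the concern you raise at the end about ``spurious new symmetries'' is moot for this theorem, since invariance is a one-directional implication over $Stab(H)$-orbits---extra elements of $Stab(\hat{H}_\infty)\setminus Stab(H)$ cannot break it---and the strict-expressivity claim is existential, settled entirely by the witness.
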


Proving expressivity from Theorem \ref{thm: cellinvar} follows from the added information of component sizes viewable by each node in its vicinity. 
Proving the invariance from Theorem \ref{thm: cellinvar} follows by a proof by contradiction, which uses the maximality of the connected components found in Algorithm \ref{alg: cellattach}.

When training on a hypergraph with the data augmentations, every possible symmetry observed from the random augmentations will be learned. Thus the symmetry group of $h^L$ on the random matrix $\hat{H}_L$ is isomorphic to the intersection of all symmetries over each augmentation sample. This is expressed as follows:
\begin{equation}\label{eq: symbreak}
    Sym(h^L(\hat{H}_L)) \triangleq \bigcap_{\hat{H}_L' \sim P(\hat{H}_L)} Sym(h^L(\hat{H}_L'))
\end{equation}
We show that the intersection over all symmetries of the estimated multi-hypergraph $\hat{\gH}_L$ has fewer symmetries than that of the $L$-GWL-1 view of the hypergraph as a collection of rooted trees. We call this \textbf{symmetry breaking}.
\begin{proposition}\label{prop: symbreak}
    The multi-hypergraph $\hat{\gH}_L$ breaks the symmetry of the $L$-GWL-1 view of the hypergraph $\gH$:
     \begin{equation}
         Sym(h^L(\hat{H}_L)) \subseteq Aut_c(\tilde{\gB}_{\gV,\gE}^{2L}), \forall L \geq 1
     \end{equation}
\end{proposition}
This follows by the fact that the identity augmentation is in the support of the distribution of random augmentations 
and  that $Sym(h^L(\hat{H}_L))\cong Aut_c(\tilde{\gB}_{{\gV},\gE(\hat{\gH})}^{2L})$ by Theorem \ref{thm: sym-of-equiv}. 

Since hyperGNNs represent each node $v \in \gV$ by message passing through the neighbors in the rooted tree $(\tilde{\gB}_{\gV,\gE})_v$ at $v$. If probabilities are assigned between nodes, then $T$ layers of a hyperGNN can be viewed as computing the random walk probability of ending on any node starting from some uniformly chosen node. We define these terms in the following:

\begin{definition}[\cite{pmlr-v97-chitra19a}]
    A \textbf{random walk} on a (multi) hypergraph $\gH=(\gV,\gE)$ is a Markov chain with state space $\gV$ with transition probabilities $P_{u,v} \triangleq \sum_{e \supset \{u,v\}: e \in \gE} \frac{\omega(e)}{deg(u)|e|}$, where $\omega(e): \gE \rightarrow [0,1]$ is some discrete probability distribution on the hyperedges. When not specified, this is the constant $1$ function.
\end{definition}
Assuming $\gH$ is connected, let $X_t \in \gV$ denote the state of the Markov chain at step $t$ with $P(X_0=v)= \frac{1}{|\gV|}, \forall v \in \gV$. Letting $t \rightarrow \infty$, this probability converges to the stationary distribution on the nodes $\gV$, which is independent of the time. This is expressed in the following definition:
\begin{definition}
    A \textbf{stationary distribution} $\pi: \gV \rightarrow [0,1]$ for a Markov chain with transition probabilities $P_{u,v}$ is defined by the relationship $\sum_{u \in \gV} P_{u,v}\pi(u)=\pi(v)$.

    For a (multi) hypergraph random walk we have the closed form: $\pi(v)= \frac{deg(v)}{\sum_{u \in \gV} deg(u)}$ for $v \in \gV$ assuming $\gH$ is a connected (multi) hypergraph. 
\end{definition}

For the downstream training, we show that there are Bernoulli hyperedge drop/attachment probabilities $p,q_i$ respectively for each $\gR_{c_L,i}$ so that the stationary distribution doesn't change. This shows that our data augmentation can still preserve the low frequency random walk signal.
\begin{proposition}\label{prop: maintain-stationary}
    For a connected hypergraph $\gH=(\gV,\gE)$, let $(\gR_{V},\gR_E)$ be the output of Algorithm \ref{alg: cellattach} on $\gH$. 
    Then there are Bernoulli probabilities $p,q_i$ for $i=1,...,|\gR_V|$ for attaching a covering hyperedge so that $\hat{\pi}$ is an unbiased estimator of $\pi$.
\end{proposition}
The intuition for Proposition \ref{prop: maintain-stationary} is that if a hyperedge is added to cover a connected subhypergraph $\gR_{c_L,i}$ containing at least one hyperedge, then allowing any of the hyperedges in $\gR_{c_L,i}$ to drop is enough to keep the estimated stationary distribution $\hat{\pi}$ unbiased.

\textbf{Time Complexity: }

Proposition \ref{prop: complexity} provides the time complexity of our algorithm. 
\begin{proposition}[Complexity]\label{prop: complexity} 
Algorithm \ref{alg: cellattach} runs in time $O( nnz(H)L+ (n+m))$, 
which is order linear in the size of the input star expansion matrix $H$ for hypergraph $\gH= (\gV,\gE)$, if $L$ is independent of $nnz(H)$, where $n= |\gV|$, $nnz(H)= vol(\gV)\triangleq \sum_{v \in \gV}deg(v)$ and  $m= |\gE|$.
\end{proposition}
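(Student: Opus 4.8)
The plan is to walk through Algorithm~\ref{alg: cellattach} phase by phase and bound each phase's cost, the dominant insight being that the $L$-GWL-1 node values induce a \emph{partition} of $\gV$, so that the seemingly expensive inner loop over all classes $c_L \in \gG_L$ in fact touches each vertex, hyperedge, and incidence of the input only a constant number of times.

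First I would bound the cost of computing $U_L$, i.e.\ running $L$ iterations of GWL-1. In each iteration the update $f_e^{i+1} \gets \{\{(f_e^i, h_v^i)\}\}_{v\in e}$ aggregates, for every hyperedge $e$, over its incident nodes at cost $\sum_e |e|$, and the update $h_v^{i+1} \gets \{\{(h_v^i, f_e^{i+1})\}\}_{v\in e}$ aggregates, for every node $v$, over its incident hyperedges at cost $\sum_v \deg(v)$. Both sums equal the number of incidences $nnz(H)$, so each iteration is $O(nnz(H))$ once we assume that canonicalizing each multiset signature to a comparable token is $O(1)$ amortized (e.g.\ via a running hash table of signatures). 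Over the $L$ iterations this is $O(nnz(H)\,L)$, and constructing the bipartite graph $\gB_{\gV_\gH,\gE_\gH}$ costs $O(nnz(H) + n + m)$ since it has $n+m$ vertices and $nnz(H)$ edges.

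The crux is the main loop. The key combinatorial facts are: (i) every node $v$ has a single value $h_v^L(H)$, so the sets $\{\gV_{c_L}\}_{c_L\in\gG_L}$ partition $\gV$ and $\sum_{c_L}|\gV_{c_L}| = n$; and (ii) a hyperedge $e$ enters $\gE_{c_L}$ only when \emph{all} of its nodes share the value $c_L$, hence $e$ belongs to at most one $\gE_{c_L}$, giving $\sum_{c_L}|\gE_{c_L}| \le m$ and $\sum_{c_L} nnz(H_{c_L}) \le nnz(H)$. With these bounds, bucketing nodes by value is $O(n)$; testing each hyperedge for membership in some $\gE_{c_L}$ (verifying all its endpoints agree) is a single sweep over the incidences, $O(nnz(H))$; and computing the connected components of each $\gH_{c_L}$ by BFS/DFS or union--find is linear in the size of that subhypergraph, so by (i)--(ii) the total over all classes is $O(n + m + nnz(H))$. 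The filter $|\gV_{\gR_{c_L,i}}|\ge 3$ and the assembly of $(\gR_V,\gR_E)$ touch each surviving vertex and hyperedge once, contributing $O(n+m)$. Summing all phases gives $O(nnz(H)\,L) + O(nnz(H)+n+m) = O(nnz(H)\,L + (n+m))$, where the hypothesis that $L$ is independent of $nnz(H)$ is what lets us treat the GWL-1 term as linear-times-a-constant.

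The main obstacle is precisely the partition/telescoping argument of (i)--(ii): without it one would fear that the loop over $|\gG_L| = \Theta(n)$ classes multiplies the per-class connected-components cost and blows up the bound, so the proof hinges on charging each input element to only one class. A secondary subtlety worth flagging explicitly is the $O(1)$ multiset-canonicalization assumption underlying the per-iteration GWL-1 bound; if one insists on comparison-based sorting of signatures instead of hashing, an extra logarithmic factor appears, so I would state the hashing model up front.
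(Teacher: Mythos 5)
Your proof follows essentially the same route as the paper's: bound the $L$ iterations of GWL-1 by $O(nnz(H)L)$, bound bipartite-graph construction by $O(nnz(H)+n+m)$, and then use the fact that the classes $\{\gV_{c_L}\}$ partition $\gV$ so that the per-class subhypergraph extraction and connected-components costs telescope to $O(nnz(H)+n+m)$ overall. If anything, your version is slightly more careful than the paper's in two small places: you correctly note that $\sum_{c_L}|\gE_{c_L}|\le m$ and $\sum_{c_L} nnz(H_{c_L})\le nnz(H)$ are inequalities (a hyperedge with nodes of mixed classes lies in no $\gE_{c_L}$, whereas the paper writes these as equalities), and you make explicit the hashing model needed for $O(1)$ amortized multiset canonicalization, which the paper assumes implicitly.
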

Since Algorithm \ref{alg: cellattach} runs in time linear in the size of the input when $L$ is constant, in practice it only takes a small fraction of the training time for hypergraph neural networks.
\section{Evaluation}

\begin{table*}[!ht]
\caption{Transductive hyperedge prediction PR-AUC scores on six different hypergraph datasets. The highest scores per hyperGNN architecture (row) is colored. Red text denotes the highest average scoring method. Orange text denotes a two-way tie and brown text denotes a three-way tie. All datasets involve predicting hyperedges of size $3$. 
     }
    \begin{subtable}{0.32\linewidth}
        \centering\resizebox{\columnwidth}{!}  
        {
\begin{tabular}{l|l|l|l}
PR-AUC $\uparrow$ & Baseline          & Ours              & Baseln.+edrop     \\ \hline
HGNN              & $ 0.98 \pm  0.03$ & \color{red}{$ 0.99 \pm  0.08$} & $ 0.96 \pm  0.02$ \\
HGNNP             & \color{orange}{$ 0.98 \pm  0.02$} & \color{orange}{$ 0.98 \pm  0.09$} & $ 0.96 \pm  0.10$ \\ \hline
HNHN              & \color{red}{$ 0.98 \pm  0.01$} & $ 0.96 \pm  0.07$ & $ 0.97 \pm  0.04$ \\
HyperGCN          & \color{brown}{$ 0.98 \pm  0.07$} & \color{brown}{$ 0.98 \pm  0.11$} & \color{brown}{$ 0.98 \pm  0.03$} \\ \hline
UniGAT            & \color{brown}{$ 0.99 \pm  0.06$} & \color{brown}{$ 0.99 \pm  0.03$} & \color{brown}{$ 0.99 \pm  0.07$} \\
UniGCN            & \color{brown}{$ 0.99 \pm  0.00$} & \color{brown}{$ 0.99 \pm  0.03$} & \color{brown}{$ 0.99 \pm  0.08$} \\
UniGIN            & \color{red}{$ 0.87 \pm  0.02$} & $ 0.86 \pm  0.10$ & $ 0.85 \pm  0.08$ \\
UniSAGE           & \color{orange}{$ 0.86 \pm  0.04$} & \color{orange}{$ 0.86 \pm  0.05$} & $ 0.84 \pm  0.09$
\end{tabular}
}
       \caption{\tiny {\sc cat-edge-DAWN}}
       \label{tab: cat-edge-DAWN}
    \end{subtable} 
    \hfill
    \begin{subtable}{0.32\linewidth}
        \centering
        \resizebox{\columnwidth}{!}{
\begin{tabular}{l|l|l|l}
PR-AUC $\uparrow$ & Baseline          & Ours              & Baseln.+edrop     \\ \hline
HGNN              & $ 0.90 \pm  0.13$ & 
\color{red}{$ 1.00 \pm  0.00$} & $ 0.90 \pm  0.13$ \\
HGNNP             & $ 0.90 \pm  0.09$ & \color{orange}{$ 1.00 \pm  0.07$} & $ 1.00 \pm  0.03$ \\ \hline
HNHN              & $ 0.90 \pm  0.09$ & \color{red}{$ 0.91 \pm  0.02$} & $ 0.90 \pm  0.08$ \\
HyperGCN          & \color{brown}{$ 1.00 \pm  0.00$} & \color{brown}{$ 1.00 \pm  0.03$} & \color{brown}{$ 1.00 \pm  0.02$} \\ \hline
UniGAT            & $ 0.90 \pm  0.06$ & \color{orange}{$ 1.00 \pm  0.03$} & \color{orange}{$ 1.00 \pm  0.06$} \\
UniGCN            & \color{red}{$ 1.00 \pm  0.01$} & $ 0.91 \pm  0.01$ & $ 0.82 \pm  0.09$ \\
UniGIN            & $ 0.90 \pm  0.12$ & \color{red}{$ 0.95 \pm  0.06$} & $ 0.90 \pm  0.11$ \\
UniSAGE           & $ 0.90 \pm  0.16$ & \color{red}{$ 1.00 \pm  0.08$} & $ 0.90 \pm  0.17$
\end{tabular}
}
        \caption{\tiny{\sc cat-edge-music-blues-reviews}}
        \label{tab: cat-edge-music-blues-reviews}
     \end{subtable}
     \hfill
    \begin{subtable}{0.32\linewidth}
        \centering
        \resizebox{\columnwidth}{!}{
\begin{tabular}{l|l|l|l}
PR-AUC $\uparrow$ & Baseline          &  {Ours} & Baseln.+ edrop          \\ \hline
HGNN              & $ 0.96 \pm  0.10$ & \color{red}{$ 0.98 \pm  0.05$}     & $ 0.96 \pm  0.04$ \\
HGNNP             & $ 0.96 \pm  0.05$ & \color{red}{$ 0.98 \pm  0.09$}     & $ 0.97 \pm  0.07$ \\ \hline
HNHN              & $ 0.96 \pm  0.02$ & \color{orange}{$ 0.97 \pm  0.08$}     & \color{orange}{$ 0.97 \pm  0.06$} \\
HyperGCN          & $ 0.93 \pm  0.05$ & \color{red}{$ 0.98 \pm  0.07$}     & $ 0.96 \pm  0.09$ \\ \hline
UniGAT            & $ 0.96 \pm  0.01$ & \color{red}{$ 0.98 \pm  0.14$}     & $ 0.97 \pm  0.04$ \\
UniGCN            & \color{brown}{$ 0.96 \pm  0.04$} & \color{brown}{$ 0.96 \pm  0.11$}     & \color{brown}{$ 0.96 \pm  0.09$} \\
UniGIN            & \color{orange}{$ 0.97 \pm  0.03$} & \color{orange}{$ 0.97 \pm  0.11$}     & $ 0.96 \pm  0.05$ \\
UniSAGE           & \color{brown}{$ 0.96 \pm  0.10$} & \color{brown}{$0.96 \pm  0.10$}     & \color{brown}{$ 0.96 \pm  0.02$}
\end{tabular}
}
        \caption{\tiny{\sc contact-high-school}}
        \label{tab: contact-high-school}
     \end{subtable}
     
    \begin{subtable}{0.32\linewidth}
        \centering
        \resizebox{\columnwidth}{!}{
\begin{tabular}{l|l|l|l}
PR-AUC $\uparrow$ & Baseline          & Ours              & Baseln.+edrop     \\ \hline
HGNN              & $ 0.95 \pm  0.03$ & \color{red}{$ 0.96 \pm  0.01$} & $ 0.95 \pm  0.03$ \\
HGNNP             & $ 0.95 \pm  0.02$ & \color{orange}{$ 0.96 \pm  0.09$} & \color{orange}{$ 0.96 \pm  0.07$} \\ \hline
HNHN              & $ 0.94 \pm  0.07$ & \color{red}{$ 0.97 \pm  0.10$} & $ 0.95 \pm  0.05$ \\
HyperGCN          & \color{orange}{$ 0.97 \pm  0.01$} & \color{orange}{$ 0.97 \pm  0.05$} & $ 0.96 \pm  0.08$ \\ \hline
UniGAT            & $ 0.95 \pm  0.02$ & \color{orange}{$ 0.98 \pm  0.07$} & \color{orange}{$ 0.98 \pm  0.02$} \\
UniGCN            & $ 0.96 \pm  0.00$ & \color{orange}{$ 0.97 \pm  0.14$} & \color{orange}{$ 0.97 \pm  0.10$} \\
UniGIN            & $ 0.95 \pm  0.09$ & \color{red}{$ 0.97 \pm  0.02$} & $ 0.95 \pm  0.05$ \\
UniSAGE           & \color{orange}{$ 0.96 \pm  0.08$} & $ 0.95 \pm  0.05$ & \color{orange}{$ 0.96 \pm  0.02$}
\end{tabular}
}
        \caption{\tiny{\sc contact-primary-school}}
        \label{tab: contact-primary-school}
     \end{subtable}
     \hfill
    \begin{subtable}{0.32\linewidth}
        \centering
        \resizebox{\columnwidth}{!}{
\begin{tabular}{l|l|l|l}
PR-AUC $\uparrow$ & Baseline          & Ours              & Baseln.+edrop     \\ \hline
HGNN              & $ 0.95 \pm  0.07$ & \color{red}{$ 0.97 \pm  0.08$} & $ 0.96 \pm  0.07$ \\
HGNNP             & $ 0.95 \pm  0.07$ & \color{orange}{$ 0.96 \pm  0.02$} & \color{orange}{$ 0.96 \pm  0.01$} \\ \hline
HNHN              & $ 0.94 \pm  0.01$ & \color{red}{$ 0.97 \pm  0.02$} & $ 0.95 \pm  0.06$ \\
HyperGCN          & $ 0.92 \pm  0.01$ & \color{orange}{$ 0.94 \pm  0.06$} & \color{orange}{$ 0.94 \pm  0.08$} \\ \hline
UniGAT            & $ 0.94 \pm  0.08$ & \color{red}{$ 0.98 \pm  0.14$} & $ 0.97 \pm  0.08$ \\
UniGCN            & \color{brown}{$ 0.97 \pm  0.08$} & \color{brown}{$ 0.97 \pm  0.14$} & \color{brown}{$ 0.97 \pm  0.06$} \\
UniGIN            & $ 0.93 \pm  0.07$ & \color{red}{$ 0.94 \pm  0.11$} & $ 0.93 \pm  0.09$ \\
UniSAGE           & \color{orange}{$ 0.93 \pm  0.07$} & \color{orange}{$ 0.93 \pm  0.08$} & $ 0.92 \pm  0.04$
\end{tabular}
}
        \caption{\tiny{\sc email-Eu}}
        \label{tab: email-Eu}
     \end{subtable}
     \hfill
    \begin{subtable}{0.32\linewidth}
        \centering
        \resizebox{\columnwidth}{!}{
\begin{tabular}{l|l|l|l}
PR-AUC $\uparrow$ & Baseline          & Ours              & Baseln.+edrop     \\ \hline
HGNN              & $ 0.75 \pm  0.09$ & \color{red}{$ 0.85 \pm  0.09$} & $ 0.71 \pm  0.14$ \\
HGNNP             & $ 0.83 \pm  0.09$ & \color{orange}{$ 0.85 \pm  0.08$} & \color{orange}{$ 0.85 \pm  0.04$} \\ \hline
HNHN              & $ 0.72 \pm  0.09$ & \color{red}{$ 0.82 \pm  0.03$} & $ 0.74 \pm  0.09$ \\
HyperGCN          & $ 0.87 \pm  0.08$ & $ 0.83 \pm  0.05$ & \color{red}{$ 1.00 \pm  0.07$} \\ \hline
UniGAT            & $ 0.80 \pm  0.09$ & \color{red}{$ 0.83 \pm  0.03$} & $ 0.78 \pm  0.05$ \\
UniGCN            & $ 0.84 \pm  0.08$ & \color{red}{$ 0.89 \pm  0.10$} & $ 0.71 \pm  0.07$ \\
UniGIN            & $ 0.69 \pm  0.14$ & \color{red}{$ 0.76 \pm  0.05$} & $ 0.61 \pm  0.11$ \\
UniSAGE           & \color{red}{$ 0.72 \pm  0.11$} & $ 0.71 \pm  0.10$ & $ 0.64 \pm  0.10$
\end{tabular}
}
        \caption{\tiny{\sc cat-edge-madison-restaurants}}
        \label{tab: cat-edge-madison-restaurants}
     \end{subtable}
     \label{tab: main-pr-auc}
\end{table*}

We evaluate our method on higher order link prediction with many of the standard hypergraph neural network methods. Due to potential class imbalance, we measure the PR-AUC of higher order link prediction on the hypergraph datasets. These datasets are: 
{\sc 
cat-edge-DAWN, 
cat-edge-music-blues-reviews, contact-high-school, contact-primary-school
, email-Eu, cat-edge-madison-restaurants}. 
These datasets range from representing social interactions  as they develop over time to collections of reviews to  drug combinations before overdose. We also evaluate on the {\sc amherst41} dataset, which is a graph dataset. All of our datasets are unattributed hypergraphs/graphs. 

\textbf{Data Splitting: }
For the hypergraph datasets, each hyperedge in it is paired with a timestamp (a real number). These timestamps are a physical time for which a higher order interaction, represented by a hyperedge, occurs. 
We form a train-val-test split by letting the train be the hyperedges associated with the 80th percentile of timestamps, the validation be the hyperedges associated with the timestamps in between the 80th and 85th percentiles. The test hyperedges are the remaining hyperedges. The train validation and test datasets thus form a partition of the nodes. We do the task of hyperedge prediction for sets of nodes of size $3$, also known as triangle prediction. Half of the size $3$ hyperedges in each of train, validation and test are used as positive examples. For each split, we select random subsets of nodes of size $3$ that do not form hyperedges for negative sampling. We maintain positive/negative class balance by sampling the same number of negative samples as positive samples. 
Since the test distribution comes from later time stamps than those in training, there is a possibility that certain datasets are out-of-distribution if the  hyperedge distribution changes.  

For the graph dataset, the single graph is deterministically split into 80/5/15 for train/val/test. We remove $10\%$ of the edges in training and let them be positive examples $P_{tr}$ to predict. For validation and test, we remove $50\%$ of the edges from both validation and test to set as the positive examples $P_{val}, P_{te}$ to predict. For train, validation, and test, we sample  $ |P_{tr}|, |P_{val}|,  |P_{te}|$ negative link samples from the links of train, validation and test.

\begin{table*}
\caption{PR-AUC on graph dataset {\sc amherst41}. Each column is a comparison of the baseline PR-AUC scores against the PR-AUC score for our method (first row) applied to a standard hyperGNN architecture. The coloring scheme is the same as in Table \ref{tab: main-pr-auc}.}
        \centering
        \resizebox{0.80\columnwidth}{!}{
\begin{tabular}{l|ll|ll|llll}
PR-AUC $\uparrow$ &
  HGNN &
  HGNNP &
  HNHN &
  HyperGCN &
  UniGAT &
  UniGCN &
  UniGIN &
  UniSAGE \\ \hline
Ours &
  \color{brown}{$ 0.73 \pm  0.10$} &
  $ 0.61 \pm    0.05$ &
  $ 0.64 \pm    0.06$ &
  $ 0.71 \pm    0.09$ &
  $ 0.72 \pm    0.08$ &
  $ 0.70 \pm    0.08$ &
  \color{brown}{$ 0.73 \pm    0.03$} &
  \color{brown}{$ 0.73 \pm    0.06$} \\ \hline
hyperGNN Baseline &
  $ 0.62 \pm  0.09$ &
  $ 0.62 \pm    0.10$ &
  $ 0.63 \pm    0.04$ &
  $ 0.71 \pm    0.07$ &
  $ 0.70 \pm    0.06$ &
  $ 0.69 \pm    0.07$ &
  \color{brown}{$ 0.73 \pm    0.06$} &
  \color{brown}{$ 0.73 \pm    0.09$} \\
hyperGNN Baseln.+edrop &
  $ 0.61 \pm  0.03$ &
  $ 0.61 \pm    0.03$ &
  $ 0.61 \pm    0.09$ &
  $ 0.71 \pm    0.06$ &
  $ 0.71 \pm    0.02$ &
  $ 0.69 \pm    0.05$ &
  \color{brown}{$ 0.73 \pm    0.09$} &
  \color{brown}{$ 0.73 \pm    0.04$} \\
APPNP &
  $ 0.42 \pm  0.07$ &
  $ 0.42 \pm  0.07$ &
  $ 0.42 \pm  0.07$ &
  $ 0.42 \pm  0.07$ &
  $ 0.42 \pm  0.07$ &
  $ 0.42 \pm  0.07$ &
  $ 0.42 \pm  0.07$ &
  $ 0.42 \pm  0.07$ \\
APPNP+edrop &
  $ 0.42 \pm  0.03$ &
  $ 0.42 \pm  0.03$ &
  $ 0.42 \pm  0.03$ &
  $ 0.42 \pm  0.03$ &
  $ 0.42 \pm  0.03$ &
  $ 0.42 \pm  0.03$ &
  $ 0.42 \pm  0.03$ &
  $ 0.42 \pm  0.03$ \\
GAT &
  $ 0.49 \pm  0.06$ &
  $ 0.49 \pm  0.06$ &
  $ 0.49 \pm  0.06$ &
  $ 0.49 \pm  0.06$ &
  $ 0.49 \pm  0.06$ &
  $ 0.49 \pm  0.06$ &
  $ 0.49 \pm  0.06$ &
  $ 0.49 \pm  0.06$ \\
GAT+edrop &
  $ 0.49 \pm  0.06$ &
  $ 0.49 \pm  0.06$ &
  $ 0.49 \pm  0.06$ &
  $ 0.49 \pm  0.06$ &
  $ 0.49 \pm  0.06$ &
  $ 0.49 \pm  0.06$ &
  $ 0.49 \pm  0.06$ &
  $ 0.49 \pm  0.06$ \\
GCN2 &
  $ 0.56 \pm  0.12$ &
  $ 0.56 \pm  0.12$ &
  $ 0.56 \pm  0.12$ &
  $ 0.56 \pm  0.12$ &
  $ 0.56 \pm  0.12$ &
  $ 0.56 \pm  0.12$ &
  $ 0.56 \pm  0.12$ &
  $ 0.56 \pm  0.12$ \\
GCN2+edrop &
  $ 0.54 \pm  0.02$ &
  $ 0.54 \pm  0.02$ &
  $ 0.54 \pm  0.02$ &
  $ 0.54 \pm  0.02$ &
  $ 0.54 \pm  0.02$ &
  $ 0.54 \pm  0.02$ &
  $ 0.54 \pm  0.02$ &
  $ 0.54 \pm  0.02$ \\
GCN &
  $ 0.40 \pm  0.03$ &
  $ 0.40 \pm  0.03$ &
  $ 0.40 \pm  0.03$ &
  $ 0.40 \pm  0.03$ &
  $ 0.40 \pm  0.03$ &
  $ 0.40 \pm  0.03$ &
  $ 0.40 \pm  0.03$ &
  $ 0.40 \pm  0.03$ \\
GCN+edrop &
  $ 0.65 \pm  0.04$ &
  $ 0.65 \pm  0.04$ &
  $ 0.65 \pm  0.04$ &
  $ 0.65 \pm  0.04$ &
  $ 0.65 \pm  0.04$ &
  $ 0.65 \pm  0.04$ &
  $ 0.65 \pm  0.04$ &
  $ 0.65 \pm  0.04$ \\
GIN &
  \color{brown}{$ 0.73 \pm  0.10$} &
  \color{orange}{$ 0.73 \pm  0.10$} &
  \color{orange}{$ 0.73 \pm  0.10$} &
  \color{orange}{$ 0.73 \pm  0.10$} &
  \color{orange}{$ 0.73 \pm  0.10$} &
  \color{orange}{$ 0.73 \pm  0.10$} &
  \color{brown}{$ 0.73 \pm  0.10$} &
  \color{brown}{$ 0.73 \pm  0.10$} \\
GIN+edrop &
  \color{brown}{$ 0.73 \pm  0.10$} &
  \color{orange}{$ 0.73 \pm  0.10$} &
  \color{orange}{$ 0.73 \pm  0.10$} &
  \color{orange}{$ 0.73 \pm  0.10$} &
  \color{orange}{$ 0.73 \pm  0.10$} &
  \color{orange}{$ 0.73 \pm  0.10$} &
  \color{brown}{$ 0.73 \pm  0.10$} &
  \color{brown}{$ 0.73 \pm  0.10$} \\
GraphSAGE &
  $ 0.44 \pm  0.01$ &
  $ 0.44 \pm  0.01$ &
  $ 0.44 \pm  0.01$ &
  $ 0.44 \pm  0.01$ &
  $ 0.44 \pm  0.01$ &
  $ 0.44 \pm  0.01$ &
  $ 0.44 \pm  0.01$ &
  $ 0.44 \pm  0.01$ \\
GraphSAGE+edrop &
  $ 0.44 \pm  0.10$ &
  $ 0.44 \pm  0.10$ &
  $ 0.44 \pm  0.10$ &
  $ 0.44 \pm  0.10$ &
  $ 0.44 \pm  0.10$ &
  $ 0.44 \pm  0.10$ &
  $ 0.44 \pm  0.10$ &
  $ 0.44 \pm  0.10$
\end{tabular}

}
        \label{tab: amhrest41}
     \end{table*}
\subsection{Architecture and Training}
Our algorithm serves as a preprocessing step for selective data augmentation. Given a single training hypergraph $\gH$, the Algorithm \ref{alg: cellattach} is applied and during training, the identified hyperedges of the symmetric induced subhypergraphs of $\gH$ are randomly replaced with single hyperedges that cover all the nodes of each induced subhypergraph. Each symmetric subhypergraph has a $p=0.5$ probability of being selected. To get a large set of symmetric subhypergraphs, we run $2$ iterations of GWL-1.

We implement $h(S,H)$ from Equation \ref{eq: simprepr} as follows. Upon extracting the node representations from the hypergraph neural network, we use a multi-layer-perceptron (MLP) on each node representation, sum across such compositions, then apply a final MLP layer after the aggregation. We use the binary cross entropy loss on this multi-node representation for training. We always use $5$ layers of hyperGNN convolutions, a hidden dimension of $1024$, and a learning rate of $0.01$.

\subsection{Higher Order Link Prediction Results}
We show in Table \ref{tab: main-pr-auc} the comparison of PR-AUC scores amongst the baseline methods of HGNN, HGNNP, HNHN, HyperGCN, UniGIN, UniGAT, UniSAGE, their hyperedge dropped versions, and "Our" method, which preprocesses the hypergraph to break symmetry during training. For the hyperedge drop baselines, there is a uniform $50\%$ chance of dropping any hyperedge. We use the Laplacian eigenmap~\cite{belkin2003laplacian} positional encoding on the clique expansion of the input hypergraph. This is common practice in (hyper)link prediction and required for using a hypergraph neural network on an unattributed hypergraph. 

 We show in Table \ref{tab: amhrest41} the PR-AUC scores on the {\sc amhrest41}. Along with hyperGNN architectures we use for the hypergraph experiments, we also compare with standard GNN architectures: APPNP~\cite{gasteiger2018predict}, GAT~\cite{velivckovic2017graph}, GCN2~\cite{chen2020simple}, GCN~\cite{kipf2016semi}, GIN~\cite{xu2018powerful}, and GraphSAGE~\cite{hamilton2017inductive}. For every hyperGNN/GNN architecture, we also apply drop-edge \cite{rong2019dropedge} to the input graph and use this also as baseline. The number of layers of each GNN is set to $5$ and the hidden dimension at $1024$. For APPNP and GCN2, one MLP is used on the initial node positional encodings.

Overall, our method performs well across a diverse range of higher order network datasets. We observe that our method can often outperform the baseline of not performing any data perturbations as well as the same baseline with uniformly random hyperedge dropping. 
Our method has an added advantage of being explainable since our algorithm works at the data level. There was also not much of a concern for computational time since our algorithm runs in time $O(nnz(H)+n+m)$, which is optimal since it is the size of the input.

\begin{figure}[!ht]
\begin{subfigure}[t]{0.47\textwidth}
\includegraphics[width=\columnwidth]{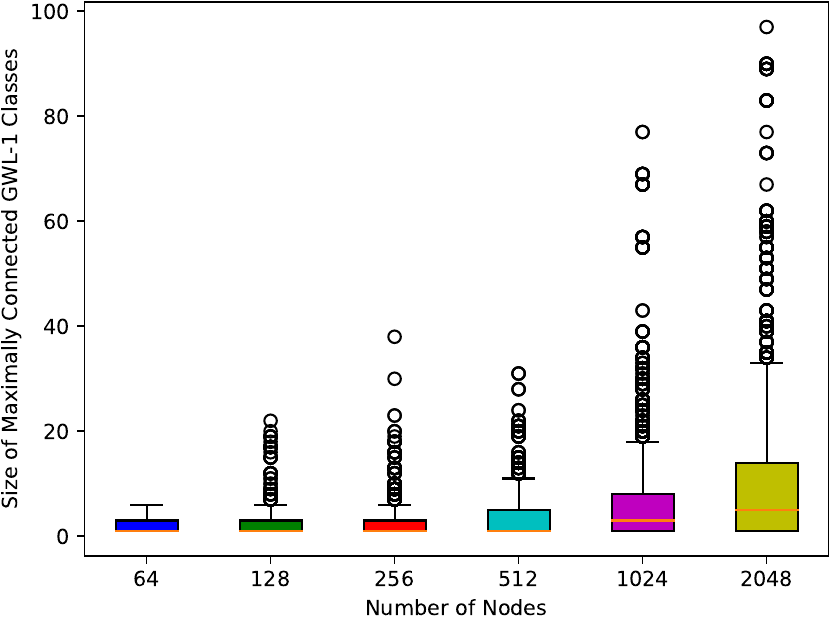}
     \caption{\tparbox{0.9\textwidth}{Boxplot of the sizes of the connected components with equal GWL-1 node values from the hy-MMSBM sampling algorithm where there are three independent communities.}}\label{subfig: CCBoxplot-indep}%
    \end{subfigure}
    \begin{subfigure}[t]{0.47\textwidth}
\includegraphics[width=\columnwidth]{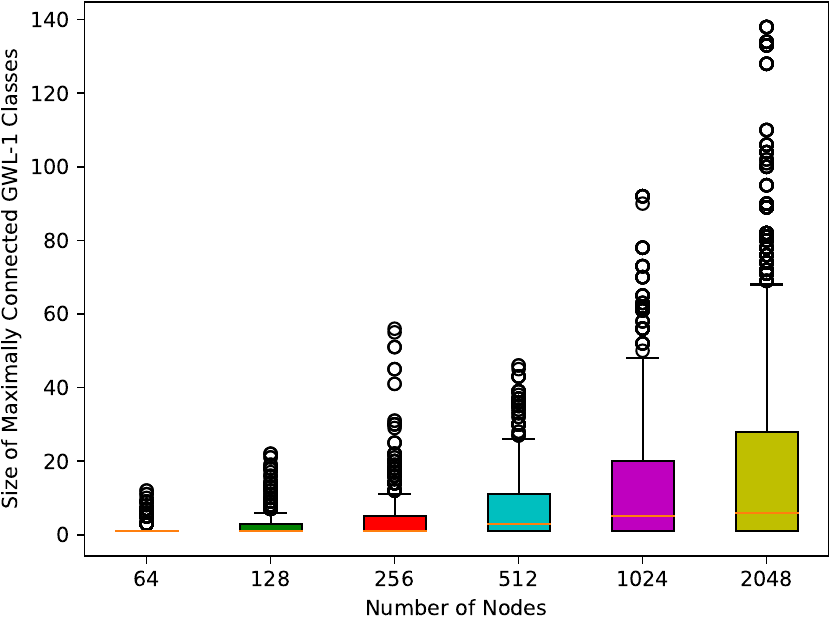}
     \caption{\tparbox{0.9\textwidth}{Boxplot of the sizes of the connected components with equal GWL-1 node values from the hy-MMSBM sampling algorithm where any two of the three communities can communicate.}}\label{subfig: CCBoxplot-dep}%
    \end{subfigure}
    \caption{Experiment on the relationship between the sizes of connected components of equal GWL-1 node values and the communication between communities.}
    \label{fig: CCallboxplots}
\end{figure}

\subsection{Empirical Observations on the Components Discovered by the Algorithm}
According to Proposition \ref{prop: alg-cover}, we know that the symmetry finding algorithm always covers the hypergraph and thus that we can generate on the order of $O(2^{|\gV|})$ counterfactual multi-hypergraphs. 
It is known that a large set of data augmentations during learning improves learner generalization. The cover by GWL-1 symmetric components follows a distribution depending on the data.  

We show in Figure \ref{fig: CCallboxplots} the distributions for the component sizes over all GWL-1 symmetric connected components for samples from the Hy-MMSBM model~\cite{ruggeri2023framework}. This is a model to sample hypergraphs with community structure. In Figure \ref{subfig: CCBoxplot-indep} we sample hypergraphs with $3$ isolated communities, meaning that there is $0$ chance of any interconnections between any two communities. In Figure \ref{subfig: CCBoxplot-dep} we sample hypergraphs with $3$ communities where every node in a community has a weight of $1$ to stay in its community and a weight of $0.2$ to move out to any other community. We plot the boxplots as a function of increasing number of nodes. We notice that the more communication there is between communities for more nodes there is more spread in possible connected component sizes. Isolated communities should make for predictable clusters/connected components.

\section{Discussion}
Our proposed data augmentation method uses symmetry breaking to handle the symmetries induced by GWL-1 based hyperGNNs. Symmetry breaking provides some guarantees that the symmetries induced by the hyperGNN are brought closer to the symmetries of the training hypergraph. These include hyperlink prediction guarantees. In addition, symmetry breaking prepares the hyperGNN for an automorphism group different from the symmetries it views during training. This brings about an important question regarding the invariant automorphism group across training and testing, namely:
\begin{equation*}
    \textit{What is the relationship between the symmetries of the training and testing hypergraphs?}
\end{equation*}

We answer this question in the context of a temporal shift from training to testing. The term "temporal shift" is usually used in the context of distribution shifts \cite{yao2022wild}. We define it in terms of transductive hyperlink prediction.

A temporal shift from training to testing means that the testing hyperedges have a temporal future relationship with the training hyperedges. 

We formalize a hypergraph in terms of a physical system that can change over time with the following assumptions.

\begin{assumption}\label{ass: hyp2closedsys}
If we view a hypergraph as a physical system of particles where only the nodes have mass, then the task of transductive higher order link prediction is on a closed system \cite{landau1960mechanics}. This means no mass can enter or leave this system.

 Given $n$ nodes $\gV$, we can view each node $v \in \gV$ along with the hypergraph $\gH=(\gV,\gE)$ it belongs in as a microstate $(v,\gH)$ of a statistical ensemble.
 
 In \cite{rashevsky1955life} the entropy of a graph is defined through the orbits of the automorphism group. We generalize this to a node based definition for hypergraphs. Our definition uses a similar probability on microstates. We define the \textbf{hypergraph topological entropy} of a hypergraph $\gH= (\gV,\gE)$ by: 
\begin{equation}
\begin{split}
    S \triangleq - \sum_{v\in \gV} p_{v}(\gH) \log (p_{v}(\gH));\\\text{ with } p_v(\gH) \triangleq \frac{n_v(\gH)}{\sum_{v \in \gV} n_v(\gH)}
\end{split}
\end{equation}
where $n_v(\gH)\triangleq \lvert \{u \in \gV: u \cong_{\gH} 
v \} \rvert$ is the number of nodes $u \in \gV$ isomorphic to node $v$, including $v$ itself, (See Definition \ref{def: isom-nodes}).  
 \end{assumption}
 \begin{assumption}
\textbf{(Second Law of Thermodynamics )}: 
The second law of thermodynamics \cite{carnot1978reflexions} states that the entropy of a closed system must increase over time. This is denoted by the following equation:
\begin{equation}
    \Delta S>0
\end{equation}
This law can be used in terms of the hypergraph topological entropy as defined in Assumption \ref{ass: hyp2closedsys}.
 \end{assumption}
 
 We also assume the following random model on the hypergraph we predict on. Let $\bullet= \text{tr}, \text{te}$ represent temporally ordered training and testing distributions where $(\gE_{\bullet})_{gt}$
 are the hyperedges of $(\gH_{\bullet})_{gt}$. 
 \begin{assumption}\label{ass: Ete-indep}
For each node $v\in \gV$, let $X_v$ be independent Bernoulli random variables of some probability $q_v \in [0,1]$. 
Let $\lvert (\gE_{te})_{gt} \rvert$ be the number of testing hyperedges of $(\gH_{te})_{gt}$. 
It is a random variable defined by:
\begin{equation}
    f((X_v)_{v \in \gV})= \sum_{e \subseteq \gV} \Pi_{u \in e} X_u
\end{equation}

Assume further that $\lvert (\gE_{te})_{gt} \rvert, (\gV,(\gE_{te})_{gt})\sim P(\gH;t_{te})$ only depends on $n$.

Let $\tilde{X}_v$ be a Bernoulli random variable with probability $q_{max}$. Assume that $f$ satisfies 
\begin{equation}
    P(f(\tilde{X}_{v\neq u},...,do(\tilde{X}_u=1),...,\tilde{X}_{v\neq u})- f(\tilde{X}_{v\neq u},...,do(\tilde{X}_u=0),...,\tilde{X}_{v\neq u}) \leq 2) \leq n^{-\omega(1)}, \forall u \in \gV
\end{equation}
\end{assumption}

These assumptions show that with high probability there are node isomorphism classes which decrease in size:

\begin{theorem}\label{thm: decr-nodeisom}
    Under Assumptions \ref{ass: hyp2closedsys},  \ref{ass: Ete-indep}, and the Second Law of Thermodynamics on the hypergraph viewed as a closed system:

    \begin{equation}
    \exists \gU \subseteq \gV, \gU \neq \emptyset,  \text{ so that: }n_v((\gH_{tr})_{gt}) > n_v((\gH_{te})_{gt}), \forall v \in \gU\text{, with probability }1-O(\frac{1}{\sqrt{n}})
    \end{equation}
\end{theorem}
\begin{proof}
By the second law of thermodynamics, we must have that between the training hypergraph $(\gH_{tr})_{gt}= (\gV, (\gE_{tr})_{gt})$ and testing hypergraph $(\gH_{te})_{gt}= (\gV,(\gE_{te})_{gt})$ which is temporally later than $(\gH_{tr})_{gt}$,
\begin{subequations}
\begin{equation}
\Delta S= - \sum_{v\in \gV} p_{v}((\gH_{te})_{gt}) \log (p_{v}((\gH_{te})_{gt})) + \sum_{v\in \gV} p_{v}((\gH_{tr})_{gt}) \log (p_{v}((\gH_{tr})_{gt}))>0
\end{equation}
\end{subequations}
If we take an upper bound on $\Delta S$, we get the following consequence:
\begin{equation}
   0 <    \Delta S \leq \sum_{v \in \gV} \log(\frac{p_v((\gH_{tr})_{gt})}{p_v((\gH_{te})_{gt})}) \Rightarrow p_v((\gH_{tr})_{gt}) > p_v((\gH_{te})_{gt}), \forall v \in \gU, \text{ for some }\gU \subseteq \gV, \gU \neq \emptyset
\end{equation}
If $n_v((\gH_{tr})_{gt}) > n_v((\gH_{te})_{gt}), \forall v \in \gU \subseteq \gV$, we can conclude that some nodes will shrink the size of their ground truth isomorphism class.

We show that this occurs with high probability by bounding the complementary case.

\textbf{Nodes rarely increase their isomorphism class size: }

For the complementary case, $n_v((\gH_{tr})_{gt}) \leq  n_v((\gH_{te})_{gt}), \forall v \in \gU \subseteq \gV$, we show that under Assumption \ref{ass: Ete-indep}, node isomorphism class cardinality growth occurs with low probability due to an anti-concentration bound on multivariate polynomials \cite{fox2021combinatorial}.

In this complementary case, we must have that the nodes in $\gU \subseteq \gV$ increased the number of nodes isomorphic to them. This implies that each of these nodes $v \in \gU $ must have changed their degree vector upon changing $(\gH_{tr})_{gt}$ to $(\gH_{te})_{gt}$. This change requires that some other node $u \in \gV, u\neq v$ obtains a degree vector equal to the degree vector of $v$.

Let us define this space of all possible testing hyperedge sets with this necessary condition:
\begin{equation}
    \text{DegVec}_{eq}(v, (\gH_{te})_{gt})\triangleq \{ E \in supp(P((\gE_{te})_{gt}; t_{te})):  \exists u \in \gV, \text{degvec}_{(\gV,E)}(v)= \text{degvec}_{(\gV,E)}(u), u\neq v \text{ }\}
\end{equation}
We can express the relationship between the change in $n_v$ with membership in $\text{DegVec}_{eq}(v,(\gH_{te})_{gt})$:
\begin{equation}\label{eq: nvimpliesDegVec}
    n_v((\gH_{tr})_{gt}) \leq n_v((\gH_{te})_{gt}) \Rightarrow (\gE_{te})_{gt} \in \text{DegVec}_{eq}(v,(\gH_{te})_{gt})
\end{equation}
Letting 
\begin{subequations}
\begin{equation}\label{eq: xmin-degvec}
    x_{min}(v)= \min_{E \in \text{DegVec}_{eq}(v,(\gH_{te})_{gt})} \lvert E \rvert , \forall v \in \gV
\end{equation}
and 
\begin{equation}\label{eq: xmax-degvec}
    x_{max}(v)= \max_{E \in \text{DegVec}_{eq}(v,(\gH_{te})_{gt})} \lvert E \rvert , \forall v \in \gV
\end{equation}
\end{subequations}
We can then say that the number of testing hyperedges $\lvert (\gE_{te})_{gt}\rvert$ is in the interval $[x_{min}(v), x_{max}(v)]$:
\begin{equation}\label{eq: degvecimpliesrange}
    (\gE_{te})_{gt} \in DegVec_{eq}(v,(\gH_{te})_{gt}) \Rightarrow x_{min}(v) \leq f((X_u)_{u \in \gV}) \leq x_{max}(v)
\end{equation}
Let the diameter of the interval $[x_{min}(v), x_{max}(v)]$ be given by $D$: 
\begin{equation}
    D\triangleq x_{max}(v)-x_{min}(v)
\end{equation}
This only depends on $n$ since the minimizers and maximizers $E_{x_{min}},E_{x_{max}}$ of Equations \ref{eq: xmin-degvec} and \ref{eq: xmax-degvec} satisfy:  $E_{x_{min}},E_{x_{max}} \in \text{DegVec}_{eq}(v,(\gH_{te})_{gt})$ and thus belong to the $supp(P(E;t_{te}))$. We know that any $E \in supp(P(E;t_{te}))$ has that $\lvert E \rvert$ depends only on $n$ by Assumption \ref{ass: Ete-indep}. 

Define the median $M$ on $[x_{min}(v), x_{max}(v)]$ by:
\begin{equation}
    M\triangleq \frac{x_{max}(v)+x_{min}(v)}{2}
\end{equation}
We thus have that:
\begin{equation}\label{eq: rangeimpliesabsD}
     x_{min}(v) \leq f((X_u)_{u \in \gV}) \leq x_{max}(v) \Rightarrow \lvert f((X_u)_{u \in \gV})-M\rvert  \leq D
\end{equation}

Piecing together Equations \ref{eq: nvimpliesDegVec}, \ref{eq: degvecimpliesrange}, and \ref{eq: rangeimpliesabsD}, we have by monotonicity: 
\begin{subequations}
\begin{equation}
    P(n_v((\gH_{tr})_{gt}) \leq n_v((\gH_{te})_{gt})) \leq P( x_{min}(v) \leq f((X_u)_{u \in \gV}) \leq x_{max}(v)) 
    \end{equation}
    \begin{equation}
        \leq P(\lvert f((X_u)_{u \in \gV})-M\rvert <D )
    \end{equation}
    \begin{equation}
    \leq  P(\lvert M-f((\tilde{X}_u)_{u \in \gV})\rvert <D ) \leq O(\frac{1}{\sqrt{n}}), \forall v \in \gU \subseteq \gV
\end{equation}
\end{subequations}
Where the last inequality comes from the anti-concentration bound of Theorem 1.2 of \cite{fox2021combinatorial}, which states:
\begin{equation}
    P(\lvert f((\tilde{X}_v)_{v \in \gV}) -x \rvert<s) \leq O(\frac{1}{\sqrt{n}}), \forall x \in \mathbb{R}
\end{equation}
for any $s>0$ which may depend on $n$ and any $x \in \mathbb{R}$. Setting $x:=M, s:= D$, gives the last inequality.

\end{proof}

Theorem \ref{thm: decr-nodeisom} states that the ground truth node isomorphism classes for the nodes must shrink with probability on order $1-O(\frac{1}{\sqrt{n}})$. Thus, for large $n >\!\!>0$, we have that with high probability that the nodes $v \in \gU \subseteq \gV, \gU \neq \emptyset$ have 
$n_v((\gH_{tr})_{gt}) > n_v((\gH_{te})_{gt})$. 

We can recognize this property of $(\gH_{te})_{gt}$ by shrinking the automorphism group that the hypergraph encoder recognizes from the training hypergraph. According to Proposition \ref{prop: symbreak}, symmetry breaking as given in Equation \ref{eq: symbreak}, does this.

Our method breaks the symmetry of a GWL-1 based hyperGNN. This, of course, is not of importance if the symmetry group of the GWL-1 based hyperGNN on some training hypergraph is already the trivial group. Nonetheless, our symmetry breaking method is theoretically beneficial. Our method can also be used within other downstream learning methods such as feature averaging \cite{lyle2020benefits}, and ensemble methods, as mentioned in Section \ref{sec: method}. 

\section{Conclusion}
Many existing hyperGNN architectures are based on the GWL-1 algorithm, which is a hypergraph isomorphism testing algorithm. We have characterized and identified the limitations of GWL-1. GWL-1 views the hypergraph as a collection of rooted trees. This means that hyperGNNs recognize more symmetries than the natural automorphisms of the training hypergraph. In fact, maximally connected subsets of nodes that share the same value of GWL-1, which act like regular hypergraphs, are indistinguishable. To address this issue while respecting the structure of a hypergraph, we have devised a preprocessing algorithm that identifies all such connected components. These components cover the hypergraph and allow for downstream data augmentation of symmetry breaking by training on a random multi-hypergraph. We show that this approach improves the expressivity of a hyperGNN learner, including in the case of hyperlink prediction. We perform extensive experiments to evaluate the effectiveness of our approach and make empirical observations about the output of the algorithm on hypergraph data. 
\clearpage

\bibliographystyle{tmlr}
\bibliography{tmlr}
\clearpage
\appendix
\addcontentsline{toc}{section}{Appendix} 
\part{Appendix} 
\parttoc 
%

%


\section{More Background}
We discuss in this section about the basics of graph representation learning and link prediction. Graphs are hypergraphs with all hyperedges of size $2$. Simplicial complexes and hypergraphs are generalizations of graphs. We also discuss more related work.
\subsection{Graph Neural Networks and Weisfeiler-Lehman 1}\label{sec: appendix-GNN-WL1}
The Weisfeiler-Lehman (WL-1) algorithm is an isomorphism testing approximation algorithm. It involves repeatedly message passing all nodes with their neighbors, a step called node label refinement. The WL-1 algorithm never gives false negatives when predicting whether two graphs are isomorphic. In other words, two isomorphic graphs are always indistinguishable by WL-1.

The WL-1 algorithm is the following successive vertex relabeling applied until convergence on a graph $G=(X,A)$ (a pair of the set of node attributes and the graph's adjacency structure):

\begin{align}
\begin{split}
h_v^0 &\gets X_v, \forall v \in \mathcal{V}_{G}\\
h_v^{i+1} &\gets \{\!\!\{(h_v^i,h_u^i)\}\!\!\}_{u \in Nbr_A(v)}, \forall v \in \mathcal{V}_{G}
\end{split}
\end{align}
The algorithm terminates after the vertex labels converge. For graph isomorphism testing, the concatenation of the histograms of vertex labels for each iteration is output as the graph representation. Since we are only concerned with node isomorphism classes, we ignore this step and just consider the node labels $h_v^i$ for every $v \in \mathcal{V}_{\mathcal{C}}$.

The WL-1 isomorphism test can be characterized in terms of rooted tree isomorphisms between the universal covers for connected graphs \cite{krebs2015universal}. There have also been characterizations of WL-1 in terms of counting homomorphisms \cite{knill2013brouwer} as well as the Wasserstein Distance~\cite{pmlr-v162-chen22o} and Markov chains \cite{chen2023weisfeiler}. 

A graph neural network (GNN) is a message passing based node representation learner modeled after the WL-1 algorithm. It has the important inductive bias of being equivariant to node indices. As a neural model of the WL-1 algorithm, it learns neural weights common across all nodes in order to obtain a vector representation for each node. A GNN must use some initial node attributes in order to update its neural weights. There are many variations on GNNs, including those that improve the distinguishing power beyond WL-1. For two surveys on the GNNs and their applications, see \cite{zhou2020graph, wu2020comprehensive}. 

\subsection{Link Prediction}
The task of link prediction on graphs involves the prediction of the existence of links. There are two kinds of link prediction. There is transductive link prediction where the same nodes are used for all of train validation and testing. There is also inductive link prediction where the test validation and training nodes can all be disjoint. Some existing works on link prediction include \cite{zhang2017weisfeiler}. Higher order link prediction is a generalization of link prediction to hypergraph data. 

A common way to do link prediction is to compute a node-based GNN and for a pair of nodes, aggregate, similar to in graph auto encoders~\cite{kipf2016variational}, the node representations in any target pair in order to obtain a $2$-node representation.
Such aggregations are of the form:
\begin{equation}\label{eq: appendix-simprepr}
h(S=\{u,v\})= \sigma(h_u \cdot h_v)
\end{equation}
where $S$ is a pair of nodes.
As shown in Proposition \ref{prop: appendix-kequivariant-implies-kinvar}, this guaranteems an equivariant $2$-node representation but can often give false predictions even with a fully expressive node-based GNN~\cite{ wang2023improving}. A common remedy for this problem is to introduce positional encodings such as SEAL~\cite{wang2022equivariant} and  DistanceEncoding~\cite{li2020distance}. Positional encodings encode the relative distances amongst nodes via a low distortion embedding for example. In the related work section we have gone over many of these embeddings. We have also used these in our evaluation since they are common practice and must exist to compute a hypergraph neural network if there are no ground truth node attributes. According to~\cite{srinivasan2019equivalence}, fully expressive pairwise node representations, as defined by $2$-node invariance and expressivity, can be represented by some fully expressive positional embedding, which is a positional embedding that is injective on the node pair isomorphism classes. It is not clear how one would achieve this in practice, however. Another remedy is to increase the expressive power of WL-1 to WL-2 for link prediction~\cite{hu2022two}. 
\subsection{More Related Work}
The work of \cite{wei2022augmentations} also does a data augmentation scheme. It considers randomly dropping edges and generating data through a generative model on hypergraphs. The work of \cite{lee2022m} also performs data augmentation on a hypergraph so that homophilic relationships are maintained. It does this through contrastive losses at the node to node, hyperedge to hyperedge and intra hyperedge level. Neither of these methods provide guarantees for their data augmentations. 

As mentioned in the main text, an ensemble of neural networks can be used with a drop-out \cite{baldi2014dropout} like method on the output of the Algorithm. Subgraph neural networks \cite{alsentzer2020subgraph,tan2023bring} are ensembles of models on subgraphs of the input graph. 

Some more of the many existing hypergraph neural network architectures include: \cite{kim2022equivariant, cai2022hypergraph, chien2021you, bai2021hypergraph,li2023hypergraph, arya2020hypersage}.

\clearpage

\section{Proofs}
In this section we provide the proofs for all of the results in the main paper along with some additional theory.
\subsection{Hypergraph Isomorphism }
We first repeat the definition of a hypergraph and its corresponding matrix representation called the star expansion matrix::
\begin{definition}\label{def: appendix-hypergraph}
An undirected hypergraph is a pair $\mathcal{H}= (\mathcal{V}, \mathcal{E})$ consisting of a set of vertices $\mathcal{V}$ and a set of hyperedges $\mathcal{E} \subseteq 2^{\mathcal{V}}$
{where $2^\gV$ is the power set of the vertex set $\gV$}.
\end{definition}
\begin{definition}
    The star expansion incidence matrix $H$ of a hypergraph $\mathcal{H}=( \mathcal{V}, \mathcal{E})$ is the  $|\gV| \times 2^{|\gV|}$
    $0$-$1$ incidence matrix $H$ where $H_{v,e}=1$ iff $v\in e$ for $(v,e) \in \gV \times \gE$ for some fixed orderings on both $\gV$ and $2^{\gV}$. 
\end{definition}
We recall the definition of an isomorphism between hypergraphs:
\begin{definition}\label{def:appendix-simp_map}
For two hypergraphs $\gH$ and $\gD$, a structure preserving map 
 $\rho: \gH\to \gD$ is a pair of maps $\rho=(\rho_\gV: \gV_\gH\to \gV_\gD, \rho_\gE: \gE_\gH\to \gE_\gD)$ such that $\forall e\in \gE_\gH, \rho_\gE(e)\triangleq \{\rho_\gV(v_i)\mid v_i\in e\}\in \gE_\gD$.
 A hypergraph isomorphism is a structure preserving map  $\rho=(\rho_\gV, \rho_\gE)$ such that both $\rho_\gV$ and $\rho_\gE$ are bijective. Two hypergraphs are said to be isomorphic, denoted as $\mathcal{H}\cong\gD$, if there exists an isomorphism between them. 
When 
$\mathcal{H}=\gD$, an isomorphism $\rho$ is called an automorphism on $\gH$. 
All the automorphisms form a group, which we denote as  $Aut(\mathcal{H})$.
\end{definition}
The action of $\pi \in Sym(\mathcal{V})$ on the star expansion adjacency matrix $H$ is repeated here for convenience:

\begin{equation}\label{eq: appendix-pi}
(\pi \cdot {H})_{v,e=(u_1,...,v,...,u_k)} \triangleq  {H}_{\pi^{-1}(v),\pi^{-1}(e)=(\pi^{-1}(u_1),...,\pi^{-1}(v),...,\pi^{-1}(u_k))}
\end{equation}
{Based on the group action, consider the stabilizer subgroup of $Sym(\gV)$ on the star expansion adjacency matrix $H$ defined as follows:}
\begin{equation}\label{eq:appendix-stabilizer}
Stab_{Sym(\gV)}(H)=\{\pi\in Sym(\gV)\mid \pi\cdot H=H \}
\end{equation}
For simplicity we omit the lower index when the permutation group is clear from the context. It can be checked that $Stab(H)\leq Sym(\gV)$ is a subgroup. Intuitively, $Stab(H)$ consists of all permuations that leave $H$ fixed.

For a given hypergraph $\gH=(\gV, \gE)$, there is a relationship between the group of hypergraph automorphisms $Aut(\gH)$ and the stabilizer group $Stab(H)$ on the star expansion adjacency matrix. 
\begin{proposition}\label{prop: appendix-aut-stab}
$Aut(\gH)\cong Stab(H)$ are equivalent as isomorphic groups. 
\end{proposition}
\begin{proof}
    Consider $\rho \in Aut(\mathcal{H})$, define the map $\Phi: \rho \mapsto \pi:=\rho|_{\mathcal{V}(\mathcal{H})}$
    . The group element $\pi \in Sym(\gV)$ acts as a stabilizer of $H$ since for any entry $(v,e)$ in $H$, $H_{\pi^{-1}(v),\pi^{-1}(e)}=(\pi^{} \cdot H)_{v,e}=1$ iff $\pi^{-1}(e) \in \mathcal{E}_{\mathcal{H}}$ iff $e \in \mathcal{E}_{\mathcal{H}}$ iff $H_{v,e}=1= H_{\pi\circ \pi^{-1}(v), \pi\circ \pi^{-1}(e)}$. Since $(v,e)$ was arbitrary, $\pi$ preserves the positions of the nonzeros. 
    
    We can check that $\Phi$ is a well defined injective homorphism as a restriction map. Furthermore it is surjective since for any $\pi \in Stab(H)$, we must have $H_{v,e}=1$ iff $(\pi\cdot H)_{v,e}= H_{\pi^{-1}(v),\pi^{-1}(e)}=1$ which is equivalent to $v \in e \in \gE$ iff $\pi(v) \in \pi(e) \in \mathcal{E}$ which implies $e \in \mathcal{E}$ iff $\pi(e) \in \mathcal{E}$. Thus $\Phi$ is a group isomorphism from $Aut(\gH)$ to $Stab(H)$
\end{proof}
In other words, to study the symmetries of a given hypergraph $\gH$, we can equivalently study the automorphisms $Aut(\gH)$ and the stabilizer permutations $Stab(H)$ on its star expansion adjacency matrix $H$. Intuitively, the stabilizer group $0\leq Stab(H)\leq Sym(\gV)$ characterizes the symmetries in a graph. 
When the graph has rich symmetries, say a complete graph, $Stab(H)=Sym(\mathcal{V})$ can be as large as the whole permutaion group.

Nontrivial symmetries can be represented by isomorphic node sets which we define as follow:

\begin{definition}\label{def: appendix-isom-nodes}
{
For a given hypergraph $\gH$ with star expansion matrix $H$, two $k$-node sets $S,T \subseteq  \mathcal{V}$ are called \emph{isomorphic}, denoted as $S\simeq T$, if $\exists \pi \in Stab(H), \pi(S)=T$ and $\pi(T)=S$.
}
\end{definition}
{When $k=1$, we have isomorphic nodes, denoted $u\cong_{\gH} v$ for $u,v \in \gV$.
Node isomorphism is also studied as the so-called structural equivalence in~\cite{lorrain1971structural}. 
}
Furthermore, when $S \simeq T$ we can then say that there is a matching due to the graph of the $\pi$ map of the form $\{(s,\pi(s)): s \in S\}$. This matching is between the node sets $S$ and $T$ so that matched nodes are isomorphic.

\begin{definition}
A $k$-node representation $h$ is \textbf{k-permutation equivariant} if:

for all $\pi\in Sym(\mathcal{V})$, $S \in 2^{\mathcal{V}}$ with $|S|=k$: 
{$h(\pi\cdot S,H)= h( S, \pi \cdot H)$}
\end{definition}
\begin{proposition}\label{prop: appendix-simpinvar}
    If $k$-node representation $h$ is $k$-permutation equivariant, then $h$ is $k$-node invariant.  
\end{proposition}
\begin{proof}
    given $S,S' \in \mathcal{C}$ with $|S|=|S'|=k$, 
    
    if there exists a $\pi \in Stab(H)$ (meaning $\pi \cdot H= H$) and $\pi(S)= S'$ then 
    \begin{align}
    \begin{split}
    h(S', H)&= h(S',\pi \cdot H) \text{ (by $\pi \cdot H= H$)}\\
    &= h(S, H) \text{ (by $k$-permutation equivariance of $h$ and $\pi(S)= S'$)} 
    \end{split}
    \end{align}
\end{proof}
We revisit the definition of the symmetry group of a $k$-node representation map on hypergraph $\gH$.
\begin{definition}\label{def: appendix-sym-repr}
    For $h: [\gV]^k \times \mathbb{Z}_2^{n \times 2^n} \rightarrow \mathbb{R}^d$ a $k$-node representation on a hypergraph $\gH$,
    \begin{equation}
        Sym(h)\triangleq \{ \pi \in Sym(\gV): \pi(S)=S' \Rightarrow h(S,H)= h(S',H) \}
    \end{equation}
\end{definition}
\clearpage
\subsection{Properties of GWL-1}
Here are the steps of the GWL-1 algorithm on the star expansion matrix $H$ 
is repeated here for convenience: 
\begin{align}
\begin{split}
f_e^0 \gets \{\}, h_v^0 \gets \{\}
\\
f_e^{i+1} \gets \{\!\!\{(f_e^i, h_v^i)\}\!\!\}_{v \in e}, \forall e \in \mathcal{E}(H)\\
h_v^{i+1} \gets \{\!\!\{(h_v^i, f_e^{i+1})\}\!\!\}_{v \in e}, \forall v \in \mathcal{V}(H)
\end{split}
\end{align}
Where $\gE(H)$ denotes the nonzero columns of $H$ and $\gV(H)$ denotes the rows of $H$.

We make the following observations about each of the two steps of the GWL-1 algorithm:
\begin{observation}\label{obs: appendix-GWLnbhd}
\begin{subequations}
\begin{equation}\label{eq: GWLnbhd-fei}
\{\!\!\{(f_e^i, h_v^i)\}\!\!\}_{v \in e}= \{\!\!\{({f'}_e^i, {h'}_v^i)\}\!\!\}_{v \in e}
\text{ iff }
(f_e^i, \{\!\!\{h_v^i\}\!\!\}_{v \in e})= ({f'}_e^i, \{\!\!\{{h'}_v^i\}\!\!\}_{v \in e}) \forall e \in \mathcal{E}(H) \text{ and }
\end{equation}
\begin{equation}\label{eq: GWLnbhd-hvi}
\{\!\!\{(h_v^i, f_e^{i+1}\}\!\!\}_{v \in e}=\{\!\!\{({h'}_v^i, {f'}_e^{i+1}\}\!\!\}_{v \in e}
\text{ iff }
(h_v^i, \{\!\!\{f_e^{i+1}\}\!\!\}_{v \in e})=({h'}_v^i, \{\!\!\{{f'}_e^{i+1}\}\!\!\}_{v \in e}) \forall v \in \mathcal{V}(H)
\end{equation}
\end{subequations}
\end{observation}
\begin{proof}
Equation \ref{eq: GWLnbhd-fei} follows since 
\begin{subequations}
\begin{equation}
\{\!\!\{(f_e^i, h_v^i)\}\!\!\}_{v \in e}= \{\!\!\{({f'}_e^i, {h'}_v^i)\}\!\!\}_{v \in e} \forall e \in \mathcal{E}(H)
\end{equation}
\begin{equation}
\text{ iff }
f_e^i={f'}_e^i\text{ and } \{\!\!\{h_v^i \}\!\!\}_{v\in e}= \{\!\!\{ {h'}_v^i \}\!\!\}_{v\in e} \forall e \in \mathcal{E}(H)
\end{equation}
\begin{equation}
\text{ iff }\\
(f_e^i, \{\!\!\{h_v^i\}\!\!\}_{v \in e})= ({f'}_e^i, \{\!\!\{{h'}_v^i\}\!\!\}_{v \in e}) \forall e \in \mathcal{E}(H)
\end{equation}
\end{subequations}
For Equation \ref{eq: GWLnbhd-hvi}, we have:
\begin{subequations}
\begin{equation}
\{\!\!\{(h_v^i, f_e^{i+1}\}\!\!\}_{v \in e}=\{\!\!\{({h'}_v^i, {f'}_e^{i+1}\}\!\!\}_{v \in e} \forall v \in \mathcal{V}(H)
\end{equation}
\begin{equation}
\text{ iff }
\{\!\!\{(h_v^i, \{\!\!\{(f_e^i, h_u^i) \}\!\!\}_{u \in e})\}\!\!\}_{v \in e}=\{\!\!\{({h'}_v^i, \{\!\!\{({f'}_e^i, {h'}_u^i) \}\!\!\}_{u \in e})\}\!\!\}_{v \in e} \forall v \in \mathcal{V}(H)
\end{equation}
\begin{equation}
\text{ iff }
h_v^i={h'}_v^i \text{ and } \{\!\!\{(f_e^i, h_u^i) \}\!\!\}_{u \in e, v \in e}= \{\!\!\{({f'}_e^i, {h'}_u^i) \}\!\!\}_{u \in e, v \in e} \forall v \in \mathcal{V}(H)
\end{equation}
\begin{equation}
\text{ iff }
h_v^i={h'}_v^i \text{ and } \{\!\!\{f_e^{i+1}\}\!\!\}= \{\!\!\{{f'}_e^{i+1} \}\!\!\} \forall v \in \mathcal{V}(H)
\end{equation}
\end{subequations}
These follow by the definition of multiset equality and since there is no loss of information upon factoring out a constant tuple entry of each pair in the multisets. 
\end{proof}

\begin{proposition}\label{prop: appendix-perm-equivar}
    The update steps of GWL-1: $f^i(H)\triangleq[f^i_{e_1}(H), \cdots , f^i_{e_m}(H)]$ and $h^i(H)\triangleq[h^i_{v_1}(H), \cdots , h^i_{v_n}(H)]$, are permutation equivariant; in other words, 
    For any 
$\pi\in Sym(\gV)$, let $\pi\cdot f^i(H)\triangleq [f^i_{\pi^{-1}(e_1)}(H), \cdots, f^i_{\pi^{-1}(e_m)}(H)]$ and $\pi \cdot h^i(H)\triangleq[h^i_{\pi^{-1}(v_1)}(H), \cdots , h^i_{\pi^{-1}(v_n)}(H)]$, 
we have
    $\forall i \in \mathbb{N},  \pi \cdot f^i(H)= f^i(\pi \cdot H)\text{ and }  \pi \cdot h^i(H)= h^i(\pi \cdot H)$
\end{proposition}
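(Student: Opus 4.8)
The plan is to prove both equivariance identities \emph{simultaneously} by a single induction on the iteration number $i$, rather than proving them separately. This is forced on us by the algorithm itself: the node update at step $i+1$ consumes the edge values $f^{i+1}_e$, which are in turn built from the node values at step $i$, so the two statements are interleaved and a decoupled induction would not close. Throughout, the one structural fact I would lean on is the bookkeeping consequence of the action defined in \eqref{eq: pi}: the nonzero columns of $\pi\cdot H$ are exactly $\pi(\gE(H))$, and an incidence $v\in e$ holds in $\pi\cdot H$ if and only if $\pi^{-1}(v)\in\pi^{-1}(e)$ holds in $H$. Equivalently, $\pi$ restricts to a bijection from the nodes of $\pi^{-1}(e)$ onto the nodes of $e$, and to a bijection from the hyperedges of $H$ incident to $\pi^{-1}(v)$ onto the hyperedges of $\pi\cdot H$ incident to $v$. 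This is exactly what will license the re-indexing of the multisets appearing in the GWL-1 updates.

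For the base case $i=0$, the edge values are the empty multiset for every hyperedge, so $f^0_e(\pi\cdot H)=\{\}=f^0_{\pi^{-1}(e)}(H)$; and since GWL-1 is run on an unattributed hypergraph the initial node labels $h^0_v$ are a common constant, giving $h^0_v(\pi\cdot H)=h^0_{\pi^{-1}(v)}(H)$ trivially. (If one tracks genuine attributes $X$, one simply requires $X$ to transform as $X_v\mapsto X_{\pi^{-1}(v)}$, and the same identity holds.) Either reading makes the base case immediate.

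For the inductive step I would assume $f^i_e(\pi\cdot H)=f^i_{\pi^{-1}(e)}(H)$ and $h^i_v(\pi\cdot H)=h^i_{\pi^{-1}(v)}(H)$ for all $v,e$, and first handle the edges. Substituting the hypotheses into the edge update and re-indexing the multiset over $v\in e$ (in $\pi\cdot H$) by the substitution $u=\pi^{-1}(v)$ ranging over $u\in\pi^{-1}(e)$ (in $H$) yields
\[
f^{i+1}_e(\pi\cdot H)=\{\{(f^i_{\pi^{-1}(e)}(H),\,h^i_{\pi^{-1}(v)}(H))\}\}_{v\in e}=\{\{(f^i_{\pi^{-1}(e)}(H),\,h^i_u(H))\}\}_{u\in\pi^{-1}(e)}=f^{i+1}_{\pi^{-1}(e)}(H),
\]
which is precisely the $e$-th component of $\pi\cdot f^{i+1}(H)$. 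The node step is identical in spirit: I plug in the hypothesis $h^i_v(\pi\cdot H)=h^i_{\pi^{-1}(v)}(H)$ together with the edge identity just established, and re-index the multiset over hyperedges $e\ni v$ in $\pi\cdot H$ by $e'=\pi^{-1}(e)\ni\pi^{-1}(v)$ in $H$, obtaining $h^{i+1}_v(\pi\cdot H)=h^{i+1}_{\pi^{-1}(v)}(H)$. This closes the induction.

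The only real subtlety, and the step I would double-check most carefully, is the re-indexing of the multisets: one must verify that $\pi$ induces an honest bijection on each index set so that no elements are created, lost, or duplicated (this is exactly where \eqref{eq: pi} is invoked), and that the ``frozen'' first coordinate $f^i_{\pi^{-1}(e)}(H)$ (respectively $h^i_{\pi^{-1}(v)}(H)$) is genuinely constant across the multiset so that it can be carried through the re-indexing unchanged. Both are routine given the action definition, so I expect the argument to carry no essential difficulty beyond setting up the coupled induction correctly.
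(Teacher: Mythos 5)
Your proposal is correct and follows essentially the same route as the paper's own proof: a coupled induction on the iteration number $i$, with the base case handled by the empty edge labels and the permuted node attributes, and the inductive step obtained by substituting both hypotheses into the update rules and re-indexing the multisets via the bijection induced by $\pi$ (Equation \ref{eq: pi}). Your write-up is in fact slightly more explicit than the paper's about why that re-indexing is a genuine bijection, but the underlying argument is the same.
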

\begin{proof}
We prove by induction on $i$:

Base case, $i=0$: 

$[\pi\cdot f^0(H)]_{e=\{v_1,...,v_k\}}= \{\}= f_{\pi^{-1}(e)= \{\pi^{-1}(v_1),...,\pi^{-1}(v_k)\}}^0(H)= f_{e}^0(\pi\cdot H)$ since the $\pi$ cannot affect a list of empty sets and the definition of the action of $\pi$ on $H$ as defined in Equation \ref{eq: appendix-pi}.

$[\pi \cdot h^0(H)]_v= [\pi \cdot X]_v=  X_{\pi^{-1}(v)}= h_{\pi^{-1}(v)}^0(H)= h^0_v(\pi \cdot H)$ by definition of the group action $Sym(\mathcal{V})$ acting on the node indices of a node attribute tensor as defined in Equation \ref{eq: appendix-pi}.

Induction Hypothesis:

\begin{equation}
[\pi \cdot f^{i}(H)]_e= f_{\pi^{-1}(e)}^i(H)= f_e^i(\pi \cdot H)\text{ and }[\pi \cdot h^i(H)]_v= h_{\pi^{-1}(v)}^i(H)= h_v^i(\pi \cdot H)
\end{equation}

Induction Step:

\begin{align}
\begin{split}
[\pi \cdot h^{i+1}(H)]_v&= \{\!\!\{([\pi \cdot h^i(H)]_v, [\pi\cdot f^{i+1}(H)]_e)\}\!\!\}_{v \in e}\\
&= \{\!\!\{([\pi \cdot h^i(H)]_v, \{\!\!\{([\pi\cdot f^i(H)]_e, [\pi \cdot h^i(H)]_u)\}\!\!\}_{u \in e})\}\!\!\}_{v \in e}\\&= \{\!\!\{ h^i_v(\pi \cdot H), \{\!\!\{(f^i_e(\pi\cdot H), h^i_u(\pi \cdot H))\}\!\!\}_{u \in e} \}\!\!\}_{v \in e}\\
&= h^{i+1}_v(\pi \cdot H) 
\end{split}
\end{align}

\begin{align}
\begin{split}
[\pi \cdot f^{i+1}(H)]_e&= \{\!\!\{([\pi \cdot f^i(H)]_e, [\pi\cdot h^{i}(H)]_v)\}\!\!\}_{v \in e}\\
&= \{\!\!\{(f^i_e(\pi \cdot H), h^i_v(\pi \cdot H)\}\!\!\}_{v \in e}\\
&= f^{i+1}_e(\pi \cdot H) 
\end{split}
\end{align}
\end{proof}
\begin{definition}\label{def: appendix-simpexpr}
    Let $h: [\mathcal{V}]^{k} \times \mathbb{Z}_2^{n\times 2^n} \rightarrow \mathbb{R}^{d}$ be a $k$-node representation on a hypergraph $\mathcal{H}$. Let $H\in \mathbb{Z}_2^{n \times 2^n}$ be the star expansion adjacency matrix of $\mathcal{H}$ for $n$ nodes.
    {The representation $h$ is $k$-node most expressive if
    $\forall S, S'\subseteq \mathcal{V}$, $|S|=|S'|=k$, 
    the following two conditions are satisfied:}
    \begin{enumerate}
        \item $h$ is \textbf{k-node invariant}: 
          $\exists \pi \in Stab(H), \pi(S)= S'\implies h(S,H)= h(S',H)$
        \item $h$ is \textbf{k-node expressive}
        $\nexists \pi \in Stab(H), \pi(S)= S'\implies h(S,H)\neq h(S',H)$ 
    \end{enumerate}
\end{definition}
Let $AGG$ be a permutation invariant map from a set of node representations to $\mathbb{R}^d$.
\begin{proposition}\label{prop: appendix-kequivariant-implies-kinvar}
   Let $h(S,H)= AGG_{v \in S}[h_v^i(H)]$ with injective AGG and $h_v^i$ permutation equivariant. The representation $h(S,H)$ is $k$-node invariant but not necessarily $k$-node expressive for $S$ a set of $k$ nodes. 
\end{proposition}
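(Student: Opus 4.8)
The plan is to split the claim into its two halves and handle them separately: first establish $k$-node invariance directly from the permutation equivariance of $h^i_v$, then refute $k$-node expressivity by exhibiting a concrete counterexample.

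For invariance, I would start from a stabilizer $\pi \in Stab(H)$ with $\pi(S) = S'$, so that $\pi \cdot H = H$. The key is to feed this into the equivariance identity $\pi \cdot h^i(H) = h^i(\pi\cdot H)$ proved earlier, which componentwise reads $h^i_{\pi^{-1}(v)}(H) = h^i_v(\pi \cdot H)$. Substituting $\pi \cdot H = H$ gives $h^i_{\pi^{-1}(v)}(H) = h^i_v(H)$ for every $v$, i.e. the node coloring is constant along the orbits of $\pi$; equivalently $h^i_{\pi(u)}(H) = h^i_u(H)$ for all $u$. Then, since $S' = \pi(S)$, the multiset of node values over $S'$ equals the multiset over $S$:
\[
\{\{ h^i_w(H) \}\}_{w \in S'} = \{\{ h^i_{\pi(u)}(H) \}\}_{u \in S} = \{\{ h^i_u(H) \}\}_{u \in S}.
\]
Because $AGG$ is permutation invariant, being a function of this multiset alone, we conclude $h(S',H) = AGG_{w \in S'}[h^i_w(H)] = AGG_{u\in S}[h^i_u(H)] = h(S,H)$, which is exactly $k$-node invariance.

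For the failure of expressivity, I would reuse the running example $\mathcal{H} = C_4^3 \sqcup C_5^3$. Both connected components are neighborhood-regular $3$-uniform hypergraphs, so by the earlier Observation (a hypergraph is GWL-1 symmetric iff it is neighborhood regular) together with Theorem~\ref{thm: gwlunivcover}, their star-expansion universal covers are isomorphic and GWL-1 assigns one and the same value $h^i_v(H)$ to every node $v$, at every iteration $i$. Choosing any $k$-node set $S$ inside the $C_4^3$ component and any $k$-node set $S'$ inside the $C_5^3$ component (for $k \le 4$), all node values entering $AGG$ coincide, so $h(S,H) = h(S',H)$. On the other hand, no $\pi \in Stab(H) \cong Aut(\mathcal{H})$ can satisfy $\pi(S) = S'$: an automorphism must permute connected components only among components of equal size, and the two components here have $4$ and $5$ nodes respectively, so none maps a node of one into the other. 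Thus $S \not\simeq S'$ while $h(S,H) = h(S',H)$, witnessing that $h$ need not be $k$-node expressive.

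The routine half is invariance; the only care needed there is the bookkeeping that turns $\pi$-invariance of the node coloring into equality of the two multisets before invoking permutation invariance of $AGG$. The part I expect to require the most attention is the counterexample: I must ensure the chosen $S, S'$ are genuinely non-isomorphic while still receiving identical GWL-1 values. The distinct component sizes $4 \neq 5$ give non-isomorphism cleanly, and neighborhood-regularity plus Theorem~\ref{thm: gwlunivcover} give the identical GWL-1 values, so both requirements hold simultaneously.
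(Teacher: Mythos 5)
Your proposal is correct. The invariance half is essentially the paper's own argument: both proofs feed $\pi\cdot H = H$ into the equivariance identity to get $h^i_{\pi(u)}(H) = h^i_u(H)$, and then invoke permutation invariance of $AGG$ (the paper routes this through Proposition~\ref{prop: appendix-simpinvar}, you route it through explicit multiset equality, which is the same bookkeeping). Where you genuinely depart from the paper is the non-expressivity half: the paper does not give a counterexample at all, it only remarks that $h(S,H)=h(S',H)$ cannot be guaranteed to produce a stabilizer carrying $S$ to $S'$, deferring to \cite{zhang2021labeling}. You instead exhibit a concrete witness, $\mathcal{H}=C_4^3\sqcup C_5^3$: neighborhood-regularity plus Theorem~\ref{thm: gwlunivcover} force a single GWL-1 value on all nodes of both components, so any $k$-node sets $S\subset \gV_{C_4^3}$ and $S'\subset \gV_{C_5^3}$ (with $k\le 4$) aggregate to equal representations, while no automorphism can map $S$ to $S'$ because $Stab(H)\cong Aut(\gH)$ and automorphisms carry connected components bijectively onto components of the same size, and $4\neq 5$. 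This buys a self-contained, checkable proof of the "not necessarily expressive" clause within the paper's own framework (indeed reusing the example the paper deploys later in Theorem~\ref{thm: invarexpr}), at the cost of a few extra lines; the paper's version is shorter but leaves that clause resting on an external citation rather than an argument.
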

\begin{proof}
$\exists \pi \in Stab(H)$ s.t. $ \pi(S)=S', \pi \cdot H = H$

$\Rightarrow \pi(v_i)=v_i'$ for 
$i=1,...,|S|, \pi \cdot H= H$ 

$\Rightarrow h_{\pi(v)}^i(H)= h_v^i(\pi \cdot H)= h_v^i( H)$ (By permutation equivariance of $h^i_v$ and $\pi \cdot H = H$)

$\Rightarrow AGG_{v \in S}[h_v^i(H)] = AGG_{v' \in S'}[h_{v'}^i(H)]$ (By Proposition \ref{prop: appendix-simpinvar} and AGG being permutation invariant)

The converse, that $h(S,H)$ is $k$-node expressive, is not necessarily true since we cannot guarantee $h(S,H)=h(S',H)$ implies the existence of a permutation that maps $S$ to $S'$ (see \cite{zhang2021labeling}).
\end{proof}

A hypergraph can be represented by a bipartite graph $\mathcal{B}_{\mathcal{V},\mathcal{E}}$ from $\mathcal{V}$ to $\mathcal{E}$ where there is an edge $(v,e)$ in the bipartite graph iff node $v$ is incident to hyperedge $e$. This bipartite graph $\mathcal{B}_{\mathcal{V},\mathcal{E}}$ is called the star expansion bipartite graph.

 We introduce a more structured version of graph isomorphism called a $2$-color isomorphism to characterize hypergraphs. It is a map on $2$-colored graphs, which are graphs that can be colored with two colors so that no two nodes in any graph with the same color are connected by an edge. We define a $2$-colored isomorphism formally here:
 \begin{definition}
 A $2$-colored isomorphism is a graph isomorphism on two $2$-colored graphs that preserves node colors. In particular, between two graphs $G_1$ and $G_2$ the vertices of one color in $G_1$ must map to vertices of the same color in $G_2$. It is denoted by $\cong_{c}$. \end{definition}
 A bipartite graph must always have a $2$-coloring. In fact, the $2$-coloring with all the nodes in the node bipartition colored red and all the nodes in the hyperedge bipartition colored blue forms a canonical $2$-coloring of $\mathcal{B}_{\mathcal{V},\mathcal{E}}$. Assume that all star expansion bipartite graphs are canonically $2$-colored. 

 \begin{proposition}\label{prop: appendix-hypergraphisom-iff-2colorisom}
 We have two hypergraphs $(\mathcal{V}_1,\mathcal{E}_1) \cong (\mathcal{V}_2,\mathcal{E}_2) $ iff  $\mathcal{B}_{\mathcal{V}_1,\mathcal{E}_1} \cong_c\mathcal{B}_{\mathcal{V}_2,\mathcal{E}_2} $ where $\mathcal{B}_{\mathcal{V},\mathcal{E}}$ is the star expansion bipartite graph of $(\mathcal{V},\mathcal{E})$ 
 \end{proposition}
 \begin{proof}
 Denote $L(\mathcal{B}_{\mathcal{V}_i,\mathcal{E}_i})$ as the left hand (red) bipartition of $\mathcal{B}_{\mathcal{V}_i,\mathcal{E}_i}$ to represent the nodes $\mathcal{V}_i$ of $(\mathcal{V}_i,\mathcal{E}_i)$ and $R(\mathcal{B}_{\mathcal{V}_i,\mathcal{E}_i})$ as the right hand (blue) bipartition of $\mathcal{B}_{\mathcal{V}_i,\mathcal{E}_i}$ to represent the hyperedges $\mathcal{E}_i$ of $(\mathcal{V}_i,\mathcal{E}_i)$. We use the left/right bipartition and $\mathcal{V}_i$/$\mathcal{E}_i$ interchangeably since they are in bijection.
 
 $\Rightarrow$
 If there is an isomorphism $\pi: \mathcal{V}_1\rightarrow \mathcal{V}_2$, this means 
 \begin{itemize}
     
 \item  $\pi$ is a bijection and 
 
 \item  has the structure preserving property that $(u_1,...,u_k) \in \mathcal{E}_1$ iff $(\pi(u_1),...,\pi(u_k)) \in \mathcal{E}_2$.
 
 \end{itemize}
 We may induce a $2$-colored isomorphism $\pi^*: \mathcal{V}(\mathcal{B}_{\mathcal{V}_1,\mathcal{E}_1}) \rightarrow \mathcal{V}(\mathcal{B}_{\mathcal{V}_1,\mathcal{E}_1})$ so that $\pi^*|_{L(\mathcal{B}_{\mathcal{V}_1,\mathcal{E}_1})}= \pi$ where equality here means that $\pi^*|_{L(\mathcal{B}_{\mathcal{V}_1,\mathcal{E}_1})}$ acts on $L(\mathcal{B}_{\mathcal{V}_1,\mathcal{E}_1})$ the same way that $\pi$ does on $\mathcal{V}_1$. Furthermore $\pi^*$ has the property that $\pi^*|_{R(\mathcal{B}_{\mathcal{V}_1,\mathcal{E}_1})}(u_1,...,u_k)= (\pi(u_1),...,\pi(u_k)), \forall (u_1,...,u_k) \in \mathcal{E}_1$, following the structure preserving property of isomorphism $\pi$.

 The map $\pi^*$ is a bijection by definition of being an extension of a bijection. 
 
 The map $\pi^*$ is also a $2$-colored map since it maps $L(\mathcal{B}_{\mathcal{V}_1,\mathcal{E}_1})$  to $L(\mathcal{B}_{\mathcal{V}_2,\mathcal{E}_2})$ and $R(\mathcal{B}_{\mathcal{V}_1,\mathcal{E}_1})$ to $R(\mathcal{B}_{\mathcal{V}_2,\mathcal{E}_2})$. 
 
 We can also check that the map is structure preserving and thus a $2$-colored isomorphism since 
 $(u_i,(u_1,...,u_i,...,u_k)) \in \mathcal{E}(\mathcal{B}_{\mathcal{V}_1,\mathcal{E}_1}), \forall i=1,...,k$ iff ($u_i \in \mathcal{V}_1$ and $(u_1,...,u_i,...,u_k) \in \mathcal{E}_1$) iff $\pi(u_i) \in \mathcal{V}_2$ and $(\pi(u_1),...,\pi(u_i),...,\pi(u_k)) \in \mathcal{E}_2$ iff $(\pi^*(u_i),(\pi^*(u_1,...,u_i,...,u_k))\in \mathcal{E}(\mathcal{B}_{\mathcal{V}_2,\mathcal{E}_2}), \forall i=1,...,k$. This follows from $\pi$ being structure preserving and the definition of $\pi^*$.

 $\Leftarrow$
 If there is a $2$-colored isomorphism $\pi^*: \mathcal{B}_{\mathcal{V}_1,\mathcal{E}_1} \rightarrow \mathcal{B}_{\mathcal{V}_2,\mathcal{E}_2}$ then it has the properties that 

 \begin{itemize}
 \item $\pi^*$ is a bijection,
 \item  (is $2$-colored): $\pi^*|_{L(\mathcal{B}_{\mathcal{V}_1,\mathcal{E}_1}}): L(\mathcal{B}_{\mathcal{V}_1,\mathcal{E}_1}) \rightarrow L(\mathcal{B}_{\mathcal{V}_2,\mathcal{E}_2})$ and $\pi^*|_{R(\mathcal{B}_{\mathcal{V}_1,\mathcal{E}_1})}: R(\mathcal{B}_{\mathcal{V}_1,\mathcal{E}_1}) \rightarrow R(\mathcal{B}_{\mathcal{V}_2,\mathcal{E}_2})$
\item (it is structure preserving): $(u_i,(u_1,...,u_i,...,u_k)) \in \mathcal{E}(\mathcal{B}_{\mathcal{V}_1,\mathcal{E}_1}), \forall i=1,...,k$ iff  $(\pi^*(u_i),\pi^*(u_1,...,u_i,...,u_k)) \in \mathcal{E}(\mathcal{B}_{\mathcal{V}_2,\mathcal{E}_2}), \forall i=1,...,k$ . 
 \end{itemize}
 This then means that we may induce a $\pi:\mathcal{V}_1 \rightarrow \mathcal{V}_2$ so that $\pi= \pi^*|_{L(\mathcal{B}_{\mathcal{V}_1,\mathcal{E}_1})}$. 
 
 We can check that $\pi$ is a bijection since $\pi$ is the $2$-colored bijection $\pi^*$ restricted to $L(\mathcal{B}_{\mathcal{V}_1,\mathcal{E}_1})$, thus remaining a bijection. 
 
 We can also check that $\pi$ is structure preserving. This means that $(u_1,...,u_k) \in \mathcal{E}_1$ iff $(u_i,(u_1,...,u_i,...,u_k)) \in \mathcal{E}(\mathcal{B}_{\mathcal{V}_1,\mathcal{E}_1}) \forall i=1,...,k$ iff  $(\pi^*(u_i),(\pi^*(u_1,...,u_i,...,u_k))) \in \mathcal{E}(\mathcal{B}_{\mathcal{V}_2,\mathcal{E}_2}) \forall i=1,...,k$ iff $(\pi^*(u_1,...,u_k)) \in R(\mathcal{B}_{\mathcal{V}_2,\mathcal{E}_2})$ iff $(\pi(u_1),...,\pi(u_k)) \in \mathcal{E}_2$  \end{proof}
 
 We define a topological object for a graph originally from algebraic topology called a universal cover:
 \begin{definition}\label{def: appendix-universalcover}
(\cite{hatcher2005algebraic})
    A universal covering of a connected graph $G$ is a (potentially infinite) graph $\tilde{G}$, 
    s.t. there is a map $p_G: \tilde{G} \rightarrow G$ called the universal covering map where: 
     
     1. $\forall x \in \gV(\tilde{G})$, $p_{G}|_{N(x)}$ is an isomorphism onto $N(p_G(x))$. 

     2. $\tilde{G}$ is simply connected (a tree)
 \end{definition}
 
 A covering graph is a graph that satisfies property 1 but not necessarily property 2 in Definition \ref{def: appendix-universalcover}.
 It is known that a universal covering $\tilde{G}$ covers all the graph covers of the graph $G$. Let $T_x^r$ denote a tree with root $x$ where every node has depth $r$. Furthermore, define a rooted isomorphism $G_x \cong H_y$ as an isomorphism between graphs $G$ and $H$ that maps $x$ to $y$ and vice versa. 
 We will use the following result to prove a characterization of GWL-1:
\begin{lemma}[\cite{krebs2015universal}]
\label{lemma: appendix-induct-krebsverbitsky} 
Let $T$ and $S$ be trees and $x \in V (T)$ and $y \in V (S)$ be their vertices of
the same degree with neighborhoods $N(x) = \{x_1, ..., x_k\}$ and $N(y) = \{y_1,..., y_k\}$.
Let $r\geq 1$. 
Suppose that 
$T^{r-1}_x \cong S^{r-1}_y$ and 
$T^r_{x_i} \cong S^r_{y_i}$ for all $i \leq k$. Then $T^{r+1}_x \cong S^{r+1}_y$.
\end{lemma}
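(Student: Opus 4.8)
The plan is to avoid gluing the given isomorphisms directly --- which is exactly where the real difficulty lies --- and instead reduce everything to the recursive multiset characterization of rooted-tree isomorphism. Recall that two rooted trees of depth $d$ are isomorphic iff there is a bijection between the children of the two roots matching up their depth-$(d-1)$ subtrees up to rooted isomorphism; equivalently, the iso-type of a depth-$d$ rooted tree is the multiset of the iso-types of its children's depth-$(d-1)$ subtrees. I would encode this with two notations. For a vertex $v$, let $B_d(v)$ denote the iso-type of the depth-$d$ ball $T^d_v$ rooted at $v$ (all neighbors treated as children). For a directed edge $v \to u$, let $\beta_d(v \to u)$ denote the iso-type of the depth-$d$ subtree rooted at $v$ obtained by deleting $u$, so that only the branches pointing away from $u$ survive. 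Since $T$ is a tree, these satisfy the clean decomposition that, as multisets of child iso-types, $B_d(v) = \beta_d(v \to u) \oplus \beta_{d-1}(u \to v)$, where $\oplus$ appends one extra child subtree of the indicated type.

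With this setup the obstacle becomes explicit and is dispatched. A given isomorphism $T^r_{x_i} \cong S^r_{y_i}$ need not send the branch of $x_i$ pointing back toward $x$ to the branch of $y_i$ pointing back toward $y$, so one cannot simply splice the neighbor isomorphisms together at the root. The remedy is to work entirely with iso-types and multiset cancellation, never with specific isomorphisms. First I would record the truncation facts: $T^r_{x_i} \cong S^r_{y_i}$ forces $B_d(x_i) = B_d(y_i)$ for every $d \le r$, and $T^{r-1}_x \cong S^{r-1}_y$ forces $B_d(x) = B_d(y)$ for every $d \le r-1$, obtained by restricting the rooted isomorphism to the smaller concentric ball.

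The heart of the argument is the claim that $\beta_d(x_i \to x) = \beta_d(y_i \to y)$ for all $i \le k$ and all $0 \le d \le r$, proved by strong induction on $d$, with base case $d = 0$ being the trivial single-vertex tree. For the step, apply the decomposition at $x_i$ to get $B_d(x_i) = \beta_d(x_i \to x) \oplus \beta_{d-1}(x \to x_i)$; since $B_d(x_i) = B_d(y_i)$, it suffices to show the backward branches agree, $\beta_{d-1}(x \to x_i) = \beta_{d-1}(y \to y_i)$, and then cancel. To obtain this sub-equality I would apply the decomposition at the root, $B_{d-1}(x) = \beta_{d-1}(x \to x_i) \oplus \beta_{d-2}(x_i \to x)$, where $B_{d-1}(x) = B_{d-1}(y)$ holds because $d-1 \le r-1$, and $\beta_{d-2}(x_i \to x) = \beta_{d-2}(y_i \to y)$ is precisely the induction hypothesis at depth $d-2$; cancellation then yields the sub-equality and closes the induction. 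Finally, taking $d = r$, the children of $x$ in $T^{r+1}_x$ are exactly the branches $\beta_r(x_i \to x)$, so $B_{r+1}(x) = \{\{\beta_r(x_i \to x)\}\}_{i} = \{\{\beta_r(y_i \to y)\}\}_{i} = B_{r+1}(y)$, i.e. $T^{r+1}_x \cong S^{r+1}_y$.

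The one place to treat with care is the validity of multiset cancellation --- if $M \uplus \{a\} = N \uplus \{b\}$ as multisets and $a = b$ then $M = N$ --- together with checking that the two interlocking reductions descend properly to the trivial base: canceling at $x_i$ to depth $d$ consumes the backward-branch equality at depth $d-1$, which is itself produced by canceling at the root $x$ using the forward-branch equality at depth $d-2$. This staircase is exactly why both hypotheses are required --- the neighbor balls to depth $r$ controlling the forward branches and the root ball to depth $r-1$ controlling the backward branches --- and why neither hypothesis alone would suffice.
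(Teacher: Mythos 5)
The paper never proves this lemma: it is imported verbatim from \cite{krebs2015universal} and used as a black box (the paper only proves its $2$-colored variant, Lemma \ref{lemma: appendix-induct-krebsverbitsky-2colored}, by reduction to this one). So there is no in-paper argument to compare yours against; the relevant question is whether your self-contained proof is correct, and it is. You correctly isolate the actual difficulty --- the given isomorphisms $T^r_{x_i}\cong S^r_{y_i}$ need not carry the branch of $x_i$ pointing back toward $x$ onto the branch of $y_i$ pointing back toward $y$, so the neighbor isomorphisms cannot simply be spliced at the root --- and you dispose of it by never choosing isomorphisms at all, working instead with iso-types, the decomposition $B_d(v)=\beta_d(v\to u)\oplus\beta_{d-1}(u\to v)$, and multiset cancellation. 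The interlocking induction is sound and the depth bookkeeping uses each hypothesis exactly where it is available: producing the forward-branch equality at depth $d$ consumes the ball equality $B_d(x_i)=B_d(y_i)$ (valid since $d\le r$) and the backward-branch equality at depth $d-1$, which in turn consumes the root-ball equality $B_{d-1}(x)=B_{d-1}(y)$ (valid since $d-1\le r-1$) and the forward-branch equality at depth $d-2$ (the induction hypothesis). The final step $B_{r+1}(x)=\{\{\beta_r(x_i\to x)\}\}_{i}=\{\{\beta_r(y_i\to y)\}\}_{i}=B_{r+1}(y)$ is valid because rooted-tree iso-type is determined by the multiset of children iso-types and the degrees of $x$ and $y$ agree.

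One boundary case should be made fully explicit rather than gestured at: the step $d=1$ of the staircase. There the required backward-branch equality $\beta_0(x\to x_i)=\beta_0(y\to y_i)$ cannot be produced by the decomposition at the root (that would reference depth $-1$ and an induction hypothesis at depth $-1$); it holds simply because both sides are single-vertex trees, i.e., it is a second trivial base case alongside $d=0$. With that stated, your argument is a complete and correct elementary proof of the cited result, and it would in fact make the appendix self-contained where the paper currently relies on an external reference.
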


A universal cover of a $2$-colored bipartite graph is still $2$ colored. When we lift nodes $v$ and hyperedge nodes $e$ to their universal cover, we keep their respective red and blue colors.  

Define a rooted colored isomorphism $T_{\tilde{e}_1}^k \cong_c T_{\tilde{e}_2}^k$ as a colored tree isomorphism where blue/red node $\tilde{e}_1$/$\tilde{v}_1$ maps to blue/red node $\tilde{e}_2$/$\tilde{v}_2$ and vice versa. 
     
     In fact, Lemma \ref{lemma: appendix-induct-krebsverbitsky} holds for $2$-colored isomorphisms, which we show below:
\begin{lemma}\label{lemma: appendix-induct-krebsverbitsky-2colored}
Let $T$ and $S$ be $2$-colored trees and $x \in V (T)$ and $y \in V (S)$ be their vertices of
the same degree with neighborhoods $N(x) = \{x_1, ..., x_k\}$ and $N(y) = \{y_1,..., y_k\}$.
Let $r\geq 1$. 
Suppose that 
$T^{r-1}_x \cong_c S^{r-1}_y$ and 
$T^r_{x_i} \cong_c S^r_{y_i}$ for all $i \leq k$. Then $T^{r+1}_x \cong_c S^{r+1}_y$.
\end{lemma}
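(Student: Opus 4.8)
The plan is to reduce the colored statement to the uncolored Lemma \ref{lemma: appendix-induct-krebsverbitsky} via a single structural observation about proper $2$-colorings of trees. Every rooted ball $T^m_x$ is a connected bipartite graph (it is a subtree), so its inherited proper $2$-coloring is the unique proper $2$-coloring up to a global swap of the two colors; equivalently, writing $c(\cdot)$ for the color, the color of any vertex $v$ is determined by the color of the root $x$ together with the parity of $d(x,v)$: one has $c(v)=c(x)$ when $d(x,v)$ is even and the opposite color when $d(x,v)$ is odd, and the same holds for $S^m_y$. I would first isolate this as a small claim, using that edges of $\tilde{\mathcal{B}}_{\mathcal{V},\mathcal{E}}$ join only red (vertex) lifts to blue (hyperedge) lifts, so adjacent vertices always differ in color and each ball is connected and bipartite.

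The key consequence I would prove next is: for every $m\ge 0$, a rooted isomorphism $\phi\colon T^m_x\to S^m_y$ with $\phi(x)=y$ is color-preserving if and only if $c(x)=c(y)$. This holds because any graph isomorphism preserves distances, so $d(y,\phi(v))=d(x,v)$ for all $v$ in the ball; combining this with the parity description above gives $c(\phi(v))=c(y)\oplus[d(x,v)\text{ odd}]$ and $c(v)=c(x)\oplus[d(x,v)\text{ odd}]$, which agree for every $v$ exactly when $c(x)=c(y)$. Thus $T^m_x\cong_c S^m_y$ (root to root) is equivalent to the conjunction of the uncolored rooted isomorphism $T^m_x\cong S^m_y$ and the root-color equality $c(x)=c(y)$.

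With this equivalence the lemma follows at once. From the hypothesis $T^{r-1}_x\cong_c S^{r-1}_y$ I extract both $T^{r-1}_x\cong S^{r-1}_y$ and $c(x)=c(y)$ (the latter because a color-preserving rooted isomorphism matches root colors; since $r\ge 1$ this is well defined). From $T^r_{x_i}\cong_c S^r_{y_i}$ I extract the uncolored $T^r_{x_i}\cong S^r_{y_i}$ for every $i\le k$. As $x$ and $y$ have the same degree by assumption, the uncolored Lemma \ref{lemma: appendix-induct-krebsverbitsky} applies and yields $T^{r+1}_x\cong S^{r+1}_y$ as rooted trees. Finally, because $c(x)=c(y)$, the equivalence above upgrades this uncolored isomorphism to the color-preserving one $T^{r+1}_x\cong_c S^{r+1}_y$, which is the claim.

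The only genuinely delicate point is the color/parity claim, in particular verifying that the fixed canonical red/blue coloring on the lifted bipartite tree is a proper $2$-coloring whose restriction to each ball is depth-parity determined; everything after that is bookkeeping layered on the already-established uncolored lemma. An alternative, more self-contained route would be to repeat the Krebs--Verbitsky gluing construction directly in the colored setting, building $\phi$ on $T^{r+1}_x$ by sending $x\mapsto y$ and pasting together the branch isomorphisms on the $T^r_{x_i}$ while carrying colors through each paste; but the reduction above avoids re-deriving that construction.
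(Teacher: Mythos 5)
Your proposal is correct and follows essentially the same route as the paper: extract root-color equality and the uncolored rooted isomorphisms from the colored hypotheses, invoke the uncolored Krebs--Verbitsky lemma, and then upgrade the resulting isomorphism back to a color-preserving one because the roots share a color. Your distance-parity argument for the upgrade step is just a more explicit rendering of the paper's remark that, once the roots match in color, the colors of all descendants are forced recursively in a properly $2$-colored tree.
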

\begin{proof}
    Certainly $2$-colored isomorphisms are rooted isomorphisms on $2$-colored trees. The converse is true if the roots match in color since recursively all descendants of the root must match in color.

    If $T_x^{r-1} \cong_c S_y^{r-1}$ and $T_{x_i}^r \cong_c S_{y_i}^r$ for all $i \leq k$ and $N(x)=\{x_1,...,x_k\}, N(y)= \{y_1..y_k\}$, the roots $x$ and $y$ must match in color. The neighborhoods $N(x)$ and $N(y)$ then must both be of the opposing color. Since rooted colored isomorphisms are rooted isomorphisms, we must have $T_x^{r-1} \cong S_y^{r-1}$ and $T_{x_i}^r \cong S_{y_i}^r$ for all $i \leq k$. By Lemma \ref{lemma: appendix-induct-krebsverbitsky}, we have $T_x^{r+1} \cong S_y^{r+1}$. Once the roots match in color, a rooted tree isomorphism is the same as a rooted $2$-colored tree isomorphism. Thus, since $x$ and $y$ share the same color, $T_x^{r+1} \cong_c S_y^{r+1}$
\end{proof}
\begin{theorem}\label{thm: appendix-gwlunivcover}
    Let $\mathcal{H}_1= (\mathcal{V}_1, \mathcal{E}_1)$ and $\mathcal{H}_2= (\mathcal{V}_2, \mathcal{E}_2)$ be two connected hypergraphs. Let $\mathcal{B}_{\mathcal{V}_1, \mathcal{E}_1}$ and $\mathcal{B}_{\mathcal{V}_2, \mathcal{E}_2}$ be two canonically colored bipartite graphs for $\mathcal{H}_1$ and $\mathcal{H}_2$ (vertices colored red and hyperedges colored blue)
    
     For any $i\in \mathbb{Z}^+$, for any of the nodes $x_1 \in \mathcal{B}_{\mathcal{V}_1}, e_1 \in \mathcal{B}_{\mathcal{V}_1, \mathcal{E}_1}$ and $x_2 \in \mathcal{B}_{\mathcal{V}_1}, e_2 \in \mathcal{B}_{\mathcal{V}_2, \mathcal{E}_2}$:
     \begin{itemize}
     \item $(\tilde{\mathcal{B}}^{2i-1}_{\mathcal{V}_1, \mathcal{E}_1})_{\tilde{e}_1} \cong_c (\tilde{\mathcal{B}}^{2i-1}_{\mathcal{V}_2, \mathcal{E}_2})_{\tilde{e}_2}$ iff $f^i_{e_1}=f^i_{e_2}$ 

     \item $(\tilde{\mathcal{B}}^{2i}_{\mathcal{V}_1, \mathcal{E}_1})_{\tilde{x}_1} \cong_c (\tilde{\mathcal{B}}^{2i}_{\mathcal{V}_2, \mathcal{E}_2})_{\tilde{x}_2}$ iff $h^i_{x_1}=h^i_{x_2}$, 
     \end{itemize}
    with $f^i_{\bullet}, h^i_{\bullet}$ the $i$th GWL-1 values for the hyperedges and nodes respectively where $e_1=p_{\mathcal{B}_{\mathcal{V}_1, \mathcal{E}_1}}(\tilde{e}_1)$, $x_1=p_{\mathcal{B}_{\mathcal{V}_1, \mathcal{E}_1}}(\tilde{x}_1)$, $e_2=p_{\mathcal{B}_{\mathcal{V}_1, \mathcal{E}_1}}(\tilde{e}_2)$, $x_2=p_{\mathcal{B}_{\mathcal{V}_1, \mathcal{E}_1}}(\tilde{x}_2)$. The maps $p_{\mathcal{B}_{\mathcal{V}_1, \mathcal{E}_1}}: \tilde{\mathcal{B}}_{\mathcal{V}_1, \mathcal{E}_1}\rightarrow \mathcal{B}_{\mathcal{V}_1, \mathcal{E}_1},p_{\mathcal{B}_{\mathcal{V}_2, \mathcal{E}_2}}:\tilde{\mathcal{B}}_{\mathcal{V}_2, \mathcal{E}_2}\rightarrow \mathcal{B}_{\mathcal{V}_2, \mathcal{E}_2}$ are the universal covering maps of $\mathcal{B}_{\mathcal{V}_1, \mathcal{E}_1}$ and $\mathcal{B}_{\mathcal{V}_2, \mathcal{E}_2}$ respectively. 
 \end{theorem}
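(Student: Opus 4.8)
The plan is to show that GWL-1 on $\mathcal{H}$ is exactly the $2$-colored refinement induced by WL-1 on its star expansion bipartite graph $\mathcal{B}_{\mathcal{V},\mathcal{E}}$, and then transport the universal-cover characterization of WL-1 (Theorem \ref{thm: wlunivcover}) through its $2$-colored analogue (Lemma \ref{lemma: appendix-induct-krebsverbitsky-2colored}). The crucial structural fact is that in $\mathcal{B}_{\mathcal{V},\mathcal{E}}$ the neighbors of a blue hyperedge-node $e$ are exactly the red vertex-nodes $\{v : v \in e\}$, and the neighbors of a red vertex-node $v$ are exactly the blue hyperedge-nodes $\{e : e \ni v\}$. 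Thus the update $f_e^{i+1}$ aggregates over the children of a blue node and $h_v^{i+1}$ aggregates over the children of a red node, so one GWL-1 iteration corresponds to two levels of refinement alternating between the color classes. This is precisely why $f^i$ must be matched to depth $2i-1$ (odd, blue-rooted) and $h^i$ to depth $2i$ (even, red-rooted).

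First I would fix, for each lift $\tilde{x}$ of a vertex- or hyperedge-node $x$, the convention that $(\tilde{\mathcal{B}}^{k})_{\tilde{x}}$ denotes the full radius-$k$ ball around $\tilde{x}$ in the infinite universal cover, rooted and colored at $\tilde{x}$. By the local-isomorphism property of the universal cover (Definition \ref{def: appendix-universalcover}), this rooted colored ball is independent of the chosen lift, hence a well-defined function of $x$ and $k$. I would then record the recursive characterization of colored-ball isomorphism: for $r \geq 1$, $(\tilde{\mathcal{B}}^{r+1})_{\tilde{x}} \cong_c (\tilde{\mathcal{B}}^{r+1})_{\tilde{x}'}$ holds iff $(\tilde{\mathcal{B}}^{r-1})_{\tilde{x}} \cong_c (\tilde{\mathcal{B}}^{r-1})_{\tilde{x}'}$ together with a color-preserving bijection of the root-neighborhoods matching their radius-$r$ balls. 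The $\Leftarrow$ direction is exactly Lemma \ref{lemma: appendix-induct-krebsverbitsky-2colored}; the $\Rightarrow$ direction follows because a rooted colored isomorphism of radius-$(r+1)$ balls is a color- and distance-preserving bijection, hence restricts to isomorphisms of the radius-$(r-1)$ root ball and of each child's radius-$r$ ball, all of which sit inside the radius-$(r+1)$ ball.

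The core is a simultaneous induction on $i \geq 1$ proving $(\mathrm{A}_i)$: $(\tilde{\mathcal{B}}^{2i-1})_{\tilde{e}_1} \cong_c (\tilde{\mathcal{B}}^{2i-1})_{\tilde{e}_2}$ iff $f^i_{e_1} = f^i_{e_2}$, and $(\mathrm{B}_i)$: $(\tilde{\mathcal{B}}^{2i})_{\tilde{x}_1} \cong_c (\tilde{\mathcal{B}}^{2i})_{\tilde{x}_2}$ iff $h^i_{x_1} = h^i_{x_2}$, interleaved as $(\mathrm{A}_1), (\mathrm{B}_1), (\mathrm{A}_2), (\mathrm{B}_2), \dots$. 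For the base case $(\mathrm{A}_1)$, $f_e^1 = \{\{(\{\},X_v)\}\}_{v\in e}$ depends only on the multiset of labels of the red children of $e$, which is exactly what a depth-$1$ colored-ball isomorphism records. For the step of $(\mathrm{A}_i)$ I would apply the Observation, namely Equation \ref{eq: GWLnbhd-fei}, to split $f_e^i = f_{e'}^i$ into $f_e^{i-1}=f_{e'}^{i-1}$ and $\{\{h_v^{i-1}\}\}_{v\in e}=\{\{h_u^{i-1}\}\}_{u\in e'}$; by $(\mathrm{A}_{i-1})$ the first conjunct is $(\tilde{\mathcal{B}}^{2i-3})_{\tilde{e}}\cong_c(\tilde{\mathcal{B}}^{2i-3})_{\tilde{e}'}$ and by $(\mathrm{B}_{i-1})$ the second is a neighborhood bijection matching radius-$(2i-2)$ balls, so the recursive characterization with $r=2i-2$ yields $(\mathrm{A}_i)$. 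The step for $(\mathrm{B}_i)$ is symmetric, using Equation \ref{eq: GWLnbhd-hvi} together with $(\mathrm{B}_{i-1})$ and the just-proved $(\mathrm{A}_i)$ at $r=2i-1$; here the smallest case $i=1$ already fits the lemma since $r=1$.

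The main obstacle is getting the depth bookkeeping and the child-ball correspondence exactly right: one must be certain that the radius-$(2i-2)$ ball around a blue node's red child genuinely corresponds to $h^{i-1}$, rather than to a pruned subtree that omits the branch back toward the root. This is where the full-ball convention and the lift-independence of $(\tilde{\mathcal{B}}^{k})_{\tilde{x}}$ are essential, and where the two halves of the recursive characterization — Lemma \ref{lemma: appendix-induct-krebsverbitsky-2colored} for reconstruction and the distance-preservation argument for restriction — must be aligned with the two implications supplied by the Observation. Everything else, in particular Proposition \ref{prop: hypergraph-bipartite-equiv} guaranteeing that the $2$-colored bipartite picture faithfully encodes $\mathcal{H}$ and the bookkeeping of the initial labels $X_v$, is routine.
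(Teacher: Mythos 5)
Your proposal is correct and follows essentially the same route as the paper's proof: an interleaved induction on $i$ that couples the multiset-splitting observation (Equations \ref{eq: GWLnbhd-fei}, \ref{eq: GWLnbhd-hvi}) with the $2$-colored Krebs--Verbitsky lemma (Lemma \ref{lemma: appendix-induct-krebsverbitsky-2colored}) to match $f^i$ with odd-depth blue-rooted balls and $h^i$ with even-depth red-rooted balls. The only difference is that you make explicit the restriction ($\Rightarrow$) direction of the recursive ball characterization, which the paper leaves implicit by citing the lemma for both directions of the ``iff''.
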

 \begin{proof}
     We prove by induction:
     
     Let $T_{\tilde{e}_1}^{k}:= (\tilde{\mathcal{B}}^k_{\mathcal{V}_1, \mathcal{E}_1})_{\tilde{e}_1}$ where $\tilde{e}_1$ is a pullback of a hyperedge, meaning $p_{\mathcal{B}_{\mathcal{V}_1,\mathcal{E}_2}}(\tilde{e}_1)= e_1$. Similarly, let $T_{\tilde{e}_2}^{k}:= (\tilde{\mathcal{B}}^k_{\mathcal{V}_2, \mathcal{E}_2})_{\tilde{e}_2}$, $T_{\tilde{x}_1}^{k}:= (\tilde{\mathcal{B}}^k_{\mathcal{V}_1, \mathcal{E}_1})_{\tilde{x}_1}$, $T_{\tilde{x}_2}^{k}:= (\tilde{\mathcal{B}}^k_{\mathcal{V}_2, \mathcal{E}_2})_{\tilde{x}_2}$, $\forall k \in \mathbb{N}$, 
     where $\tilde{e}_1, \tilde{e}_2, \tilde{x}_1, \tilde{x}_2$ are the respective pullbacks of $e_1, e_2, x_1, x_2$.

     Define an ($2$-colored) isomorphism of multisets of graphs to mean that there exists a bijection between the two multisets so that each graph in one multiset is ($2$-colored) isomorphic with exactly one other element in the other multiset.

     By Observation \ref{obs: appendix-GWLnbhd} we can rewrite GWL-1 as:
     \begin{equation}\label{eq: init} 
         f_e^0 \gets \{\}, h_v^0 \gets \{\} 
     \end{equation}
     \begin{equation}\label{eq: fei+1} 
         f_e^{i+1} \gets (f_e^i, \{\!\!\{h_v^i\}\!\!\}_{v \in e}) \forall e \in \mathcal{E}_{\gH}
     \end{equation}
     \begin{equation}\label{eq: hvi+1}
         h_v^{i+1} \gets (h_v^i, \{\!\!\{f_e^{i+1}\}\!\!\}_{v\in e}) \forall v \in \mathcal{V}_{\mathcal{H}}
     \end{equation} 
     
     Base Case $i=1$: 
     \begin{subequations}
     \begin{equation}
     T^1_{\tilde{e}_1}\cong_c T^1_{\tilde{e}_2}\text{ iff } (T^0_{\tilde{e}_1}\cong_c T^0_{\tilde{e}_2}\text{ and } \{\!\!\{T^0_{\tilde{x}_1}\}\!\!\}_{\tilde{x}_1\in N(\tilde{e}_1)} \cong_c \{\!\!\{T^0_{\tilde{x}_2}\}\!\!\}_{\tilde{x}_2\in N(\tilde{e}_2)}) \text{ (By Lemma \ref{lemma: appendix-induct-krebsverbitsky-2colored})} 
     \end{equation}
     \begin{equation}
     \text{ iff }(f_{e_1}^0= f_{e_2}^0 \text{ and }\{\!\!\{h_{x_1}^0\}\!\!\}= \{\!\!\{ h_{x_2}^0 \}\!\!\}) \text{ (By Equation \ref{eq: init})}
     \end{equation}
     \begin{equation}
     \text{ iff }  f_{e_1}^1= f_{e_2}^1 \text{ (By Equation \ref{eq: fei+1})}
     \end{equation}
     \end{subequations}

     \begin{subequations}
     \begin{equation}
     T^2_{\tilde{x}_1} \cong_c T^2_{\tilde{x}_2}\text{ iff } (T^0_{\tilde{x}_1} \cong_c T^0_{\tilde{x}_2}\text{ and } \{\!\!\{T^1_{\tilde{e}_1}\}\!\!\}_{\tilde{e}_1\in N(\tilde{x}_1)}\cong_c \{\!\!\{T^1_{\tilde{e}_2}\}\!\!\}_{\tilde{e}_2\in N(\tilde{x}_2)}) \text{ (By Lemma \ref{lemma: appendix-induct-krebsverbitsky-2colored})} 
     \end{equation}
     \begin{equation}
     \text{ iff }(h_{e_1}^0 = h_{e_2}^0 \text{ and }\{\!\!\{f_{x_1}^1\}\!\!\} = \{\!\!\{ f_{x_2}^1 \}\!\!\}) \text{ (By Equation \ref{eq: init})}
     \end{equation}
     \begin{equation}
     \text{ iff }  f_{e_1}^1= f_{e_2}^1 \text{ (By Equation \ref{eq: hvi+1})}
     \end{equation}
     \end{subequations}

     Induction Hypothesis: For $i\geq 1$,  $T_{\tilde{e}_1}^{2i-1} \cong_c T_{\tilde{e}_2}^{2i-1}$ iff $f_{{e}_1}^i=f_{{e}_2}^i$ and $T_{\tilde{x}_1}^{2i} \cong_c T_{\tilde{x}_2}^{2i}$ iff $h_{{x}_1}^i=h_{{x}_2}^i$

     Induction Step:

     \begin{subequations}
     \begin{equation}
     T^{2i+1}_{\tilde{e}_1} \cong_c T^{2i+1}_{\tilde{e}_2}\text{ iff } (T^{2i-1}_{\tilde{e}_1} \cong_c T^{2i-1}_{\tilde{e}_2}\text{ and } \{\!\!\{T^{2i}_{\tilde{x}_1}\}\!\!\}_{\tilde{x}_1\in N(\tilde{e}_1)} \cong_c \{\!\!\{T^{2i}_{\tilde{x}_2}\}\!\!\}_{\tilde{x}_2\in N(\tilde{e}_2)}) \text{ (By Lemma \ref{lemma: appendix-induct-krebsverbitsky-2colored})} 
     \end{equation}
     \begin{equation}
     \text{ iff }(f_{e_1}^{i}= f_{e_2}^{i} \text{ and }\{\!\!\{h_{x_1}^i\}\!\!\}= \{\!\!\{ h_{x_2}^i \}\!\!\}) \text{ (By Induction Hypothesis)}
     \end{equation}
     \begin{equation}
     \text{ iff }  f_{e_1}^{i+1}= f_{e_2}^{i+1} \text{ (By Equation \ref{eq: fei+1})}
     \end{equation}
     \end{subequations}

     \begin{subequations}
     \begin{equation}
     T^{2i}_{\tilde{x}_1} \cong_c T^{2i}_{\tilde{x}_2}\text{ iff } (T^{2i-2}_{\tilde{x}_1} \cong_c T^{2i-2}_{\tilde{x}_2}\text{ and } \{\!\!\{T^{2i-1}_{\tilde{e}_1}\}\!\!\}_{\tilde{e}_1\in N(\tilde{x}_1)} \cong_c \{\!\!\{T^{2i-1}_{\tilde{e}_2}\}\!\!\}_{\tilde{e}_2\in N(\tilde{x}_2)}) \text{ (By Lemma \ref{lemma: appendix-induct-krebsverbitsky-2colored})} 
     \end{equation}
     \begin{equation}
     \text{ iff }(h_{e_1}^i= h_{e_2}^i \text{ and }\{\!\!\{f_{x_1}^i\}\!\!\}= \{\!\!\{ f_{x_2}^i \}\!\!\}) \text{ (By Equation \ref{eq: init})}
     \end{equation}
     \begin{equation}
     \text{ iff }  h_{x_1}^i= h_{x_2}^i \text{ (By Equation \ref{eq: hvi+1})}
     \end{equation}
     \end{subequations} 

 \end{proof}
 We write here the theorem characterizing the WL-1 algorithm on a graph by the graph's universal cover.
\begin{theorem}\label{thm: appendix-wlunivcover}[\cite{krebs2015universal}]
 Let $G$ and $H$ be two connected graphs.  
 Let
  $p_G:\tilde{G}\rightarrow G,p_H: \tilde{H}\rightarrow H$ be the universal covering maps of $G$ and $H$ respectively.
For any $i\in \mathbb{N}$, for any two nodes $x\in G$ and $y\in H$: $\tilde{G}_{\tilde{x}}^i \cong \tilde{G}_{\tilde{y}}^i$ iff the WL-1 algorithm assigns the same value to nodes $x=p_G(\tilde{x})$ and $y=p_H(\tilde{y})$.
 \end{theorem}
 It follows immediately from Theorem \ref{thm: appendix-wlunivcover} that the WL-1 algorithm on colored star expansion bipartite graphs of two hypergraphs corresponds to constructing their universal covers.
\begin{corollary}
\label{corollary: appendix-gwlaswl}
 Let $\mathcal{H}_1= (\mathcal{V}_1, \mathcal{E}_1)$ and $\mathcal{H}_2= (\mathcal{V}_2, \mathcal{E}_2)$ be two connected hypergraphs. Let $\mathcal{B}_{\mathcal{V}_1, \mathcal{E}_1}$ and $\mathcal{B}_{\mathcal{V}_2, \mathcal{E}_2}$ be two canonically colored bipartite graphs for $\mathcal{H}_1$ and $\mathcal{H}_2$ (vertices colored red and hyperedges colored blue). 
    {
    Let $p_{\mathcal{B}_{\mathcal{V}_1, \mathcal{E}_1}}: \tilde{\mathcal{B}}_{\mathcal{V}_1, \mathcal{E}_1}\rightarrow \mathcal{B}_{\mathcal{V}_1, \mathcal{E}_1},p_{\mathcal{B}_{\mathcal{V}_2, \mathcal{E}_2}}:\tilde{\mathcal{B}}_{\mathcal{V}_2, \mathcal{E}_2}\rightarrow \mathcal{B}_{\mathcal{V}_2, \mathcal{E}_2}$ be the universal coverings of $\mathcal{B}_{\mathcal{V}_1, \mathcal{E}_1}$ and $\mathcal{B}_{\mathcal{V}_2, \mathcal{E}_2}$ respectively.  
    }
    For any $i\in \mathbb{Z}^+$, 
     \begin{itemize}
     \item 
     $(\tilde{\mathcal{B}}^{2i-1}_{\mathcal{V}_1, \mathcal{E}_1})_{\tilde{v}_1} \cong_c (\tilde{\mathcal{B}}^{2i-1}_{\mathcal{V}_2, \mathcal{E}_2})_{\tilde{v}_2}$ iff  $g_{v_1}^i=g_{v_2}^i$
     \end{itemize}
     
    with $g^i_{v_1}, g^i_{v_2}$ the $i$-th WL-1 values for $v_1,v_2 \in \gV(\gB_{\gV_1,\gE_1}), \gV(\gB_{\gV_2,\gE_2})$ respectively where $v_1=p_{\mathcal{B}_{\mathcal{V}_1, \mathcal{E}_1}}(\tilde{v}_1)$, $v_2=p_{\mathcal{B}_{\mathcal{V}_2, \mathcal{E}_2}}(\tilde{v}_2)$.

    Furthermore, for $i\geq 2$, 
    \begin{itemize}
     \item $g_{u_1}^{2+i}=g_{u_2}^{2+i}$ iff $h_{u_1}^{1+i}=h_{u_2}^{1+i}, \forall u_1 \in L(\gB_{\gV_1,\gE_1}), \forall u_2 \in L(\gB_{\gV_2,\gE_2})$ and
    
    \item $g_{e_1}^{2+i}=g_{e_2}^{2+i}$ iff $f_{e_1}^{2+i}=h_{e_2}^{2+i}, \forall e_1 \in R(\gB_{\gV_1,\gE_1}), \forall e_2 \in R(\gB_{\gV_2,\gE_2})$
    \end{itemize}
 \end{corollary}
 \begin{proof}
     The first equivalence $(\tilde{\mathcal{B}}^{2i-1}_{\mathcal{V}_1, \mathcal{E}_1})_{\tilde{v}_1} \cong_c (\tilde{\mathcal{B}}^{2i-1}_{\mathcal{V}_2, \mathcal{E}_2})_{\tilde{v}_2}$ iff  $g_{v_1}^i=g_{v_2}^i$ follows directly by Theorem \ref{corollary: appendix-gwlaswl}.

     The successive two equivalences follow by Theorem \ref{thm: appendix-gwlunivcover} and the first equivalence.    
 \end{proof}

 \begin{observation}\label{obs: appendix-GWLrewrite}
     If the node values for nodes $x$ and $y$ from GWL-1 for $i$ iterations on two hypergraphs $\mathcal{H}_1$ and $\mathcal{H}_2$ are the same, then for all $j$ with $0 \leq j\leq i$, the node values for GWL-1 for $j$ iterations on $x$ and $y$ also agree. In particular $\deg(x)= \deg(y)$.
 \end{observation}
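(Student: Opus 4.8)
The plan is to prove the standard \emph{refinement monotonicity} of GWL-1: iteration $j+1$ refines iteration $j$, so that equal colors at a later step force equal colors at every earlier step. The crux is that the node update rule $h_v^{j+1}=\{\{(h_v^j, f_e^{j+1})\}\}_{v\in e}$ retains the previous value $h_v^j$ verbatim as the first coordinate of \emph{every} pair in the output multiset, independently of the hyperedge $e$ ranged over. Hence, provided $v$ lies in at least one hyperedge, the map sending such a multiset to the common first coordinate of its elements is well defined and recovers $h_v^j$ from $h_v^{j+1}$. In particular $h_x^{j+1}=h_y^{j+1}$ implies $h_x^j=h_y^j$.

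With this single-step implication in hand, I would run a downward induction on $j$, starting from the hypothesis $h_x^i=h_y^i$ at $j=i$ and descending to $j=0$, concluding $h_x^j=h_y^j$ for every $0\le j\le i$. This amounts to repeatedly applying the ``first coordinate'' projection, which peels off one iteration while preserving equality, so no further bookkeeping on the hyperedge values $f_e^j$ is needed for the node statement.

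For the degree claim, I would observe that for any $j\ge 1$ the multiset $h_v^j=\{\{(h_v^{j-1},f_e^{j})\}\}_{v\in e}$ is indexed precisely by the hyperedges $e$ containing $v$, so its cardinality equals $\deg(v)$. Since the preceding argument yields $h_x^1=h_y^1$ whenever $i\ge 1$, equality of these multisets forces equality of their cardinalities, giving $\deg(x)=\deg(y)$.

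The only subtlety, and the step I would be most careful about, is the boundary behavior when a vertex is isolated: then the output multiset is empty and the ``first coordinate'' map is undefined. This is harmless for unattributed hypergraphs, since an isolated vertex has $h_v^j=\{\{\}\}$ for all $j\ge 1$, which already differs from the nonempty color of any vertex of positive degree; thus $h_x^i=h_y^i$ with $i\ge 1$ forces $x$ and $y$ to be simultaneously isolated or non-isolated, and both cases reduce to the argument above. I would also record that the degree conclusion genuinely requires $i\ge 1$ (consistent with the use $i\in\mathbb{N}^+$ in Theorem \ref{thm: appendix-gwlunivcover}), since $h^0$ carries only the constant initial attribute and conveys no degree information.
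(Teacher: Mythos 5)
Your proof is correct, but it takes a genuinely different route from the paper's. The paper derives the observation as a corollary of Theorem \ref{thm: appendix-gwlunivcover}: equality of the $i$-th GWL-1 values of $x$ and $y$ yields a $2$-colored isomorphism $(\tilde{\mathcal{B}}^{2i}_{\mathcal{V}_1,\mathcal{E}_1})_{\tilde{x}} \cong_c (\tilde{\mathcal{B}}^{2i}_{\mathcal{V}_2,\mathcal{E}_2})_{\tilde{y}}$ of rooted universal-cover subtrees, this isomorphism restricts to the $2j$-hop rooted subtrees for every $0 \leq j \leq i$, and the theorem applied in the reverse direction then gives $h^j_x = h^j_y$; the degree claim is left implicit (such an isomorphism matches the roots' neighborhoods bijectively). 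You instead argue directly from the update equations, noting that $h_v^{j+1}$ carries $h_v^j$ verbatim as the common first coordinate of every element of the multiset, so equality at a later iteration can be peeled back one step at a time by downward induction, and the degree falls out as the cardinality of $h_v^1$. Your route is more elementary and more self-contained: it avoids the universal-cover machinery entirely, it does not depend on the connectivity hypothesis under which Theorem \ref{thm: appendix-gwlunivcover} is stated (so the isolated-vertex case you treat explicitly is genuinely covered), and it sidesteps an awkward feature of the paper's arrangement---the proof of Theorem \ref{thm: appendix-gwlunivcover} itself cites this observation's label (evidently intending the earlier, unlabeled multiset-rewriting observation), so on a literal reading the paper's derivation is circular while yours is not. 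Your two caveats are also correct refinements that the paper's terse proof glosses over: the statement presumes unattributed hypergraphs (otherwise two isolated vertices with distinct attributes give a counterexample at $j=0$), and the degree conclusion genuinely requires $i \geq 1$.
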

 \begin{proof}
    
     There is a $2$-color isomorphism on subtrees $(\tilde{\mathcal{B}}^j_{\mathcal{V}_1, \mathcal{E}_1})_{\tilde{x}}$ and $(\tilde{\mathcal{B}}^j_{\mathcal{V}_2, \mathcal{E}_2})_{\tilde{y}}$ of the $i$-hop subtrees of the universal covers rooted about nodes $x \in \mathcal{V}_1$ and $y \in \mathcal{V}_2$ for $0\leq j\leq i$ since $(\tilde{\mathcal{B}}^i_{\mathcal{V}_1, \mathcal{E}_1})_{\tilde{x}}\cong_c (\tilde{\mathcal{B}}^i_{\mathcal{V}_2, \mathcal{E}_2})_{\tilde{y}}$. By Theorem \ref{thm: appendix-gwlunivcover}, we have that GWL-1 returns the same value for $x$ and $y$ for each $0\leq j\leq i$ .
     \end{proof}
\begin{theorem}\label{thm: appendix-sym-of-equiv}
    Let $h^L: [\gV]^1 \times \mathbb{Z}_2^{n \times 2^{n}} \rightarrow \mathbb{R}^d$ be the $L$-GWL-1 representation of nodes for  hypergraph $\gH$ in Equation \ref{eq: simprepr}, then
    \begin{equation}
        Aut(\gH)\cong Stab(H) \subseteq Sym(h^L(H)) \cong Aut_c(\tilde{\gB}_{\gV,\gE}^{2L}), \forall L\geq 1
    \end{equation}
 \end{theorem}
 \begin{proof}
 1. $Aut(\gH)\cong Stab(H)$ follows by Proposition \ref{prop: appendix-aut-stab}.
 
 2. $Stab(H) \subseteq Sym(h^L(H))$ follows by definition of the symmetry group of a representation map given in Definition \ref{def: appendix-sym-repr} and the equivariance of $L$-GWL-1 due to Proposition \ref{prop: appendix-perm-equivar}. Let \textbf{Set} denote the collection of all finite sets.
 We know that 
 \begin{equation}
     h^L(S,H)\triangleq AGG(\{h_v^L \}_{v \in S})
 \end{equation}
 for $AGG: \textbf{Set} \rightarrow \mathbb{R}^d$ an injective set representation map and that $S$ has cardinality $1$.
 For any $\pi \in Stab(H) \subseteq Sym(\gV)$, we must have $\pi \cdot H=H$ we check that $\pi$ satisfies: 
 \begin{equation}
     \pi(u)=v \Rightarrow h(u,H)=h(v,H), \forall u,v \in \gV
 \end{equation}
 If $\pi(u)=v$ and $\pi\cdot H=H$, we can use the equivariance of $h^L$ to get right hand necessary condition:
 $h^L(u,H)=h^L(u,\pi(H))=h^L(\pi(u),H)= h^L(v,H)$
 
 3. The last group isomorphism follows by the equivalence between $L$-GWL-1 and the universal cover up to $2L$-hops given in Theorem \ref{thm: appendix-gwlunivcover}.
 
 For any $\pi \in Sym(h^L(H))$, it must satisfy $\pi(u)=v \Rightarrow h(u,H)=h(v,H), \forall u,v \in \gV$. We map each $\pi$ to a $2$-colored isomorphism $\Phi: \pi \mapsto \phi_c$ which is the $2$-colored isomorphism determined by the Theorem \ref{thm: appendix-gwlunivcover}:
  \begin{equation}
     (\tilde{\mathcal{B}}^{2L}_{\mathcal{V}, \mathcal{E}})_{\tilde{u}} \cong_c (\tilde{\mathcal{B}}^{2L}_{\mathcal{V}, \mathcal{E}})_{\widetilde{\pi(u)})} \text{ iff } h^L_{u}=h^L_{\pi(u)}= h^L(u,H)=h^L(\pi(u),H) , \forall u \in \gV
 \end{equation}
 Certainly the map $\Phi: \pi \mapsto \phi_c$ is a homomorphism because:
 
 1. $\Phi$ maps the identity to identity:
 \begin{equation}
     (\tilde{\mathcal{B}}^{2L}_{\mathcal{V}, \mathcal{E}})_{\tilde{u}} \cong_c (\tilde{\mathcal{B}}^{2L}_{\mathcal{V}, \mathcal{E}})_{\tilde{u})} \text{ iff } h^L_{u}=h^L(u,H) , \forall u \in \gV
 \end{equation}
 can have only one $2$-colored isomorphism determining $(\tilde{\mathcal{B}}^{2L}_{\mathcal{V}, \mathcal{E}})_{\tilde{u}} \cong_c (\tilde{\mathcal{B}}^{2L}_{\mathcal{V}, \mathcal{E}})_{\tilde{u}}$, which is the identity.

 2. $\Phi$ perserves composition: $\pi_2 \circ \pi_1 \mapsto (\phi_2)_c \circ (\phi_1)_c$

 By definition of $\pi_1$ and $\pi_2$:
 \begin{subequations}
 \begin{equation}
 \pi_1(u)=v \Rightarrow h(u,H)=h(v,H)\text{ iff }(\tilde{\mathcal{B}}^{2L}_{\mathcal{V}, \mathcal{E}})_{\tilde{u}} \cong_c
    (\tilde{\mathcal{B}}^{2L}_{\mathcal{V}, \mathcal{E}})_{ \widetilde{\pi_1({u})}}, \forall u \in \gV
\end{equation}
\begin{equation}
    \pi_2(v)=w \Rightarrow h(v,H)=h(w,H)\text{ iff }(\tilde{\mathcal{B}}^{2L}_{\mathcal{V}, \mathcal{E}})_{\tilde{v}} \cong_c
    (\tilde{\mathcal{B}}^{2L}_{\mathcal{V}, \mathcal{E}})_{ \widetilde{\pi_1({w})}}, \forall v \in \gV
\end{equation}
 \end{subequations}
Combining, we get:
 \begin{equation}
    (\tilde{\mathcal{B}}^{2L}_{\mathcal{V}, \mathcal{E}})_{\tilde{u}} \cong_c
    (\tilde{\mathcal{B}}^{2L}_{\mathcal{V}, \mathcal{E}})_{ \widetilde{\pi_1({u})}} \cong_c (\tilde{\mathcal{B}}^{2L}_{\mathcal{V}, \mathcal{E}})_{ \widetilde{\pi_2({u})}} \cong (\tilde{\mathcal{B}}^{2L}_{\mathcal{V}, \mathcal{E}})_{\widetilde{\pi_2\circ \pi_1({u})}}
 \end{equation}
 where the first isomorphism is $(\phi_1)_c$ on $(\tilde{\mathcal{B}}^{2L}_{\mathcal{V}, \mathcal{E}})_{\tilde{u}}$, the second isomorphism is from $(\phi_2)_c$ on   $(\tilde{\mathcal{B}}^{2L}_{\mathcal{V}, \mathcal{E}})_{\tilde{u}}$ and the third isomorphism is $(\phi_1)_c \circ (\phi_1)_c$ on $(\tilde{\mathcal{B}}^{2L}_{\mathcal{V}, \mathcal{E}})_{\widetilde{\pi_1(u)}}$
 \end{proof}
\begin{proposition}
If GWL-1 cannot distinguish two connected hypergraphs $H_1$ and $H_2$ then HyperPageRank will not either. 
\end{proposition}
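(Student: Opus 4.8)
The plan is to show that the HyperPageRank value of a node is a quantity that GWL-1 already refines, so that whenever GWL-1 produces identical node-value histograms for $H_1$ and $H_2$ the corresponding multisets of HyperPageRank values must coincide as well. First I would make precise the hypothesis: running GWL-1 to convergence yields the same histogram of converged node values on $H_1$ and $H_2$, so one can pick a bijection $\phi:\gV_1\to\gV_2$ matching each node of $H_1$ to a node of $H_2$ with the same converged GWL-1 value (match nodes class by class). The goal is then to exhibit HyperPageRank as a function that is constant along $\phi$.

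The main argument uses the closed form of the stationary distribution recalled earlier, $\pi(v)=\deg(v)/\sum_{u}\deg(u)$. By Observation~\ref{obs: appendix-GWLrewrite}, two nodes sharing the same GWL-1 value have equal degree; applying this along $\phi$ gives $\deg(v)=\deg(\phi(v))$ for all $v\in\gV_1$. Summing over the bijection shows the two total volumes $\sum_{u\in\gV_1}\deg(u)$ and $\sum_{u\in\gV_2}\deg(u)$ are equal, whence $\pi_1(v)=\deg(v)/\sum_{u\in\gV_1}\deg(u)=\deg(\phi(v))/\sum_{u\in\gV_2}\deg(u)=\pi_2(\phi(v))$. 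Thus the multisets of stationary probabilities agree and HyperPageRank cannot separate $H_1$ from $H_2$.

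For the teleporting variant of PageRank (the fixed point of the random-walk-plus-restart operator built from the transition probabilities $P_{u,v}$), I would argue that the GWL-1 coloring forms an \emph{equitable partition} with respect to the hypergraph random walk: by the universal-cover characterization in Theorem~\ref{thm: appendix-gwlunivcover}, nodes with the same GWL-1 value have $2$-colored-isomorphic rooted covers, so at every hop they see the same multiset of neighboring GWL-1 classes, i.e. the class-to-class transition mass is well defined and is preserved by $\phi$. Since the PageRank fixed point is invariant under the induced quotient dynamics, it is constant on each GWL-1 class and is transported by $\phi$ from $H_1$ to $H_2$, giving equal multisets of PageRank values.

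The step I expect to be the main obstacle is bridging the \emph{local} nature of GWL-1 (message passing / finite-hop universal covers) with the \emph{global} nature of PageRank (a fixed point of a walk over the entire hypergraph). For the degree-based stationary distribution this gap is illusory, since the stationary probability is purely local in the degree, so that case reduces immediately to Observation~\ref{obs: appendix-GWLrewrite}. The genuinely delicate part is the teleporting version, where I must verify carefully that the converged GWL-1 classes are equitable for the transition matrix and that a global linear-algebraic quantity is therefore forced to be constant on these classes and compatible with the matching $\phi$.
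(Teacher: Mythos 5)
Your proposal is correct, but it takes a genuinely different route from the paper. The paper's proof is architectural: it realizes the HyperPageRank power iteration $(D_v^{-1}\cdot H\cdot D_e^{-1}\cdot H^T)^n$ as a particular UniGCN with constant, non-injective aggregation maps ($W_e \gets D_e^{-1}$, $W_v \gets D_v^{-1}$, $h_v^0 \gets I$), and then invokes the UniGNN equivalence theorem to conclude that this message-passing scheme—and hence its limit $\Pi$—is at most as powerful as GWL-1 in distinguishing power. You instead exploit the closed form $\pi(v)=\deg(v)/\sum_u \deg(u)$ of the stationary distribution together with the fact (Observation~\ref{obs: appendix-GWLrewrite}) that equal GWL-1 values force equal degrees, so a class-matching bijection $\phi:\gV_1\to\gV_2$ transports $\pi_1$ to $\pi_2$ and the multisets of HyperPageRank values coincide. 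Your argument is shorter and more self-contained, reducing a seemingly global fixed-point quantity to a purely local one; it also extends, via your equitable-partition observation (which is justified here because the stable GWL-1 coloring equalizes class-to-class transition mass, and reversibility of the walk plus class-constant degrees gives the column-side condition), to the teleporting variant that the paper's proof does not treat. What the paper's approach buys in exchange is a strictly stronger intermediate statement—every iterate of the power iteration, not merely its limit, is bounded by GWL-1—and a placement of HyperPageRank inside the hyperGNN expressivity hierarchy, which is the framing the rest of the paper relies on. One small point to make explicit in your write-up: "GWL-1 cannot distinguish $H_1$ and $H_2$" must be read as equality of the converged node-value histograms (so in particular $|\gV_1|=|\gV_2|$ and the bijection $\phi$ exists), and connectedness is what guarantees the stationary distribution is unique and equal to the stated closed form.
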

\begin{proof}
HyperPageRank is defined on a hypergraph with star expansion matrix $H$ as the following stationary distribution $\Pi$:
\begin{equation}\label{eq: hyperpagerank}
    \lim_{n \rightarrow \infty}(D_v^{-1} \cdot H \cdot D_e^{-1} \cdot H^T)^n= \Pi
\end{equation}
If $H$ is a connected bipartite graph, $\Pi$ must be the eigenvector of $(D_v^{-1} \cdot H \cdot D_e^{-1} \cdot H^T)$ for eigenvalue $1$. In other words, $\Pi$ must satisfy 
\begin{equation}
    (D_v^{-1} \cdot H \cdot D_e^{-1} \cdot H^T)\cdot \Pi = \Pi
\end{equation}

By Theorem 1 of \cite{huang2021unignn}, we know that the UniGCN defined by:
\begin{subequations}
\begin{equation}
h_e^{i+1} \gets \phi_2(h_e^i, h_v^i)= W_e \cdot H^T \cdot h_v^i
\end{equation}
\begin{equation}
    h^{i+1}_v \gets \phi_1(h_v^i, h_e^{i+1})= W_v \cdot H \cdot h_e^{i+1}
\end{equation}
\end{subequations}
for constant $W_e$ and $W_v$ weight matrices,
is equivalent to GWL-1 provided that $\phi_1$ and $\phi_2$ are both injective as functions. Without injectivity, we can only guarantee that if UniGCN distinguishes $H_1,H_2$ then GWL-1 distinguishes $H_1,H_2$. 
In fact, each matrix power of order $n$ in Equation \ref{eq: hyperpagerank} corresponds to $h_v^{n}$ so long as we satisfy the following constraints:
\begin{equation}\label{eq: unigcn-constraints}
     W_e \gets D_e^{-1}, W_v \gets D_v^{-1} \text{ and } h_v^0 \gets I
\end{equation}
We show that the matrix powers are UniGCN under the constraints of Equation \ref{eq: unigcn-constraints} by induction:

Base Case: $n=0$: $h_v^0=I$

Induction Hypothesis: $n>0$: 
\begin{equation}
    (D_v^{-1} \cdot H \cdot D_e^{-1} \cdot H^T)^n= h_v^n
\end{equation}
Induction Step: 
\begin{subequations}
\begin{equation}
    (D_v^{-1} \cdot H \cdot h_e^n)
\end{equation}
\begin{equation}\label{eq: pagerank-uniGCN}
    = (D_v^{-1} \cdot H \cdot ((D_e^{-1} \cdot H^T) \cdot h_v^n))
\end{equation}
\begin{equation}
    = (D_v^{-1} \cdot H \cdot D_e^{-1} \cdot H^T) \cdot (D_v^{-1} \cdot H \cdot D_e^{-1} \cdot H^T)^n
\end{equation}
\begin{equation}
    = (D_v^{-1} \cdot H \cdot D_e^{-1} \cdot H^T)^{n+1}= h_v^{n+1}
\end{equation}
\end{subequations}
Since we cannot guarantee that the maps $\phi_1$ and $\phi_2$ are injective in Equation \ref{eq: pagerank-uniGCN}, it must be that the output $h_v^n$, coming from UniGCN with the constraints of Equation \ref{eq: unigcn-constraints}, is at most as powerful as GWL-1. 

In general, injectivity preserves more information. For example, if $\phi_1$ is injective and if $\phi_1'$ is an arbitrary map (not guaranteed to be injective) then:\begin{equation}
\phi_1(h_1)= \phi_1(h_2) \Rightarrow h_1=h_2 \Rightarrow \phi_1'(h_1)= \phi_1'(h_2)
\end{equation}
HyperpageRank is exactly as powerful as UniGCN under the constraints of Equation \ref{eq: unigcn-constraints}. Thus HyperPageRank is at most as powerful as GWL-1 in distinguishing power. 
\end{proof}
\clearpage

\begin{figure*}[!h]
\centering
\includegraphics[width=1.0\textwidth] {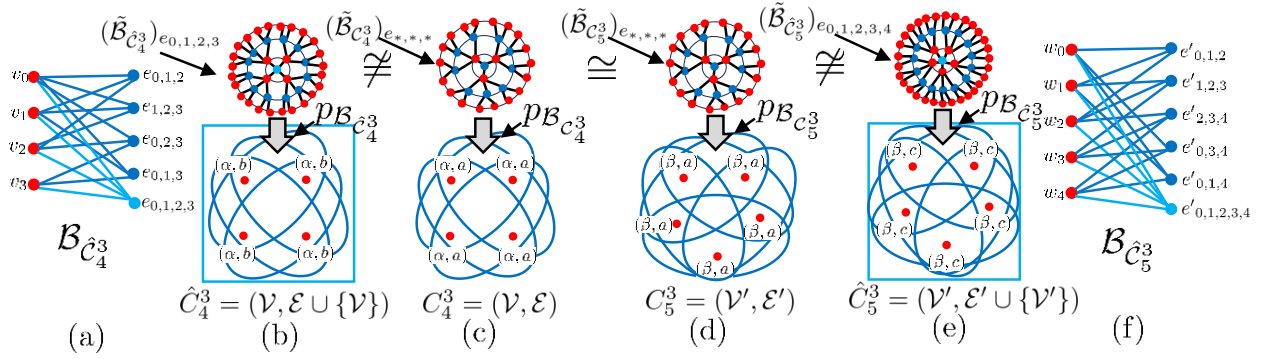}
\caption{ An illustration of hypergraph symmetry breaking. (c,d) $3$-regular hypergraphs $C_4^3$, $C_5^3$ with $4$ and $5$ nodes respectively and their corresponding universal covers centered at any hyperedge $(\tilde{\mathcal{B}}_{C^3_4})_{e_{*,*,*}}, (\tilde{\mathcal{B}}_{C^3_5})_{e_{*,*,*}}$ with universal covering maps $p_{\mathcal{B}_{C_4^3}}, p_{\mathcal{B}_{C_5^3}}$. (b,e) the hypergraphs $\hat{C}_4^3, \hat{C}_5^3$, which are $C_4^3,C_5^3$ with $4,5$-sized hyperedges attached to them and their corresponding universal covers and universal covering maps. (a,f) are the corresponding bipartite graphs of $\hat{C}_4^3, \hat{C}_5^3$. (c,d) are indistinguishable by GWL-1 and thus will give identical node values by Theorem \ref{thm: appendix-gwlunivcover}. On the other hand, (b,e) gives node values which are now sensitive to the the order of the hypergraphs $4,5$, also by Theorem \ref{thm: appendix-gwlunivcover}. } \label{fig: appendix-cycledist}
\end{figure*}
\subsection{Method}
We repeat here from the main text the symmetry finding algorithm:
\begin{algorithm}[!h]
\caption{A Symmetry Finding Algorithm}
\SetAlgoLined
\SetNoFillComment
\SetCommentSty{mycommfont}
\SetKwComment{Comment}{/* }{ */}
\KwData{Hypergraph $\mathcal{H}= (\mathcal{V},\mathcal{E})$, represented by its star expansion matrix $H$. $L \in \mathbb{Z}^+$ is the number of iterations to run GWL-1.}
\KwResult{A pair of collections: $(\mathcal{R}_V=\{\mathcal{V}_{R_j}\}, \mathcal{R}_E=\cup_j\{\mathcal{E}_{R_j}\})$ where $R_j$ are disconnected subhypergraphs exhibiting symmetry in $\mathcal{H}$ that are indistinguishable by $L$-GWL-1.}
$E_{deg} \gets \{ \{ deg(v): v \in e \} : \forall e \in \gE \}$


$U_L \gets h_v^L(H); \gG_L \gets \{ U_L[v]: \forall v \in \mathcal{V}\}  $ \Comment*[r]{$U_L[v]$ is the $L$-GWL-1 value of node $v \in \gV$.}

$\gB_{\gV_{\gH},\gE_{\gH}} \gets Bipartite(\gH)$ \Comment{Construct the bipartite graph from $\gH$.}

$\mathcal{R}_V \gets \{\}$; $\mathcal{R}_E \gets \{\}$ 

\For{$c_L \in \gG_L$}{
$\gV_{c_L} \gets \{v \in \mathcal{V}: U_L[v]=c_L \}, \gE_{c_L} \gets \{ e \in \mathcal{E}: u \in \mathcal{V}_{c_L}, \forall u \in e\}$

$\mathcal{C}_{c_L} \gets$ ConnectedComponents$(\gH_{c_L}= (\gV_{c_L},\gE_{c_L}))$ 

\For{$\gR_{c_L,i} \in \gC_{c_L}$}{
$\mathcal{R}_V \gets \mathcal{R}_V \cup \{ \mathcal{V}_{\mathcal{R}_{c_L,i}} \}$; $\mathcal{R}_E \gets \mathcal{R}_E \cup  \mathcal{E}_{\mathcal{R}_{c_L,i}}$
}
}
\Return $(\mathcal{R}_V,\mathcal{R}_E)$
\label{alg: appendix-cellattach}
\end{algorithm}

We also repeat here for convenience some definitions used in the proofs. Given a hypergraph $\mathcal{H}= (\mathcal{V},\mathcal{E})$, let
\begin{equation}\label{eq: appendix-Vc}\mathcal{V}_{c_L}:= \{v \in \mathcal{V}: c_L= h_v^L(H)  \}
\end{equation}
be the set of nodes of the same class $c_L$ as determined by $L$-GWL-1. Let $\mathcal{H}_{c_L}$ be an induced subgraph of $\mathcal{H}$ by $\mathcal{V}_{c_L}$.
\begin{definition}
 \label{def: appendix-univsymm}
    A \emph{$L$-GWL-1 symmetric} induced subhypergraph $\gR\subset\gH$ of $\gH$ is a connected induced subhypergraph determined by $\gV_{\gR} \subseteq \gV_{\gH}$, some subset of nodes that are all indistinguishable amongst each other by $L$-GWL-1:
    \begin{equation} h_u^L(H)= h_v^L(H), \forall u,v \in \gV_{\mathcal{R}}
    \end{equation}
    When $L=\infty$, we call such $\gR$ a GWL-1 symmetric induced subhypergraph. Furthermore, if $\gR=\gH$, then we say $\gH$ is \emph{GWL-1 symmetric}.
\end{definition} 
\begin{definition}
    A neighborhood-regular hypergraph is a hypergraph where all neighborhoods of each node are isomorphic to each other.
\end{definition}
\begin{observation}
\label{prop: appendix-symmhierarchy}
   A hypergraph $\gH$
is GWL-1 symmetric if and only if it is $L$-GWL-1 symmetric for all $L \geq 1$ if and only if $\gH$ is neighborhood regular.
\end{observation}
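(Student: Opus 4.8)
The plan is to prove the three-way equivalence by going around the cycle $(C)\Rightarrow(B)\Rightarrow(A)\Rightarrow(C)$, where $(A)$ is ``$\gH$ is GWL-1 symmetric'', $(B)$ is ``$\gH$ is $L$-GWL-1 symmetric for all $L\geq 1$'', and $(C)$ is ``$\gH$ is neighborhood-regular''. The pivot tying $(C)$ to GWL-1 is the degree reading of neighborhood-regularity stated right after its definition: $\gH$ is neighborhood-regular exactly when, for every size $d$, the $d$-uniform degree $deg_d(v):=|\{e\in\gE: v\in e,\ |e|=d\}|$ is the same for all $v\in\gV$, i.e. the multiset $\{\{|e|: e\ni v\}\}$ is independent of $v$.

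First I would record the color-refinement monotonicity of GWL-1, which powers $(A)\iff(B)$. Writing the node update as $h_v^{i+1}=\{\{(h_v^i,f_e^{i+1})\}\}_{v\in e}$, every pair carries $h_v^i$ in its first coordinate, so $h_v^{i+1}$ determines $h_v^i$; hence $h_u^{i+1}=h_v^{i+1}$ implies $h_u^i=h_v^i$, so the partition $P_{i+1}$ of $\gV$ induced by the $(i{+}1)$-th iterate refines $P_i$. Since $\gV$ is finite the chain $P_0\succeq P_1\succeq\cdots$ stabilizes at some finite $L^\ast$, and for the unattributed hypergraph $h_v^0=X_v$ is constant so $P_0$ is the single-block partition. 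Now $(A)$ says the stabilized (finest) partition $P_{L^\ast}$ is the single block, i.e. the coarsest possible partition; since no partition is strictly coarser than the single block, every $P_i$ in the decreasing chain must equal the single block, which is exactly $(B)$. The reverse $(B)\Rightarrow(A)$ is immediate by taking $L=L^\ast$.

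Next, $(C)\Rightarrow(B)$ would be an induction on $i$ proving simultaneously that $h_v^i$ is independent of $v$ and that $f_e^i$ depends on $e$ only through $|e|$. The base case is the initialization $f_e^0=\{\}$ and $h_v^0=X_v$ constant. For the step, $f_e^{i+1}=\{\{(f_e^i,h_v^i)\}\}_{v\in e}$ is $|e|$ copies of one constant pair, so it depends on $e$ only through $|e|$; then $h_v^{i+1}=\{\{(h_v^i,f_e^{i+1})\}\}_{v\in e}$ depends on $v$ only through $\{\{|e|: e\ni v\}\}$, which is constant under $(C)$. Hence all node values agree at every level, giving $(B)$. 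Finally, $(A)\Rightarrow(C)$ (equivalently $(B)\Rightarrow(C)$) would run the first iteration in reverse: GWL-1 symmetry at $L=1$ forces $h_v^1$, and therefore the incident-size multiset $\{\{|e|: e\ni v\}\}$, to be constant over $v$, which is precisely the $d$-uniform degree characterization of neighborhood-regularity.

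The step I expect to be the main obstacle is the equivalence with $(C)$, because the literal ``all neighborhoods isomorphic'' phrasing is strictly richer than what GWL-1 can detect: by Theorem~\ref{thm: gwlunivcover} GWL-1 separates nodes only through the $2$-colored universal cover of the star-expansion bipartite graph, which is blind to how distinct incident hyperedges overlap inside a neighborhood. The clean way around this is to carry out the whole argument in terms of the degree profile $\{\{|e|: e\ni v\}\}$ — the invariant GWL-1 actually reads off at the first iteration and that the induction above propagates — using exactly the equivalence (neighborhood-regular) $\Leftrightarrow$ ($deg_d(v)$ constant in $v$ for every $d$) supplied by the text just after the definition of a neighborhood-regular hypergraph. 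With that characterization fixed, each implication in the cycle is short.
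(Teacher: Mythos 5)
Your cycle $(C)\Rightarrow(B)\Rightarrow(A)\Rightarrow(C)$ is well organized, and two of its three legs are sound. The refinement argument for $(A)\Leftrightarrow(B)$ --- each GWL-1 iterate refines the previous node partition, the chain stabilizes on a finite vertex set, and the only partition coarser than or equal to the one-block partition is itself --- is correct, and it is a genuinely more elementary route than the paper's, which instead restricts universal-cover isomorphisms to $2L$-hop subtrees in one direction and extends them ``to infinity'' in the other. Your induction for $(C)\Rightarrow(B)$ is also fine, since it uses only the implication the paper actually states (neighborhood-regular $\Rightarrow$ the multiset $\{\{\,|e| : e \ni v\,\}\}$ is independent of $v$). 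The gap is in $(B)\Rightarrow(C)$: you invoke the equivalence ``neighborhood-regular $\Leftrightarrow$ $deg_d(v)$ constant in $v$ for every $d$'' as if the text just after the definition supplied it, but the text supplies only the forward direction (``Thus, in a neighborhood-regular hypergraph\ldots''), and the converse is false. A constant degree profile does not determine $N(v)$ up to hypergraph isomorphism, because $N(v)$ is the \emph{induced subhypergraph} on the incident hyperedges and therefore remembers how those hyperedges overlap --- information that no degree profile, and indeed no GWL-1 iterate, sees.

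Concretely, take $\gV=\{n_1,\dots,n_6\}$ and $\gE=\bigl\{\{n_1,n_2,n_3\},\{n_1,n_2,n_4\},\{n_3,n_5,n_6\},\{n_4,n_5,n_6\}\bigr\}$. This hypergraph is connected, every node lies in exactly two hyperedges of size $3$, so the degree profile is constant and, by your own $(C)\Rightarrow(B)$ induction, it is $L$-GWL-1 symmetric for every $L$, hence GWL-1 symmetric. Yet $N(n_1)$ spans four vertices (its two hyperedges share $\{n_1,n_2\}$) while $N(n_3)$ spans five (its two hyperedges share only $n_3$), so the two neighborhoods are not isomorphic and the hypergraph is not neighborhood-regular in the paper's sense. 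Thus your final step establishes only ``constant degree profiles,'' not $(C)$, and no repair can close this: the implication $(B)\Rightarrow(C)$ is genuinely false for the literal definition. You in fact diagnosed the obstacle correctly (GWL-1 is ``blind to how distinct incident hyperedges overlap inside a neighborhood'') but then dismissed it by appealing to an equivalence that does not exist. For what it is worth, the paper's own proof stumbles at exactly the same point: it asserts that $N(u)$ and $N(v)$ are ``cycle-less,'' which fails precisely when two incident hyperedges share two or more nodes, as above, so its bipartite lifts also record only the degree profile. Both the observation and its proof are salvageable only under the weaker, profile-level reading of neighborhood-regularity --- which is exactly the statement your argument, completed honestly, proves.
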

\begin{proof}
$ $\newline
\textbf{1. First if and only if }:

By Theorem \ref{thm: appendix-gwlunivcover}, GWL-1 symmetric hypergraph $\gH= (\gV,\gE)$ means that for every pair of nodes $u,v \in \gV$,  $({\tilde{\mathcal{B}}_{\gV,\gE}})_{\tilde{u}} \cong_c ({\tilde{\mathcal{B}}_{\gV,\gE}})_{\tilde{v}}$. This implies that for any $L \geq 1$, $({\tilde{\mathcal{B}}_{\gV,\gE}}^{2L})_{\tilde{u}} \cong_c ({\tilde{\mathcal{B}}_{\gV,\gE}}^{2L})_{\tilde{v}}$ by restricting the rooted isomorphism to $2L$-hop rooted subtrees, which means that $h^L_u(H)=h^L_v(H)$. The converse is true since $L$ is arbitrary. If there are no cycles, we can just take the isomorphism for the largest. Otherwise, an isomorphism can be constructed for $L=\infty$ by infinite extension.

\textbf{2. Second if and only if }:

Let $p_{\gB_{\gV,\gE}}$ be the universal covering map for $\gB_{\gV,\gE}$. Denote $\tilde{v}, \tilde{u}$ by the lift of some nodes $v,u \in \gV$ by $p_{\gB_{\gV,\gE}}$. 

Let $(\tilde{N}(\tilde{u}))_{\tilde{u}}$ be the rooted bipartite lift of $(N(u))_u$. If $\gH$ is $L$-GWL-1 symmetric for all $L\geq 1$ then with $L=1$, $({\tilde{\mathcal{B}}_{\gV,\gE}}^2)_{\tilde{u}} \cong_c (\tilde{N}(u))_{\tilde{u}} \cong_c (\tilde{N}(v))_{\tilde{v}} \cong_c ({\tilde{\mathcal{B}}_{\gV,\gE}}^2)_{\tilde{v}}$, iff $(N(u))_u\cong (N(v))_{\tilde{v}}, \forall u,v \in \gV$ since $N(u)$ and $N(v)$ are cycle-less for any $u,v \in \gV$. For the converse, assume all nodes $v \in \gV$ have $(N(v))_v \cong (N^1)_x$ for some $1$-hop rooted tree $(N^1)_x$ rooted at node $x$, independent of any $v\in \gV$. We prove by induction that for all $L\geq 1$ and for all $v \in \gV$, $({\tilde{\mathcal{B}}_{\gV,\gE}}^{2L})_{\tilde{v}} \cong_c (\tilde{N}^{2L})_x$ for a $2L$-hop tree $(\tilde{N}^{2L})_x$ rooted at node $x$.

Base case: $L=1$ is by assumption.

Inductive step:
If $({\tilde{\mathcal{B}}_{\gV,\gE}}^{2L})_{\tilde{v}} \cong_c (N^{2L})_x$, we can form $({\tilde{\mathcal{B}}_{\gV,\gE}}^{2L+2})_{\tilde{v}}$ by attaching $(\tilde{N}(\tilde{u}))_{\tilde{u}}$ to each node $\tilde{u}$ in the $2L$-th layer of $({\tilde{\mathcal{B}}_{\gV,\gE}}^{2L})_{\tilde{v}} \cong_c (\tilde{N}^{2L})_x$. Each $(\tilde{N}(u))_{\tilde{u}}$ is independent of the root ${\tilde{v}}$ since every $u\in \gV$ has $(\tilde{N}(\tilde{u}))_{\tilde{u}} \cong_c (\tilde{N}^2)_x$ iff $(N({\tilde{u}}))_{\tilde{u}} \cong (N^1)_x$ for an $x$ independent of $u \in \gV$. This means $({\tilde{\mathcal{B}}_{\gV,\gE}}^{2L+2})_{\tilde{v}} \cong_c (\tilde{N}^{2L+2})_x$ for the same root node $x$ where $(\tilde{N}^{2L+2})_x$ is constructed in the same manner as $({\tilde{\mathcal{B}}_{\gV,\gE}}^{2L})_{\tilde{v}}, \forall v \in \gV$.

\end{proof}
 \begin{proposition}\label{prop: multi-hypergraph-equiv}
 Let $\gH= (\gV,\tilde{\gE})$ be a multi-hypergraph.
 
 A {multi-hypergraph isomorphism}, like for hypergraphs, is defined by a structure preserving map $(\rho_{\gV}: \gV_{\gH} \rightarrow \gV_{\gD},\rho_{\tilde{\gE}}: \gE_{\gH} \rightarrow \gE_{\gD})$ but where $\rho_{\tilde{\gE}}$ is a bijection between multisets.

 The star expansion bipartite graph of the multi-hypergraph $\gH$: $\gB_{\gV,\tilde{\gE}}$, is defined as before as the bipartite graph with vertices $\gV\bigsqcup \tilde{\gE}$ and edges $\{ (v,e)\in \gV\times \tilde{\gE} \mid v\in e  \}$.
 
 With these definitions on multi-hypergraphs, Proposition \ref{prop: appendix-hypergraphisom-iff-2colorisom}, Theorem \ref{thm: appendix-gwlunivcover}, and Corollary \ref{corollary: appendix-gwlaswl} also hold for multi-hypergraphs.
 \end{proposition}
 \begin{proof}
 In the proposition, theorem and corollary, replace the set $\gE$ with the multiset $\tilde{\gE}$ and the proofs become identical.
 \end{proof}
\subsubsection{Algorithm Guarantees}
Continuing with the notation, as before, let $\mathcal{H}= (\mathcal{V},\mathcal{E})$ be a hypergraph with star expansion matrix $H$ and let $(\gR_{\gV},\gR_{\gE})$ be the output of Algorithm \ref{alg: cellattach} on $H$ for $L \in \mathbb{Z}^+$. Denote $\gC_{c_L}$ as the set of all connected components of $\gH_{c_L}$:
\begin{equation}
    \gC_{c_L}\triangleq \{C_{c_L}: \text{conn. comp. } C_{c_L} \text{ of } \gH_{c_L}\}
\end{equation}
If $L=\infty$, then drop the $L$. Thus, the hypergraphs represented by $(\gR_V,\gR_E)$ come from $\gC_{c_L}$ for each $c_L$. Let:
\begin{equation}
\hat{\gH}_L\triangleq (\gV,\gE\cup\gR_V)
\end{equation}
be $\gH$ after 
adding all the hyperedges from $\gR_{\gV}$ and let $\hat{H}_L$ be the star expansion matrix of the resulting multi-hypergraph $\hat{\gH}_L$. Let:  \begin{equation}
    \gG_L\triangleq \{h_v^L(H): v \in \gV\}
\end{equation} 
be the set of all $L$-GWL-1 values on $H$. Let:
\begin{equation}
V_{c_L,s}\triangleq \{v \in \gV_{c_L}: v \in R, R \in \gC_{c_L}, |\gV_{R}|=s\}
\end{equation}
be the set of all nodes of $L$-GWL-1 class $c_L$ belonging to a connected component in $\gC_{c_L}$ of $s\geq 1$ nodes in $\gH_{c_{L}}$, the induced subhypergraph of $L$-GWL-1. Let: \begin{equation}
  \gS_{c_{L}}\triangleq \{|\gV_{\gR_{c_L,i}}|: \gR_{c_L,i} \in \gC_{c_L}\}  
\end{equation}
be the set of node set sizes of the connected components in $\gH_{c_L}$. 
\begin{proposition}\label{lemma: appendix-regularsubhypergraph}
     If $L= \infty$, for any GWL-1 node value $c$ for $\mathcal{H}$, 
    the connected component induced subhypergraphs $\mathcal{R}_{c,i}$, 
    for $i=1,...,|\gC_c|$ are GWL-1 symmetric and neighborhood-regular.
\end{proposition}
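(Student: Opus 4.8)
The plan is to prove the single statement that each connected component $\mathcal{R}_{c,i}\in\gC_c$ is \emph{GWL-1 symmetric} as a standalone hypergraph, i.e.\ that running GWL-1 on $\mathcal{R}_{c,i}$ alone assigns the same converged value to every one of its nodes; the neighborhood-regularity half then follows immediately from Observation \ref{prop: appendix-symmhierarchy}, which equates the two notions (and which also lets me deduce GWL-1 symmetry from neighborhood-regularity if I argue in that order). Write $h_v$, $f_e$ for the converged ($L=\infty$) GWL-1 node/hyperedge values computed on the full hypergraph $\mathcal{H}$, and let $g_v^t$, $g_e^t$ denote the values after $t$ iterations of GWL-1 run on the induced subhypergraph $\mathcal{H}_c=(\mathcal{V}_c,\mathcal{E}_c)$. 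Observe that $\mathcal{E}_c$ consists of exactly the hyperedges of $\mathcal{H}$ all of whose endpoints lie in $\mathcal{V}_c$; I will call these the monochromatic-$c$ hyperedges.

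The main obstacle is that GWL-1 on the standalone $\mathcal{H}_c$ is a different computation from GWL-1 on $\mathcal{H}$: the subhypergraph discards every hyperedge touching a node of a different color. The key is to exploit the \emph{equitability} of the converged coloring on $\mathcal{H}$. Stability of GWL-1 at convergence gives two facts: (a) any two nodes with $h_u=h_v$ have the same multiset $\{\{f_e\}\}_{e\ni u}=\{\{f_e\}\}_{e\ni v}$ of incident converged hyperedge-values, and (b) any two hyperedges with $f_e=f_{e'}$ have the same multiset $\{\{h_v\}\}_{v\in e}=\{\{h_v\}\}_{v\in e'}$ of converged node-values. In particular, by (b) the property ``$e$ is monochromatic-$c$'' depends only on the converged value $f_e$, so the sub-multiset of $\{\{f_e\}\}_{e\ni v}$ coming from monochromatic-$c$ hyperedges is itself a function of $h_v$.

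With these in hand, I prove by induction on $t$ that there exist functions $G_t, G_t^E$ with $g_v^t=G_t(h_v)$ for all $v\in\mathcal{V}_c$ and $g_e^t=G_t^E(f_e)$ for all monochromatic-$c$ hyperedges $e$. The base case $t=0$ is trivial on the unattributed hypergraph. For the step, the hyperedge update $g_e^{t+1}=(g_e^t,\{\{g_v^t\}\}_{v\in e})$ is determined by $f_e$ because $g_e^t=G_t^E(f_e)$ and, by (b), $\{\{g_v^t\}\}_{v\in e}=\{\{G_t(h_v)\}\}_{v\in e}$ is a function of $f_e$; the node update $g_v^{t+1}=(g_v^t,\{\{g_e^{t+1}\}\}_{e\ni v,\ e\ \mathrm{mono}\text{-}c})$ is determined by $h_v$ because $g_v^t=G_t(h_v)$ and the monochromatic-$c$ sub-multiset of incident hyperedges is a function of $h_v$ by the previous paragraph. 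Letting $t\to\infty$, every node of $\mathcal{V}_c$ receives the same converged GWL-1 value on $\mathcal{H}_c$; in particular all nodes of a connected component $\mathcal{R}_{c,i}$ share one value, so $\mathcal{R}_{c,i}$ is GWL-1 symmetric, and Observation \ref{prop: appendix-symmhierarchy} upgrades this to neighborhood-regularity.

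An equivalent, more topological route uses Theorem \ref{thm: appendix-gwlunivcover}: since $u,v\in\mathcal{V}_c$ satisfy $(\tilde{\mathcal{B}}_{\mathcal{V},\mathcal{E}})_{\tilde u}\cong_c(\tilde{\mathcal{B}}_{\mathcal{V},\mathcal{E}})_{\tilde v}$, and the universal cover of the monochromatic-$c$ subgraph $\mathcal{B}_{\mathcal{H}_c}$ appears as the component of $p^{-1}(\mathcal{B}_{\mathcal{H}_c})$ through the root, any cover isomorphism carries this component to the corresponding one (cover isomorphisms preserve the projected GWL-1 colors, by a re-rooting argument combined with Theorem \ref{thm: appendix-gwlunivcover}), yielding $\cong_c$ of the $\mathcal{R}_{c,i}$-covers and hence equal GWL-1 values on $\mathcal{R}_{c,i}$. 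Either way, the crux — and the step I expect to need the most care — is the equitability argument establishing that ``monochromatic-$c$'' is a color-invariant and that the restricted refinement on $\mathcal{H}_c$ stays uniform across the whole class $\mathcal{V}_c$.
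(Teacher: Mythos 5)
Your proposal is correct, but it takes a genuinely different route from the paper's proof. The paper works inside the universal-cover formalism: since all nodes of $\mathcal{R}_{c,i}$ share the converged value $c$, Theorem \ref{thm: appendix-gwlunivcover} makes their rooted $2$-colored universal covers isomorphic, and the paper then argues by contradiction that the neighborhoods induced by $\gH_c$ are pairwise isomorphic (a hyperedge of $N_{\gH_c}(u')$ with no isomorphic counterpart at $v'$ would yield a rooted subtree attached at $\tilde{u}'$ with no counterpart at $\tilde{v}'$, contradicting the cover isomorphism); it thus establishes neighborhood-regularity first and then invokes Observation \ref{prop: appendix-symmhierarchy} to conclude GWL-1 symmetry. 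You go the other way around: you prove GWL-1 symmetry of the standalone components first, and then use the same Observation in the opposite direction to obtain neighborhood-regularity. Your key lemma is also more elementary: you use only the stability (equitability) of the converged coloring on $\gH$ --- that $h_u=h_v$ forces equal multisets of incident converged hyperedge values, that $f_e=f_{e'}$ forces equal multisets of constituent node values, and hence that ``$e$ is monochromatic-$c$'' is a property of $f_e$ alone --- and then run an induction showing that the GWL-1 computation on $\gH_c$ factors through the ambient converged colors. This buys three things: it avoids universal covers entirely; it isolates exactly the color-invariance of monochromaticity that the paper's contradiction step needs but leaves implicit (why a ``missing'' hyperedge in one neighborhood cannot be compensated elsewhere in the cover), so your argument is arguably tighter at that point; and it yields slightly more, namely that the standalone values agree across \emph{all} components of class $c$ at every finite iteration, a uniformity that the paper re-derives later in the proof of Lemma \ref{thm: appendix-cellinvar-Lfinite}. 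What the paper's route buys is uniformity of technique with the rest of the appendix and the geometric conclusion (isomorphic induced neighborhoods) directly, without reference to running GWL-1 on the subhypergraph. Two minor points: your closing ``topological route'' is essentially the paper's argument but is only sketched (the ``re-rooting argument'' is not spelled out), so your stable-coloring induction is the argument that should carry the proof; and you silently use the fact that GWL-1 run on $\gH_c$ restricted to a connected component coincides with GWL-1 run on that component alone --- this is immediate since message passing never leaves a component, but it deserves one sentence.
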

\begin{proof}

Let $p_{\gB_{\gV,\gE}}$ be the universal covering map for $\gB_{\gV,\gE}$. Denote $\tilde{v}, \tilde{u}, \tilde{v}', \tilde{u}'$ by the lift of some nodes $v,u, v',u' \in \gV$ by $p_{\gB_{\gV,\gE}}$. 

Let $L= \infty$ and let $\mathcal{H}_c= (\mathcal{V}_c, \mathcal{E}_c)$. 
For any $i$, since $u,v \in \mathcal{V}_c$, $({\tilde{\mathcal{B}}_{\gV,\gE}})_u \cong_c ({\tilde{\mathcal{B}}_{\gV,\gE}})_v$ for all $u,v \in \mathcal{V}_{\mathcal{R}_{c,i}}$. 
Since $\gR_{c,i}$ is maximally connected we know that every neighborhood $N_{\gH_c}(u)$ for $u \in \gV_c$ induced by $\gH_c$ has $N_{\gH_c}(u) \cong N(u) \cap \gH_c$. Since $L=\infty$ we have that $N_{\gH_c}(u)\cong N_{\gH_c}(v), \forall u,v \in \gV_{R_{c,i}}$ since otherwise WLOG there are $u',v'  \in \gV_{R_{c,i}}$ 
with $N_{\gH_c}(u') \not\cong N_{\gH_c}(v')$ then WLOG there is some hyperedge $e \in \gE_{N_{\gH_c}(u')}$ with some $w \in e$, $w \neq u'$ where $e$ cannot be in isomorphism with any $e' \in \gE_{N_{\gH_c}(v')}$. For two hyperedges to be in isomorphism means that their constituent nodes can be bijectively mapped to each other by a restriction of an isomorphism $\phi$ between $N_{\gH_c}(u'), N_{\gH_c}(v')$ to one of the hyperedges. This means that $({\tilde{\mathcal{B}}_{\gV \setminus \{u'\},\gE}})_{w}$ is the rooted universal covering subtree centered about $w$ not passing through $u'$ that is connected to $u' \in ({\tilde{\mathcal{B}}_{\gV,\gE}})_{u'}$ by $e$. However, $v'$ has no $e$ and thus cannot have a $T_x$ for $x \in \gV_{(\tilde{N}(v'))_v}$ satisfying $T_x \cong_c ({\tilde{\mathcal{B}}_{\gV \setminus \{u'\},\gE}})_{w}$ with $x$ connected to $v'$ by a hyperedge $e'$ isomorphic to $e$ in its neighborhood in $(\tilde{\mathcal{B}}_{\gV,\gE})_{v'}$. This contradicts that $({\tilde{\mathcal{B}}_{\gV,\gE}})_{u'}\cong_c ({\tilde{\mathcal{B}}_{\gV,\gE}})_{v'}$.

We have thus shown that all nodes in $\gV_c$ have isomorphic induced neighborhoods. By the Observation \ref{prop: appendix-symmhierarchy}, this is equivalent to saying that $\gR_{c,i}$ is GWL-1 symmetric and neighborhood regular. 
\end{proof}
We show the partitioning of $\gH$ by Algorithm \ref{alg: appendix-cellattach}:
\begin{proposition}
\label{prop: alg-cover}
    If $L\geq 1$, the output $(\gR_{\gV}, \gR_{\gE})$ of Algorithm \ref{alg: cellattach} partitions a subgraph of $\gH$, meaning:
    \begin{equation}
        \gV=\sqcup_{V \in \gR_{\gV}}V \text{ and }\gE \supset \sqcup_{E \in \gR_{\gE}}E
    \end{equation}
\end{proposition}
\begin{proof}
\textbf{1. $\gV=\sqcup_{V \in \gR_{\gV}}V$: }

    For a given subset of nodes $\gU \subseteq \gV$, let $\text{ConnectedComponents}_{\gH}(\gU)$ be the collection of node subsets of $\gU$ where each node subset forms a connected component in $\gH$. 

    Let $h_v^L(H)$ denote the $L$-GWL-1 node value of $v \in \gV$.
    
    Let $\gR_{\gV}(L)\triangleq \bigcup_{v \in \gV} \text{ConnectedComponents}_{\gH}(\{ u \in \gV: h_u^L(H)=h_v^L(H) \})$ denote the collection of node sets of common $L$-GWL-1 values for a given $L$.
    
    By the definition of $\gR_{\gV}$, since every connected component is size atleast $1$ and every node is considered, we must have $\bigcup_{v\in \gV}\gR_{\gV}=\gV$. Since each connected component of a given GWL-1 value is maximal, meaning there is no superset of nodes that is connected, no two connected components can intersect through either nodes or hyperedges. Furthermore, a single node can belong to only one GWL-1 value, thus the values form a partition of $\gV$. This proves $\gV=\sqcup_{V \in \gR_{\gV}}V$.
    
    \textbf{2. $\gE \supset \sqcup_{E \in \gR_{\gE}}E$: }

    This follows since $\gR_{\gE}$ are the hyperedges of each connected component spanned by $\gR_{\gV}$. These connected components form disconnected subhypergraphs of $\gH$.
\end{proof}

\textbf{Prediction Guarantees: }

 In order to guarantee that the GWL-1 symmetric components $\gR_{c,i}$ found by Algorithm \ref{alg: appendix-cellattach} carry additional information, there needs to be a separation between them to prevent an intersection between the rooted trees computed by GWL-1. We redefine from the main paper what it means for two node subsets to be sufficiently separated via the shortest hyperedge path distance between nodes in $\gV$ as follows: 

\begin{definition}
Two subsets of nodes $\gU_1,\gU_2 \subseteq \gV$ are \textbf{sufficiently $L$-separated} if:

\begin{equation}
    \min_{v_1 \in \gU_1 ,v_2 \in \gU_2} d(v_1,v_2) >L
\end{equation}
where $d(v_1,v_2) \triangleq \min_{e_1,...,e_k \in \gE, v_1 \in e_1, v_2 \in e_k} k$ is the shortest hyperedge path distance from $v_1 \in \gV$ to $v_2 \in \gV$.

A collection of node subsets $\gC \subseteq 2^{\gV}$ is \textbf{sufficiently $L$-separated} if all pairs of node subsets are \textbf{sufficiently $L$-separated}.
\end{definition}
Our definition of sufficiently $L$-separated is similar in nature to that of well separation between point sets \cite{10.1145/200836.200853} in Euclidean space.

We give another definition that will be useful for the proof of the following lemma:
\begin{definition}
    \label{def: appendix-stargraph}
    A star graph $N_x$ is defined as a tree rooted at $x$ of depth $1$. The root $x$ is the only node that can have degree more than $1$.
\end{definition}
Assuming that the $\gC_{c_L}$ are sufficiently $L$-separated from each other, intuitively meaning that no two nodes from two separate $\gV_{\gR_{c_L,i}} \in \gR_V$ are within $L$ hyperedges away, then the cardinality of each component $|\gV_{\gR_{c_L,i}}|$ is recognizable. 
\begin{lemma}
\label{thm: appendix-cellinvar-Lfinite}
    If $L \in \mathbb{Z}^+$ is small enough so that after running Algorithm \ref{alg: cellattach} on $L$, for any $L$-GWL-1 node class $c_L$ on $\gV$ 
    the collection of $\gC_{c_L}$ is \textbf{sufficiently $L$-separated},
    
    then after forming $\hat{\gH}_L$, the new $L$-GWL-1 node classes of $\mathcal{V}_{\mathcal{R}_{c_L,i}}$
    for $i=1,...,\gC_{c_L}$ in $\mathcal{\hat{H}}_L$ are all the same class $c'_L$ but are distinguishable from $c_L$ depending on $|\mathcal{V}_{\mathcal{R}_{c_L,i}}|$.
\end{lemma}
\begin{proof}
After running Algorithm \ref{alg: cellattach} on $\gH= (\gV,\gE)$, let $\hat{\gH}_L= (\hat{\gV}_L, \hat{\gE}_L\triangleq \gE \cup \bigsqcup_{c_L,i}\{\gV_{\gR_{c_L,i}}\})$ be the hypergraph formed by attaching a hyperedge to each $\gV_{\gR_{c_L,i}}$. 

For any $c_L$, a $L$-GWL-1 node class, let $\mathcal{R}_{c_L,i}, i=1,...,|\gC_{c_L}|$ be a connected component subhypergraph of $\mathcal{H}_{c_L}$. Over all $(c_L,i)$ pairs, all the $\mathcal{R}_{c_L,i}$ are disconnected from each other and for each $c_L$ each $\mathcal{R}_{c_L,i}$ is maximally connected on $\gH_{c_L}$.   

Upon covering all the nodes $\gV_{\gR_{c_L,i}}$ of each induced connected component subhypergraph $\mathcal{R}_{c_L,i}$ with a single hyperedge $e= \gV_{\gR_{c_L,i}}$ of size $s= |\mathcal{V}_{\mathcal{R}_{c_L,i}}|$, we claim that every node of class $c_L$ becomes $c_{L,s}$, a $L$-GWL-1 node class depending on the original $L$-GWL-1 node class $c_L$ and the size of the hyperedge $s$.

Consider for each $v \in \gV_{\gR_{c_L,i}}$ the $2L$-hop rooted tree $({\tilde{\mathcal{B}}^{2L}_{\gV,\gE}})_{\tilde{v}}$ for $p_{\gB_{\gV,\gE}}(\tilde{v})=v$.  Also, for each $v \in \gV_{\gR_{c_L,i}}$, define the tree 
\begin{equation}
    T_{e} \triangleq ({\tilde{\mathcal{B}}^{2L-1}_{\hat{\gV} \setminus \{ v\},\hat{\gE}}})_{\tilde{e}} 
\end{equation}
We do not index the tree $T_e$ by $v$ since it does not depend on $v \in \gV_{\gR_{c_L,i}}$. We prove this in the following.

\textbf{proof for: $T_e$ does not depend on $v \in \gV_{\gR_{c_L,i}}$}:

Let node $\tilde{e}$ be the lift of $e$ to $({\tilde{\mathcal{B}}^{2L-1}_{\hat{\gV},\hat{\gE}}})_{\tilde{e}}$. Define the star graph $(N(\tilde{e}))_{\tilde{e}}$ as the 1-hop neighborhood of $\tilde{e}$ in $({\tilde{\mathcal{B}}^{2L-1}_{\hat{\gV},\hat{\gE}}})_{\tilde{e}}$. We must have:
\begin{equation}
    ({\tilde{\mathcal{B}}^{2L-1}_{\hat{\gV},\hat{\gE}}})_{\tilde{e}} \cong_c ((N(\tilde{e}))_{\tilde{e}} \sqcup \bigsqcup_{\tilde{u} \in \gV_{N({\tilde{e}})}\setminus \{\tilde{e}\}} ({\tilde{\mathcal{B}}^{2L-2}_{{\gV},{\gE}}})_{\tilde{u}})_{\tilde{e}}
\end{equation}
Define for each node $v \in e$ with lift $\tilde{v}$:
\begin{equation}
    (N(\tilde{e},\tilde{v}))_{\tilde{e}} \triangleq (\gV_{(N(\tilde{e}))_{\tilde{e}}} \setminus \{\tilde{v}\}, \gE_{(N(\tilde{e}))_{\tilde{e}}} \setminus \{(\tilde{e},\tilde{v})\})_{\tilde{e}}
\end{equation}
The tree $(N(\tilde{e},\tilde{v}))_{\tilde{e}}$ is a star graph with the node $\tilde{v}$ deleted from $(N(\tilde{e}))_{\tilde{e}}$. The star graphs $(N(\tilde{e},\tilde{v}))_{\tilde{e}} \subseteq (N(\tilde{e}))_{\tilde{e}}$ do not depend on $\tilde{v}$ as long as $\tilde{v} \in \gV_{(N(\tilde{e}))_{\tilde{e}}}$. In other words, 
\begin{equation}
\label{eq: appendix-N(e,v)equiv}
    (N(\tilde{e},\tilde{v}))_{\tilde{e}} \cong_c (N(\tilde{e},\tilde{v}'))_{\tilde{e}}, \forall \tilde{v},\tilde{v}' \in \gV_{(N(\tilde{e}))_{\tilde{e}}} \setminus \{ \tilde{e} \}
\end{equation}
Since the rooted tree $({\tilde{\mathcal{B}}^{2L-1}_{\hat{\gV},\hat{\gE}}})_{\tilde{e}}$, where $\tilde{e}$ is the lift of $e$ by universal covering map $p_{\gB_{\gV,\gE}}$, has all pairs of nodes $\tilde{u},\tilde{u}' \in \tilde{e}$ in it with $({\tilde{\mathcal{B}}^{2L}_{{\gV},{\gE}}})_{\tilde{u}} \cong_c ({\tilde{\mathcal{B}}^{2L}_{{\gV},{\gE}}})_{\tilde{u}'}$, which implies \begin{equation}
\label{eq: appendix-2L-1-subtree}
    ({\tilde{\mathcal{B}}^{2L-2}_{{\gV},{\gE}}})_{\tilde{u}} \cong_c ({\tilde{\mathcal{B}}^{2L-2}_{{\gV},{\gE}}})_{\tilde{u}'}, \forall \tilde{u},\tilde{u}' \in \tilde{e}
\end{equation}
By Equations \ref{eq: appendix-2L-1-subtree}, \ref{eq: appendix-N(e,v)equiv}, we thus have:
\begin{equation}
    ({\tilde{\mathcal{B}}^{2L-1}_{\hat{\gV} \setminus \{ v\},\hat{\gE}}})_{\tilde{e}} \cong_c ((N(\tilde{e},\tilde{v}))_{\tilde{e}} \sqcup \bigsqcup_{\tilde{u} \in \gV_{(N(\tilde{e},\tilde{v}))_{\tilde{e}}}\setminus \{\tilde{e}\}} ({\tilde{\mathcal{B}}^{2L-2}_{{\gV},{\gE}}})_{\tilde{u}})_{\tilde{e}}
\end{equation}
This proves that $T_e$ does not need to be indexed by $v \in \gV_{\gR_{c_L,i}}$.

We continue with the proof that all nodes in $\gV_{\gR_{c_L,i}}$ become the $L$-GWL-1 node class $c_{L,s}$ for $s= |\gV_{\gR_{c_L,i}}|$.

Since every $v \in \gV_{\gR_{c_L,i}}$ becomes connected to a hyperedge $e= \gV_{\gR_{c_L,i}}$ in $\hat{\gH}$, we must have:
\begin{equation}    
({\tilde{\mathcal{B}}^{2L}_{\hat{\gV},\hat{\gE}}})_{\tilde{v}} \cong_c (({\tilde{\mathcal{B}}^{2L}_{\gV,\gE}})_{\tilde{v}} \cup_{(\tilde{v},\tilde{e})} T_e)_{\tilde{v}}, \forall v \in \gV_{\gR_{c_L,i}}
\end{equation}
The notation $(({\tilde{\mathcal{B}}^{2L}_{\gV,\gE}})_{\tilde{v}} \cup_{(\tilde{v},\tilde{e})} T_e)_{\tilde{v}}$ denotes a tree rooted at $\tilde{v}$ that is the attachment of the tree $T_e$ rooted at $\tilde{e}$ to the node $\tilde{v}$ by the edge $(\tilde{v},\tilde{e})$. As is usual, we assume $\tilde{v}, \tilde{e}$ are the lifts of $v \in \gV, e\in \gE$ respectively. We only need to consider the single $e$ since $L$ was chosen small enough so that the $2L$-hop tree  $({\tilde{\mathcal{B}}^{2L}_{\hat{\gV},\hat{\gE}}})_{\tilde{v}}$ does not contain a node $\tilde{u}$ satisfying $p_{\gB_{\gV,\gE}}(\tilde{u})=u$ with $u \in \gV_{\gR_{c_L,j}}$ for all $j=1,...,|\gC_{c_L}|, j\neq i$. 

Since $T_e$ does not depend on $v \in \gV_{\gR_{c_L,i}}$, 
\begin{equation}
    ({\tilde{\mathcal{B}}^{2L}_{\hat{\gV},\hat{\gE}}})_{\tilde{u}} \cong_c ({\tilde{\mathcal{B}}^{2L}_{\hat{\gV},\hat{\gE}}})_{\tilde{v}}, \forall u,v \in \gV_{\gR_{c_L,i}}
\end{equation}
This shows that $h_u^L(\hat{H})= h_v^L(\hat{H}), \forall u,v \in \gV_{\gR_{c_L,i}}$ by Theorem \ref{thm: appendix-gwlunivcover}. Furthermore, since each $v \in \gV_{\gR_{c_L,i}} \subseteq \hat{\gV}$ in $\hat{\gH}$ is now incident to a new hyperedge $e= \gV_{\gR_{c_L,i}}$, we must have that the $L$-GWL-1 class $c_L$ of $\gV_{\gR_{c_L,i}}$ on $\gH$ is now distinguishable by $|\gV_{\gR_{c_L,i}}|$. 

\end{proof}
We will need the following definition to prove the next lemma.
\begin{definition}
    \label{def: appendix-partialuniversalcover}
    A partial universal cover of hypergraph $\mathcal{H}=(\gV,\gE)$ with an unexpanded induced subhypergraph $\mathcal{R}$, denoted $U(\gH,\gR)_{\gV,\gE}$ is a graph cover of $\mathcal{B}_{\gV,\gE}$ where we freeze $\gB_{\gV_{\gR},\gE_{\gR}} \subseteq \tilde{\mathcal{B}}_{\gV,\gE}$ as an induced subgraph.

    A $l$-hop rooted partial universal cover of hypergraph $\mathcal{H}=(\gV,\gE)$ with an unexpanded induced subhypergraph $\mathcal{R}$, denoted $(U^l(\gH,\gR)_{\gV,\gE})_{\tilde{u}}$ for $u \in \gV$ or $(U^l(\gH,\gR)_{\gV,\gE})_{\tilde{e}}$ for $e \in \gE$, where $\tilde{v},\tilde{e}$ are lifts of $v,e$, is a rooted graph cover of $\mathcal{B}_{\gV,\gE}$ where we freeze $\gB_{\gV_{\gR},\gE_{\gR}} \subseteq \tilde{\mathcal{B}}_{\gV,\gE}$ as an induced subgraph.
\end{definition}
\begin{lemma}
\label{lemma: appendix-cellinvar}
    Assuming the same conditions as Lemma \ref{thm: appendix-cellinvar-Lfinite}, where $\mathcal{H}= (\mathcal{V},\mathcal{E})$ is a hypergraph and for all $L$-GWL-1 node classes $c_L$ with connected components $\mathcal{R}_{c_L,i}$, as discovered by Algorithm \ref{alg: cellattach}, so that $L\geq diam({\gR_{c_L,i}})$. 
    Instead of only adding the hyperedges $\{\gV_{\gR_{c_L,i}}\}_{c_L,i}$ to $\gE$ as stated in the main paper, let $\mathcal{\hat{H}_{\dag}}\triangleq (\gV,(\gE \setminus \gR_{E}) \sqcup \gR_{V}) $, meaning $\mathcal{H}$ with each  $\mathcal{R}_{c_L,i}$ for $i=1,...,|\gC_{c_L}|$ having all of its hyperedges dropped and with a single hyperedge that covers $\mathcal{V}_{\mathcal{R}_{c_L,i}}$ and let $\mathcal{\hat{H}}= (\gV,\gE \sqcup \gR_{V})$ 
    then:
    
    The GWL-1 node classes of $\mathcal{V}_{\mathcal{R}_{c_L,i}}$ for $i=1,...,|\gC_{c_L}|$ in $\mathcal{\hat{H}}$ are all the same class $c_L'$ but are distinguishable from $c_L$ depending on $|\mathcal{V}_{\mathcal{R}_{c_L,i}}|$.
\end{lemma}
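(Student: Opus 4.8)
The plan is to reduce every GWL-1 statement to a comparison of $2$-colored rooted universal-cover subtrees via Theorem~\ref{thm: appendix-gwlunivcover}, and then to reuse the subtree-surgery argument already carried out in Lemma~\ref{thm: appendix-cellinvar-Lfinite}, now run in parallel for $\hat{\mathcal{H}}$ and $\hat{\mathcal{H}}_{\dag}$. Fix an $L$-GWL-1 class $c_L$, a component $\mathcal{R}_{c_L,i}$, and its covering hyperedge $e = \mathcal{V}_{\mathcal{R}_{c_L,i}}$ of size $s$. Two facts drive the argument: (i) all nodes of $\mathcal{R}_{c_L,i}$ share the class $c_L$, so by Theorem~\ref{thm: appendix-gwlunivcover} their $2L$-hop rooted subtrees in $\tilde{\mathcal{B}}_{\mathcal{V},\mathcal{E}}$ are pairwise $\cong_c$; and (ii) since $L \geq diam(\mathcal{R}_{c_L,i})$, depth-$L$ message passing already explores the whole component, so the induced neighborhoods of the component nodes are isomorphic exactly as in Proposition~\ref{lemma: appendix-regularsubhypergraph}, i.e. the component is neighborhood-regular in the relevant sense.

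First I would show that the nodes of $\mathcal{V}_{\mathcal{R}_{c_L,i}}$ collapse to a single class $c_L'$ in $\hat{\mathcal{H}}$. As in Lemma~\ref{thm: appendix-cellinvar-Lfinite}, set $T_e := (\tilde{\mathcal{B}}^{2L-1}_{\hat{\mathcal{V}} \setminus \{v\}, \hat{\mathcal{E}}})_{\tilde{e}}$, the subtree hanging off the lifted covering hyperedge once the root is deleted, and use (i)--(ii) to conclude that the deleted spokes $(\tilde{e},\tilde{v})$ carry $\cong_c$ subtrees, so $T_e$ does not depend on the chosen $v \in \mathcal{V}_{\mathcal{R}_{c_L,i}}$. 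Then $(\tilde{\mathcal{B}}^{2L}_{\hat{\mathcal{V}},\hat{\mathcal{E}}})_{\tilde{v}} \cong_c \big((\tilde{\mathcal{B}}^{2L}_{\mathcal{V},\mathcal{E}})_{\tilde{v}} \cup_{(\tilde{v},\tilde{e})} T_e\big)_{\tilde{v}}$, and both summands on the right are $v$-independent across the component, so all component nodes receive one class $c_L'$. For distinguishability I would invoke Observation~\ref{obs: appendix-GWLrewrite}: in $\hat{\mathcal{H}}$ each such $v$ gains the incident hyperedge $e$, changing its bipartite degree, hence $c_L' \neq c_L$; and since the lifted blue node $\tilde{e}$ has degree $s$, components of different sizes $s \neq s'$ yield non-$\cong_c$ rooted subtrees, so $c_L'$ genuinely depends on $|\mathcal{V}_{\mathcal{R}_{c_L,i}}|$.

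Then I would treat the dropping version $\hat{\mathcal{H}}_{\dag}$ and argue it produces the same classes. Here the hypothesis $L \geq diam(\mathcal{R}_{c_L,i})$ is essential: the single covering hyperedge $e$ joins every pair of component nodes at bipartite distance $2$, so within the $2L$-ball of any $v \in \mathcal{V}_{\mathcal{R}_{c_L,i}}$ the same external material is reachable whether or not the internal edges $\mathcal{E}_{\mathcal{R}_{c_L,i}}$ are retained. Dropping those edges removes from each component node an isomorphic collection of incidences (again by neighborhood-regularity), so the surgery of the previous paragraph goes through verbatim with $(\tilde{\mathcal{B}}^{2L}_{\mathcal{V},\mathcal{E}})_{\tilde{v}}$ replaced by its internal-edge-deleted analogue, which remains $v$-independent across the component. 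This yields one class in $\hat{\mathcal{H}}_{\dag}$ as well, still separated from $c_L$ by the same degree argument.

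The step I expect to be the main obstacle is exactly this last one: certifying that deleting the internal hyperedges neither changes the external structure reachable within $2L$ hops nor breaks the relative symmetry of the component nodes in a $v$-dependent way. This is where neighborhood-regularity (Proposition~\ref{lemma: appendix-regularsubhypergraph}) and $L \geq diam$ must be combined with care: one must verify that every external edge incident to a component node is still reached through the covering hyperedge in $\hat{\mathcal{H}}_{\dag}$, so that no discriminating information carried by the original internal paths is lost asymmetrically. The separation hypothesis inherited from Lemma~\ref{thm: appendix-cellinvar-Lfinite} (no two discovered components lie within $L$ hyperedges of each other) then guarantees that distinct components do not interact inside any $2L$-ball, so each component can be analyzed in isolation.
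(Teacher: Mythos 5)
Your overall framework (reduction to $2$-colored rooted subtrees via Theorem \ref{thm: appendix-gwlunivcover}, plus the $T_e$-surgery of Lemma \ref{thm: appendix-cellinvar-Lfinite}) matches the paper's general strategy, and your second paragraph on $\hat{\gH}$ essentially restates that earlier lemma. But the new content of this lemma is the hypergraph $\hat{\gH}_{\dag}$, in which the internal hyperedges $\gE_{\gR_{c_L,i}}$ are \emph{deleted}, and there your proposal has a genuine gap --- one you flag yourself but never close. You assert the surgery ``goes through verbatim with $(\tilde{\gB}^{2L}_{\gV,\gE})_{\tilde{v}}$ replaced by its internal-edge-deleted analogue,'' but that object is not a rooted cover of anything: in the fully unrolled tree, every copy of an internal hyperedge node carries descendant subtrees of external material (reached through other component nodes), so deleting those copies severs from the root exactly the material that, in $\hat{\gH}_{\dag}$, is reattached at different depths through the covering hyperedge. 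The tree is rearranged globally, not locally, and no lemma you cite controls this rearrangement. The paper closes precisely this hole with a device missing from your proposal: the \emph{partial universal cover} (Definition \ref{def: appendix-partialuniversalcover}), which does not unroll the component at all but freezes $\gB_{\gV_{\gR},\gE_{\gR}}$ as a finite induced subgraph inside the cover. The delete-and-cover operation then becomes a clean swap of this frozen subgraph for the star graph of the new hyperedge; because star graphs are acyclic, the swapped object is again a cover, identified ($2L$-locally) with $\tilde{\gB}_{\gV_{\hat{\gH}_{\dag}},\gE_{\hat{\gH}_{\dag}}}$ (their Equation \ref{eq: Hdagunivcover}), while the $v$-independence of what remains after removing the frozen part is Equation \ref{eq: appendix-partialunivcoverequiv}, justified by Proposition \ref{lemma: appendix-regularsubhypergraph} and maximal connectedness. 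That construction is what you would need in place of ``verbatim.''

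A second, smaller error: your separation of the new class from $c_L$ rests on a degree increment (``each $v$ gains the incident hyperedge $e$, changing its bipartite degree''), and you claim $\hat{\gH}_{\dag}$ is ``still separated from $c_L$ by the same degree argument.'' That argument only works for the add-only $\hat{\gH}$. In $\hat{\gH}_{\dag}$ a component node simultaneously loses all of its internal hyperedges and gains one covering hyperedge, so its degree can stay the same or even decrease. The paper instead derives distinguishability from the fact that the rooted covers of component nodes now pass through a blue node of degree $s=|\gV_{\gR_{c_L,i}}|$, combined with the degree-multiset check $E_{deg}$ built into the appendix version of the algorithm (Algorithm \ref{alg: appendix-cellattach}), which rules out collision with the class of a pre-existing hyperedge.
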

\begin{proof}
For any $c_L$, a $L$-GWL-1 node class, let $\mathcal{R}_{c_L,i}, i=1,...,|\gC_{c_L}|$ be a connected component subhypergraph of $\mathcal{H}_{c_L}$. These connected components are discovered by the algorithm. Over all $(c_L,i)$ pairs, all the $\mathcal{R}_{c_L,i}$ are disconnected from each other. Upon arbitrarily deleting all hyperedges in each such induced connected component subhypergraph $\mathcal{R}_{c_L,i}$ and adding a single hyperedge of size $s= |\mathcal{V}_{\mathcal{R}_{c_L,i}}|$, we claim that every node of class $c_L$ becomes $c_{L,s}$, a $L$-GWL-1 node class depending on the original $L$-GWL-1 node class $c_L$ and the size of the hyperedge $s$. 

Define the subhypergraph made up of the disconnected components $\gR_{c_L,i}$ as:
\begin{equation}
\gR:=\bigcup_{c,i}\gR_{c_L,i}
\end{equation}
Since $L \geq diam(\gR_{c_L,i})$, we can construct the $2L$-hop rooted partial universal cover with unexpanded induced subhypergraph $\gR$, denoted by $(U^{2L}(\mathcal{H},\gR)_{\gV,\gE})_{\tilde{v}}, \forall v\in \gV$ of $\gH$ as given in Definition \ref{def: appendix-partialuniversalcover}. 

Denote the hyperedge nodes, or right hand nodes of the bipartite graph by  ${\gB(\gV_{\mathcal{R}},\gE_{\gR})}$ by $R({\gB(\gV_{\mathcal{R}},\gE_{\gR})})$. Their corresponding hyperedges are $\gE_{\gR} \subseteq \mathcal{E}(U(\gH,\gR)) \subseteq \gE$. Since each $\gR_{c_L,i}$ is maximally connected, for any nodes $u,v \in \gV_{\mathcal{R}}$ we have: 
\begin{equation}\label{eq: appendix-partialunivcoverequiv}
(U^{2L}(\gH,\gR)_{\tilde{u}} \setminus 
R({\gB(\gV_{\mathcal{R}},\gE_{\gR})})
)_{\tilde{u}} \cong_c (U^{2L}(\gH,\gR)_{\tilde{v}} \setminus R({\gB(\gV_{\mathcal{R}},\gE_{\gR})}))_{\tilde{v}}
\end{equation}
by Proposition \ref{lemma: appendix-regularsubhypergraph}, where $U^{2L}(\gH,\gR)_{\tilde{v}} \setminus R({\gB(\gV_{\mathcal{R}},\gE_{\gR})})$ denotes removing the nodes $R({\gB(\gV_{\mathcal{R}},\gE_{\gR})})$ from $U^{2L}(\gH,\gR)_{\tilde{v}}$. This follows since removing $R({\gB(\gV_{\mathcal{R}},\gE_{\gR})})$ removes an isomorphic neighborhood of hyperedges from each node in $\gV_{\mathcal{R}}$. This requires assuming maximal connectedness of each $\gR_{c_L,i}$. Upon adding the hyperedge 
\begin{equation}
    e_{c_L,i}\triangleq \gV_{\gR_{c_L,i}}
\end{equation}
covering all of $\gV_{\mathcal{R}_{c_L,i}}$ after the deletion of $\mathcal{E}_{\mathcal{R}_{c_L,i}}$ for every $(c_L,i)$ pair, we see that any node $u \in \gV_{\mathcal{R}_{c_L,i}}$ is connected to any other node $v \in \gV_{\mathcal{R}_{c_L,i}}$ through $e_{c_L,i}$ in the same way for all nodes $u,v \in \gV_{\mathcal{R}_{c_L,i}}$. In fact, we claim that all the nodes in $\gV_{\gR_{c_L,i}}$ still have the same GWL-1 class. 


We can write the multi-hypergraph $\hat{\gH}_{\dag}$ equivalently as $(\mathcal{V},\bigsqcup_{c_L,i}(\mathcal{E} \setminus \mathcal{E}(\gR_{c_L,i}) \sqcup \{\!\!\{ e_{c_L,i} \}\!\!\}))$, which is the multi-hypergraph formed by the algorithm. 
The replacement operation on $\gH$ can be viewed in the universal covering space $\tilde{\gB}_{\gV,\gE}$ 
as taking $U(\gH,\gR)$ and replacing the frozen subgraph $\gB_{\gV_{\gR},\gE_{\gR}}$ with the star graphs $({N}_{\hat{\gH}_{\dag}}(\tilde{e}_{c_L,i}))_{\tilde{e}_{c_L,i}}$ of root node $\tilde{e}_{c_L,i}$ determined by hyperedge $e_{c_L,i}$ for each  connected component indexed by $(c_L,i)$. Since the star graphs $({N}_{\hat{\gH}_{\dag}}(\tilde{e}_{c_L,i}))_{\tilde{e}_{c_L,i}}$ are cycle-less, 
we have that:
\begin{equation}\label{eq: Hdagunivcover}
    (U(\gH,\gR) \setminus R({\gB(\gV_{\mathcal{R}},\gE_{\gR})})) \cup
    \bigcup_{c_L,i}
    ({N}_{\hat{\gH}_{\dag}}(\tilde{e}_{c_L,i}))_{\tilde{e}_{c_L,i}} \cong_c \tilde{\gB}_{\gV_{\hat{\gH}_{\dag}}, \gE_{\hat{\gH}_{\dag}}}
\end{equation}
Viewing Equation \ref{eq: Hdagunivcover} locally, by our assumptions on $L$, for any $v \in \gV_{\gR_{c_L,i}}$, we must also have:
\begin{equation}
    (U^{2L}(\gH,\gR)_{\tilde{v}} \setminus R({\gB(\gV_{\mathcal{R}},\gE_{\gR})})) 
    \bigcup
    ({N}_{\hat{\gH}_{\dag}}(\tilde{e}_{c,i}))_{\tilde{e}_{c,i}} \cong_c \tilde{\gB}_{\gV_{\hat{\gH}_{\dag}}, \gE_{\hat{\gH}_{\dag}}}
\end{equation}
We thus have $(\tilde{\gB}^{2L}_{\gV_{\hat{\gH}_{\dag}}, \gE_{\hat{\gH}_{\dag}}})_{\tilde{u}} \cong_c (\tilde{\gB}^{2L}_{\gV_{\hat{\gH}_{\dag}}, \gE_{\hat{\gH}_{\dag}}})_{\tilde{v}}$ for every $u,v \in \gV_{\gR_{c_L,i}}$ with $\tilde{u},\tilde{v}$ being the lifts of $u,v$ by $p_{\gB_{\gV,\gE}}$, since $(U^{2L}(\gH,\gR)_{\tilde{u}} \setminus R({\gB(\gV_{\mathcal{R}},\gE_{\gR})}))_{\tilde{u}} \cong_c (U^{2L}(\gH,\gR)_{\tilde{v}} \setminus R({\gB(\gV_{\mathcal{R}},\gE_{\gR})}))_{\tilde{v}}$ for every $u,v \in \gV_{\gR_{c_L,i}}$ as in Equation \ref{eq: appendix-partialunivcoverequiv}. These rooted universal covers now depend on a new hyperedge $e_{c_L,i}$ and thus depend on its size $s$. 

This proves the claim that all the nodes in $\gV_{\gR_{c_L,i}}$ retain the same $L$-GWL-1 node class by changing $\gH$ to $\hat{\gH}_{\dag}$ and that this new class is distinguishable by $s=|\gV_{\gR_{c_L,i}}|$. In otherwords, the new class can be determined by $c_s$.
Furthermore, $c_{L,s}$ on the hyperedge $e_{c_L,i}$ cannot become the same class as an existing class due to the algorithm. 
\end{proof}

\begin{theorem}
\label{thm: appendix-count}
Let $|\gV|=n, L \in \mathbb{Z}^+$ and $vol(v)\triangleq \sum_{e\in \gE: e\ni v}|e|$ and assuming that the collection of node subsets $\gC_{c_L}$ is sufficiently $L$-separated. 

If $vol(v) = O(\log^{\frac{1-\epsilon}{4L}}n), \forall v \in \gV$ for any constant $\epsilon > 0$;  $|\gS_{c_L}|\leq S, \forall c_L\in \gC_{L}$, $S$ constant, and $|V_{c_L,s}|=O(\frac{n^{\epsilon}}{{\log^{\frac{1}{2k}}(n)}}), \forall s \in \gC_{c_L}$ 
, then for $k \in \mathbb{Z}^+$ and $k$-tuple $C= (c_{L,1},...,c_{L,k}), c_{L,i} \in \gG_L, i=1..k$ there exists $\omega(n^{2k\epsilon})$ many pairs of $k$-node sets $S_1 \not\simeq S_2$ such that $(h^L_u(H))_{u\in S_1}=(h^L_{v\in S_2}(H))=C$, as ordered $k$-tuples, while $h(S_1,\hat{H}_L)\neq h(S_2,\hat{H}_L)$ also by $L$ steps of GWL-1.
\end{theorem}
\begin{proof}
$ $\newline
\textbf{1. Constructing forests from the rooted universal cover trees }:

The first part of the proof is similar to the first part of the proof of Theorem 2 of \cite{zhang2021labeling}. 

Consider an arbitrary node $v\in\gV$ and denote the $2L$-hop tree rooted at $v$ from the universal cover as $(\tilde{\gB}^{2L}_{\gV,\gE})_v$ as in Theorem \ref{thm: appendix-gwlunivcover}. 
As each node $v \in \gV$ has volume $vol(v)= \sum_{v\in e} |e| = O(\log^{\frac{1-\epsilon}{4L}}n)$, then every edge $e \in \gE$ has $|e|= O(\log^{\frac{1-\epsilon}{4L}}n)$ and for all $v \in \gV$ we have that $deg(v)=O(\log^{\frac{1-\epsilon}{4L}}n)$, we can say that every node in $(\tilde{\gB}^{2L}_{\gV,\gE})_{\tilde{v}}$ has degree $d=  O(\log^{\frac{1-\epsilon}{4L}}n)$. Thus, the number of nodes in $(\tilde{\gB}^{2L}_{\gV,\gE})_{\tilde{v}}$, denoted by $|\gV((\tilde{\gB}^{2L}_{\gV,\gE})_{\tilde{v}}|$, satisfies
$|\gV((\tilde{\gB}^{2L}_{\gV,\gE})_{\tilde{v}}| \leq 
\sum_{i=0}^{2L} d^i = O(d^{2L}) = O(\log^{\frac{1-\epsilon}{2}}n)$.
We set $K \triangleq \max_{v \in V} |\gV((\tilde{\gB}^{2L}_{\gV,\gE})_{\tilde{v}}|$ as the maximum number of nodes of $(\tilde{\gB}^{2L}_{\gV,\gE})_{\tilde{v}}$ and thus $K = O(\log^{\frac{1-\epsilon}{2}}n)$. For all $v\in \gV$, expand trees $(\tilde{\gB}^{2L}_{\gV,\gE})_{\tilde{v}}$ to $\overline{(\tilde{\gB}^{2L}_{\gV,\gE})_{\tilde{v}}}$ by adding $K - |\gV((\tilde{\gB}^{2L}_{\gV,\gE})_{\tilde{v}}|$ independent nodes. Then, all $\overline{(\tilde{\gB}^L_{\gV,\gE})_{\tilde{v}}}$ have the same number of nodes, which is $K$, becoming forests instead of trees.

\textbf{2. Counting $|\gG_{L}|$}:

Next, we consider the number of non-isomorphic forests over $K$ nodes. Actually, the number of
non-isomorphic graphs over K nodes is bounded by $2^{K \choose
2} = exp(O(\log^{\frac{1-\epsilon}{2}}n)) = o(n^{1-\epsilon})$. 
Therefore, due to the pigeonhole principle, there exist $\frac{n}{o(n^{1-\epsilon})}= \omega(n^{\epsilon})$ many nodes $v$ whose
$\overline{(\tilde{\gB}^L_{\gV,\gE})_{\tilde{v}}}$ are isomorphic to each other. Denote $\gG_L$ as the set of all $L$-GWL-1 values. Denote the set of these nodes as $\gV_{c_L}$, which consist of nodes whose $L$-GWL-1 values are all the same value $c_L \in \gG_L$ after $L$ iterations of GWL-1 by Theorem \ref{thm: appendix-gwlunivcover}. For a fixed $L$, the sets $\gV_{c_L}$ form a partition of $\gV$, in other words, $\bigsqcup_{c_L \in \gG_L} \gV_{c_L}= \gV$. Next, we focus on looking at $k$-sets of nodes that are not equivalent by GWL-1.

For any $c_L \in \gG_L$, there is a partition $\gV_{c_L}= \bigsqcup_s V_{c_L,s}$ where $V_{c_L,s}$ is the set of nodes all of which have $L$-GWL-1 class $c_L$ and that belong to a connected component of size $s$ in $\gH_{c_L}$. Let $\gS_{c_L} \triangleq \{ |\gV_{\gR_{c_L,j}}|: \gR_{c_L,j} \in \gC_{c_L} \}$ denote the set of sizes $s\geq 1$ of connected component node sets of $\gH_{c_L}$. We know that $|\gS_{c_L}| \leq S$ where $S$ is independent of $n$.

\textbf{3. Computing the lower bound: }

Let $Y$ denote the number of pairs of $k$-node sets $S_1 \not\simeq S_2$ such that $(h^L_u(H))_{u\in S_1}=(h^L_{v}(H))_{v\in S_2}=C= (c_{(L,1)},...,c_{(L,k)})$, as ordered tuples, from $L$-steps of GWL-1. Since if any pair of nodes $u,v$ have the same $L$-GWL-1 values $c_L$, then they become distinguishable by the size of the connected component in $\gH_{c_L}$ that they belong to. We can lower bound $Y$ by counting over all pairs of $k$ tuples of nodes $((u_1,...,u_k),(v_1,...,v_k)) \in (\prod_{i=1}^k \gV_{c_{(L,i)}}) \times (\prod_{i=1}^k \gV_{c_{(L,i)}})$ that both have $L$-GWL-1 values $(c_{(L,1)},...,c_{(L,k)})$ where there is atleast one $i \in \{1,..,k\}$ where $u_i$ and $v_i$ belong to different sized connected components $s_i, s_i' \in \gS_{c_{(L,i)}}$ with $s_i \neq s_i'$. We have:
\begin{subequations}
\begin{equation}
    Y \geq 
    \frac{1}{k!}[\sum_{\substack{((s_i)_{i=1}^k,(s_i')_{i=1}^k) \in [(\prod_{i=1}^k \gS_{c_{(L,i)}}^L) ]^2\\ : (s_i)_{i=1}^k \neq (s_i')_{i=1}^k }} \prod_{
    i=1}^k|V_{(c_{(L,i)}),s_i}^L||V_{(c_{(L,i)}),s_i'}^L|]
    \end{equation}
    \begin{equation}
    =
    \frac{1}{k!}
    [
    \prod_{
    i=1}^k (\sum_{s_i \in \gS_{c_i}^L} |V_{(c_{(L,i)}),s_i}^L|)^2 - \sum_{(s_i)_{i=1}^k \in \prod_{i=1}^k \gS_{(c_{(L,i)})}^L} (\prod_{
    i=1}^k |V_{(c_{(L,i)}),s_i}^L|^2)]
\end{equation}
\end{subequations}
Using the fact that for each $i \in \{1,...,k\}$, $|\gV_{c_{(L,i)}}|=\sum_{s_i \in \gS_{c_{(L,i)}}} |V_{(c_{(L,i)}),s_i}|$ and 
 by assumption $|V_{(c_{(L,i)}),s_i}|= O(\frac{n^{\epsilon}}{\log^{\frac{1}{2k}} n})$ for any $s_i \in \gS_{c_{(L,i)}}$, thus we have:
\begin{equation}
    Y \geq \omega(n^{2k\epsilon}) - O(|S|^k \frac{n^{2k\epsilon}}{\log n})]= \omega(n^{2k\epsilon})
\end{equation} 
\end{proof}

\textbf{Example: } A simple example of a hypergraph that statisfies the conditions of Theorem \ref{thm: appendix-count} is a union of many disconnected hypergraphs $\gH= \cup_i \gH_i= (\gV,\gE)$ with $|\gV_{\gH_i}| \leq S$ where $S < \infty$ is a small constant independent of $n= |\gV| \geq S$. Such a hypergraph could be a social network where the nodes are user instances and the hyperedges are private groups. The disconnected hypergraphs represent disconnected communities where a user can only belong to a single community.

Even though Theorem \ref{thm: appendix-count} does not depend on the cardinality of a set of disconnected hypergraphs $\gH_i$ indistinguishable by GWL-1, due to the disconnected nature of $\gH$ and the small size of its components, there is a large chance of obtaining a large number of such components. We give a very rough estimate of this in the following:

Assuming that $\gH = \cup_i \gH_i$ has each $\gH_i$ i.i.d. sampled from a distribution of $s$-uniform $d$-regular hypergraphs of $n$ nodes, denoted $\gR_{n,s,d}$: $P(\gH_i= \gR_{n,s,d})$.  If the parameters $(n,s,d)$ for $\gR_{n,s,d}$ satisfy $nd=|\gE|s$ where $|\gE| \in \mathbb{Z}^+$, then a  well defined hypergraph is formed. This distribution can be factorized as follows: 
\begin{equation}
    P(\gH_i= \gR_{n,s,d})=P(deg(v)=d| r=s, |\gV|=n, nd \mod s \equiv 0)P(r=s| |\gV|=n)P(|\gV|=n)
\end{equation}
where:
\begin{subequations}
\begin{equation}
    P(deg(v)=d| r=s, |\gV|=n, nd \mod s \equiv 0)=U(\{d: nd \mod s\equiv 0\})\geq \frac{1}{{n \choose s}}, \forall v \in \gV_{\gH_i}
\end{equation}
\begin{equation}
    P(r=s| |\gV|=n)=\frac{1}{n}, P(|\gV|=n)=\frac{1}{S}\text{ for } n\leq S
\end{equation}
\end{subequations}
and we have that
\begin{equation}
\begin{split}
    P( \gH_1 \text{ is neighborhood regular}) \geq P( \gH_1 \text{ is a cycle graph}) \geq \frac{1}{S}^{S+2}
    \end{split}
\end{equation}
and that: 
\begin{subequations}
\begin{equation}
    P( h(\gH_i,\gH) =h(\gH_1,\gH), \gH_1 \text{ is neighborhood regular})
    \end{equation}
    \begin{equation}
       \geq P( h(\gH_i,\gH) = h(\gH_1,\gH), \gH_1 \text{ is a cycle graph of length } S ) 
    \end{equation} 
    \begin{equation}
       \geq P( \gH_i \text{ is a cycle graph of length } S) P(\gH_1 \text{ is a cycle graph of length } S ) 
    \end{equation} 
    \begin{equation}
        \geq \frac{1}{S}^{2(S+2)} , \forall i >1
    \end{equation}
\end{subequations}
where a cycle graph is a 2-uniform hypergraph where each node has degree $2$. 

Since we sample each $\gH_i$ i.i.d., the indicator random variable is a Bernoulli random variable. By Hoeffding's inequality on the sum of Bernoulli random variables, we get:
\begin{equation}
    Pr(\sum_{i=1}^m \mathbf{1}[\gH_i \text{ is neighborhood regular and } h(\gH_i,\gH)=h(\gH_1,\gH)]\geq (\frac{m}{S^{2S+4}}+t))\leq e^{\frac{-2t^2}{m}}
\end{equation}
where $\sum_{i=1}^m |\gV_{\gH_i}|=|\gV|$. This means that with large number of samples $m$, or large $n$, it is possible for the number of regular hypergraphs $\gH_i$ equivalent up to GWL-1 to be atleast of order $\Omega(\sqrt{m})+\frac{m}{S^{2S+4}}$ with high probability. This is one of the simplest examples that demonstrates Theorem \ref{thm: appendix-count}.

For the following proof, we will denote $\cong_{\gH}$ as a node or hypergraph automorphism with respect to a hypergraph $\gH$.
\begin{theorem} [Invariance and Expressivity]
If $L=\infty$, GWL-1 enhanced by Algorithm \ref{alg: cellattach} is still invariant to node isomorphism classes of $\gH$ and can be strictly more expressive than GWL-1 to determine node isomorphism classes.
\end{theorem}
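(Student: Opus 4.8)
The plan is to prove the two assertions separately, exploiting the fact that Algorithm \ref{alg: appendix-cellattach} depends on its input only through the permutation-equivariant GWL-1 node values and the combinatorial connected-component structure of the induced subhypergraphs $\gH_c$, both of which are preserved by $Aut(\gH)$. For \textbf{invariance}, I would first show that every $\pi \in Aut(\gH) = Stab(H)$ is automatically an automorphism of the augmented hypergraph $\hat\gH = (\gV, \gE \cup \gR_V)$. Since the GWL-1 updates are permutation-equivariant, a stabilizer $\pi$ preserves GWL-1 values, i.e. $h_{\pi(v)}(H) = h_v(H)$ for all $v$, so it fixes each class setwise, $\pi(\gV_c) = \gV_c$. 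Because $\pi$ also preserves $\gE$, it restricts to an automorphism of $\gH_c = (\gV_c, \gE_c)$ and therefore permutes its connected components $\{\gR_{c,i}\}_i$ among themselves, sending each $\gR_{c,i}$ to a component $\gR_{c,j}$ with $|\gV_{\gR_{c,i}}| = |\gV_{\gR_{c,j}}|$. Hence $\pi$ carries the covering hyperedge $\gV_{\gR_{c,i}} \in \gR_V$ to the covering hyperedge $\gV_{\gR_{c,j}} \in \gR_V$, so it preserves $\hat\gE = \gE \cup \gR_V$ and lies in $Aut(\hat\gH)$, giving $Aut(\gH) \subseteq Aut(\hat\gH)$. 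Invariance then follows at once: if $u \cong_{\gH} v$ via $\pi \in Aut(\gH) \subseteq Aut(\hat\gH)$, then $h_u^\infty(\hat{H}) = h_v^\infty(\hat{H})$ by invariance of GWL-1 under $Aut(\hat\gH)$ (Proposition \ref{prop: appendix-kequivariant-implies-kinvar} with $k=1$), so no original node isomorphism class is ever split.

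For \textbf{strict expressivity}, I would exhibit the witness from Figure \ref{fig: appendix-cycledist}: the disjoint union $\mathcal{C} = C_4^3 \sqcup C_5^3$ of two $3$-regular $3$-uniform hypergraphs on $4$ and $5$ nodes. By Observation \ref{prop: appendix-symmhierarchy} both components are neighborhood-regular, so by Theorem \ref{thm: appendix-gwlunivcover} their rooted $2$-colored universal covers agree and GWL-1 assigns a single value $c$ to every node; it therefore cannot distinguish any $u \in C_4^3$ from any $v \in C_5^3$. Yet $u \not\cong_{\mathcal{C}} v$, since every automorphism preserves connected components and the two components have different cardinalities ($4 \neq 5$), so none can carry $u$ to $v$. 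Running Algorithm \ref{alg: appendix-cellattach} at $L=\infty$ yields the single class $c$ whose two components, each of size $\geq 3$, are selected (they are GWL-1 symmetric by Proposition \ref{lemma: appendix-regularsubhypergraph}), and covering hyperedges of sizes $4$ and $5$ are attached. By Lemma \ref{lemma: appendix-cellinvar} the post-augmentation GWL-1 value of a node is now determined up to the size of the component it belongs to, so $h_u^\infty(\hat{H}) \neq h_v^\infty(\hat{H})$. Thus the enhanced test separates a pair that plain GWL-1 cannot, while the first part guarantees it never collapses or wrongly splits genuinely isomorphic nodes; this establishes that it is strictly more expressive.

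The main obstacle I anticipate is the equivariance bookkeeping in the invariance step: one must verify carefully that an automorphism sends a discovered component not merely to \emph{some} component but to one of \emph{equal size}, since it is exactly this size-preservation that guarantees covering hyperedges are mapped to covering hyperedges (rather than to arbitrary vertex subsets) and hence that $\hat\gE$ is preserved. A secondary point to confirm is that the augmentation introduces a genuinely new label instead of accidentally merging $\gV_{\gR_{c,i}}$ with an existing hyperedge of matching degree profile; this is controlled by the degree-signature guard $\{\,deg(v): v \in \gV_{\gR_{c_L,i}}\,\} \notin E_{deg}$ in Algorithm \ref{alg: appendix-cellattach}, which is itself $Aut(\gH)$-invariant (automorphisms preserve $deg_{\gH}$) and therefore does not disturb the equivariance argument.
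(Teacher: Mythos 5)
Your proof is correct, and its invariance half takes a genuinely different route from the paper's. You establish the single clean statement $Aut(\gH)\subseteq Aut(\hat{\gH})$: a stabilizer $\pi$ preserves converged GWL-1 values by permutation equivariance, hence fixes each class $\gV_c$ setwise, hence restricts to an automorphism of each induced $\gH_c$, hence permutes the discovered connected components among components of equal size (and equal degree signature), and therefore maps $\gE\cup\gR_V$ onto itself; one application of equivariance-implies-invariance (Proposition~\ref{prop: appendix-simpinvar} with $k=1$), now on $\hat{H}$, finishes the argument. The paper instead argues node-by-node on $\hat{\gH}$, splitting into two cases --- nodes inside a discovered component (a pigeonhole/contradiction argument on component sizes) and nodes outside all components (an explicit induced isomorphism of rooted universal covers via Theorem~\ref{thm: appendix-gwlunivcover}) --- which is longer but yields finer information in exchange: it identifies the new classes $c_s$ explicitly and shows how they depend on the component size $s$, facts the paper reuses in neighboring lemmas. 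Your argument is more modular, works verbatim for finite $L$ as well as $L=\infty$, and avoids the covering-space machinery entirely. The expressivity half is identical to the paper's: the witness $C_4^3\sqcup C_5^3$ of Figure~\ref{fig: appendix-cycledist}, with the same non-isomorphism argument via preservation of connected components.

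One caution about your citing the appendix variant, Algorithm~\ref{alg: appendix-cellattach}: its degree-signature guard could in fact kill your witness. In $C_4^3\sqcup C_5^3$ every node has degree $3$, so under a \emph{set} reading both components have signature $\{3\}$, which equals $\{deg(v): v\in e\}=\{3\}$ for every existing hyperedge; the guard would then reject both components and nothing would be attached. Only under a \emph{multiset} reading (where a signature with $4$ or $5$ entries cannot equal one with $3$ entries) does the selection go through. The theorem as stated refers to the main-text Algorithm~\ref{alg: cellattach}, which has no guard, so your witness is safe there --- but you should either argue with that version, as the paper does, or commit explicitly to the multiset interpretation of $E_{deg}$.
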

\begin{proof}
$ $\newline
    \textbf{1. Expressivity}:
    
    Let $L \in \mathbb{Z}^+$ be arbitrary. We first prove that $L$-GWL-1 enhanced by Algorithm \ref{alg: cellattach} is strictly more expressive for node distinguishing than $L$-GWL-1 on some hypergraph(s).
    Let $C_{4}^3$ and $C_{5}^3$ be two $3$-regular hypergraphs from Figure \ref{fig: cycledist}. Let $\gH= C_{4}^3 \bigsqcup C_{5}^3$ be the disjoint union of the two regular hypergraphs. $L$ iterations of GWL-1 will assign the same node class to all of $\gV_{\gH}$. These two subhypergraphs can be distinguished by $L$-GWL-1 for $L\geq 1$ after editing the hypergraph $\gH$ from the output of Algorithm \ref{alg: cellattach} and becoming $\hat{\gH}= \hat{C}_{4}^3 \cup \hat{C}_{5}^3$. This is all shown in Figure \ref{fig: cycledist}. Since $L$ was arbitrary, this is true for $L=\infty$.

    \textbf{2. Invariance}: 
    
    For any hypergraph $\gH$, let $\hat{\gH}= (\hat{\gV},\hat{\gE})$ be $\gH$ modified by the output of Algorithm \ref{alg: cellattach} by adding hyperedges to $\gV_{\gR_{c,i}}$. GWL-1 remains invariant to node isomorphism classes of $\gH$ on $\hat{\gH}$. 

    \textbf{a. Case 1} (node $u \in \gV$ has its class $c$ changed to class $c_s$): 
    
    Let $L \in \mathbb{Z}^+$ be arbitrary. For any node $u$ with $L$-GWL-1 class $c$ changed to $c_s$ in $\hat{\gH}$, if $u \cong_{\gH} v$ for any $v \in \gV$, then the GWL-1 class of $v$ must also be $c_s$. In otherwords, both $u$ and $v$ belong to $s$-sized connected components in $\gH_c$  We prove this by contradiction. 
    
    Say $u$ belong to a $L$-GWL-1 symmetric induced subhypergraph $S$ with $|\gV_{S}|=s$. 
    
    \textbf{i. } Say $v$ is originally of $L$-GWL-1 class $c$ and changes to $L$-GWL-1 $c_{s'}$ for $s'< s$ on $\hat{\gH}$, WLOG. 
    
    If this is the case then $v$ belongs to a $L$-GWL-1 symmetric induced subhypergraph $S'$ with $|\gV_{S'}|=s'$. Since there is a $\pi \in Aut(\gH)$ with $\pi(u)=v$ and since $s'<s$, by the pigeonhole principle some node $w \in \gV_{S}$ must have $\pi(w)\notin \gV_{S'}$. Since $S$ and $S'$ are maximally connected, $\pi(w)$ cannot share the same $L$-GWL-1 class as $w$. Thus, it must be that $(\tilde{B}_{\gV,\gE}^{2L})_{\widetilde{\pi(w)}} \not\cong_c (\tilde{B}_{\gV,\gE}^{2L})_{\tilde{w}}$ where $\tilde{w}, \widetilde{\pi(w)}$ are the lifts of $w, \pi(w)$ by universal covering map $p_{\gB_{\gV,\gE}}$. 
    However $w$ and $\pi(w)$ both belong to $L$-GWL-1 class $c$ in $\gH$, meaning $(\tilde{B}_{\gV,\gE}^{2L})_{\widetilde{\pi(w)}} \cong_c (\tilde{B}_{\gV,\gE}^{2L})_{\tilde{w}}$, contradiction.
    
    \textbf{ii.}  Say node $v \in \gV$ has its class $c$ unchanged.
    
    The argument for when $v$ does not change its class $c$ after the algorithm, follows by noticing that since $c$ is the GWL-1 node class of $u$, $c_s$ is the GWL-1 node class of $v$ and $c\neq c_s$. Thus we must have $u\not \cong_{\gH} v$ once again by the contrapositive of Theorem \ref{thm: appendix-gwlunivcover}. This also gives a contradiciton.

    Since $L$ was arbitrary, the contradiction must be true for $L= \infty$.

    \textbf{b. Case 2}  (node $u \in \gV$ has its class $c$ unchanged):
    
    Now assume $L= \infty$. Let $p_{\gB_{\gV,\gE}}$ be the universal covering map of $\gB_{\gV,\gE}$. For all other nodes $u'\cong_{\gH} v'$ for $u',v' \in \gV$ unaffected by the replacement, meaning they do not belong to any $\gR_{c,i}$ discovered by the algorithm, if the rooted universal covering tree rooted at node $\tilde{u}'$ connects to any node $\tilde{w}$ in $l$ hops in $(\tilde{\gB}_{\gV,\gE}^l)_{\tilde{u}'}$ where $p_{\gB_{\gV,\gE}}(\tilde{u}')=u', p_{\gB_{\gV,\gE}}(\tilde{w})=w$ and where $w$ has any class $c$ in $\gH$, then $\tilde{v}'$ must also connect to a node $\tilde{z}$ in $l$ hops in $(\tilde{\gB}_{\gV,\gE}^l)_{\tilde{u}'}$ where $p_{\gB_{\gV,\gE}}(\tilde{z})=z$ and $w \cong_{\gH} z$. Furthermore, if $w$ becomes class $c_s$ in $\gH$ due to the algorithm, then $z$ also becomes class $c_s$ in $\hat{\gH}$. This will follow by the previous result on isomorphic $w$ and $z$ both of class $c$ with $w$ becoming class $c_s$ in $\hat{\gH}$.
    
    Since $L=\infty$:
    For any $w \in \gV$ connected by some path of hyperedges to $u' \in \gV$, consider the smallest $l$ for which $(\tilde{B}_{\gV,\gE}^l)_{\tilde{u}'}$, the $l$-hop universal covering tree of $\gH$ rooted at $\tilde{u}'$, the lift of $u'$, contains the lifted $\tilde{w}$ of $w \in \gV$ with GWL-1 node class $c$ at layer $l$. 
    Since $u'\cong_{\gH}v'$ by $\pi$. We can use $\pi$ to find some $z=\pi(w)$.
    
    We claim that $\tilde{z}$ is $l$ hops away from $\tilde{v}'$. 
    Since $u' \cong_{\gH} v'$ due to some $\pi \in Aut(\gH)$ with $\pi(u')=v'$, using Proposition \ref{prop: appendix-simpinvar} for singleton nodes and by Theorem \ref{thm: appendix-gwlunivcover} we must have $(\tilde{\gB}_{\gV,\gE}^l)_{\tilde{u}'}\cong_c (\tilde{\gB}_{\gV,\gE}^l)_{\tilde{v}'}$ as isomorphic rooted universal covering trees due to an induced isomorphism $\tilde{\pi}$ of $\pi$ where we define an induced isomorphism $\tilde{\pi}: (\tilde{\gB}_{\gV,\gE})_{\tilde{u}'} \rightarrow (\tilde{\gB}_{\gV,\gE})_{\tilde{v}'}$ between rooted universal covers $(\tilde{\gB}_{\gV,\gE})_{\tilde{u}'}$ and $(\tilde{\gB}_{\gV,\gE})_{\tilde{v}'}$ for $\tilde{u}',\tilde{v}' \in \gV(\tilde{\gB}_{\gV,\gE})$ as $\tilde{\pi}(\tilde{a})=\tilde{b}$ if $\pi(a)=b$ $\forall a,b \in \gV(\gB_{\gV,\gE})$ connected to $u'$ and $v'$ respectively and  $p_{\gB_{\gV,\gE}}(\tilde{a})=a$, $p_{\gB_{\gV,\gE}}(\tilde{b})=b$. Since $l$ is the shortest path distance from $\tilde{u}'$ to $\tilde{w}$, there must exist some shortest (as defined by the path length in $\gB_{\gV,\gE}$) path $P$ of hyperedges from $u'$ to $w$ with no cycles. Using $\pi$, we must map $P$ to another acyclic shortest path of the same length from $v'$ to $z$. This path correponds to a $l$ length shortest path from $\tilde{v}'$ to $\tilde{z}$ in $(\tilde{\gB}_{\gV,\gE})_{\tilde{v}'}$. 
    

    If $w$ has GWL-1 class $c$ in $\gH$ that doesn't become affected by the algorithm, then $z$ also has GWL-1 class $c$ in $\gH$ since $w \cong_{\gH} z$.
    
    If $w$ has class $c$ and becomes $c_s$ in $\hat{\gH}$, by the previous result, since $w \cong_{\gH} z$ we must have the GWL-1 classes $c'$ and $c''$ of $w$ and $z$ in $\hat{\gH}$ be both equal to $c_s$. 

    The node $w$ connected to $u'$ was arbitrary and so both $\tilde{w}$ and the isomorphism induced $\tilde{z}$ are $l$ hops away from $\tilde{u}'$ and $\tilde{v}'$ respectively, with the same GWL-1 class $c'$ in $\hat{\gH}$, thus $(\tilde{\gB}_{\hat{\gV},\hat{\gE}})_{\tilde{u}'} \cong_c (\tilde{\gB}_{\hat{\gV},\hat{\gE}})_{\tilde{v}'}$.
    
    We have thus shown, if $u\cong_{\gH} v$ for $u,v \in \gH$, then in $\hat{\gH}$ we have $h_u^L(\hat{H})= h_v^L(\hat{H})$ using the duality between universal covers and GWL-1 from Theorem \ref{thm: appendix-gwlunivcover} and Proposition \ref{prop: multi-hypergraph-equiv}.
\end{proof}
Here we redefine the symmetry group of the $L$-GWL-1 node representation map as in the main paper:
\begin{equation}\label{eq: sym-hatH_L}
    Sym(h^L(\hat{H}_L)) \triangleq \bigcap_{\hat{H}_L' \sim P(\hat{H}_L)} Sym(h^L(\hat{H}_L'))
\end{equation}
\begin{proposition}
    The multi-hypergraph $\hat{\gH}_L$ breaks the symmetry of the $L$-GWL-1 view of the hypergraph $\gH$:
     \begin{equation}
         Sym(h^L(\hat{H}_L)) \subseteq Aut_c(\tilde{\gB}_{\gV,\gE}^{2L}), \forall L \geq 1
     \end{equation}
\end{proposition}
\begin{proof}
By Equation \ref{eq: sym-hatH_L} we have that 
    $Sym(h^L(\hat{H}_L)) = \bigcap_{\hat{H}_L' \sim P(\hat{H}_L)} Sym(h^L(\hat{H}_L'))$. Since the original incidence matrix $H$ belongs to $supp(P(\hat{H}_L))$, we must have $\bigcap_{\hat{H}_L' \sim P(\hat{H}_L)} Sym(h^L(\hat{H}_L')) \subseteq Sym(h^L({H}))$. Since $Sym(h^L(H)) \cong Aut_c(\tilde{\gB}_{\gV,\gE}^{2L})$, we thus have:
    \begin{equation}
        Sym(h^L(\hat{H}_L)) = \bigcap_{\hat{H}_L' \sim P(\hat{H}_L)} Sym(h^L(\hat{H}_L')) \subseteq Sym(h^L(H)) \cong Aut_c(\tilde{\gB}_{\gV,\gE}^{2L}), \forall L \geq 1
    \end{equation}
    which proves the symmetry breaking statement.
\end{proof}
\begin{proposition}[Complexity]\label{prop: appendix-complexity} Let $H$ be the star expansion matrix of $\gH$. Algorithm \ref{alg: cellattach} runs in time $O(L \cdot nnz(H)+ (n+m))$, the size of the input hypergraph when viewing $L$ as constant, where $n$ is the number of nodes, $nnz(H)= vol(\gV)\triangleq \sum_{v \in \gV}deg(v)$ and  $m$ is the number of hyperedges.
\end{proposition}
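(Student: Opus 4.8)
The plan is to bound the running time phase by phase, exploiting the fact that the $L$-GWL-1 classes partition the vertex set. Algorithm~\ref{alg: cellattach} has three parts: (i) computing the $L$-GWL-1 labels $U_L$ together with the label set $\gG_L$; (ii) building the star expansion bipartite graph $\gB_{\gV_{\gH},\gE_{\gH}}$ and, for each label $c_L$, the induced subhypergraph $\gH_{c_L}=(\gV_{c_L},\gE_{c_L})$; and (iii) extracting connected components of each $\gH_{c_L}$ and collecting those of size at least $3$ (subject to the degree-multiset test). First I would observe that the single quantity controlling everything is $nnz(H)=vol(\gV)=\sum_{v\in\gV}deg(v)$, the number of $(v,e)$ incidences, which equals the number of edges of $\gB_{\gV_{\gH},\gE_{\gH}}$; the $n+m$ term only accounts for isolated bookkeeping over the $n$ vertices and $m$ hyperedges that may carry no incidences.

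For part (i), each GWL-1 iteration consists of one hyperedge-update pass and one node-update pass, and each pass reads every incidence once. The only nontrivial point is that the multiset refinement steps $f_e^{i+1}\gets\{\{(f_e^i,h_v^i)\}\}_{v\in e}$ and $h_v^{i+1}\gets\{\{(h_v^i,f_e^{i+1})\}\}_{v\in e}$ must be turned into fresh integer labels in time linear in the incidences. I would do this by the standard color refinement trick: for each vertex (resp.\ hyperedge) form the sorted list of incident labels, then assign canonical integer IDs to these lists by a radix/counting sort over all lists simultaneously. Because the total length of all lists in one pass is exactly $nnz(H)$, a pass costs $O(nnz(H))$, so the $L$ iterations cost $O(L\cdot nnz(H))$. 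This is the step I expect to be the main obstacle to state cleanly, since naively hashing multisets of unbounded objects is not obviously linear; the argument hinges on relabelling to small integers each round so that the sorts stay linear.

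For parts (ii) and (iii) the key is the partition bound. The sets $\{\gV_{c_L}\}_{c_L\in\gG_L}$ partition $\gV$, so $\sum_{c_L}|\gV_{c_L}|=n$; moreover a hyperedge $e$ lies in $\gE_{c_L}$ only if all its vertices carry the label $c_L$, hence $e$ contributes to at most one bucket and $\sum_{c_L}|\gE_{c_L}|\le m$, with the incidences of all the $\gH_{c_L}$ summing to at most $nnz(H)$. Thus I would bucket vertices and hyperedges by label in a single $O(n+nnz(H))$ sweep (for each hyperedge, scan its vertices' labels; it enters a bucket only if they all agree), then run connected components (BFS/DFS or union-find on each $\gB_{\gV_{\gH_{c_L}},\gE_{\gH_{c_L}}}$) in total time $O(n+m+nnz(H))$ across all labels by the partition bound. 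The size test $|\gV_{\gR_{c_L,i}}|\ge3$ and the degree-multiset membership test against $E_{deg}$ (itself precomputable in $O(nnz(H))$ by one pass over hyperedges, storing sorted degree lists in a hash set) are charged to the same linear budget.

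Summing the three parts gives $O(L\cdot nnz(H))+O(n+m+nnz(H))=O(L\cdot nnz(H)+(n+m))$, since $nnz(H)\le L\cdot nnz(H)$ for $L\ge1$, which is the claimed bound. The whole argument is essentially an accounting exercise once the linear-time relabelling in (i) and the partition bound in (ii)--(iii) are in place; no deeper combinatorics is required.
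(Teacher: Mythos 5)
Your proposal is correct and follows essentially the same approach as the paper's proof: a phase-by-phase accounting that charges GWL-1 to $O(L\cdot nnz(H))$, uses the fact that the classes $\{\gV_{c_L}\}$ partition $\gV$ (so subgraph extraction and connected components sum to $O(n+m+nnz(H))$ across all classes), and handles the $E_{deg}$ hashset in a linear pass. If anything, you are more careful than the paper in two spots: you justify that each color-refinement round can be implemented in $O(nnz(H))$ via canonical integer relabelling (the paper simply asserts this cost), and you correctly write $\sum_{c_L}|\gE_{c_L}|\le m$ where the paper claims equality even though hyperedges with mixed-class vertices fall into no bucket.
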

\begin{proof}
Computing $E_{deg}$, which requires computing the degrees of all the nodes in each hyperedge takes time $O(nnz(H))$. The set $E_{deg}$ can be stored as a hashset datastructure. Constructing this takes $O(nnz(H))$.
Computing GWL-1 takes $O(L\cdot nnz(H))$ time assuming a constant $L$ number of iterations. Constructing the bipartite graphs for $H$ takes time $O(nnz(H)+n+m)$ since it is an information preserving data structure change. Define for each $c \in C$, $n_c:=|\gV_c|, m_c:=|\gE_c|$. Since the classes partition $\gV$, we must have:
\begin{equation}
    n= \sum_{c \in C} n_c; m= \sum_{c \in C} m_c; nnz(H)= \sum_{c \in C} nnz(H_c)
\end{equation}
where $H_c$ is the star expansion matrix of $\gH_c$.
Extracting the subgraphs can be implemented as a masking operation on the nodes taking time $O(n_c)$ to form $\gV_c$ followed by searching over the neighbors of $\gV_c$ in time $O(m_c)$ to construct $\gE_c$. Computing the connected components for $\gH_c$ for a predicted node class $c$ takes time 
$O(n_c+m_c+nnz(H_c))$. Iterating over each connected component for a given $c$ and extracting their nodes and hyperedges takes time $O(n_{c_i}+m_{c_i})$ where $n_c= \sum_i n_{c_i}, m_c= \sum_i m_{c_i}$. Checking that a connected component has size at least $3$ takes $O(1)$ time. Computing the degree on $\gH$ for all nodes in the connected component takes time $O( n_{c_i})$ since computing degree takes $O(1)$ time. Checking that the set of node degrees of the connected component doesn't belong to $E_{deg}$ can be implemented as a check that the hash of the set of degrees is not in the hashset datastructure for $E_{deg}$. 

Adding up all the time complexities, we get the total complexity is:
\begin{subequations}
\begin{equation}
O(nnz(H))+O(nnz(H)+n+m)+\sum_{c \in C} (O(n_c+m_c+nnz(H_c))+\sum_{\text{conn. comp. $i$ of $\gH_c$}} O(n_{c_i}+m_{c_i}))
\end{equation}
\begin{equation}
= O(nnz(H)+n+m)+\sum_{c \in C} (O(n_c+m_c+nnz(H_c))+ O(n_c+m_c))
\end{equation}
\begin{equation}
= O(nnz(H)+n+m)
\end{equation}
\end{subequations}
\end{proof}
\begin{proposition}
    \label{prop: appendix-maintain-stationary}
    For a connected hypergraph $\gH$, let $(\gR_{V},\gR_E)$ be the output of Algorithm \ref{alg: cellattach} on $\gH$. 
    Then there are Bernoulli probabilities $p,q_i$ for $i=1,...,|\gR_V|$ for attaching a covering hyperedge so that $\hat{\pi}$ is an unbiased estimator of $\pi$. 
\end{proposition}
\begin{proof}
Let $\gC_{c_L}=\{\gR_{c_L,i}\}_i$ be the maximally connected components induced by the vertices with $L$-GWL-1 values $c_L$. The set of vertex sets $\{\gV(\gR_{c_L,i})\}$ and the set of all hyperedges $\cup_i\{\gE(\gR_{c_L,i})\}$ over all the connected components $\gR_{c_L,i}$ for $i=1,...,\gC_{c_L}$ form the pair $(\gR_V,\gR_E)$.

For a hypergraph random walk on connected $\gH=(\gV,\gE)$, its stationary distribution $\pi$ on $\gV$ is given by the closed form: 
\begin{equation}
    \pi(v)= \frac{deg(v)}{\sum_{u \in \gV} deg(u)}
\end{equation}
for $v \in \gV$. 

Let $\hat{\gH}=(\gV,\hat{\gE})$ be the random multi-hypergraph as determined by $p$ and $q_i$ for $i=1,...,|\gR_V|$. These probabilities determine $\hat{\gH}$ by the following operations on the hypergraph $\gH$: 
\begin{itemize}
    \item Attaching a single hyperedge that covers $\mathcal{V}_{\mathcal{R}_{c_L,i}}$ with probability $q_i$ and not attaching with probability $1-q_i$.
    \item All the hyperedges in $\mathcal{R}_{c_L,i}$ are dropped/kept with probability $p$ and $1-p$ respectively.
\end{itemize}
\textbf{1. Setup: }

Let $deg(v) \triangleq |\{e: e \ni v\}|$ for $v \in \gV(\gH)$ and $deg(\gS) \triangleq \sum_{u \in \gV(\gS)} deg(u)$ for $\gS \subseteq \gH$ a subhypergraph.

Let \begin{equation}
    Bernoulli(p)\triangleq \begin{cases} 
      1 & \text{ prob. } p \\
      0 & \text{ prob. } 1-p 
   \end{cases}
\end{equation}
   and 
  \begin{equation} 
Binom(n,p)\triangleq \sum_{i=1}^n Bernoulli(p)
\end{equation}
Define for each $v \in \gV$, $C(v)$ to be the unique $\gR_{c_L,i}$ where $v \in \gR_{c_L,i}$
This means that we have the following independent random variables:
\begin{subequations}
\begin{equation}
X_e\triangleq Bernoulli(1-p), \forall e \in \gE \text{ (i.i.d. across all }e \in \gE)
\end{equation}
\begin{equation}
X_{C(v)} \triangleq Bernoulli(q_i)
\end{equation}
\end{subequations}
As well as the following constant, depending only on $C(v)$:
\begin{equation}
m_{C(v)} \triangleq \sum_{ u \in \gV \setminus C(v)} deg(u)
\end{equation}
where $C(v) \subseteq \gV, \forall v \in \gV$

Let $\hat{\pi}$ be the stationary distribution of $\hat{H}$. Its expectation $\mathbb{E}[\hat{\pi}]$ can be written as:
\begin{equation}
    \hat{\pi}(v)\triangleq \frac{\sum_{e \ni v: e \in \gE} X_e + X_{C(v)}}{m_{C(v)}+\sum_{e \ni u: u \in C(v), e \in \gE}X_e +X_{C(v)}}
\end{equation}
Letting \begin{subequations}
    \begin{equation}
        N_v \triangleq \sum_{e \ni v: e \in \gE} X_e= Binom(deg(v),1-p)
    \end{equation}
    \begin{equation}
        N\triangleq N_v + X_{C(v)}
    \end{equation}
    \begin{equation}
        D\triangleq m_{C(v)}+\sum_{e \ni u: u \in C(v), e \in \gE}X_e +X_{C(v)}
    \end{equation}
    \begin{equation}
        C\triangleq D-(\sum_{e \ni v: e \in \gE}|e|)N_v -m_{C(v)}= \sum_{e \ni u: v \notin e, u \in C(v), e \in \gE} X_e -m_{C(v)}= Binom(F_v,1-p)-m_{c(v)}
    \end{equation}
    \begin{equation*}
        \text{ where } F_v\triangleq |\{e \ni u: v \notin e, u \in C(v), e \in \gE\}|
    \end{equation*}
\end{subequations}
and so we have: $\hat{\pi}(v)=\frac{N}{D}$

We have the following joint independence $N_v \perp X_{C(v)} \perp C $ due to the fact that each random variable describes disjoint hyperedge sets.

\textbf{2. Computing the Expectation:}

Writing out the expectation with conditioning on the joint distribution $P(D,N_v,X_{C(v)})$, we have:

\begin{subequations}
\begin{equation}
    \mathbb{E}[\hat{\pi}(v)] = \sum_{b=0}^1\sum_{j=0}^{deg(v)}\sum_{k=m_{C(v)}}^{deg(C(v))} \mathbb{E}[\hat{\pi}(v) | D=k, N=j] P(D=k, N_v=j, X_{C(v)}=b)
\end{equation}
\begin{equation}
    = \sum_{b=0}^1 \sum_{j=0}^{deg(v)}\sum_{k=m_{C(v)}}^{deg(C(v))} \frac{1}{k}\mathbb{E}[N| D=k, N_v=j, X_{C(v)}=b] P(D=k, N_v=j, X_{C(v)}=b)
\end{equation}
\begin{equation}
    = \sum_{b=0}^1 \sum_{j=0}^{deg(v)}\sum_{k=m_{C(v)}}^{deg(C(v))} \frac{j+b}{k} P(D=k, N_v=j, X_{C(v)}=b)
\end{equation}
\begin{equation}
    = \sum_{b=0}^1 \sum_{j=0}^{deg(v)}\sum_{k=m_{C(v)}}^{deg(C(v))} \frac{j+b}{k} P(D=k) P(N_v=j) P( X_{C(v)}=b)
\end{equation}
\begin{equation}
    = \sum_{b=0}^1 \sum_{j=0}^{deg(v)}\sum_{k=m_{C(v)}}^{deg(C(v))} \frac{j+b}{k} P(C=k-deg(v)j-m_{C(v)}) P(N_v=j) P( X_{C(v)}=b)
\end{equation}
\begin{equation}
\begin{split}
    = \sum_{b=0}^1 \sum_{j=0}^{deg(v)}\sum_{k=m_{C(v)}}^{deg(C(v))} \frac{j+b}{k} P(Binom(F_v,1-p),1-p)=k-deg(v)j-m_{C(v)}) \\ \cdot P(Binom(deg(v),1-p)=j) \cdot P( Bernoulli(q_i)=b)
    \end{split}
\end{equation}
\begin{equation}
\begin{split}
    = \sum_{b=0}^1 \sum_{j=0}^{deg(v)}\sum_{k=m_{C(v)}}^{deg(C(v))} \frac{j+b}{k} {F_v \choose k-deg(v)j-m_{C(v)}}(\frac{1}{2})^{F_v} \cdot {deg(v) \choose j} (\frac{1}{2})^{deg(v)} \cdot P( Bernoulli(q_i)=b)
\end{split}
\end{equation}
\begin{equation}
\begin{split}
    = \sum_{b=0}^1 \sum_{j=0}^{deg(v)}\sum_{k=m_{C(v)}}^{deg(C(v))} \frac{j+b}{k} {F_v \choose k-deg(v)j-m_{C(v)}}(\frac{1}{2})^{F_v}\\ \cdot {deg(v) \choose j} (\frac{1}{2})^{deg(v)} \cdot P( Bernoulli(q_i)=b)
\end{split}
\end{equation}
\begin{equation}
\begin{split}
    = \sum_{j=0}^{deg(v)}\sum_{k=m_{C(v)}}^{deg(C(v))}  {F_v \choose k-deg(v)j-m_{C(v)}}(\frac{1}{2})^{F_v} \cdot {deg(v) \choose j} (\frac{1}{2})^{deg(v)} [(1-q_i)\frac{j}{k}+q_i\frac{j+1}{k}] 
\end{split}
\end{equation}
\begin{equation}
\begin{split}
    = \sum_{j=0}^{deg(v)}\sum_{k=m_{C(v)}}^{deg(C(v))}  {F_v \choose k-deg(v)j-m_{C(v)}}(1-p)^{F_v-(k-deg(v))}p^{k-deg(v)} \cdot {deg(v) \choose j} (1-p)^{j}p^{deg(v)-j} [\frac{j}{k}+q_i \frac{1}{k}] 
\end{split}
\end{equation}
\begin{equation}
\begin{split}
    = C_1(p)+ q_i C_2(p)
\end{split}
\end{equation}
\end{subequations}

\textbf{3. Pick $p$ and $q_i$:}

We want to find $p$ and $q_i$ so that $\mathbb{E}[\hat{\pi}(v)]=C_1(p)+q_iC_2(p)=\pi(v)$

We know that for a given $p \in [0,1]$, we must have:
\begin{equation}
    q_i= \frac{\pi(v)-C_1(p)}{C_2(p)}
\end{equation}

In order for $q_i \in [0,1]$, must have $\pi(v)\geq C_1(p)$ and $\pi(v) -C_1(p) \leq C_2(p)$.

\textbf{a. Pick $p$ sufficiently large: }

Notice that 
\begin{equation}
    0 \leq C_1(p)\leq O(\mathbb{E}[\frac{1}{Binom(F_v, 1-p)+m_{C(v)}}] \cdot \mathbb{E}[Binom(deg(v),1-p)])=O(\frac{1}{m_{C(v)}}deg(v)(1-p))
\end{equation}
and that 
\begin{equation}\label{eq: C1<C2}
    0 \leq C_1(p)\leq O(C_2(p))
\end{equation}

for $p \in [0,1]$ sufficiently large. This is because 
\begin{equation}
    C_1(p) \leq O(\frac{1}{m_{C(v)}}deg(v)(1-p))
\end{equation} and 
\begin{equation}
    \Omega(\frac{1}{m_{C(v)}}deg(v)(1-p)) \leq C_2(p)
\end{equation}
Piecing these two inequalities together gets the desired inequality \ref{eq: C1<C2}.

We can then pick a $p \in [0,1]$ even larger than the previous $p$ so that for the $C'>0$ which gives $C_1(p) \leq \frac{C'}{m_{C(v)}}deg(v)(1-p)$, we achieve \begin{equation}
    C_1(p) \leq \frac{C'}{m_{C(v)}}deg(v)(1-p) < \pi(v)= \frac{deg(v)}{m_{C(v)}+\sum_{u \in C(v)}deg(u)}
\end{equation}

We then have that there exists a $s>1$ so that 
\begin{equation}\label{eq: C_1-pi}
    s C_1(p) =\pi(v)
\end{equation}
Using this relationship, we can then prove that for a sufficiently large $p \in [0,1]$, we must have a $q_i \in [0,1]$

\textbf{b. $p \in [0,1]$ sufficiently large implies $q_i \geq 0$}:

We thus have $q_i \geq 0$ since its numerator is nonnegative: 
\begin{equation}
    \pi(v)-C_1(p)= (s-1)C_1(p)\geq 0 \Rightarrow q_i \geq 0
\end{equation}

\textbf{c. $p \in [0,1]$ sufficiently large implies $q_i \leq 1$}:

\begin{equation}
    \pi(v)-C_1(p)= s C_1(p)-C_1(p)= (s-1)C_1(p) \leq C_2(p) \Rightarrow q_i \leq 1
\end{equation}



\end{proof}
\clearpage

\section{Additional Experiments}
\begin{figure}[!h]%
    \centering 
    \begin{subfigure}[t]{0.385\textwidth} 
\includegraphics[width=\columnwidth]{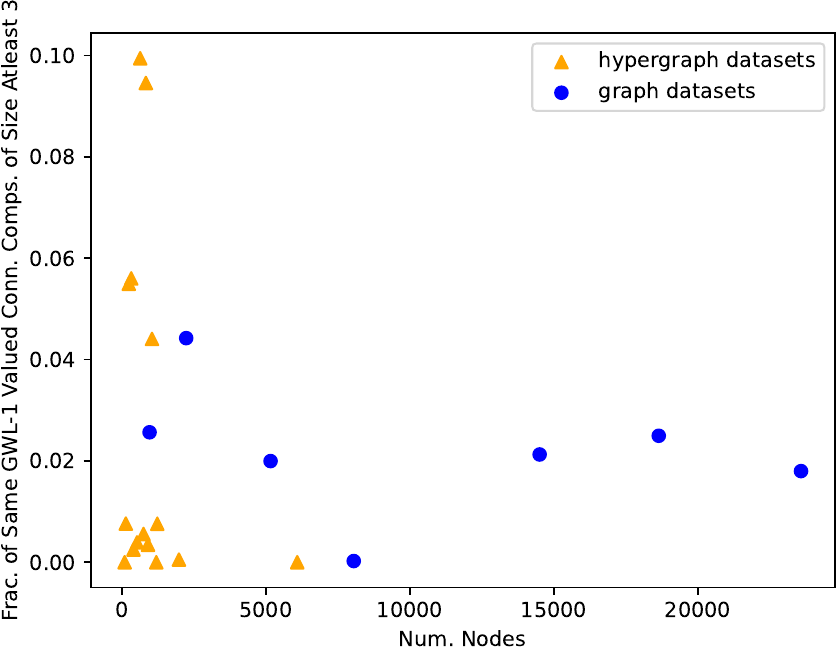}
    \caption{\tparbox{0.98\textwidth}{Fraction of components of size atleast $3$ selected by Algorithm \ref{alg: cellattach}.}}\label{subfig: CCFrac}
    \end{subfigure}%
     \begin{subfigure}[t]{0.37\textwidth}
\includegraphics[width=\columnwidth]{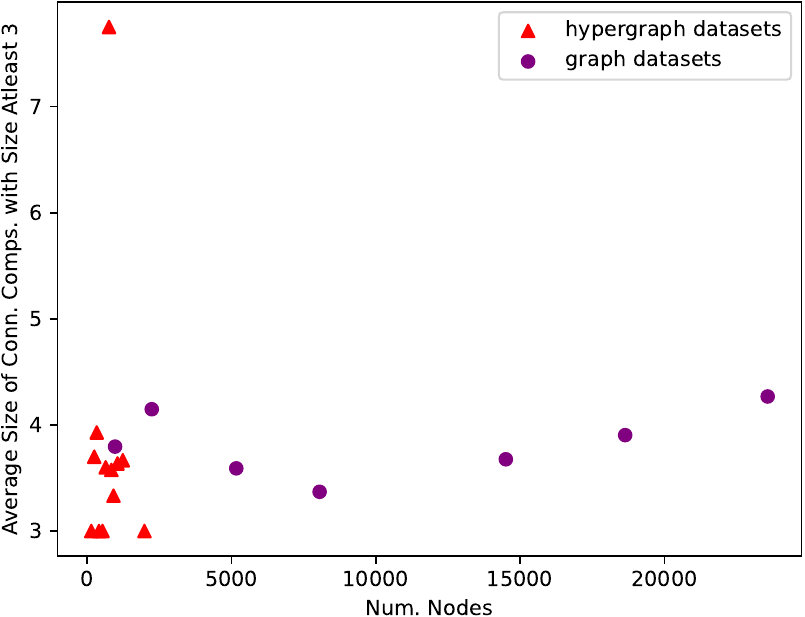}
\caption{\tparbox{0.98\textwidth}{Average size of components of size atleast $3$ from Algorithm \ref{alg: cellattach}.}}\label{subfig: CCSizes}
    \end{subfigure}
    \caption{}
\end{figure}
{\color{black}
As we are primarily concerned with symmetries in a hypergraph, we empirically measure the size and frequency of the components found by the Algorithm for real-world datasets. 
For the real-world datasets listed in Appendix \ref{sec: appendix-datasets-hyperparams}, in Figure \ref{subfig: CCFrac}, we plot the fraction of connected components of the same $L$-GWL-1 value ($L=2$) that are atleast $3$ in cardinality from Algorithm \ref{alg: cellattach} as a function of the number of nodes of the hypergraph. 
For these datasets, it is much more common for the connected components to be of sizes $1$ and $2$. On the right, in Figure \ref{subfig: CCSizes} we show the distribution of the sizes of the connected components found by Algorithm \ref{alg: cellattach}. We see that, on average, the connected components are at least an order of magnitude smaller compared to the total number of nodes. 
Common to both plots, the graph datasets appear to have more nodes and a consistent fraction and size of components, while the hypergraph datasets have higher variance in the fraction of components, which is expected since there are more possibilities for the connections in a hypergraph. 
\clearpage
We show below the critical difference diagrams \cite{demvsar2006statistical}  of the average PR-AUC score ranks (percentiles) for two datasets from Table \ref{tab: main-pr-auc} across all the downstream hyperGNNs. Each plot contains the PR-AUC ranks of the three compared approaches: baseline, $50\%$ hyperedge drop, and Our method. The bars in each plot represents statistical insignificance between the two approaches according to $4$ runs of each experiment. 
\begin{figure}[!h]%
    \centering    \includegraphics[width=0.9\columnwidth]{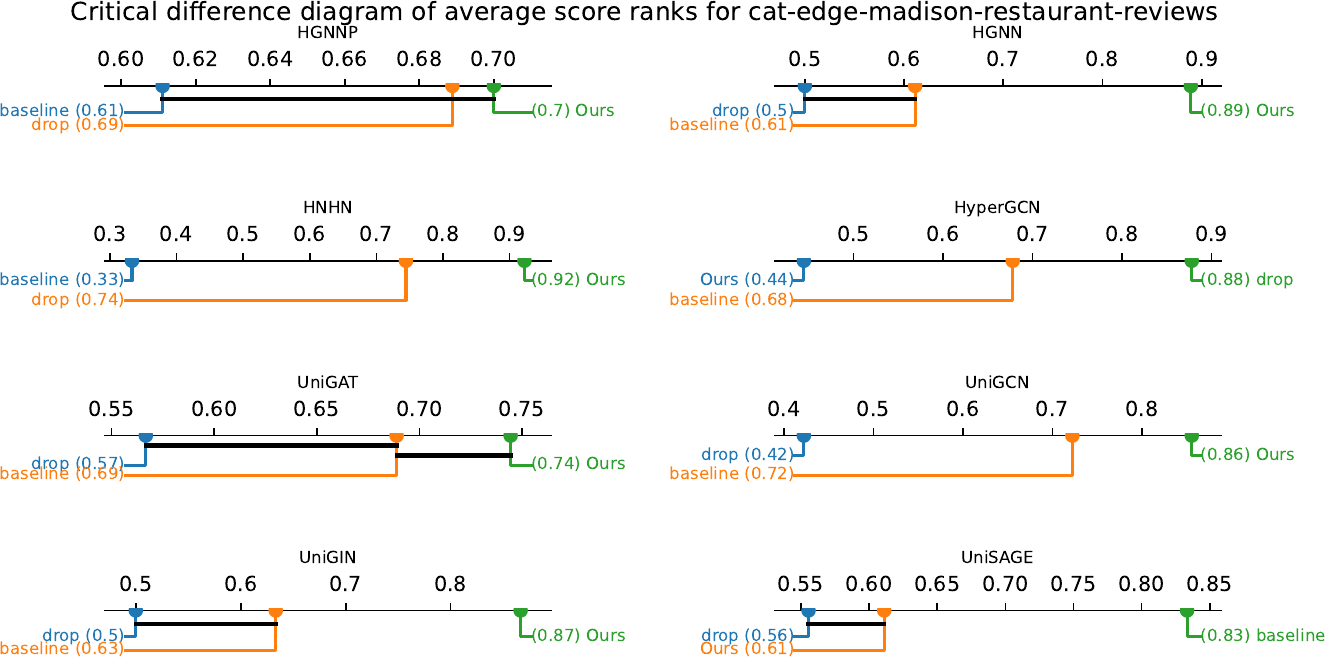}
    \caption{Critical difference diagrams of {\sc cat-edge-madison-restaurant-reviews}}
\end{figure}
\begin{figure}[!h]%
    \centering \includegraphics[width=0.9\columnwidth]{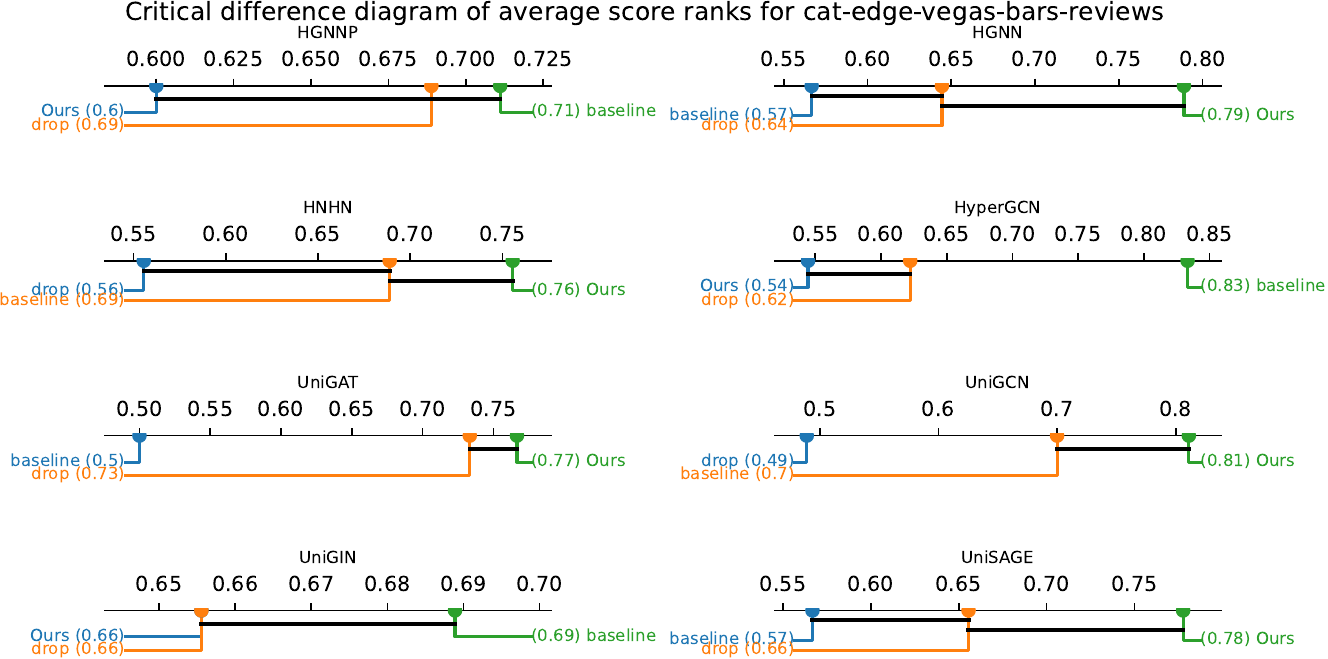}
    \caption{Critical difference diagrams of {\sc cat-edge-vegas-bars-reviews}}
\end{figure}
\clearpage
\subsection{Additional Experiments on Hypergraphs}
In Table \ref{tab: additional-hypergnnscores}, we show the PR-AUC scores for four additional hypergraph datasets, {\sc cat-edge-Brain},  {\sc cat-edge-vegas-bar-reviews}, {\sc WikiPeople-0bi}, and {\sc JF17K} for predicting size $3$ hyperedges. 

\begin{table*}[!h]
\caption{PR-AUC on four other hypergraph datasets. The top average scores for each hyperGNN method, or row, is colored. Red scores denote the top scores in a row. Orange scores denote a two way tie and brown scores denote a threeway tie.}
\begin{subtable}{0.49\columnwidth}
        \centering
        \resizebox{\columnwidth}{!}{
\begin{tabular}{l|l|l|l}
PR-AUC $\uparrow$ & Baseline          & Ours              & Baseln.+edrop     \\ \hline
HGNN              & $ 0.75 \pm  0.01$ & \color{red}{$ 0.79 \pm  0.11$} & $ 0.74 \pm  0.09$ \\
HGNNP             & $ 0.75 \pm  0.05$ & \color{red}{$ 0.78 \pm  0.10$} & $ 0.74 \pm  0.12$ \\ \hline
HNHN              & \color{brown}{$ 0.74 \pm  0.04$} & \color{brown}{$ 0.74 \pm  0.02$} & \color{brown}{$ 0.74 \pm  0.05$} \\
HyperGCN          & \color{red}{$ 0.74 \pm  0.09$} & $ 0.50 \pm  0.07$ & $ 0.50 \pm  0.12$ \\ \hline
UniGAT            & $ 0.73 \pm  0.07$ & \color{orange}{$ 0.81 \pm  0.10$} & \color{orange}{$ 0.81 \pm  0.09$} \\
UniGCN            & $ 0.78 \pm  0.04$ & \color{red}{$ 0.81 \pm  0.09$} & $ 0.71 \pm  0.08$ \\
UniGIN            & \color{brown}{$ 0.74 \pm  0.09$} & \color{brown}{$ 0.74 \pm  0.03$} & \color{brown}{$ 0.74 \pm  0.07$} \\
UniSAGE           & \color{brown}{$ 0.74 \pm  0.03$} & \color{brown}{$ 0.74 \pm  0.12$} & \color{brown}{$ 0.74 \pm  0.01$}
\end{tabular}
}
        \caption{\tiny{\sc cat-edge-Brain}}
        \label{tab: cat-edge-Brain}
     \end{subtable}
     \hfill
    \begin{subtable}{0.49\columnwidth}
        \centering
        \resizebox{\columnwidth}{!}{
\begin{tabular}{l|l|l|l}
PR-AUC $\uparrow$ & Baseline          & Ours              & Baseln.+edrop     \\ \hline
HGNN              & $ 0.95 \pm  0.10$ & \color{red}{$ 0.99 \pm  0.04$} & $ 0.96 \pm  0.09$ \\
HGNNP             & $ 0.95 \pm  0.06$ & \color{orange}{$ 0.96 \pm  0.09$} & \color{orange}{$0.96 \pm  0.08$} \\ \hline
HNHN              & \color{red}{$ 1.00 \pm  0.08$} & $ 0.99 \pm  0.09$ & $ 0.95 \pm  0.10$ \\
HyperGCN          & \color{red}{$ 0.76 \pm  0.03$} & $ 0.67 \pm  0.14$ & $ 0.68 \pm  0.09$ \\ \hline
UniGAT            & $ 0.87 \pm  0.07$ & \color{red}{$ 1.00 \pm  0.09$} & $ 0.99 \pm  0.08$ \\
UniGCN            & \color{red}{$ 0.99 \pm  0.07$} & $ 0.96 \pm  0.09$ & $ 0.92 \pm  0.05$ \\
UniGIN            & \color{red}{$ 0.98 \pm  0.06$} & $ 0.96 \pm  0.08$ & $ 0.95 \pm  0.06$ \\
UniSAGE           & $ 0.94 \pm  0.05$ & \color{red}{$ 0.98 \pm  0.07$} & $ 0.97 \pm  0.07$
\end{tabular}
}
        \caption{\tiny{\sc cat-edge-vegas-bar-reviews}}
        \label{tab: cat-edge-vegas-bar-reviews}
     \end{subtable}
     \begin{subtable}{0.49\columnwidth}
        \centering
        \resizebox{\columnwidth}{!}{
\begin{tabular}{l|l|l|l}
PR-AUC $\uparrow$ & Baseline          & Ours              & Baseln.+edrop     \\ \hline
HGNN             & $ 0.52 \pm  0.01$ & \color{red}{$ 0.57 \pm  0.08$} & $ 0.54 \pm  0.10$ \\
HGNNP              & $ 0.52 \pm  0.03$ & \color{orange}{$ 0.54 \pm  0.07$} & \color{orange}{$ 0.54 \pm  0.06$} \\ \hline 
HNHN              & \color{brown}{$ 0.73 \pm  0.03$} & \color{brown}{$ 0.73 \pm  0.07$} & \color{brown}{$ 0.73 \pm  0.00$} \\
HyperGCN          & {$ 0.54 \pm  0.05$} & \color{red}{$ 0.55 \pm  0.02$} & $ 0.49 \pm  0.10$ \\ \hline
UniGAT            & $ 0.49 \pm  0.09$ & \color{red}{$ 0.54 \pm  0.04$} & {$ 0.53 \pm  0.04$} \\
UniGCN            & $ 0.46 \pm  0.08$ & \color{red}{$ 0.68 \pm  0.08$} & $ 0.51 \pm  0.08$ \\
UniGIN            & \color{brown}{$ 0.73 \pm  0.09$} & \color{brown}{$ 0.73 \pm  0.01$} & \color{brown}{$ 0.73 \pm  0.02$} \\
UniSAGE           & \color{brown}{$ 0.73 \pm  0.06$} & \color{brown}{$ 0.73 \pm  0.02$} & \color{brown}{$ 0.73 \pm  0.08$}
\end{tabular}
}
        \caption{\tiny{\sc WikiPeople-0bi}}
        \label{tab: WikiPeople-0bi}
     \end{subtable}
     \hfill
    \begin{subtable}{0.49\columnwidth}
        \centering
        \resizebox{\columnwidth}{!}{
\begin{tabular}{l|l|l|l}
PR-AUC $\uparrow$ & Baseline          & Ours              & Baseln.+edrop     \\ \hline
HGNN              & $ 0.59 \pm  0.04$ & \color{red}{$ 0.63 \pm  0.04$} & $ 0.45 \pm  0.09$ \\
HGNNP             & \color{red}{$ 0.71 \pm  0.07$} & {$ 0.63 \pm  0.07$} & {$ 0.57 \pm  0.04$} \\ \hline 
HNHN              & \color{brown}{$ 0.73 \pm  0.04$} & \color{brown}{$ 0.73 \pm  0.03$} & \color{brown}{$ 0.73 \pm  0.04$} \\
HyperGCN          & \color{red}{$ 0.59 \pm  0.05$} & $ 0.58 \pm  0.09$ & $ 0.48 \pm  0.01$ \\ \hline 
UniGAT            & \color{orange}{$ 0.61 \pm  0.07$} & \color{orange}{$ 0.61 \pm  0.04$} & $ 0.51 \pm  0.08$ \\ 
UniGCN            & {$ 0.58 \pm  0.00$} & \color{red}{$ 0.60 \pm  0.03$} & $ 0.59 \pm  0.02$ \\ 
UniGIN            & \color{red}{$ 0.80 \pm  0.04$} & $ 0.77 \pm  0.08$ & $ 0.75 \pm  0.05$ \\ 
UniSAGE           & \color{red}{$ 0.79 \pm  0.02$} & {$ 0.77 \pm  0.08$} & $ 0.74 \pm  0.01$
\end{tabular}
}
        \caption{\tiny{\sc JF17K}}
        \label{tab: JF17K}
     \end{subtable}
     \label{tab: additional-hypergnnscores}
     \end{table*}
 \subsection{Experiments on Graph Data}
We show in Tables \ref{tab: johnshopkins55}, \ref{tab: FB15k-237}, \ref{tab: AIFB} the PR-AUC test scores for link prediction on some nonattributed graph datasets. The train-val-test splits are predefined for {\sc FB15k-237} and for the other graph datasets a single graph is deterministically split into 80/5/15 for train/val/test. We remove $10\%$ of the edges in training and let them be positive examples $P_{tr}$ to predict. For validation and test, we remove $50\%$ of the edges from both validation and test to set as the positive examples $P_{val}, P_{te}$ to predict. For train, validation, and test, we sample  $1.2 |P_{tr}|, 1.2 |P_{val}|, 1.2 |P_{te}|$ negative link samples from the links of train, validation and test. Along with hyperGNN architectures we use for the hypergraph experiments, we also compare with standard GNN architectures: APPNP~\cite{gasteiger2018predict}, GAT~\cite{velivckovic2017graph}, GCN2~\cite{chen2020simple}, GCN~\cite{kipf2016semi}, GIN~\cite{xu2018powerful}, and GraphSAGE~\cite{hamilton2017inductive}. For every hyperGNN/GNN architecture, we also apply drop-edge \cite{rong2019dropedge} to the input graph and use this also as baseline. The number of layers of each GNN is set to $5$ and the hidden dimension at $1024$. For APPNP and GCN2, one MLP is used on the initial node positional encodings.  
Since graphs do not have any hyperedges beyond size $2$, graph neural networks fit the inductive bias of the graph data more easily and thus may perform better than hypergraph neural network baselines more often than expected.
   \newline
   \newline
\begin{table*}[!h]
\caption{PR-AUC on graph dataset {\sc johnshopkins55}. Each column is a comparison of the baseline PR-AUC scores against the PR-AUC score for our method (first row) applied to a standard hyperGNN architecture. Red color denotes the highest average score in the column. Orange color denotes a two-way tie in the column, and brown color denotes a three-or-more-way tie in the column.}
        \centering
        \resizebox{\columnwidth}{!}{
\begin{tabular}{l|ll|ll|llll}
PR-AUC $\uparrow$ &
  HGNN &
  HGNNP &
  HNHN &
  HyperGCN &
  UniGAT &
  UniGCN &
  UniGIN &
  UniSAGE \\ \hline
Ours &
  $ 0.71 \pm  0.04$ &
  $ 0.71 \pm    0.09$ &
  $ 0.69 \pm    0.09$ &
  \color{brown}{$ 0.75 \pm    0.14$} &
  \color{red}{$ 0.75 \pm    0.09$} &
  \color{brown}{$ 0.74 \pm    0.09$} &
  $ 0.65 \pm    0.08$ &
  $ 0.65 \pm    0.07$ \\
  \hline
hyperGNN Baseline &
  $ 0.68 \pm  0.00$ &
  $ 0.69 \pm    0.06$ &
  $ 0.67 \pm    0.02$ &
  \color{brown}{$ 0.75 \pm    0.04$} &
  $ 0.74 \pm    0.02$ &
  \color{brown}{$ 0.74 \pm    0.00$} &
  $ 0.65 \pm    0.05$ &
  $ 0.64 \pm    0.08$ \\
hyperGNN Baseln.+edrop &
  $ 0.67 \pm  0.02$ &
  $ 0.70 \pm    0.07$ &
  $ 0.66 \pm    0.00$ &
  \color{brown}{$ 0.75 \pm    0.03$} &
  $ 0.73 \pm    0.08$ &
 \color{brown}{$ 0.74 \pm    0.05$} &
  $ 0.63 \pm    0.01$ &
  $ 0.64 \pm    0.03$ \\
APPNP &
  $ 0.40 \pm  0.03$ &
  $ 0.40 \pm  0.03$ &
  $ 0.40 \pm  0.03$ &
  $ 0.40 \pm  0.03$ &
  $ 0.40 \pm  0.03$ &
  $ 0.40 \pm  0.03$ &
  $ 0.40 \pm  0.03$ &
  $ 0.40 \pm  0.03$ \\
APPNP+edrop &
  $ 0.40 \pm  0.13$ &
  $ 0.40 \pm  0.13$ &
  $ 0.40 \pm  0.13$ &
  $ 0.40 \pm  0.13$ &
  $ 0.40 \pm  0.13$ &
  $ 0.40 \pm  0.13$ &
  $ 0.40 \pm  0.13$ &
  $ 0.40 \pm  0.13$ \\
GAT &
  $ 0.49 \pm  0.03$ &
  $ 0.49 \pm  0.03$ &
  $ 0.49 \pm  0.03$ &
  $ 0.49 \pm  0.03$ &
  $ 0.49 \pm  0.03$ &
  $ 0.49 \pm  0.03$ &
  $ 0.49 \pm  0.03$ &
  $ 0.49 \pm  0.03$ \\
GAT+edrop &
  $ 0.51 \pm  0.05$ &
  $ 0.51 \pm  0.05$ &
  $ 0.51 \pm  0.05$ &
  $ 0.51 \pm  0.05$ &
  $ 0.51 \pm  0.05$ &
  $ 0.51 \pm  0.05$ &
  $ 0.51 \pm  0.05$ &
  $ 0.51 \pm  0.05$ \\
GCN2 &
  $ 0.50 \pm  0.09$ &
  $ 0.50 \pm  0.09$ &
  $ 0.50 \pm  0.09$ &
  $ 0.50 \pm  0.09$ &
  $ 0.50 \pm  0.09$ &
  $ 0.50 \pm  0.09$ &
  $ 0.50 \pm  0.09$ &
  $ 0.50 \pm  0.09$ \\
GCN2+edrop &
  $ 0.56 \pm  0.07$ &
  $ 0.56 \pm  0.07$ &
  $ 0.56 \pm  0.07$ &
  $ 0.56 \pm  0.07$ &
  $ 0.56 \pm  0.07$ &
  $ 0.56 \pm  0.07$ &
  $ 0.56 \pm  0.07$ &
  $ 0.56 \pm  0.07$ \\
GCN &
  \color{brown}{$ 0.73 \pm  0.02$} &
  \color{brown}{$ 0.73 \pm  0.02$} &
  \color{brown}{$ 0.73 \pm  0.02$} &
  $ 0.73 \pm  0.02$ &
  $ 0.73 \pm  0.02$ &
  $ 0.73 \pm  0.02$ &
  \color{brown}{$ 0.73 \pm  0.02$} &
  \color{brown}{$ 0.73 \pm  0.02$} \\
GCN+edrop &
  \color{brown}{$ 0.73 \pm  0.01$} &
  \color{brown}{$ 0.73 \pm  0.01$} &
  \color{brown}{$ 0.73 \pm  0.01$} &
  $ 0.73 \pm  0.01$ &
  $ 0.73 \pm  0.01$ &
  $ 0.73 \pm  0.01$ &
  \color{brown}{$ 0.73 \pm  0.01$} &
  \color{brown}{$ 0.73 \pm  0.01$} \\
GIN &
  \color{brown}{$ 0.73 \pm  0.06$} &
  \color{brown}{$ 0.73 \pm  0.06$} &
  \color{brown}{$ 0.73 \pm  0.06$} &
  $ 0.73 \pm  0.06$ &
  $ 0.73 \pm  0.06$ &
  $ 0.73 \pm  0.06$ &
  \color{brown}{$ 0.73 \pm  0.06$} &
  \color{brown}{$ 0.73 \pm  0.06$} \\
GIN+edrop &
  \color{brown}{$ 0.73 \pm  0.01$} &
  \color{brown}{$ 0.73 \pm  0.01$} &
  \color{brown}{$ 0.73 \pm  0.01$} &
  $ 0.73 \pm  0.01$ &
  $ 0.73 \pm  0.01$ &
  $ 0.73 \pm  0.01$ &
  \color{brown}{$ 0.73 \pm  0.01$} &
  \color{brown}{$ 0.73 \pm  0.01$} \\
GraphSAGE &
  \color{brown}{$ 0.73 \pm  0.08$} &
  \color{brown}{$ 0.73 \pm  0.08$} &
  \color{brown}{$ 0.73 \pm  0.08$} &
  {$ 0.73 \pm  0.08$} &
  $ 0.73 \pm  0.08$ &
  $ 0.73 \pm  0.08$ &
  \color{brown}{$ 0.73 \pm  0.08$} &
  \color{brown}{$ 0.73 \pm  0.08$} \\
GraphSAGE+edrop &
  \color{brown}{$ 0.73 \pm  0.09$} &
  \color{brown}{$ 0.73 \pm  0.09$} &
  \color{brown}{$ 0.73 \pm  0.09$} &
  $ 0.73 \pm  0.09$ &
  $ 0.73 \pm  0.09$ &
  $ 0.73 \pm  0.09$ &
  \color{brown}{$ 0.73 \pm  0.09$} &
  \color{brown}{$ 0.73 \pm  0.09$}
\end{tabular}
}
        \label{tab: johnshopkins55}
     \end{table*}
     \begin{table*}
     \caption{PR-AUC on graph dataset {\sc FB15k-237}. Each column is a comparison of the baseline PR-AUC scores against the PR-AUC score for our method (first row) applied to a standard hyperGNN architecture. Red color denotes the highest average score in the column. Orange color denotes a two-way tie in the column, and brown color denotes a three-or-more-way tie in the column.}
        \centering
        \resizebox{\columnwidth}{!}{
\begin{tabular}{l|ll|ll|llll}
PR-AUC $\uparrow$ &
  HGNN &
  HGNNP &
  HNHN &
  HyperGCN &
  UniGAT &
  UniGCN &
  UniGIN &
  UniSAGE \\ \hline
Ours &
  $ 0.66 \pm  0.06$ &
  \color{red}{$ 0.78 \pm    0.02$} &
  $ 0.63 \pm    0.07$ &
  \color{brown}{$ 0.82 \pm    0.10$} &
  \color{red}{$ 0.75 \pm    0.05$} &
  \color{brown}{$ 0.74 \pm    0.03$} &
  \color{orange}{$ 0.75 \pm    0.03$} &
  $ 0.75 \pm    0.06$ \\ \hline
hyperGNN Baseline &
  $ 0.65 \pm  0.06$ &
  $ 0.65 \pm    0.06$ &
  $ 0.65 \pm    0.04$ &
  \color{brown}{$ 0.82 \pm    0.09$} &
  $ 0.74 \pm    0.04$ &
  \color{brown}{$ 0.74 \pm    0.05$} &
  \color{orange}{$ 0.75 \pm    0.03$} &
  \color{red}{$ 0.77 \pm    0.01$} \\
hyperGNN Baseln.+edrop &
  $ 0.65 \pm  0.09$ &
  $ 0.65 \pm    0.00$ &
  $ 0.64 \pm    0.05$ &
  \color{brown}{$ 0.82 \pm    0.00$} &
  $ 0.72 \pm    0.00$ &
  \color{brown}{$ 0.74 \pm    0.07$} &
  $ 0.73 \pm    0.03$ &
  $ 0.72 \pm    0.07$ \\
APPNP &
  {$ 0.72 \pm  0.10$} &
  $ 0.72 \pm  0.10$ &
  $ 0.72 \pm  0.10$ &
  $ 0.72 \pm  0.10$ &
  $ 0.72 \pm  0.10$ &
  $ 0.72 \pm  0.10$ &
  $ 0.72 \pm  0.10$ &
  $ 0.72 \pm  0.10$ \\
APPNP+edrop &
  $ 0.71 \pm  0.05$ &
  $ 0.71 \pm  0.05$ &
  $ 0.71 \pm  0.05$ &
  $ 0.71 \pm  0.05$ &
  $ 0.71 \pm  0.05$ &
  $ 0.71 \pm  0.05$ &
  $ 0.71 \pm  0.05$ &
  $ 0.71 \pm  0.05$ \\
GAT &
  $ 0.64 \pm  0.06$ &
  $ 0.64 \pm  0.06$ &
  $ 0.64 \pm  0.06$ &
  $ 0.64 \pm  0.06$ &
  $ 0.64 \pm  0.06$ &
  $ 0.64 \pm  0.06$ &
  $ 0.64 \pm  0.06$ &
  $ 0.64 \pm  0.06$ \\
GAT+edrop &
  $ 0.61 \pm  0.09$ &
  $ 0.61 \pm  0.09$ &
  $ 0.61 \pm  0.09$ &
  $ 0.61 \pm  0.09$ &
  $ 0.61 \pm  0.09$ &
  $ 0.61 \pm  0.09$ &
  $ 0.61 \pm  0.09$ &
  $ 0.61 \pm  0.09$ \\
GCN2 &
  $ 0.66 \pm  0.03$ &
  $ 0.66 \pm  0.03$ &
  $ 0.66 \pm  0.03$ &
  $ 0.66 \pm  0.03$ &
  $ 0.66 \pm  0.03$ &
  $ 0.66 \pm  0.03$ &
  $ 0.66 \pm  0.03$ &
  $ 0.66 \pm  0.03$ \\
GCN2+edrop &
  $ 0.65 \pm  0.10$ &
  $ 0.65 \pm  0.10$ &
  $ 0.65 \pm  0.10$ &
  $ 0.65 \pm  0.10$ &
  $ 0.65 \pm  0.10$ &
  $ 0.65 \pm  0.10$ &
  $ 0.65 \pm  0.10$ &
  $ 0.65 \pm  0.10$ \\
GCN &
  $ 0.69 \pm  0.03$ &
  $ 0.69 \pm  0.03$ &
  $ 0.69 \pm  0.03$ &
  $ 0.69 \pm  0.03$ &
  $ 0.69 \pm  0.03$ &
  $ 0.69 \pm  0.03$ &
  $ 0.69 \pm  0.03$ &
  $ 0.69 \pm  0.03$ \\
GCN+edrop &
  $ 0.71 \pm  0.06$ &
  $ 0.71 \pm  0.06$ &
  $ 0.71 \pm  0.06$ &
  $ 0.71 \pm  0.06$ &
  $ 0.71 \pm  0.06$ &
  $ 0.71 \pm  0.06$ &
  $ 0.71 \pm  0.06$ &
  $ 0.71 \pm  0.06$ \\
GIN &
  \color{red}{$ 0.73 \pm  0.03$} &
  $ 0.73 \pm  0.03$ &
  \color{red}{$ 0.73 \pm  0.03$} &
  $ 0.73 \pm  0.03$ &
  $ 0.73 \pm  0.03$ &
  $ 0.73 \pm  0.03$ &
  $ 0.73 \pm  0.03$ &
  $ 0.73 \pm  0.03$ \\
GIN+edrop &
  $ 0.56 \pm  0.07$ &
  $ 0.56 \pm  0.07$ &
  $ 0.56 \pm  0.07$ &
  $ 0.56 \pm  0.07$ &
  $ 0.56 \pm  0.07$ &
  $ 0.56 \pm  0.07$ &
  $ 0.56 \pm  0.07$ &
  $ 0.56 \pm  0.07$ \\
GraphSAGE &
  $ 0.46 \pm  0.15$ &
  $ 0.46 \pm  0.15$ &
  $ 0.46 \pm  0.15$ &
  $ 0.46 \pm  0.15$ &
  $ 0.46 \pm  0.15$ &
  $ 0.46 \pm  0.15$ &
  $ 0.46 \pm  0.15$ &
  $ 0.46 \pm  0.15$ \\
GraphSAGE+edrop &
  $ 0.47 \pm  0.01$ &
  $ 0.47 \pm  0.01$ &
  $ 0.47 \pm  0.01$ &
  $ 0.47 \pm  0.01$ &
  $ 0.47 \pm  0.01$ &
  $ 0.47 \pm  0.01$ &
  $ 0.47 \pm  0.01$ &
  $ 0.47 \pm  0.01$
\end{tabular}
}
        \label{tab: FB15k-237}
    \end{table*}
    \begin{table*}
        \caption{PR-AUC on graph dataset {\sc AIFB}. Each column is a comparison of the baseline PR-AUC scores against the PR-AUC score for our method (first row) applied to a standard hyperGNN architecture. Red color denotes the highest average score in the column. Orange color denotes a two-way tie in the column, and brown color denotes a three-or-more-way tie in the column.}
        \centering
        \resizebox{\columnwidth}{!}{
\begin{tabular}{l|ll|ll|llll}
PR-AUC $\uparrow$ &
  HGNN &
  HGNNP &
  HNHN &
  HyperGCN &
  UniGAT &
  UniGCN &
  UniGIN &
  UniSAGE \\ \hline
Ours &
  $ 0.79 \pm  0.11$ &
  $ 0.73 \pm    0.10$ &
  $ 0.73 \pm    0.02$ &
  \color{brown}{$ 0.85 \pm    0.07$} &
  $ 0.75 \pm    0.10$ &
  \color{brown}{$ 0.84 \pm    0.09$} &
  $ 0.72 \pm    0.03$ &
  $ 0.72 \pm    0.12$ \\ \hline
hyperGNN Baseline &
  $ 0.72 \pm  0.07$ &
  $ 0.72 \pm    0.07$ &
  $ 0.72 \pm    0.06$ &
  \color{brown}{$ 0.85 \pm    0.05$} &
  $ 0.75 \pm    0.09$ &
  \color{brown}{$ 0.84 \pm    0.05$} &
  $ 0.72 \pm    0.07$ &
  $ 0.72 \pm    0.06$ \\
hyperGNN Baseln.+edrop &
  $ 0.72 \pm  0.05$ &
  $ 0.72 \pm    0.08$ &
  $ 0.72 \pm    0.06$ &
  \color{brown}{$ 0.85 \pm    0.07$} &
  $ 0.73 \pm    0.09$ &
  \color{brown}{$ 0.84 \pm    0.06$} &
  $ 0.72 \pm    0.03$ &
  $ 0.72 \pm    0.07$ \\
APPNP &
  $ 0.81 \pm  0.12$ &
  $ 0.81 \pm  0.12$ &
  $ 0.81 \pm  0.12$ &
  $ 0.81 \pm  0.12$ &
  $ 0.81 \pm  0.12$ &
  $ 0.81 \pm  0.12$ &
  $ 0.81 \pm  0.12$ &
  $ 0.81 \pm  0.12$ \\
APPNP+edrop &
  $ 0.80 \pm  0.05$ &
  $ 0.80 \pm  0.05$ &
  $ 0.80 \pm  0.05$ &
  $ 0.80 \pm  0.05$ &
  $ 0.80 \pm  0.05$ &
  $ 0.80 \pm  0.05$ &
  $ 0.80 \pm  0.05$ &
  $ 0.80 \pm  0.05$ \\
GAT &
  $ 0.50 \pm  0.02$ &
  $ 0.50 \pm  0.02$ &
  $ 0.50 \pm  0.02$ &
  $ 0.50 \pm  0.02$ &
  $ 0.50 \pm  0.02$ &
  $ 0.50 \pm  0.02$ &
  $ 0.50 \pm  0.02$ &
  $ 0.50 \pm  0.02$ \\
GAT+edrop &
  $ 0.33 \pm  0.02$ &
  $ 0.33 \pm  0.02$ &
  $ 0.33 \pm  0.02$ &
  $ 0.33 \pm  0.02$ &
  $ 0.33 \pm  0.02$ &
  $ 0.33 \pm  0.02$ &
  $ 0.33 \pm  0.02$ &
  $ 0.33 \pm  0.02$ \\
GCN2 &
  \color{red}{$ 0.83 \pm  0.05$} &
  \color{red}{$ 0.83 \pm  0.05$} &
 \color{red}{ $ 0.83 \pm  0.05$} &
  $ 0.83 \pm  0.05$ &
  \color{red}{$ 0.83 \pm  0.05$} &
  $ 0.83 \pm  0.05$ &
  \color{red}{$ 0.83 \pm  0.05$} &
  \color{red}{$ 0.83 \pm  0.05$} \\
GCN2+edrop &
  $ 0.78 \pm  0.04$ &
  $ 0.78 \pm  0.04$ &
  $ 0.78 \pm  0.04$ &
  $ 0.78 \pm  0.04$ &
  $ 0.78 \pm  0.04$ &
  $ 0.78 \pm  0.04$ &
  $ 0.78 \pm  0.04$ &
  $ 0.78 \pm  0.04$ \\
GCN &
  $ 0.73 \pm  0.14$ &
  $ 0.73 \pm  0.14$ &
  $ 0.73 \pm  0.14$ &
  $ 0.73 \pm  0.14$ &
  $ 0.73 \pm  0.14$ &
  $ 0.73 \pm  0.14$ &
  $ 0.73 \pm  0.14$ &
  $ 0.73 \pm  0.14$ \\
GCN+edrop &
  $ 0.75 \pm  0.08$ &
  $ 0.75 \pm  0.08$ &
  $ 0.75 \pm  0.08$ &
  $ 0.75 \pm  0.08$ &
  $ 0.75 \pm  0.08$ &
  $ 0.75 \pm  0.08$ &
  $ 0.75 \pm  0.08$ &
  $ 0.75 \pm  0.08$ \\
GIN &
  $ 0.73 \pm  0.00$ &
  $ 0.73 \pm  0.00$ &
  $ 0.73 \pm  0.00$ &
  $ 0.73 \pm  0.00$ &
  $ 0.73 \pm  0.00$ &
  $ 0.73 \pm  0.00$ &
  $ 0.73 \pm  0.00$ &
  $ 0.73 \pm  0.00$ \\
GIN+edrop &
  $ 0.73 \pm  0.10$ &
  $ 0.73 \pm  0.10$ &
  $ 0.73 \pm  0.10$ &
  $ 0.73 \pm  0.10$ &
  $ 0.73 \pm  0.10$ &
  $ 0.73 \pm  0.10$ &
  $ 0.73 \pm  0.10$ &
  $ 0.73 \pm  0.10$ \\
GraphSAGE &
  $ 0.46 \pm  0.15$ &
  $ 0.46 \pm  0.15$ &
  $ 0.46 \pm  0.15$ &
  $ 0.46 \pm  0.15$ &
  $ 0.46 \pm  0.15$ &
  $ 0.46 \pm  0.15$ &
  $ 0.46 \pm  0.15$ &
  $ 0.46 \pm  0.15$ \\
GraphSAGE+edrop &
  $ 0.47 \pm  0.01$ &
  $ 0.47 \pm  0.01$ &
  $ 0.47 \pm  0.01$ &
  $ 0.47 \pm  0.01$ &
  $ 0.47 \pm  0.01$ &
  $ 0.47 \pm  0.01$ &
  $ 0.47 \pm  0.01$ &
  $ 0.47 \pm  0.01$
\end{tabular}
}
        \label{tab: AIFB}
     \end{table*}
     \clearpage
\section{Dataset and Hyperparameters}\label{sec: appendix-datasets-hyperparams}
Table \ref{tab: dataset-hyperparam} lists the datasets and hyperparameters used in our experiments. All datasets are originally from~\cite{benson2018simplicial} or are general hypergraph datasets provided in \cite{Sinha-2015-MAG,Amburg-2020-categorical}. 
We list the total number of hyperedges $|\mathcal{E}|$, the total number of vertices $|\mathcal{V}|$, the positive to negative label ratios for train/val/test, and the percentage of the connected components searched over by our algorithm that are size atleast $3$. A node isomorphism class is determined by our isomorphism testing algorithm. By Proposition \ref{prop: appendix-simpinvar} we can guarantee that if two nodes are in separate isomorphism classes by our isomorphism tester, then they are actually nonisomorphic. 


We use $1024$ dimensions for all hyperGNN/GNN layer latent spaces, $5$ layers for all hypergraph/graph neural networks, and a common learning rate of $0.01$. 
Exactly $2000$ epochs are used for training.

The hyperGNN architecture baselines are described in the follwoing:
\begin{itemize}
    \item HGNN~\cite{feng2019hypergraph} A neural network that generalizes the graph convolution to hypergraphs where there are hyperedge weights.  Its architecture can be described by the following update step for the $l+1$-layer from the $l$th layer:
    \begin{equation}
        X^{(l+1)} = \sigma(D^{-\frac{1}{2}}_v H W D_e^{-1} H^T D^{-\frac{1}{2}}_v X^{(l)}W^{(l)})
    \end{equation}
    where $D_v \in \mathbb{R}^{n \times n}$ is the diagonal node degree matrix, $D_e \in \mathbb{R}^{m \times m}$ is the diagonal hyperedge degree matrix, $H \in \mathbb{R}^{n \times m}$ is the star incidence matrix, $W$ is the diagonal hyperedge weight matrix, $X^{(l)} \in \mathbb{R}^{n \times d}$ is a node signal matrix, $W^{(l)} \in \mathbb{R}^{d \times d}$ is a weight matrix, and $\sigma$ is a nonlinear activation. Following the matrix products, as a message passing neural network, HGNN is GWL-1 based since the nodes pass to the hyperedges and back.
    \item HGNNP~\cite{feng2023hypergraph} is an improved version of HGNN where asymmetry is introduced into the message passing weightings to distinguish the vertices from the hyperedges. This is also a GWL-1 based message passing neural network. It is described by the following node signal update equation:
     \begin{equation}
        X^{(l+1)} = \sigma(D^{-1}_v H W D_e^{-1} H^T X^{(l)}W^{(l)})
    \end{equation}
    where the matrices are exactly the same as from HGNN.
    \item HyperGCN~\cite{yadati2019hypergcn} computes GCN on a clique expansion  of a hypergraph. This has an updateable adjacency matrix defined as follows: \begin{equation}
        A^{(l)}_{i,j}=\begin{cases} 
      1 & (i,j) \in E^{(l)} \\
      0 & (i,j) \notin E^{(l)}
   \end{cases}
    \end{equation} 
    where 
    \begin{equation}
        E^{(l)}=\{(i_e,j_e)=argmax_{i,j \in e}|X_i^{(l)} - X_j^{(l)}| : e \in \gE\}
    \end{equation}
    \begin{equation}
        X^{(l+1)}_v = \sigma( \sum_{u \in N(v)}([ A^{(l)}]_{v,u} X^{(l)}_u W^{(l)} ))
    \end{equation}
    The $X^{(l)} \in \mathbb{R}^{n \times d}$ is the node signal matrix at layer $l$, the $W^{(l)} \in \mathbb{R}^{d \times d}$ is the weight matrix at layer $l$, and $\sigma$ is some nonlinear activation. 
    This architecture has less expressive power than GWL-1. 
    \item HNHN~\cite{dong2020hnhn}
    This is like HGNN but where the message passing is explicitly broken up into two hyperedge to node and node to hyperedge layers.
    \begin{subequations}
    \begin{equation}
        X^{(l)}_E = \sigma(H^T X^{(l)}_V W_E^{(l)} + b_E^{(l)} ) 
    \end{equation}
    \text{ and }
    \begin{equation}
        X^{(l+1)}_V = \sigma(HX^{(l)}_E W_V^{(l)} + b_V^{(l)}) 
    \end{equation}
    \end{subequations}
    where $H \in \mathbb{R}^{n \times m}$ is the star expansion incidence matrix, $W_E^{(l)},W_V^{(l)} \in \mathbb{R}^{d \times d}, b_E^{(l)} \in \mathbb{R}^m, b_V^{(l)} \in \mathbb{R}^n$ are weights and biases, $X^{(l)}_E, X^{(l)}_V$ are the hyperedge and node signal matrices at layer $l$, and $\sigma$ is a nonlinear activation function. The bias vectors prevent HNHN from being permutation equivariant.
    \item UniGNN~\cite{huang2021unignn}
    The idea is directly related to generalizing WL-1 GNNs to Hypergraphs.
    Define the following hyperedge representation for hyperedge $e \in \gE$:
    \begin{equation}
        h^{(l)}_e = \frac{1}{|e|}\sum_{u \in e} X^{(l)}_u 
    \end{equation}
    \begin{itemize}
    \item UniGCN: a generalization of GCN to hypergraphs
    \begin{equation}
        X^{(l)}_v = \frac{1}{\sqrt{d_v}}\sum_{e \ni v}\frac{1}{\sqrt{d}_e}W^{(l)} h^{(l)}_e
    \end{equation}
    \item UniGAT: a generalization of GAT to hypergraphs
    \begin{subequations}
    \begin{equation}
        \alpha_{ue} = \sigma(a^T [X_i^{(l)} W^{(l)} ;  X_j^{(l)} W^{(l)}  ])
    \end{equation}
        \begin{equation}
            \tilde{\alpha}_{ue} = \frac{e^{\alpha_{ue}}}{\sum_{v \in e} e^{\alpha_{ve}}}
        \end{equation}
 \begin{equation}
     X_v^{(l+1)} = \sum_{e \ni v} \alpha_{ve}h_e W^{(l)} 
 \end{equation}
    \end{subequations}
    \item UniGIN: a generalization of GIN to hypergraphs
    \begin{equation}
        X_v^{(l+1)} = (
(1 + \epsilon)X_v^{(l)} + \sum_{e \ni v} h_e)
    \end{equation}
    \item UniSAGE: a generalization of GraphSAGE to hypergraphs
    \begin{equation}
       X^{(l+1)}_v = (X^{(l)}_v + \sum_{e \ni v} ({h_e} ))
    \end{equation}
    \end{itemize}
\end{itemize}

All positional encodings are computed from the training hyperedges before data augmentation. The loss we use for higher order link prediction is the Binary Cross Entropy Loss for all the positive and negatives samples. 
Hypergraph neural network implementations were mostly taken from \url{https://github.com/iMoonLab/DeepHypergraph}, which uses the Apache License 2.0.
\begin{table}[h] {
\caption{
 Dataset statistics and training hyperparameters used for all datasets in scoring all experiments.}
	\centering
 \resizebox{1\columnwidth}{!}{
\begin{tabular}{
|l||l|l|l|l|l|l|}
 \hline
 \multicolumn{7}{|c|}{\small Dataset Information} \\
 \hline
 \scriptsize Dataset & \scriptsize $|\mathcal{E}|$ & \scriptsize $|\mathcal{V}|$ & \scriptsize $\frac{{\Delta}_{+,tr}}{\Delta_{-,tr}}$ & \scriptsize $\frac{{\Delta}_{+,val}}{\Delta_{-,val}}$ & \scriptsize $\frac{{\Delta}_{+,te}}{\Delta_{-,te}}$& \scriptsize \% of Conn. Comps. Selected \\
 \hline
 {{\scriptsize \sc cat-edge-DAWN}} & \scriptsize 87,104 & \scriptsize 2,109 & \scriptsize 8,802/10,547 & \scriptsize 1,915/2,296 & \scriptsize 1,867/2,237
 & \scriptsize 0.05\% \\
 {{\scriptsize \sc email-Eu}} & \scriptsize 234,760 & \scriptsize 998 & \scriptsize 1,803/2,159 & \scriptsize 570/681 & \scriptsize 626/749
 & \scriptsize 0.6\% \\
 {{\scriptsize \sc contact-primary-school}} & \scriptsize 106,879 & \scriptsize 242 & \scriptsize  1,620/1,921 & \scriptsize 461/545 & \scriptsize 350/415
 & \scriptsize 9.3\% \\
 {{\scriptsize \sc cat-edge-music-blues-reviews}} & \scriptsize 694 & \scriptsize  1,106 & \scriptsize 16/19 & \scriptsize 7/6
 & \scriptsize 3/3 & \scriptsize 0.14\% \\
 {{\scriptsize \sc cat-edge-vegas-bars-reviews}} & \scriptsize 1,194 & \scriptsize 1,234 & \scriptsize 72/86 & \scriptsize 12/14 & \scriptsize 11/13
 & \scriptsize 0.7\% \\
 {{\scriptsize \sc contact-high-school}} & \scriptsize 7,818 & \scriptsize 327  & \scriptsize 2,646/3,143 & \scriptsize 176/208 & \scriptsize 175/205
 & \scriptsize 5.6\% \\
 {{\scriptsize \sc cat-edge-Brain}} & \scriptsize 21,180 & 
\scriptsize 638  & \scriptsize 13,037/13,817 & \scriptsize 2,793/3,135 & \scriptsize 2,794/3,020
 & \scriptsize 9.9\% \\
 {{\scriptsize \sc johnshopkins55}} & \scriptsize 298,537 & \scriptsize 5,163  & 
 \scriptsize 29,853/35,634 & \scriptsize 9,329/11,120 & \scriptsize 27,988/29,853
 & \scriptsize 2.0\% \\
 {{\scriptsize \sc AIFB}} & \scriptsize 46,468 & \scriptsize 8,083 &  \scriptsize 4,646/5,575 & \scriptsize 1,452/1,739 & 
\scriptsize 4,356/5,222
 & \scriptsize 0.02\% \\
 {{\scriptsize \sc amherst41}} & 
 \scriptsize 145,526 & \scriptsize 2,234 &\scriptsize 14,552/17,211 & \scriptsize 4,547/5,379 & \scriptsize 16,125/13,643
 & \scriptsize 4.4\% \\
 {{\scriptsize \sc FB15k-237}} &  \scriptsize 272,115 & \scriptsize 14,505 &
\scriptsize 27,211/32,630 & \scriptsize 8,767/10,509 & \scriptsize 10,233/12,271
 & \scriptsize 2.1\% \\
 {{\scriptsize \sc WikiPeople-0bi}} &  \scriptsize 18,828 & \scriptsize 43,388 &
\scriptsize 27,211/32,630 & \scriptsize 10,254/12,301 & \scriptsize 1,164/1,396
 & \scriptsize 0.05\% \\
{{\scriptsize \sc JF17K}} &  \scriptsize 76,379 & \scriptsize 28,645 &
\scriptsize 11,907/14,287 & \scriptsize 1,341/1,608 & \scriptsize 1,341/1,608
 & \scriptsize 0.6\% \\
 \hline
 \end{tabular}
 }
 \label{tab: dataset-hyperparam}
 }
 \end{table}

 We describe here some more information about each dataset we use in our experiments as provided by \cite{benson2018simplicial}:
Here is some information about the hypergraph datasets:
\begin{itemize}
    \item \cite{Amburg-2020-categorical} {\sc cat-edge-DAWN}: Here nodes are drugs, hyperedges are combinations of drugs taken by a patient prior to an emergency room visit and edge categories indicate the patient disposition (e.g., "sent home", "surgery", "released to detox").
    \item \cite{Benson-2018-simplicial, Yin-2017-local, Leskovec-2007-evolution}{\sc email-Eu:} This is a temporal higher-order network dataset, which here means a sequence of timestamped simplices, or hyperedges with all its node subsets existing as hyperedges, where each simplex is a set of nodes. In email communication, messages can be sent to multiple recipients. In this dataset, nodes are email addresses at a European research institution. The original data source only contains (sender, receiver, timestamp) tuples, where timestamps are recorded at 1-second resolution. Simplices consist of a sender and all receivers such that the email between the two has the same timestamp. We restricted to simplices that consist of at most 25 nodes.
    \item \cite{stehle2011high} {\sc contact-primary-school: } This is a temporal higher-order network dataset, which here means a sequence of timestamped simplices where each simplex is a set of nodes. The dataset is constructed from interactions recorded by wearable sensors by people at a primary school. The sensors record interactions at a resolution of 20 seconds (recording all interactions from the previous 20 seconds). Nodes are the people and simplices are maximal cliques of interacting individuals from an interval.
    \item \cite{amburg2020fair}{\sc cat-edge-vegas-bars-reviews:} Hypergraph where nodes are Yelp users and hyperedges are users who reviewed an establishment of a particular category (different types of bars in Las Vegas, NV) within a month timeframe.

    \item \cite{Benson-2018-simplicial, Mastrandrea-2015-contact} {\sc contact-high-school:} This is a temporal higher-order network dataset, which here means a sequence of timestamped simplices where each simplex is a set of nodes. The dataset is constructed from interactions recorded by wearable sensors by people at a high school. The sensors record interactions at a resolution of 20 seconds (recording all interactions from the previous 20 seconds). Nodes are the people and simplices are maximal cliques of interacting individuals from an interval. 

    \item \cite{crossley2013cognitive} {\sc cat-edge-Brain: } This is a graph whose edges have categorical edge labels. Nodes represent brain regions from an MRI scan. There are two edge categories: one for connecting regions with high fMRI correlation and one for connecting regions with similar activation patterns. 
\end{itemize}
\begin{itemize}
\item \cite{lim2021large}{\sc johnshopkins55:} 
Non-homophilous graph datasets from the facebook100 dataset.
\item \cite{ristoski2016rdf2vec}{\sc AIFB: } The AIFB dataset describes the AIFB research institute in terms of its staff,
research groups, and publications. The dataset was first used to predict
the affiliation (i.e., research group) for people in the dataset. The dataset
contains 178 members of five research groups, however, the smallest group
contains only four people, which is removed from the dataset, leaving four
classes.

\item \cite{lim2021large}{\sc amherst41: } Non-homophilous graph datasets from the facebook100 dataset.
\item \cite{NIPS2013_1cecc7a7}{\sc FB15k-237: } A subset of entities that are also present in
the Wikilinks database 
\cite{singh12:wiki-links} and that also have at least 100 mentions in Freebase (for both entities and
relationships). Relationships like ’!/people/person/nationality’ which just reverses
the head and tail compared to the relationship ’/people/person/nationality’ are removed. This resulted in 592,213
triplets with 14,951 entities and 1,345 relationships which were randomly split.
\item \cite{NaLP}{\sc WikiPeople-0bi: } 
    The Wikidata dump was downloaded and the facts concerning entities of type human were extracted. These facts are denoised.
    Subsequently, the subsets of elements which have at least 30 mentions were selected. And the facts related to these elements were kept. Further, each fact was parsed into a set of its role-value pairs.
    The remaining facts were randomly split into training set, validation set and test set by a percentage of 80\%:10\%:10\%.
    All binary relations are removed for simplicity. This modifies WikiPeople to WikiPeople-0bi.
\item \cite{wen2016representation}{\sc JF17K: }
The full Freebase data in RDF format was downloaded. Entities involved in very few triples and the triples involving String, Enumeration Type and Numbers were removed. A fact representation was recovered from the remaining triples. Facts from meta-relations having only a single role were removed. From each meta-relation containing more than 10,000
facts, 10,000 facts were randomly selected.
\end{itemize}
 \clearpage
\subsection{Timings}
We perform experiments on a cluster of machines equipped with AMD MI100s GPUs and 112 shared AMD EPYC 7453 28-Core Processors with 2.6 PB shared RAM. We show here the times for computing each method. The timings may vary heavily for different machines as the memory we used is shared and during peak usage there is a lot of paging. We notice that although our data preprocessing algorithm involves seemingly costly steps such as GWL-1, connected connected components etc. The complexity of the entire preprocessing algorithm is linear in the size of the input as shown in Proposition \ref{prop: appendix-complexity}. Thus these operations are actually very efficient in practice as shown by Tables \ref{table: timings-hypergraphs} and \ref{table: timings-graphs} for the hypergraph and graph datasets respectively. The preprocessing algorithm is run on CPU while the training is run on GPU for 2000 epochs. 
\begin{table*}[h]
\caption{Timings for our method broken up into the preprocessing phase and training phases ($2000$ epochs) for the hypergraph datasets.
}
\centering
\begin{subtable}{0.49\columnwidth}
        {
\begin{tabular}{|l|l|l|}
\hline
 \multicolumn{3}{|c|}{Timings (hh:mm) $\pm$ (s) }\\
 \hline 
Method   & Preprocessing Time        & Training Time             \\ \hline
HGNN     & $2$m:$45$s$ \pm   $$108$s & $35$m:$9$s$ \pm $$13$s    \\ 
HGNNP    & $1$m:$52$s$ \pm $$0$s     & $35$m:$16$s$ \pm $$0$s    \\
\hline
HNHN     & $1$m:$55$s$ \pm $$0$s     & $35$m:$0$s$ \pm $$1$s     \\ 
HyperGCN & $1$m:$50$s$ \pm $$0$s     & $58$m:$17$s$ \pm $$79$s   \\
\hline
UniGAT   & $1$m:$54$s$ \pm $$0$s     & 1h:19m:34s$ \pm $$0$s     \\
UniGCN   & $1$m:$50$s$ \pm $$2$s     & $35$m:$19$s$ \pm $$2$s    \\
UniGIN   & $1$m:$50$s$ \pm $$1$s     & $35$m:$12$s$ \pm $$1288$s \\
UniSAGE  & $1$m:$51$s$ \pm $$0$s     & $35$m:$16$s$ \pm $$0$s \\   \hline
\end{tabular}
}
\caption{{\sc cat-edge-DAWN}}
\end{subtable}
\hfill
\begin{subtable}{0.49\columnwidth}
        {
\begin{tabular}{|l|l|l|}
\hline
 \multicolumn{3}{|c|}{Timings (hh:mm) $\pm$ (s) }\\
 \hline 
Method   & Preprocessing Time & Training Time          \\ \hline
HGNN     & $1.72$s$ \pm $$5$s & $2$m:$11$s$ \pm $$11$s  \\ 
HGNNP    & $1.42$s$ \pm $$0$s & $2$m:$10$s$ \pm $$0$s   \\ \hline
HNHN     & $1.99$s$ \pm $$0$s & $3$m:$43$s$ \pm $$2$s   \\ 
HyperGCN & $1.47$s$ \pm $$2$s & $4$m:$12$s$ \pm $$3$s   \\ \hline
UniGAT   & $1.85$s$ \pm $$0$s & $3$m:$54$s$ \pm $$287$s \\
UniGCN   & $2.93$s$ \pm $$0$s & $3$m:$15$s$ \pm $$19$s  \\
UniGIN   & $2.24$s$ \pm $$0$s & $3$m:$17$s$ \pm $$18$s  \\
UniSAGE  & $2.04$s$ \pm $$0$s & $3$m:$13$s$ \pm $$3$s \\   \hline
\end{tabular}
}
\caption{{\sc cat-edge-music-blues-reviews}}
\end{subtable}
\begin{subtable}{0.49\columnwidth}
        {
\begin{tabular}{|l|l|l|}
\hline
 \multicolumn{3}{|c|}{Timings (hh:mm) $\pm$ (s) }\\
 \hline 
Method   & Preprocessing Time & Training Time            \\ \hline
HGNN     & $4.17$s$ \pm $$0$s & $2$m:$34$s$ \pm $$1954$s \\ 
HGNNP    & $4.54$s$ \pm $$0$s & $2$m:$41$s$ \pm $$53$s   \\ \hline
HNHN     & $3.06$s$ \pm $$0$s & $2$m:$27$s$ \pm $$15$s   \\ 
HyperGCN & $1.81$s$ \pm $$1$s & $2$m:$27$s$ \pm $$0$s    \\ \hline
UniGAT   & $1.91$s$ \pm $$0$s & $2$m:$27$s$ \pm $$306$s  \\
UniGCN   & $2.84$s$ \pm $$0$s & $2$m:$30$s$ \pm $$72$s   \\
UniGIN   & $3.20$s$ \pm $$0$s & $2$m:$27$s$ \pm $$1189$s \\
UniSAGE  & $1.65$s$ \pm $$0$s & $2$m:$27$s$ \pm $$0$s    \\   \hline
\end{tabular}
}
\caption{\sc{cat-edge-vegas-bars-reviews}}
\end{subtable}
\hfill
\begin{subtable}{0.49\columnwidth}
        {
\begin{tabular}{|l|l|l|}
\hline
 \multicolumn{3}{|c|}{Timings (hh:mm) $\pm$ (s) }\\
 \hline 
Method   & Preprocessing Time & Training Time           \\ \hline
HGNN     & $5.84$s$ \pm $$1$s & $6$m:$49$s$ \pm $$8$s   \\ 
HGNNP    & $5.82$s$ \pm $$0$s & $9$m:$8$s$ \pm $$19$s   \\ \hline
HNHN     & $5.95$s$ \pm $$0$s & $8$m:$21$s$ \pm $$19$s  \\ 
HyperGCN & $5.74$s$ \pm $$0$s & $10$m:$16$s$ \pm $$1$s  \\ \hline
UniGAT   & $8.80$s$ \pm $$0$s & $2$m:$31$s$ \pm $$282$s \\
UniGCN   & $6.35$s$ \pm $$0$s & $6$m:$9$s$ \pm $$957$s  \\
UniGIN   & $5.99$s$ \pm $$0$s & $10$m:$41$s$ \pm $$43$s \\
UniSAGE  & $5.97$s$ \pm $$0$s & $9$m:$50$s$ \pm $$0$s \\   \hline
\end{tabular}
}
\caption{\sc{contact-primary-school}}
\end{subtable}
\begin{subtable}{0.49\columnwidth}
        {
\begin{tabular}{|l|l|l|}
\hline
 \multicolumn{3}{|c|}{Timings (hh:mm) $\pm$ (s) }\\
 \hline 
Method   & Preprocessing Time  & Training Time             \\ \hline
HGNN     & $23.25$s$ \pm $$1$s & $25$m:$41$s$ \pm $$17$s   \\ 
HGNNP    & $23.25$s$ \pm $$0$s & $19$m:$52$s$ \pm $$49$s   \\ \hline
HNHN     & $24.27$s$ \pm $$1$s & $5$m:$12$s$ \pm $$63$s    \\ 
HyperGCN & $24.00$s$ \pm $$0$s & $21$m:$16$s$ \pm $$0$s    \\ \hline
UniGAT   & $14.27$s$ \pm $$0$s & $5$m:$13$s$ \pm $$243$s   \\
UniGCN   & $25.44$s$ \pm $$0$s & $5$m:$51$s$ \pm $$1019$s  \\
UniGIN   & $13.71$s$ \pm $$1$s & $19$m:$10$s$ \pm $$3972$s \\
UniSAGE  & $14.08$s$ \pm $$2$s & $36$m:$29$s$ \pm $$5$s    \\   \hline
\end{tabular}
}
\caption{\sc{email-Eu}}
\end{subtable}
\hfill\begin{subtable}{0.49\columnwidth}
        {
\begin{tabular}{|lll|}
\hline
\multicolumn{3}{|c|}{Timings (hh:mm) $\pm$ (s)}                                                    \\ \hline
\multicolumn{1}{|l|}{Method}   & \multicolumn{1}{l|}{Preprocessing Time} & Training Time           \\ \hline
\multicolumn{1}{|l|}{HGNN}     & \multicolumn{1}{l|}{$4.89$s$ \pm $$6$s} & 1m:27s$ \pm $$8$s \\
\multicolumn{1}{|l|}{HGNNP}    & \multicolumn{1}{l|}{$2.12$s$ \pm $$0$s} & $2$m:$42$s$ \pm $$30$s  \\ \hline
\multicolumn{1}{|l|}{HNHN}     & \multicolumn{1}{l|}{$2.12$s$ \pm $$0$s} & $2$m:$39$s$ \pm $$42$s  \\
\multicolumn{1}{|l|}{HyperGCN} & \multicolumn{1}{l|}{$2.11$s$ \pm $$0$s} & $40.11$s$ \pm $$3$s     \\ \hline
\multicolumn{1}{|l|}{UniGAT}   & \multicolumn{1}{l|}{$2.13$s$ \pm $$0$s} & $3$m:$18$s$ \pm $$8$s   \\
\multicolumn{1}{|l|}{UniGCN}   & \multicolumn{1}{l|}{$2.11$s$ \pm $$0$s} & $3$m:$21$s$ \pm $$2$s   \\
\multicolumn{1}{|l|}{UniGIN}   & \multicolumn{1}{l|}{$2.11$s$ \pm $$0$s} & $2$m:$24$s$ \pm $$70$s  \\
\multicolumn{1}{|l|}{UniSAGE}  & \multicolumn{1}{l|}{$2.11$s$ \pm $$0$s} & $2$m:$8$s$ \pm $$49$s   \\ \hline
\end{tabular}
}
\caption{\sc{cat-edge-madison-restaurant-reviews}}
\end{subtable}
\end{table*}
\begin{table*}[h]
\caption{Timings for our method broken up into the preprocessing phase and training phases ($2000$ epochs) for the hypergraph datasets.
}
\centering
\hfill\begin{subtable}{0.49\columnwidth}
        {
\begin{tabular}{|l|l|l|}
\hline
 \multicolumn{3}{|c|}{Timings (hh:mm) $\pm$ (s) }\\
 \hline 
Method   & Preprocessing Time  & Training Time             \\ \hline
HGNN     & $15.11$s$ \pm $$4$s & $4$m:$59$s$ \pm $$1$s \\ 
HGNNP    & $12.72$s$ \pm $$0$s & $2$m:$29$s$ \pm $$0$s     \\ \hline
HNHN     & $12.17$s$ \pm $$0$s & $3$m:$6$s$ \pm $$0$s      \\ 
HyperGCN & $12.47$s$ \pm $$0$s & $49.25$s$ \pm $$0$s       \\ \hline
UniGAT   & $12.74$s$ \pm $$0$s & $2$m:$1$s$ \pm $$1$s      \\
UniGCN   & $12.50$s$ \pm $$0$s & $2$m:$29$s$ \pm $$3$s     \\
UniGIN   & $12.57$s$ \pm $$0$s & $2$m:$16$s$ \pm $$3$s     \\
UniSAGE  & $12.67$s$ \pm $$0$s & $1$m:$50$s$ \pm $$29$s   \\   \hline
\end{tabular}
}
\caption{\sc{contact-high-school}}
\end{subtable}
\hfill\begin{subtable}{0.49\columnwidth}
        {
\begin{tabular}{|l|l|l|}
\hline
 \multicolumn{3}{|c|}{Timings (hh:mm) $\pm$ (s) }\\
 \hline 
Method   & Preprocessing Time   & Training Time           \\ \hline
HGNN     & $11.34$s$ \pm $$10$s & 4m:24s$ \pm $$6$s \\ 
HGNNP    & $6.02$s$ \pm $$0$s   & $4$m:$13$s$ \pm $$2$s   \\ \hline
HNHN     & $6.01$s$ \pm $$0$s   & $5$m:$31$s$ \pm $$1$s   \\ 
HyperGCN & $6.32$s$ \pm $$0$s   & $1$m:$33$s$ \pm $$0$s   \\ \hline
UniGAT   & $6.04$s$ \pm $$0$s   & $4$m:$11$s$ \pm $$0$s   \\
UniGCN   & $5.79$s$ \pm $$0$s   & $4$m:$12$s$ \pm $$0$s   \\
UniGIN   & $6.64$s$ \pm $$1$s   & $3$m:$4$s$ \pm $$1$s    \\
UniSAGE  & $5.79$s$ \pm $$0$s   & $3$m:$2$s$ \pm $$0$s      \\   \hline
\end{tabular}
}
\caption{\sc{cat-edge-Brain}}
\end{subtable}
\begin{subtable}{0.49\columnwidth}
        {
        \begin{tabular}{|l|l|l|}
\hline
 \multicolumn{3}{|c|}{Timings (hh:mm) $\pm$ (s) }\\
 \hline 
Method   & Preprocessing Time      & Training Time            \\ \hline
HGNN     & $3$m:$30$s$ \pm   $$5$s & 1h:29m:33s$ \pm $$6$s \\ 
HGNNP    & $3$m:$34$s$ \pm $$1$s  & 1h:48m:57s$ \pm $$1$s   \\ \hline
HNHN     & $3$m:$41$s$ \pm $$1$s  & 2h:9m:34s$ \pm $$1$s    \\ 
HyperGCN & $3$m:$24$s$ \pm $$1$s  & $58$m:$27$s$ \pm $$1$s  \\ \hline
UniGAT   & $3$m:$50$s$ \pm $$1$s  & 4h:21m:24s$ \pm $$1$s   \\
UniGCN   & $3$m:$38$s$ \pm $$1$s  & $29$m:$14$s$ \pm $$1$s  \\
UniGIN   & $3$m:$50$s$ \pm $$1$s  & $27$m:$50$s$ \pm $$1$s  \\
UniSAGE  & $3$m:$41$s$ \pm $$1$s  & $27$m:$22$s$ \pm $$1$s \\ \hline
\end{tabular}
}
\caption{\sc{WikiPeople-0bi}}
\end{subtable}
\hfill\begin{subtable}{0.49\columnwidth}
        {
\begin{tabular}{|l|l|l|}
\hline
 \multicolumn{3}{|c|}{Timings (hh:mm) $\pm$ (s) }\\
 \hline 
Method   & Preprocessing Time       & Training Time            \\ \hline
HGNN     & $8$m:$11$s$ \pm   $$52$s & $37$m:$18$s$ \pm $$9$s \\ 
HGNNP    & $7$m:$34$s$ \pm $$1$s   & $47$m:$56$s$ \pm $$1$s  \\ \hline
HNHN     & $6$m:$21$s$ \pm $$1$s   & $49$m:$33$s$ \pm $$1$s  \\ 
HyperGCN & $8$m:$20$s$ \pm $$1$s   & $28$m:$25$s$ \pm $$1$s  \\ \hline
UniGAT   & $10$m:$40$s$ \pm $$1$s  & 1h:54m:36s$ \pm $$1$s   \\
UniGCN   & $7$m:$25$s$ \pm $$1$s   & 2h:40m:20s$ \pm $$1$s   \\
UniGIN   & $10$m:$37$s$ \pm $$1$s  & 2h:48m:35s$ \pm $$1$s   \\
UniSAGE  & $6$m:$58$s$ \pm $$1$s   & 2h:35m:4s$ \pm $$1$s  \\ \hline
\end{tabular}
}
\caption{\sc{JF17K}}
\end{subtable}
\label{table: timings-hypergraphs}
\end{table*}
\begin{table*}[!h]
\caption{Timings for our method broken up into the preprocessing phase and training phases ($2000$ epochs) for the graph datasets.
}
\begin{subtable}{0.49\columnwidth}
        {
\begin{tabular}{|l|l|l|}
\hline
 \multicolumn{3}{|c|}{Timings (hh:mm) $\pm$ (s) }\\
 \hline 
Method   & Preprocessing Time        & Training Time             \\ \hline
HGNN     & $11$m:$14$s$ \pm   $$75$s & $53$m:$21$s$ \pm $$2845$s \\ 
HGNNP    & $11$m:$10$s$ \pm $$21$s   & 1h:34m:25s$ \pm $$35$s    \\ \hline
HNHN     & $5$m:$15$s$ \pm $$395$s   & 1h:35m:15s$ \pm $$419$s   \\ 
HyperGCN & $33.98$s$ \pm $$0$s       & $5$m:$8$s$ \pm $$0$s      \\ \hline
UniGAT   & $1$m:$59$s$ \pm $$120$s   & 2h:2m:47s$ \pm $$25$s     \\
UniGCN   & $34.37$s$ \pm $$0$s       & 1h:17m:38s$ \pm $$2$s     \\
UniGIN   & $34.05$s$ \pm $$0$s       & 1h:16m:38s$ \pm $$7$s     \\
UniSAGE  & $34.36$s$ \pm $$0$s       & 1h:16m:34s$ \pm $$3$s          \\   \hline
\end{tabular}
}
\caption{\sc{johnshopkins55}}
\end{subtable} \qquad \begin{subtable}{0.49\columnwidth}
        {
\begin{tabular}{|l|l|l|}
\hline
 \multicolumn{3}{|c|}{Timings (hh:mm) $\pm$ (s) }\\
 \hline 
Method   & Preprocessing Time        & Training Time             \\ \hline
HGNN     & $17$m:$9$s$ \pm   $$164$s & $12$m:$38$s$ \pm $$549$s \\ 
HGNNP    & $15$m:$34$s$ \pm $$61$s   & $20$m:$26$s$ \pm $$124$s \\ \hline
HNHN     & $15$m:$31$s$ \pm $$83$s   & $18$m:$11$s$ \pm $$30$s  \\ 
HyperGCN & $15$m:$46$s$ \pm $$32$s   & $4$m:$17$s$ \pm $$80$s   \\ \hline 
UniGAT   & $1$m:$27$s$ \pm $$6$s     & $16$m:$30$s$ \pm $$0$s   \\
UniGCN   & $15$m:$57$s$ \pm $$24$s   & $18$m:$42$s$ \pm $$170$s \\
UniGIN   & $16$m:$14$s$ \pm $$73$s   & $16$m:$22$s$ \pm $$39$s  \\
UniSAGE  & $8$m:$42$s$ \pm $$610$s   & $8$m:$49$s$ \pm $$324$s         \\   \hline
\end{tabular}
}
\caption{\sc{AIFB}}
\end{subtable}
\qquad\begin{subtable}{0.49\columnwidth}
        {
\begin{tabular}{|l|l|l|}
\hline
 \multicolumn{3}{|c|}{Timings (hh:mm) $\pm$ (s) }\\
 \hline 
Method   & Preprocessing Time        & Training Time             \\ \hline
HGNN     & $4$m:$1$s$ \pm $$11$s  & $22$m:$30$s$ \pm $$1177$s \\ 
HGNNP    & $3$m:$53$s$ \pm $$4$s  & $39$m:$30$s$ \pm $$3$s    \\ \hline
HNHN     & $3$m:$16$s$ \pm $$22$s & $44$m:$7$s$ \pm $$71$s    \\ 
HyperGCN & $3$m:$35$s$ \pm $$23$s & $5$m:$22$s$ \pm $$25$s    \\ \hline
UniGAT   & $11.92$s$ \pm $$0$s    & 1h:51m:53s$ \pm $$123$s   \\
UniGCN   & $3$m:$20$s$ \pm $$6$s  & $39$m:$18$s$ \pm $$51$s   \\
UniGIN   & $3$m:$21$s$ \pm $$8$s  & $38$m:$3$s$ \pm $$0$s     \\
UniSAGE  & $11.27$s$ \pm $$0$s    & $58$m:$48$s$ \pm $$956$s          \\   \hline
\end{tabular}
}
\caption{\sc{amherst41}}
\end{subtable}
\qquad \begin{subtable}{0.49\columnwidth}
        {
\begin{tabular}{|l|l|l|}
\hline
 \multicolumn{3}{|c|}{Timings (hh:mm) $\pm$ (s) }\\
 \hline 
Method   & Preprocessing Time        & Training Time             \\ \hline
HGNN     & $3$m:$32$s$ \pm $$9$s  & 1h:19m:5s$ \pm $$4684$s  \\ 
HGNNP    & $3$m:$26$s$ \pm $$10$s & 2h:19m:44s$ \pm $$3586$s \\ \hline
HNHN     & $3$m:$27$s$ \pm $$0$s  & 1h:55m:48s$ \pm $$22$s   \\ 
HyperGCN & $3$m:$28$s$ \pm $$0$s  & $10$m:$31$s$ \pm $$18$s  \\ \hline
UniGAT   & $3$m:$24$s$ \pm $$5$s  & 3h:50m:24s$ \pm $$91$s   \\
UniGCN   & $3$m:$19$s$ \pm $$4$s  & 1h:39m:46s$ \pm $$13$s   \\
UniGIN   & $3$m:$17$s$ \pm $$0$s  & 1h:36m:47s$ \pm $$35$s   \\
UniSAGE  & $3$m:$25$s$ \pm $$13$s & 1h:37m:16s$ \pm $$102$s \\ \hline
\end{tabular}
}
\caption{\sc{FB15k-237}}
\end{subtable}
\label{table: timings-graphs}
\end{table*}

\end{document}